%% file: main.tex
\documentclass[10pt]{article}
\input{commands}

\title{Posterior Refinement:\\Fast Language Generation via Any-Order Flow Maps}

\author[\ast 1]{Manan Agarwal}
\author[\ast 1]{Sheel Shah}
\author[2]{Chanhyuk Lee}
\author[2]{Jaehoon Yoo}
\midauthor[1]{Jerry Huang}
\midauthor[2]{Seunghoon Hong}
\midauthor[1]{Aditi Raghunathan}
\seniorauthor[\dag 2]{Jinwoo Kim}
\seniorauthor[\dag 1]{Nicholas M.~Boffi}
\affiliation[1]{Carnegie Mellon University}
\affiliation[2]{KAIST}
\contribution[\ast]{Equal contribution; order decided by a coin flip}
\contribution[\dagger]{Equal advising}

\begin{document}
\vspace{1em}
\begin{abstract}
\input{abstract}
\end{abstract}
\maketitle

\input{intro}

\input{background}
\input{algo}
\input{results}
\input{conclusion}
\input{acknowledgements}

% Bibliography
% \newpage
\bibliographystyle{unsrtnat}
\bibliography{main}

\newpage
\appendix
\input{appendix}

\end{document}

%% file: commands.tex
\usepackage{bofflab}

\definecolor{titleteal}{HTML}{F4F6F6}
\tcbset{bofflab-title/.style={
  bofflab-common,
  colback=fig1teal!4!white,
  boxrule=0pt,
  frame hidden,
  left=0.5cm,right=0.5cm,top=0.5cm,bottom=0.5cm
}}

\usepackage{microtype}
\usepackage{parskip}

\usepackage{booktabs}
\usepackage{caption}
\usepackage{multirow}
\usepackage{wrapfig}
\usepackage{makecell}
\usepackage{adjustbox}
\usepackage{fullpage}
\usepackage{ragged2e}
\usepackage{algorithm}
\usepackage{algorithmic}
\usepackage{listings}
\usepackage{subcaption}
\input{pseudocode_style}

\usepackage[dvipsnames]{xcolor}

\usepackage{amsmath}
\usepackage{amssymb}
\usepackage{mathtools}
\usepackage{amsthm}
\usepackage{cancel}
\usepackage{enumitem}

\DeclareMathOperator*{\argmax}{argmax}
\setlist[enumerate,1]{leftmargin=2em, itemsep=1em, parsep=0pt}
\setlist[itemize,1]{leftmargin=2em}

\newcommand{\ours}{\text{FMLM}+}
\newcommand{\oursPR}{\text{PR}}
\newcommand{\mask}{[\texttt{MASK}]}

\usepackage[capitalize,noabbrev]{cleveref}
\usepackage{thmtools,thm-restate}
\crefformat{section}{Section~#2#1#3}
\crefformat{subsection}{Section~#2#1#3}
\crefformat{subsubsection}{Section~#2#1#3}
\crefformat{equation}{(#2#1#3)}
\crefrangeformat{equation}{(#3#1#4) to (#5#2#6)}
\crefmultiformat{equation}{(#2#1#3)}{ and (#2#1#3)}{, (#2#1#3)}{ and (#2#1#3)}
\crefname{appendix}{Appendix}{Appendices}
\Crefname{appendix}{Appendix}{Appendices}
\crefformat{appendix}{Appendix~#2#1#3}
\AddToHook{cmd/appendix/before}{%
\crefalias{section}{appendix}%
\crefalias{subsection}{appendix}%
\crefalias{subsubsection}{appendix}
}
\crefname{table}{Table}{Tables}
\crefname{figure}{Figure}{Figures}
\crefname{lemma}{Lemma}{Lemmas}
\crefname{proposition}{Proposition}{Propositions}
\crefname{assumption}{Assumption}{Assumptions}

\makeatletter
\def\thm@space@setup{%
  \thm@preskip=0.8em plus 0.2em minus 0.2em
  \thm@postskip=0.8em plus 0.2em minus 0.2em
}
\makeatother

\theoremstyle{plain}
\newtheorem{theorem}{Theorem}[section]
\newtheorem{proposition}[theorem]{Proposition}

\theoremstyle{definition}

\makeatletter
\let\c@table\c@figure
\let\c@algorithm\c@figure
\makeatother

\usepackage{pifont}
\usepackage{array}
\usepackage{xurl}

\newcommand{\R}{\mathbb{R}}
\newcommand{\E}{\mathbb{E}}
\newcommand{\calL}{\mathcal{L}}

\newcommand{\kl}[2]{\mathsf{KL}(#1\:\Vert\:#2)}

\newcommand{\pdistill}{p_{\mathrm{distill}}}

\newcolumntype{C}[1]{>{\centering\arraybackslash}p{#1}}
\newcolumntype{L}[1]{>{\raggedright\arraybackslash}p{#1}}

\usepackage[textsize=tiny]{todonotes}

\definecolor{d62728}{RGB}{214,39,40}
\definecolor{1f77b4}{RGB}{31,119,180}
\definecolor{9467bd}{RGB}{148,103,189}
\definecolor{FigNY}{HTML}{1a1a3e}    % dark purple for new-york mode
\definecolor{FigSD}{HTML}{b0a8d0}    % light purple for san-diego mode
\usepackage{colortbl}
\definecolor{pinegreen}{rgb}{0.0, 0.47, 0.44}
\definecolor{cornellred}{rgb}{0.7, 0.11, 0.11}
\definecolor{cadmiumgreen}{rgb}{0.0, 0.42, 0.24}
\definecolor{spirodiscoball}{rgb}{0.06, 0.75, 0.99}
\definecolor{blizzardblue}{rgb}{0.73, 0.96, 0.99}
\definecolor{aliceblue}{rgb}{0.91, 0.94, 0.97}
\definecolor{Red7}{rgb}{0.941, 0.243, 0.243}
\definecolor{Green7}{RGB}{55, 178, 77}
\definecolor{darkgray}{HTML}{555555}     % for caption metadata text

\definecolor{fig1grey}{HTML}{B0B0B0}        % MDM baseline
\definecolor{fig1ochrelight}{HTML}{E5B07E}  % FMLM scratch
\definecolor{fig1ochre}{HTML}{C97B3A}       % FMLM distill
\definecolor{fig1teal}{HTML}{1F4E4A}        % FMLM init - hero

\definecolor{tealtint}{HTML}{E8EFEE}        % ~5% teal on white
\definecolor{ochretint}{HTML}{FBF1E6}       % ~10% light ochre on white
\definecolor{greytint}{HTML}{F2F2F2}

\colorlet{darkteal}{tealtint}
\colorlet{darkblue}{tealtint}

\newtcolorbox{tealbox}{%
    enhanced,
    colback=tealtint,
    colframe=tealtint,
    boxrule=0pt,
    arc=4pt,
    left=8pt, right=8pt, top=8pt, bottom=8pt,
    boxsep=0pt,
    before skip=0pt, after skip=0pt,
}

\newtcolorbox{baselinebox}{%
    enhanced,
    colback=greytint,
    colframe=fig1grey,
    boxrule=0.4pt,
    arc=3pt,
    left=8pt, right=8pt, top=6pt, bottom=6pt,
    boxsep=0pt,
}

\newtcolorbox{samplebox}[1]{
	enhanced,
	colback=fig1teal!5!white,
	colframe=black!60,
	boxrule=0.6pt,
	arc=6pt,
	left=4pt, right=4pt, top=4pt, bottom=4pt,
	boxsep=3pt,
	before upper={\noindent\textbf{\small #1}\par\smallskip},
}

\DeclareUnicodeCharacter{FFFD}{ }

%% file: pseudocode_style.tex
\definecolor{codekey}{HTML}{1F4E4A}   % deep teal — keywords (hero)
\definecolor{codefunc}{HTML}{C97B3A}  % ochre — function calls
\definecolor{codecomment}{HTML}{888888} % grid grey — comments
\definecolor{codeop}{HTML}{B0B0B0}    % light grey — operators

\lstdefinelanguage{PythonFuncColor}{
  language=Python,
  keywordstyle=\color{codekey}\bfseries,
  commentstyle=\color{codecomment},
  stringstyle=\color{codefunc},
  showstringspaces=false,
  basicstyle=\ttfamily\small,
  literate=
    {+}{{\color{codeop}+ }}{1}
    {-}{{\color{codeop}- }}{1}
    {*}{{\color{codeop}* }}{1}
    {/}{{\color{codeop}/ }}{1}
    {sample_s_u_t}{{\color{codefunc}sample\_s\_u\_t}}{1}
    {randn_like}{{\color{codefunc}randn\_like}}{1}
    {random_mask}{{\color{codefunc}random\_mask}}{1}
    {one_hot}{{\color{codefunc}one\_hot}}{1}
    {argmax}{{\color{codefunc}argmax}}{1}
    {top_k}{{\color{codefunc}top\_k}}{1}
    {flowmap}{{\color{codefunc}flowmap}}{1}
    {randn}{{\color{codefunc}randn}}{1}
    {zeros}{{\color{codefunc}zeros}}{1}
    {ones}{{\color{codefunc}ones}}{1}
    {max}{{\color{codefunc}max}}{1}
    {sg}{{\color{codefunc}sg}}{1}
    {where}{{\color{codefunc}where}}{1}
}

%% file: abstract.tex
Non-autoregressive generation offers a powerful paradigm for iterative refinement, allowing models to recursively critique, erase and regenerate arbitrary subsets of tokens.
However, existing non-autoregressive models fail to realize this potential. Masked Diffusion Models (MDMs) suffer from factorization error, causing sample quality to collapse when generating multiple tokens simultaneously.
Flow Map Language Models (FMLMs) circumvent this bottleneck via joint sequence transport for excellent few-step generation, but sacrifice the inference-time flexibility of MDMs.
We introduce $\ours$, a framework that bridges this gap by equipping FMLM with masking-style noise schedules.
While generating the full sequence in a single step, $\ours$ simultaneously scores the global consistency of each token \emph{a posteriori}.
We leverage this to introduce \emph{Posterior Refinement}, a novel inference-time refinement strategy that enables the model to adaptively self-correct its outputs, matching the performance of discrete baselines with $32 \times$ fewer NFEs.
Across diverse benchmarks, we demonstrate that $\ours$ with Posterior Refinement improves the speed--quality tradeoff over both MDM and FMLM families, providing a scalable foundation for high-fidelity language modeling.

\vspace{1.5em}%
\textbf{\sffamily\bfseries Code: }\url{https://github.com/MananAg007/posterior-refinement}

\textbf{\sffamily\bfseries Project Page: }\url{https://posterior-refinement.github.io/}

\textbf{\sffamily\bfseries Correspondence: } \href{mailto:mananaga@cs.cmu.edu,sheels@cs.cmu.edu}{\color{black}{\{mananaga, sheels\}@cs.cmu.edu}}

%% file: intro.tex
\vspace{-0.4em}
\section{Introduction}
\label{sec:intro}
\vspace{-0.4em}

\begin{wrapfigure}[17]{r}{0.42\textwidth}
    \vspace{-0.3em}
	\centering
	\includegraphics[width=\linewidth]{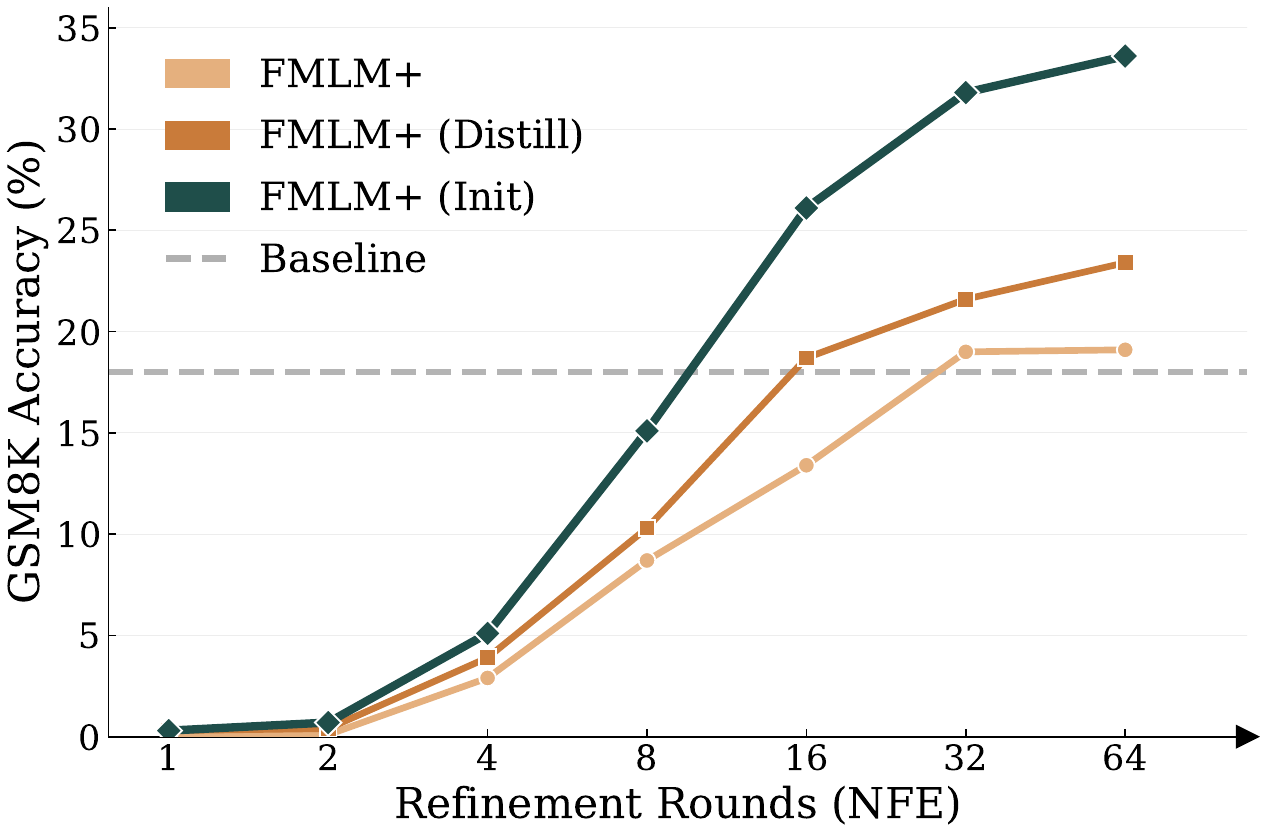}
	\caption{
		\textbf{Iterative refinement.} $\ours$ \emph{unlocks} self-correction capabilities, outperforming all diffusion baselines with 1024 function evaluations using as few as $32$ rounds of Posterior Refinement.
    }
	\label{fig:gsm8k_iterative_refinement}
\end{wrapfigure}

Contemporary language models are predominantly autoregressive (AR), necessitating $L$ sequential forward passes to generate a sequence of length $L$~\citep{achiam2023gpt, team2023gemini, guo2025deepseek}.
While this paradigm has scaled remarkably well, its inherent sequential bottleneck makes inference slow and memory-bound~\citep{gu2017non, 3691938.3691945}. These inefficiencies cascade into downstream applications like inference-time scaling, reasoning, and reinforcement learning, which are inference-bottlenecked, ultimately hindering scalability~\citep{guo2025deepseek}.

Masked diffusion models (MDMs) have emerged as the leading non-autoregressive alternative for language modeling~\citep{austin2021structured, lou2023discrete, sahoo2024simple}, with recent efforts scaling them to billions of parameters~\citep{nie2025large, ye2025dream, khanna2025mercury, googledeepmind2025gemini, song2025seed}.
In contrast to fixed causal orderings, MDMs model any-order factorizations of the sequence by training on arbitrary masking patterns.
During inference, this permits adaptive unmask-and-remask decoding strategies that commit high-confidence tokens first~\citep{Chang2022MaskGITMG, zheng2023rdm, ye2024beyond}.
Such a flexible generative procedure has proven useful in planning-oriented tasks~\citep{kim2025trainworstplanbest}.

Complementing discrete formulations, a distinct line of work explores diffusion and flows within continuous representations for non-autoregressive language modeling, spanning fixed embeddings~\citep{gong2022diffuseq, yuan2022seqdiffuseq, elf2026}, learned latent spaces~\citep{dieleman2022continuous, chen2026langflowcontinuousdiffusionrivals}, and one-hot relaxations~\citep{mahabadi2024tess, tae2025tess, davis2025generalised}.
Recently, continuous-space approaches have been shown to be competitive with their discrete-diffusion counterparts in language modeling~\citep{lee2025flowmaplm, chen2026langflowcontinuousdiffusionrivals, elf2026, yang2026continuousdiffusionscalescompetitively}.
In particular, Flow Map Language Models (FMLMs)~\citep{lee2025flowmaplm} train flow maps~\citep{boffi2025flowmapmatchingstochastic} over one-hot embeddings in Euclidean space, resolving the inherent factorization error in MDMs~\citep{deschenaux2024beyond, zheng2024masked, kang2025parallelbench} by learning a joint transport from Gaussian noise to one-hot vectors.
This allows FMLMs to substantially improve the quality of few-step language generation, including generating text in just a \emph{single} function evaluation~\citep{lee2025flowmaplm}.

Beyond fast, parallel inference, non-autoregressive generation offers a more profound theoretical advantage over AR modeling: the capacity for \emph{iterative refinement} of generated text.
While traditional causal language models are fundamentally constrained by a one-way, left-to-right generation, an iterative refinement paradigm allows the model to globally critique, erase, and regenerate arbitrary subsets of tokens to continually improve its output.
This capacity directly mirrors human cognition; when composing complex text, humans rarely write perfectly in a single, unyielding left-to-right pass. Instead, we efficiently formulate a rough draft to capture the global context, and then iteratively assess, delete, and rewrite specific parts of text while keeping the overall picture in context~\citep{flower1981cognitive}.

However, existing non-autoregressive paradigms fail to fully realize this potential.
While MDMs offer flexible, any-order inference, their quality rapidly degrades when multiple tokens are unmasked simultaneously, rendering few-step generation infeasible~\citep{deschenaux2024beyond, zheng2024masked, kang2025parallelbench, lee2025flowmaplm}.
On the contrary, FMLMs resolve this factorization error by learning the joint transport from Gaussian noise to one-hot vectors, yet they lack the inference-time flexibility native to MDMs because they evolve all tokens simultaneously.
We argue that the respective strengths of these two paradigms constitute the core primitives for effective iterative refinement, which requires both --- efficient drafting of coherent multi-token blocks and the flexibility to retain arbitrary token subsets while regenerating the rest.

Motivated by this insight, we introduce $\ours$, a unified framework that equips FMLMs with masking-style noise schedules.
By coupling single-step sequence generation from FMLMs with flexible conditioning from MDMs, this architecture naturally supports iterative refinement.
Crucially, because the entire sequence is transported jointly, our approach yields an \emph{a posteriori} confidence score for each token \emph{conditioned on the full draft}.
This contrasts sharply with the \emph{a priori} confidence derived from independent token-wise marginals in MDMs.
We leverage this property to develop \emph{Posterior Refinement} ($\oursPR$) --- an iterative refinement strategy that recursively retains high-confidence tokens and regenerates the rest to keep improving sample quality with each additional round of refinement (\cref{fig:gsm8k_iterative_refinement}).
Subsequently, $\ours$ with Posterior Refinement establishes a new state-of-the-art that consistently outperforms both MDM and FMLM baselines across all considered benchmarks with far fewer function evaluations (NFEs). 

Our contributions are as follows:
\begin{enumerate}
    \item \textbf{A unifying framework.} We introduce $\ours$, a generalization of FMLMs to masking-style noise schedules that combines the few-step efficiency of FMLMs with the any-order flexibility of MDMs. We further establish a connection between MDMs and $\ours$, providing a direct path to scale $\ours$ leveraging advances in large-scale MDM pre-training~\citep{nie2025large, ye2025dream}.
    \item \textbf{Posterior refinement.} We introduce \emph{Posterior Refinement}, an \emph{after-the-fact} confidence-based strategy that enables $\ours$ to recursively self-correct its output while generating all remaining tokens with \textit{single} function evaluations (\cref{fig:sudoku_iterative_refinement}). We identify how existing a-priori confidence heuristics of MDMs fail on simple toy problems and demonstrate how Posterior Refinement is able to rectify these shortcomings.
    \item \textbf{Extensive evaluation.} We demonstrate that $\ours$ navigates the speed--quality trade-off more effectively than both MDM and FMLM families across four benchmarks: TinyStories~\citep{eldan2023tinystoriessmalllanguagemodels}, OpenWebText~\citep{Gokaslan2019OpenWeb}, GSM8K~\citep{cobbe2021trainingverifierssolvemath}, and Sudoku, matching or surpassing the strongest baselines with up to $32\times$ fewer NFEs.
\end{enumerate}

%% file: background.tex
\newpage
\begin{figure}[!htb]
	\centering
	\includegraphics[width=0.95\linewidth]{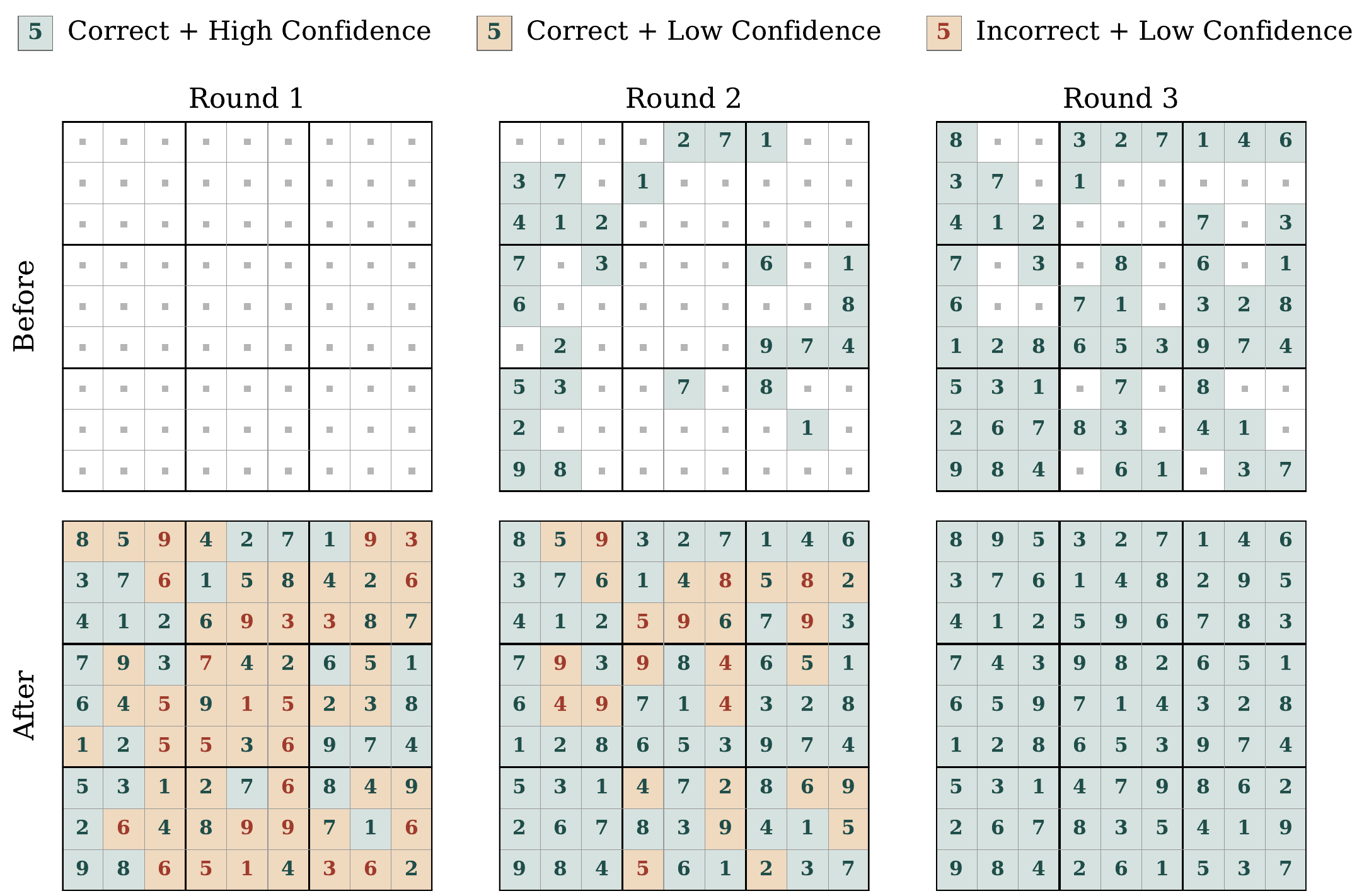}
	\caption{
    	\textbf{Posterior Refinement with $\ours$.} 
        By evaluating token consistency against the full generated sequence, $\ours$ effectively identifies its own errors. Crucially, incorrect tokens consistently form a subset of low-confidence generations, allowing Posterior Refinement to reliably filter and revise errors.
    }
	\label{fig:sudoku_iterative_refinement}
\end{figure}
\section{Background}
\label{sec:background}

Let $\mathcal{V}$ denote a vocabulary, with $|\mathcal{V}| = V$, and let ${\bf y} = ({\bf y}^l)_{l=1}^L \in \mathcal{V}^L$ be a sequence of length $L$ drawn from a data distribution $p({\bf y})$.

\paragraph{Autoregressive Models.} The AR paradigm models the data distribution through a left-to-right factorization
$p({\bf y})=p({\bf y}^1)p({\bf y}^2|{\bf y}^1)\hdots p({\bf y}^L|{\bf y}^{<L})$,
and learns the conditional distributions $p({\bf y}^l \mid {\bf y}^{<l})$ over tokens given their preceding context~\citep{jordan1986attractor, elman1990finding, bengio2003neural}.
Generation proceeds sequentially by sampling one token at a time conditioned on previously generated tokens.

\paragraph{Masked Diffusion Models.}
\label{par:MDM_bg}
Masked Diffusion Models (MDMs)~\citep{austin2021structured, lou2023discrete, sahoo2024simple} offer an alternative paradigm for discrete generative modeling, parameterizing marginals over clean tokens given a partially masked input.
MDMs extend the vocabulary $\mathcal{V}$ with an extra $\mask$ token. A monotonic fraction $\alpha(t)$ of clean tokens ${\bf y}$ is masked to produce ${\bf y}_t$.
The token-wise marginal $p_\theta({\bf y}^l \mid {\bf y}_t)$ is then trained via a classification loss over the masked positions,
\begin{align}
    \label{eq:bg:mdm_loss}
    \mathcal{L}_{\text{MDM}}(p_\theta) = -\E_t \left[ w(t) \sum_{l=1}^L \mathbf{1}[{\bf y}_t^l = \texttt{MASK}] \log p_\theta({\bf y}^l \mid {\bf y}_t)\right],
\end{align}
where $w(t) > 0$ is some weighting function.

At inference, MDMs start from a fully masked sequence and generate tokens via an iterative unmask--remask procedure; at each step they first categorically sample from the marginal distribution for each token independently and then remask some tokens, repeating until all tokens are unmasked. A natural decoding strategy for MDMs is to commit tokens in decreasing order of confidence, usually using the maximum marginal probability $\max_{v} p_\theta({\bf y}^l = v \mid {\bf y}_t)$~\citep{zheng2023rdm, ye2024beyond}.
\paragraph{Flow Matching and Flow Maps.}
Flow matching~\citep{lipman2022flow, liu2022flow, albergo2023stochastic} learns a continuous transport from a simple source distribution (e.g., Gaussian noise) to the target data distribution by regressing the velocity field of a probability flow ODE along a prescribed source–target interpolant.
We define ${\bf x}_1 = f({\bf y})$ as the one-hot representation of clean data, and sample ${\bf x}_0 \sim \mathsf{N}(0, I)$ as isotropic Gaussian noise of matching shape to construct the linear interpolant in Euclidean space:
\begin{align}
    \label{eq:bg:interpolant}
    I_t \coloneqq (1-t) {\bf x}_0 + t {\bf x}_1, \qquad t \in [0,1].
\end{align}
This interpolant admits a deterministic probability flow ODE
\begin{align}
    \label{eq:bg:flow}
    \dot{\bf x}_t = b_t({\bf x}_t), \qquad b_t({\bf x}) = \E[\dot{I}_t \mid I_t = {\bf x}] = \E[{\bf x}_1 - {\bf x}_0 \mid I_t = {\bf x}],
\end{align}
whose solution transports samples from the source distribution to the target data distribution~\citep{lipman2022flow, liu2022flow, albergo2023stochastic}.
In practice, we parameterize the flow velocity $b_t$ with a neural network, and inference requires a numerical integration of the probability flow ODE to generate samples.

The flow map $X_{s,t}: [0,1]^2 \times \R^{L \times V} \to \R^{L \times V}$ associated with the velocity field $b_t$ is defined as the solution operator that transports the state at time $s$ to the state at time $t$~\citep{boffi2025flowmapmatchingstochastic}:
\begin{align}
    \label{eq:bg:flow_map}
    X_{s,t}({\bf x}_s) = {\bf x}_s + \int_s^t b_\tau({\bf x}_\tau) d\tau = {\bf x}_s + (t-s)v_{s,t}({\bf x}_s),
\end{align}
where $v_{s,t}({\bf x}_s)$ is the average velocity (mean flow) between times $s$ and $t$~\citep{geng2025mean}.
In the limit $s \to t$, the average velocity recovers the instantaneous velocity, $\lim_{s \to t} v_{s,t}({\bf x}) = b_t({\bf x})$.
Crucially, this allows us to arbitrarily jump along the flow, including generating from the target distribution in a \emph{single} function evaluation ($X_{0,1}$)~\citep{frans2024one, geng2025mean}.

\paragraph{Flow Map Language Models.}
\label{par:FMLM_bg}
Flow map language models (FMLM)~\citep{lee2025flowmaplm} enable few-step language generation by parameterizing the \emph{two-time denoiser} (TTD) $\delta_{s,t}: [0,1]^2 \times \R^{L \times V} \to \R^{L \times V}$,
\begin{align}
    \label{eq:bg:ttd}
    \delta_{s,t}({\bf x}) &\coloneqq {\bf x} + (1-s) v_{s,t}({\bf x}),
\end{align}
which extrapolates the flow-map transport to $t = 1$ and satisfies the boundary identity $\delta_{t,t}({\bf x}) = \E[{\bf x}_1 \mid I_t = {\bf x}]$~\citep{pixelmeanflows, roos2026categorical, lee2025flowmaplm}.
The TTD takes values on the per-token simplex, $\delta_{s,t}({\bf x})^l \in \Delta^{V-1}$ for $l = 1, \dots, L$, and obeys the semigroup identity
\begin{align}
    \label{eq:bg:ttd_semigroup}
    \delta_{s,t}({\bf x}) = \gamma\, \delta_{s,u}({\bf x}) + (1-\gamma)\, \delta_{u,t}(X_{s,u}({\bf x})),
    \qquad
    \gamma = \frac{(1-t)(u-s)}{(1-u)(t-s)},
\end{align}
for every $0 \le s \le u \le t \le 1$~\citep{lee2025flowmaplm}.
In practice, we parameterize ${\delta}_{s,t}$ with a token-wise softmax head and train it via a token-wise cross-entropy objective,
\begin{align}
    \label{eq:bg:ttd_ce}
    \calL_{\mathsf{FMLM}}({\delta})
    &\coloneqq
    \underbrace{
    \E_t\E_{{\bf x}_0,{\bf x}_1}\left[ \text{CE}\left( {\delta}_{s,s}(I_s), {\bf x}_1\right)
    \right]}_{\calL_{\mathrm{diag}}}
    +
    \underbrace{
    \E_{s,u,t}\E_{{\bf x}_0,{\bf x}_1}\left[ \text{CE} \left( {\delta}_{s,t}(I_s), \bar{\delta}_{s,t} \right)
    \right]}_{\calL_{\mathrm{off}}},
\end{align}
where $X_{s,u}({\bf x}) \coloneqq \frac{1-u}{1-s}\,{\bf x} + \frac{u-s}{1-s}\,\delta_{s,u}({\bf x})$, $\bar{\delta}_{s,t} \coloneqq \text{stopgrad}\left({\gamma\, \delta_{s,u}(I_s) + (1-\gamma)\, \delta_{u,t}(X_{s,u}(I_s))}\right)$.
Intuitively, this loss trains the two-time denoiser to reconstruct clean data along the diagonal $s = t$ ($\mathcal{L}_{\mathrm{diag}}$) and to progressively distill a larger off-diagonal jump ($s \rightarrow t$) using two smaller jumps ($s \rightarrow u, u \rightarrow t$) as a teacher ($\mathcal{L}_{\mathrm{off}}$).

During inference, we can arbitrarily jump along the flow by recovering the flow map from the TTD,
\begin{align}
    \label{eq:bg:ttd_recovery}
    X_{s,t}({\bf x}) = \frac{1-t}{1-s}\,{\bf x} + \frac{t-s}{1-s}\,\delta_{s,t}({\bf x}).
\end{align}
This allows us to generate samples in as few as a \emph{single} function evaluation ($\delta_{0,1}$).

%% file: algo.tex
\newpage
\section{FMLM+}
\label{sec:algo}

Continuous diffusion and flow-based models typically denoise the full sequence in parallel~\citep{ho2020denoising, song2020score, lipman2022flow}.
While effective for unconditional generation, this formulation has no native mechanism for conditioning on clean context.
Unlike autoregressive or masked models, where conditioning is a structural primitive, continuous-flow models often require architectural modifications, such as cross-attention or auxiliary guidance models, to incorporate conditional information~\citep{ramesh2022hierarchicaltextconditionalimagegeneration, blackforestlabs2024flux1dev}.
To address this limitation in flow-based language modeling, we introduce $\ours$, a generalization of FMLMs that natively learns to generate under diverse conditioning patterns (\Cref{sec:algo:training}).
We show that this allows for flexible decoding strategies at inference time (\Cref{sec:algo:inference}), such as FMLM- and MDM- like inference, and further unlocks novel iterative refinement schemes (\Cref{sec:algo:pr}).

\subsection{Training}
\label{sec:algo:training}

The key idea underlying $\ours$ is to generalize FMLMs by lifting the noising process to a masking-style design, preserving a set of clean tokens ${\bf y}^{\mathcal{C}}$ at context positions $\mathcal{C} \subset \{1, \cdots, L\}$ while noising the rest:
\begin{align}
    \label{eq:algo:interpolant}
    I_t^l \coloneqq
    \begin{cases} 
        \mathbf{x}_1^l, & \text{if } l \in \mathcal{C} \\
        (1 - t) \mathbf{x}_0^l + t \mathbf{x}_1^l, & \text{else } (l \in [L]\setminus\mathcal{C}).
    \end{cases}
\end{align}
This enables $\ours$ to condition on arbitrary clean context without additional architectural components.
In practice, this is implemented by converting the shared time conditioning in the DiT architecture~\citep{peebles2023scalable} to a per-position conditioning with $t=1$ for clean positions.
For brevity, we drop this implicit conditioning on ${\bf y}^\mathcal{C}$ throughout the remainder of the paper, writing $\delta_{s,t}(\cdot)$ instead of $\delta_{s,t}(\cdot | {\bf y}^\mathcal{C})$.

We train the two-time denoiser $\delta_{s,t}$ with the following progressive self-distillation objective:
\begin{align}
    \label{eq:algo:training}
    \calL_{\ours}(\delta) = \E_{\mathcal{C}}\Big[
    \underbrace{\E_{t}\,\E_{{\bf x}_0,{\bf x}_1}\!\big[\mathrm{CE}(\delta_{s,s}(I_s),{\bf x}_1)\odot m\big]}_{\calL_{\mathrm{diag}}}
    +
    \underbrace{\E_{s,u,t}\,\E_{{\bf x}_0,{\bf x}_1}\!\big[\mathrm{CE}(\delta_{s,t}(I_s),\bar{\delta}_{s,t})\odot m\big]}_{\calL_{\mathrm{off}}}
    \Big].
\end{align}
Compared to the vanilla FMLM objective \cref{eq:bg:ttd_ce}, we have an additional expectation over $\mathcal{C}$, the set of context tokens, and use $m\in\{0,1\}^L$ with $m^l=\mathbf{1}[\,l\notin\mathcal{C}\,]$ to mask the loss from tokens that are already clean.
To avoid extra complexity, we drop the time reparameterization trick from~\citet{lee2025flowmaplm} and use the self-distillation objective~\eqref{eq:algo:training} instead of the two-stage training in~\citet{lee2025flowmaplm} for all $\ours$ runs.

\paragraph{Leveraging Pretrained MDMs.}
\label{sec:algo:mdlm_boundary}
To accelerate the training of $\ours$, we propose a novel method to leverage pretrained MDMs based on the following observation.
At $s=t=0$, the off-diagonal term $\mathcal{L}_{\mathrm{off}}$ in the $\ours$ training objective \cref{eq:algo:training} vanishes and the diagonal term $\mathcal{L}_{\mathrm{diag}}$ reduces to:
\begin{align}
    \label{eqn:mdm_equivalence}
    \E_\mathcal{C} \left[ \E_{{\bf x}_0, {\bf x}_1}\left[ \text{CE}(\delta_{0,0}({\bf x}_0), {\bf x}_1) \odot m \right] \right].
\end{align}
This token-wise classification objective predicts the ground-truth tokens using only the context ${\bf y}^\mathcal{C}$ and pure Gaussian noise.
Because the model must rely exclusively on the context, this formulation is equivalent to MDM training objective \cref{eq:bg:mdm_loss}, which similarly predicts targets from ${\bf y}^\mathcal{C}$ and $\mask$ tokens.

We use this equivalence to accelerate $\ours$ training in two ways:

\begin{enumerate}
    \item \textbf{Distillation.} We utilize an MDM as a teacher model, training the $\ours$ student to match the MDM's token probabilities for the same context. This approach reduces gradient variance and accelerates convergence~\citep{Hinton2015DistillingTK}. Concretely, we use:
    \begin{align}
        \label{eq:distill_loss}
        \mathcal{L}_{\text{Distill}}(\delta) = \mathcal{L}_{\text{\ours}}(\delta) + \lambda \E_{\mathcal{C}}\E_{{\bf x}_0,{\bf x}_1} \left[ \text{CE}(\delta_{0,0}(I_0), p_\theta({\bf y}_0))\right] .
    \end{align}
    \item \textbf{Initialization.} We initialize $\ours$ weights directly from a pretrained MDM checkpoint, leveraging the similarity between the masked diffusion and continuous flow objectives~\citep{transferlearning}. We show how the DiT architecture can mechanistically adapt MDM weights to handle noisy input in~\cref{sec:exp:dit}.
\end{enumerate}

\begin{figure}[!htb]
	\centering
	\includegraphics[width=0.95\linewidth]{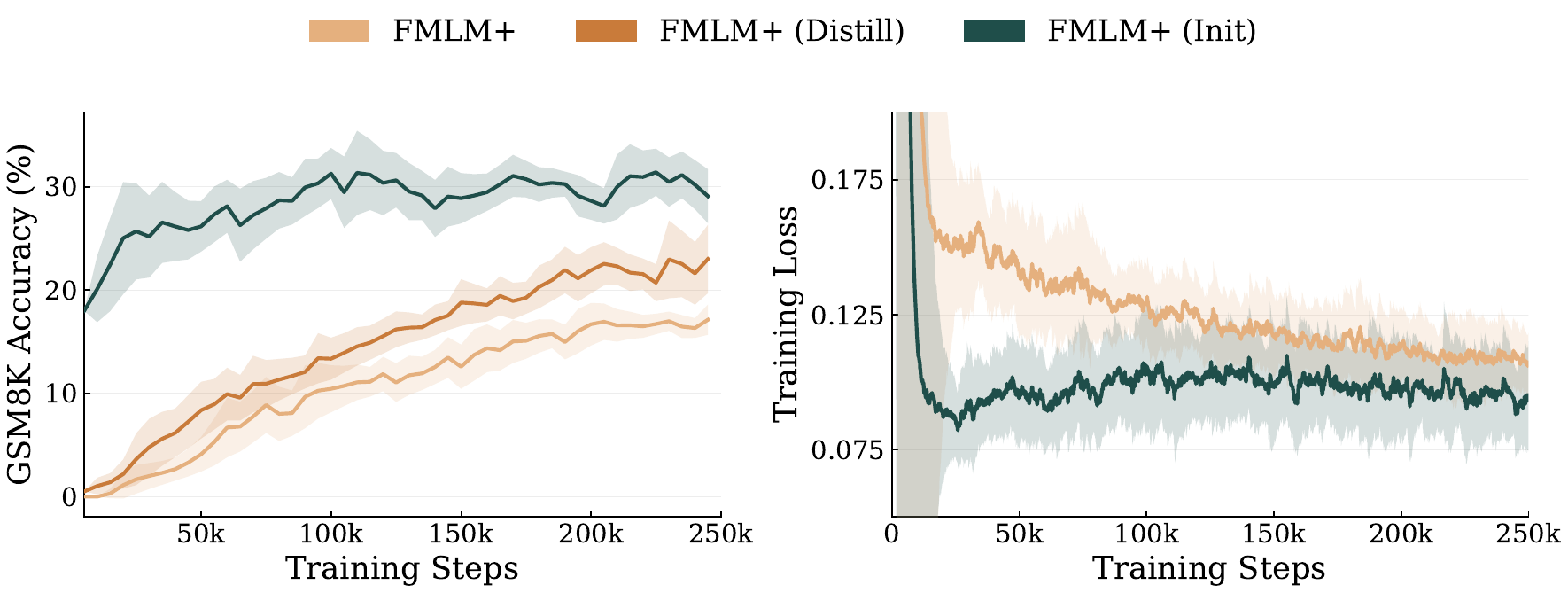}
	\caption{
        \textbf{Improved Training Techniques for $\ours$.} \emph{Left:} downstream accuracy ($\uparrow$) with 32 rounds of Posterior Refinement; both teacher-based variants outperform training from scratch. \emph{Right:} training loss ($\downarrow$); warm-started training converges faster and achieves a better optimum. Both metrics are smoothened with an exponential moving average ($\lambda = 0.99$).
    }
	\label{fig:fmlm_training}
\end{figure}

Empirically, both distillation and initialization strategies significantly improve training efficiency and final performance.
We find that warm starting from pre-trained MDM weights helps the model converge to a better optimum than training from scratch, highlighting the role of stochasticity and parameter initialization in training FMLMs (\cref{fig:fmlm_training}).

\subsection{Inference}
\label{sec:algo:inference}

\begin{wrapfigure}[17]{r}{0.45\textwidth}
  \centering

  \begin{minipage}{0.45\textwidth}
    \refstepcounter{algorithm}
    \hrule height 0.8pt
    \vspace{2pt}
    \noindent\textbf{Algorithm~\thealgorithm} $\ours$ Inference.
    \label{alg:inference_code}
    \vspace{2pt}
    \hrule height 0.4pt

    \footnotesize
    \input{pseudocode_inference}

    \hrule height 0.8pt
  \end{minipage}

  % \vspace{-0.5em}
\end{wrapfigure}

The core advantage of training $\ours$ on diverse conditioning patterns is that it allows us to flexibly choose between various decoding strategies during inference.
As a simple example, we can recover the standard FMLM inference by defining a time grid $0 = t_0 < t_1 < \cdots < t_N = 1$ and denoising all unknown tokens simultaneously, jumping in $N$ steps along the flow trajectory via~\cref{eq:bg:ttd_recovery}.

More importantly, this allows us to construct a mechanism for iterative refinement as a loop that performs parallel generation followed by committing a subset of tokens (\Cref{alg:inference_code}).
At each iteration $i$, we propose a candidate $\hat{\bf x}_1$ via FMLM inference.
Then, we select a subset $\mathcal{U}$ to retain from this candidate sequence and round them into discrete tokens via an $\argmax$ operation. Finally, we add them to our context set $\mathcal{C} \coloneqq \mathcal{C} \cup \mathcal{U}$ and re-noise all remaining tokens $[L] \setminus \mathcal{C}$ for the next iteration.
This, for example, allows for block-diffusion-style inference~\citep{arriola2025block} by choosing $\mathcal{U}$ as the next contiguous block of $B$ tokens.

We may further perform MDM-like, confidence-based inference by noting the equivalence between $\delta_{0,0}$ and MDMs \eqref{eqn:mdm_equivalence}.
Consequently, by sampling tokens from $\delta_{0,0}({\bf x}_0)$ and
by prioritizing positions with sharper $\delta_{0,0}({\bf x}_0)$, we recover MDM's confidence-based sampling~\citep{zheng2023rdm, ye2024beyond, kim2025trainworstplanbest}.
We note that this approach, applied either to $\ours$ with $\delta_{0,0}$ or MDMs, generates tokens and estimates confidence \emph{a priori}, using only the partial context available at that step and without knowing what tokens will be generated at other positions.

\subsection{Posterior Refinement}
\label{sec:algo:pr}

Beyond recovering existing inference schemes as in \Cref{sec:algo:inference}, we show that $\ours$ enables a novel class of after-the-fact refinement strategies native to neither the MDM nor the FMLM paradigms.

During inference, $\ours$ steps along the flow trajectory to simultaneously decode all positions.
At the final integration step, because the model operates on a nearly complete sequence, the model implicitly evaluates the conditional probability of each token given the rest of the generated text. The one-step generation $\delta_{0,1}$ also captures this behavior since, ideally, the flow from $t=0$ to $t=1$ is equivalent to composing the transport from $0$ to $t$ and the flow map jump from $t$ to $1$.
This enables us to compute token confidence conditioned on the fully generated sequence, effectively evaluating each token's fit within the global context.
Reflecting the \emph{a-posteriori} nature of these estimates, we term our approach \emph{Posterior Refinement} (PR).

PR instantiates the inference loop in \cref{sec:algo:inference}, choosing the retained set $\mathcal{U}$ using the confidence of the predicted endpoint.
Concretely, at each iteration, we use the final flow map jump $\hat{\bf x}_1 = \delta_{t_{N-1},1}({\bf x}_{t_{N-1}})$ to set
\begin{align}
    \label{eq:algo:pr_update_set}
    \mathcal{U} = \{ l \in [L] \setminus \mathcal{C} : p_{\text{max}}^l > p\},
\end{align}
where $p_{\text{max}}^l \coloneqq \max_v (\hat{\bf x}_1^{(i)})^{l,v}$ is a measure of the a-posteriori confidence and $p$ is some threshold for selecting confident tokens.
To avoid stalling when no position clears the threshold, we commit at least $k$ positions per round, falling back to the top-$k$ by confidence.

\paragraph{Interpretation.}
\label{sec:algo:interpretation}
\begin{wrapfigure}[21]{r}{0.53\textwidth}
    \vspace{-2em}
    \centering
    \includegraphics[width=\linewidth]{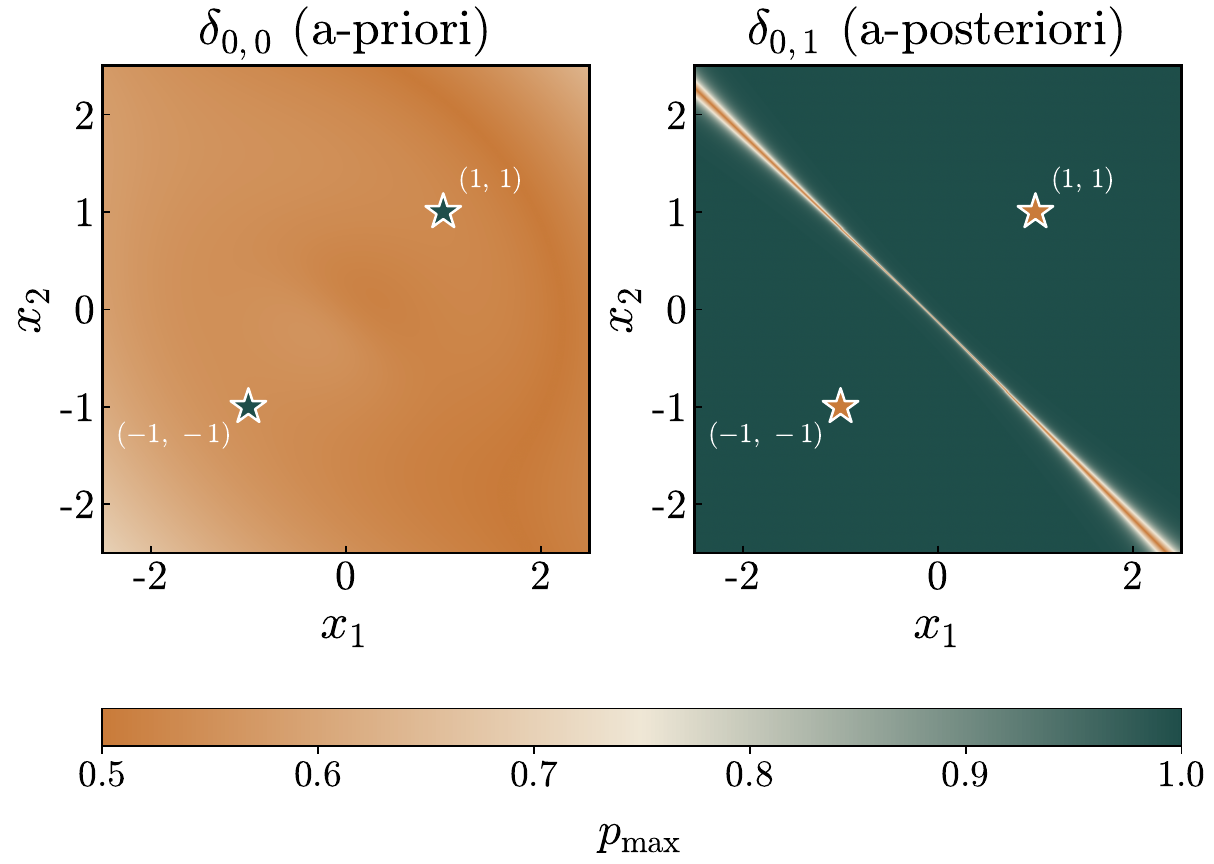}
	\caption{
    \textbf{Two types of confidence.} On a 2-mode toy problem, the one-step FMLM $\delta_{0,1}$ outputs high $p_{\max}$ for most inputs. Low-confidence regions concentrate near decision boundaries, where the endpoint is ambiguous, indicating $\delta_{0,1}$ captures the epistemic confidence of the model. In contrast, $\delta_{0,0}$ is flat for all inputs, reflecting its aleatoric nature.
    }
    \label{fig:denoiser_field}
    \vspace{-0.5em}
\end{wrapfigure}
An ideal FMLM would perfectly sample from its training distribution, always producing one-hot vectors.
In practice, however, the model outputs categorical distributions that are not strictly one-hot. To resolve this,~\citet{lee2025flowmaplm} project each token to the nearest vertex of the probability simplex via the argmax operator.
We interpret the \emph{rounding error} of this projection as a proxy for the model's confidence.
This interpretation aligns with our empirical observations in Sudoku, where we find that incorrect tokens consistently exhibit high rounding error (\cref{fig:sudoku_iterative_refinement}).

Concretely, this rounding error can be quantified using either the $\ell_1$ distance or the Kullback-Leibler (KL) divergence between the generated distribution and the target one-hot distribution.
Under both metrics, the rounding error is a monotonically decreasing function of $p_{\max}^l$.
Consequently, applying a threshold to $p_{\max}^l$ is mathematically equivalent to thresholding the rounding error under these metrics (\cref{prop:rounding_monotone}).

\paragraph{Why a-priori confidence fails.}
Because both the MDM denoiser $p_\theta$ and the FMLM denoiser $\delta_{0,0}$ predict token-wise marginals, independently sampling high-confidence tokens from these marginals cannot circumvent factorization error.
While this heuristic performs well on unimodal target distributions such as unique Sudoku solutions, it catastrophically fails on multi-modal distributions (\cref{tab:uncond-sudoku-nfe}).
We identify that MDMs, and equivalently $\delta_{0,0}$, conflate aleatoric uncertainty, the inherent randomness of the data distribution, with epistemic uncertainty, the model's internal confidence~\citep{hullermeier2021aleatoric}.
We hypothesize that by failing to perfectly learn the joint transport from Gaussian noise to clean one-hot data, the FMLM trajectory is able to surface the true epistemic uncertainty of the model through errors in its predictions (\cref{fig:denoiser_field}).

%% file: pseudocode_inference.tex
% Posterior Refinement (inference). Style from pseudocode_style.tex.
\begin{lstlisting}[language=PythonFuncColor, escapechar=`]
# delta: Two Time Denoiser
# C: commit criterion; R rounds
x = randn(L, V)
s, t = zeros(L), ones(L)
for _ in range(R):
    x_1    = delta(x, s, t, flow_steps)
    tokens = x_1.argmax(axis=1)

    commit = C(tokens, x_1)

    x = where(commit,
        one_hot(tokens), randn(L, V))
    s = where(commit, 1, 0)
return x_1
\end{lstlisting}

%% file: results.tex
% \vspace{em}
\newpage
\section{Experiments}
\label{sec:results}

We evaluate $\ours$ on four benchmarks: TinyStories~\citep{eldan2023tinystoriessmalllanguagemodels} and OpenWebText~\citep{Gokaslan2019OpenWeb} for unconditional language generation, TinyGSM~\citep{liu2023tinygsmachieving80gsm8k} for conditional reasoning, and Sudoku for structured constraint satisfaction.
Following~\citet{kim2025trainworstplanbest}, we train on code solutions from the synthetic TinyGSM corpus and evaluate zero-shot on the GSM8K~\citep{cobbe2021trainingverifierssolvemath} test split.
We adopt a DiT~\citep{peebles2023scalable} backbone for all baselines; further details on model architecture and datasets are provided in~\cref{sec:app:setup}.

\paragraph{Training.} We train $\ours$ with AdamW under bfloat16 mixed precision, optimizing the Two-Time Denoiser self-distillation objective in~\cref{eq:algo:training}.
For the MDM-distillation and -initialization experiments (\Cref{sec:exp:dit}), we additionally train an MDLM model~\citep{sahoo2024simple} with the same architecture to act as a teacher and an initialization checkpoint respectively.
\Cref{sec:app:training} specifies all task-specific training hyperparameters.

\paragraph{Evaluation.}
On TinyStories and OpenWebText, we report generative perplexity under GPT-2 Large~\citep{radford2019language} together with per-sample unigram entropy; where low entropy signals repetition collapse~\citep{zheng2024masked}.
We report solve-rate accuracy using a sampled Python solution graded by execution for GSM8K and a valid completion respecting the clue set for Sudoku.

\subsection{Posterior Refinement}
\begin{wraptable}[14]{r}{0.55\linewidth}
    % \vspace{-1em}
    \centering
    \footnotesize
    \setlength{\tabcolsep}{8pt}
    \begin{tabular}{lccccc}
        \toprule
         & \multicolumn{5}{c}{{\textbf{Unconditional Sudoku}} (\%, $\uparrow$)} \\
        \cmidrule(lr){2-6}
        Method & 1 & 3 & 9 & 27 & 81 \\
        \midrule
        MDLM (Ancestral) & 0 & 0 & 0 & 7.7 & 33.8 \\
        MDLM (Confidence) & 0 & 0 & 0 & 0.0 & 97.2 \\
        \ours{} ($\delta_{\bf 0, \bf 0}$) & 0 & 0 & 0 & 0.0 & 98.8 \\
        \rowcolor{darkteal} \ours{} ($\delta_{\bf 0,\bf 1}$) & \textbf{0} & \textbf{22.2} & \textbf{87.1} & \textbf{98.8} & \textbf{99.6} \\
        \bottomrule
    \end{tabular}
    \caption{ 
    \textbf{A-priori confidence fails on multi-modal data.} On unconditional Sudoku generation, MDLM and $\ours$ with a-priori confidence (\Cref{sec:algo:inference}) fail catastrophically when sampling multiple tokens at a time. $\ours$ with $\oursPR$ (\Cref{sec:algo:pr}) achieves $87\%$ accuracy with only a few NFEs.
    }
    \label{tab:uncond-sudoku-nfe}
\end{wraptable}
We compare Posterior Refinement ($\oursPR$) against a-priori confidence-based sampling methods in \cref{tab:uncond-sudoku-nfe}, and against a range of representative language modeling methods in \cref{tab:main}.
$\oursPR$ delivers large efficiency gains across all considered benchmarks.
As a result, $\ours$ achieves 71\% and 19\% accuracy on Hard Sudoku and GSM8K problems with just 4 and 32 NFEs respectively, outperforming all non-autoregressive baselines in both speed ($32 \times$ speedup) and accuracy.
On TinyStories and OWT, $\ours$ matches or surpasses all baselines with up to $8 \times$ fewer NFEs. 
We ablate over the choice of hyperparameters for $\oursPR$ in \cref{sec:app:ablations} and provide qualtitative samples in \cref{sec:app:samples}.
\vspace{1em}
\begin{table}[!htbp]
    \centering
    \footnotesize
    \setlength{\tabcolsep}{4pt}

    \label{tab:pr_results}
    \adjustbox{max width=\linewidth}{%
    \begin{tabular}{l l
        c cc
        c cc
        c cccc
        c c}
        \toprule
        & &
        \multicolumn{3}{c}{\textbf{TinyStories}} &
        \multicolumn{3}{c}{\textbf{OpenWebText}} &
        \multicolumn{5}{c}{\textbf{Sudoku}} &
        \multicolumn{2}{c}{\textbf{GSM8K}} \\
        \cmidrule(lr){3-5}
        \cmidrule(lr){6-8}
        \cmidrule(lr){9-13}
        \cmidrule(lr){14-15}
        Category & Method &
        NFE & Gen-PPL $\downarrow$ & Ent. &
        NFE & Gen-PPL $\downarrow$ & Ent. &
        NFE & Easy & Med.\  & Hard & &
        NFE & Acc.\ (\%,$\uparrow$) \\
        \midrule
        \multirow{2}{*}{AR}
        & Sample
          & 128 & 8.89  & 4.01
          & 1024 & 35.45 & 5.58
          & 128  & 13.9 &  5.1 &  0.6 &
          & 512  & 53.9 \\
        & Greedy
          & 128 & 5.34  & 4.06
          & 1024 & 1.07  & 1.65
          & 128  & 14.6 &  5.1 &  1.0 &
          & 512  & 63.3 \\
        \midrule
        \multirow{2}{*}{Discrete}
        & MDLM
          & 128 & 18.74 & 4.03
          & 1024 & 105.15 & 5.63
          & 128  & 92.0 & 77.1 & 30.2 &
          & 1024 & 18.0 \\
        & Duo
          & 128 & 22.73 & 4.05
          & 1024 & 77.69  & 5.55
          & 128  & 96.3 & 84.7 & 58.4 &
          & 1024 & 17.2 \\
        \midrule
        \multirow{4}{*}{Continuous}
        & CANDI
          & 128  & 46.32  & 4.04
          & 1024 & 143.13 & 5.71
          & 128  & 79.3 & 45.9 & 16.7 &
          & 1024 &  0.2 \\
        & FLM
          & 128  & 57.50  & 4.17
          & 1024 &  62.23 & 5.33
          & 128  & 94.2 & 82.7 & 44.5 &
          & 1024 &  0.3 \\
        & S-FLM
          & 128  & 95.25  & 4.08
          & 1024 & 123.87 & 5.52
          & 128  & 94.8 & 85.2 & 45.0 &
          & 1024 & 18.0 \\
        \rowcolor{darkteal}
        & $\ours$
          & \textbf{32}  & \textbf{17.53} & 3.96
          & \textbf{128} & \textbf{66.6}  & 5.21
          & \textbf{4}   & \textbf{97.9}  & \textbf{92.0} & \textbf{71.2} &
          & \textbf{32}  & \textbf{19.0} \\
        \bottomrule
    \end{tabular}%
    }
    \caption{
        \textbf{Posterior Refinement.}
        Generative perplexity and unigram entropy on unconditional benchmarks, and
        accuracy (\%, $\uparrow$) on conditional benchmarks (Sudoku split into Easy/Med./Hard; GSM8K zero-shot). $\ours$, which uses $\oursPR$, matches or surpasses the strongest baselines with $8$--$32{\times}$ fewer function evaluations.
    }
     % \vspace{-1em}
     \label{tab:main}
\end{table}
 % \vspace{-1em}
 \newpage
\subsection{Leveraging Pretrained MDMs}
\label{sec:exp:dit}
\begin{wraptable}[9]{r}{0.50\linewidth}
    \vspace{-3em}
    \centering
    \footnotesize
    \setlength{\tabcolsep}{4pt}
    \begin{tabular}{l ccccccc}
        \toprule
         & \multicolumn{7}{c}{\textbf{GSM8K} (\%, $\uparrow$)} \\
        \cmidrule(lr){2-8}
        Method & 1 & 2 & 4 & 8 & 16 & 32 & 64 \\
        \midrule
        MDLM (Teacher)    & 0.0 & 0.0  & 0.1  & 0.6  & 4.7 & 9.0 & 13.4 \\
        \midrule
        $\ours$           & 0.0 & 0.1 & 2.9 & 8.7 & 13.4 & 19.0 & 19.1 \\
        $\ours$ (Distill)  & 0.3 & 0.4 & 3.9 & 10.3 & 18.7 & 21.6 & 23.4\\
        $\ours$ (Init) & \textbf{0.3} & \textbf{0.7} & \textbf{5.1} & \textbf{15.1} & \textbf{26.1} & \textbf{31.8} & \textbf{33.6} \\
        \bottomrule
    \end{tabular}
    \caption{
        \textbf{Leveraging pretrained MDMs.} Accuracy (\%, $\uparrow$) on zero-shot GSM8K, comparing three training strategies of $\ours$  with Posterior Refinement.
    }
    \label{tab:mdm_connection}
\end{wraptable}
In \cref{tab:mdm_connection}, we compare the two $\ours$ training approaches proposed in~\cref{sec:algo:mdlm_boundary} that leverage a pretrained MDM against $\ours$ and MDM trained from scratch.
We find that distilling and initializing from a pretrained MDM can be highly beneficial.
We note that $\ours$ in \cref{tab:main} are trained from scratch for fair comparison with baselines, although we expect that leveraging MDMs as in \cref{tab:mdm_connection} would lead to further improvements.

\begin{wrapfigure}[19]{r}{0.43\textwidth}
    \vspace{0.5em}
    \centering
    \includegraphics[width=\linewidth]{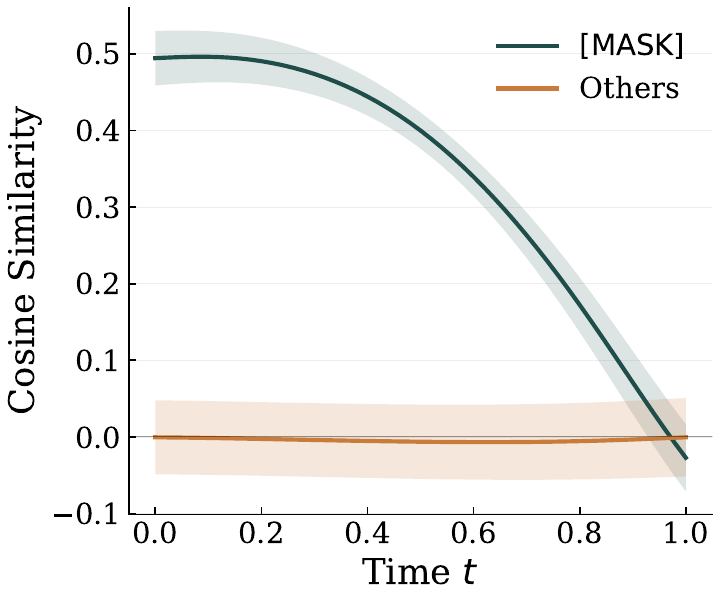}
    \caption{\textbf{adaLN activations.}
   At $t{=}0$, noisy inputs align with the
    MDM's \texttt{$\mask$} activation while remaining uncorrelated with
    other tokens.}
    \label{fig:adaLN}
    % \vspace{-0.5em}
\end{wrapfigure}

% \paragraph{adaLN absorbs the shift in input.}
The strong performance of $\ours$ (Init) in \cref{tab:mdm_connection} is surprising, since it is unclear why pretraining on $\mask$ tokens should benefit processing Gaussian-noise inputs.
We explain this by inspecting two components of the DiT backbone, the first adaptive layer normalization (adaLN)~\citep{perez2018adaln} for time conditioning, and the token embedding layer, before and after $\ours$ training.
For the first DiT block, we measure the cosine similarity between $\ours$'s post-adaLN activations and those of the MDM teacher's (\Cref{fig:adaLN}).
At $t{=}0$ (pure Gaussian noise), $\ours$ activations remain highly aligned with the teacher's $\mask$ activations.
Thus, we conclude that the first adaLN layer maps the Gaussian-noise input onto the same representation the teacher produced for $\mask$ tokens, presenting downstream layers with an in-distribution input and preserving the pretrained knowledge.
% \paragraph{Embedding geometry is preserved.}
In \Cref{tab:embed-cos}, we further compute the per-token cosine similarity between each $\ours$ input embedding and its MDM-init counterpart.
Every token embedding stays aligned with its initialization after training, indicating that $\ours$ adapts the embedding layer to be robust to Gaussian noise corruptions.

%% file: conclusion.tex
\section{Conclusion}
\label{sec:conc}

We introduce $\ours$, a generalization of flow map language models that equips continuous-flow language modeling with the any-order inference flexibility of masked diffusion, paving the way towards effective iterative refinement.
We further introduce \emph{Posterior Refinement}, a novel, after-the-fact refinement strategy that re-noises and regenerates low-confidence tokens to self-correct in parallel.
Empirically, $\ours$ improves the speed-quality tradeoff over both MDM and FMLM families across unconditional and conditional generation settings, matching or surpassing the strongest baselines with up to $32{\times}$ fewer function evaluations.
Finally, we show that the masked diffusion objective is equivalent to a boundary case of the $\ours$ training objective, and use this correspondence to derive two practical mechanisms based on distillation and transfer learning for accelerating $\ours$ training using pretrained MDMs.

\paragraph{Future directions.}
The correspondence between MDM and $\ours$ offers a path to scale $\ours$ to the billion-parameter regime using the growing pool of large pretrained masked diffusion models (LLaDA-8B \citep{nie2025large}, Dream \citep{ye2025dream}, and others).
We also believe that the conditional nature of $\ours$ can be leveraged to improve generation quality using techniques such as classifier-free guidance~\citep{ho2021classifierfree}.
Finally, Posterior Refinement exposes a form of inference-time scaling native to $\ours$, and we expect novel test-time strategies can be designed around such posterior-confidence signals.
We leave a systematic study of these directions to future work.

\paragraph{Limitations.}
Methodologically, we follow the progressive self-distillation objective from~\citet{lee2025flowmaplm} and do not explore alternatives such as the Lagrangian, Eulerian and Mean Flow variants~\citep{boffi2025flowmapmatchingstochastic, geng2025mean}.
We conduct our experiments on a small scale on largely synthetic and small-sized datasets.
Thus, the scalability of our approach needs a more broad study with evaluation on downstream tasks.
While our current formulation of PR holds committed tokens fixed, alternative formulations can be explored to allow $\ours$ to rectify previous errors.
Finally, we note that training from scratch is suboptimal compared to initializing with MDM weights, even at convergence.
This implies that there is room to improve $\ours$ training with better model architecture and low-variance loss functions.

%% file: acknowledgements.tex
\section*{Acknowledgements}
\label{sec:ack}
The authors would like to thank \href{https://modal.com/}{Modal Labs} for their generous compute grants, which proved invaluable in supporting this work.

%% file: appendix.tex
\newpage
\section{Theoretical details}
\label{sec:app:theory}

\subsection{Rounding error in terms of confidence}
\label{sec:app:rounding}

In \cref{sec:algo:interpretation} we interpret the \emph{rounding error} that occurs when model prediction is projected to its nearest one-hot vertex as a proxy for the model's confidence. We claim that thresholding this error is equivalent to thresholding the top probability $p_{\max}^l$ and show it here.

Fix a position $l$ and write $q \coloneqq Q^l \in \Delta^{|V|-1}$ for the predicted categorical distribution, with top probability $p_{\max} \coloneqq \max_v q^v$ attained at $\hat{v} \coloneqq \argmax_v q^v$. Committing the token rounds $q$ to the one-hot vertex $e_{\hat{v}}$, and the rounding error is the discrepancy between $q$ and $e_{\hat{v}}$, measured either by the $\ell_1$ distance or by the KL divergence.

\begin{proposition}[Rounding error is monotone in confidence]
\label{prop:rounding_monotone}
With the notation above,
\begin{align}
    \label{eq:app:rounding}
    \bigl\|q - e_{\hat{v}}\bigr\|_1 = 2\,(1 - p_{\max}),
    \qquad
    \kl{e_{\hat{v}}}{q} = -\log p_{\max}.
\end{align}
Both quantities are strictly decreasing in $p_{\max} \in (0,1]$, vanishing exactly at $p_{\max} = 1$. Consequently, for any threshold $p \in (0,1]$,
\begin{align}
    \label{eq:app:threshold_equiv}
    p_{\max} \ge p
    \;\iff\;
    \bigl\|q - e_{\hat{v}}\bigr\|_1 \le 2(1-p)
    \;\iff\;
    \kl{e_{\hat{v}}}{q} \le -\log p,
\end{align}
so thresholding the confidence $p_{\max}$ and thresholding the rounding error under either metric retain exactly the same set of positions.
\end{proposition}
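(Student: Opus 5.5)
The plan is to compute both rounding errors directly from the definitions, then read off monotonicity and the threshold equivalences from the closed forms.

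For the $\ell_1$ identity, I split the sum $\sum_v |q^v - e_{\hat{v}}^v|$ into the coordinate $v=\hat{v}$ and the remaining coordinates. At $\hat{v}$ the term is $|p_{\max}-1| = 1-p_{\max}$, since $q\in\Delta^{|V|-1}$ forces $p_{\max}\le 1$. Every other coordinate contributes $|q^v| = q^v$ because $q^v\ge 0$. These sum to $1-p_{\max}$ by normalization, so the total is $2(1-p_{\max})$.

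For the KL identity, I use the convention $0\log 0 = 0$. Then
\begin{align*}
\kl{e_{\hat{v}}}{q}=\sum_v e_{\hat{v}}^v\log\frac{e_{\hat{v}}^v}{q^v},
\end{align*}
and this sum has a single surviving term, $\log(1/q^{\hat{v}}) = -\log p_{\max}$. The term is finite because $p_{\max}\ge 1/|V|>0$.

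Both closed forms are strictly decreasing functions of $p_{\max}$ on $(0,1]$: $x\mapsto 2(1-x)$ is affine with negative slope, and $-\log$ is strictly decreasing. Both vanish exactly at $p_{\max}=1$, since $2(1-x)=0\iff x=1$ and $-\log x=0\iff x=1$.

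The threshold equivalences then follow by applying a strictly decreasing bijection to both sides of $p_{\max}\ge p$, which reverses the inequality. Applying $x\mapsto 2(1-x)$ gives $\|q-e_{\hat{v}}\|_1\le 2(1-p)$, and applying $x\mapsto-\log x$ gives $\kl{e_{\hat{v}}}{q}\le -\log p$. Each step is reversible because strictly monotone maps are injective and order-reversing in both directions, so these are genuine equivalences. Since the test is applied separately at each position $l$, all three criteria select the same set $\mathcal{U}$ in \cref{eq:algo:pr_update_set}.

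There is no real obstacle in this argument. The only points needing care are the two bookkeeping facts already used above: the $0\log 0$ convention in the KL sum, and $p_{\max}>0$, which keeps the logarithm finite. A minor discrepancy is that \cref{eq:algo:pr_update_set} uses the strict inequality $p_{\max}>p$ while the proposition uses $\ge$. The same argument handles the strict version verbatim, giving the strict inequalities $<2(1-p)$ and $<-\log p$. Ties in the $\argmax$ do not affect anything, since both formulas depend only on the value $p_{\max}$.
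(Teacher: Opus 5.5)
Your proposal is correct and follows essentially the same route as the paper. The paper likewise splits off the $\hat{v}$ coordinate and uses normalization for the $\ell_1$ identity, keeps the single surviving KL term, and then inverts the strictly decreasing maps $x\mapsto 2(1-x)$ and $x\mapsto -\log x$ to obtain the threshold equivalences; your side remarks (that $p_{\max}\ge 1/|V|$ keeps the logarithm finite, and that the strict-inequality version used in the update set goes through verbatim) are correct and a bit more careful than the paper's write-up.
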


\begin{proof}
The mass that $q$ places off the top token is $\sum_{v \ne \hat{v}} q^v = 1 - p_{\max}$. For the $\ell_1$ distance,
\begin{align}
    \bigl\|q - e_{\hat{v}}\bigr\|_1
    = \bigl|q^{\hat{v}} - 1\bigr| + \sum_{v \ne \hat{v}} \bigl|q^v\bigr|
    = (1 - p_{\max}) + \sum_{v \ne \hat{v}} q^v
    = 2\,(1 - p_{\max}).
\end{align}
For the KL divergence, only the $v = \hat{v}$ term survives, since $e_{\hat{v}}^v = 0$ for $v \ne \hat{v}$:
\begin{align}
    \kl{e_{\hat{v}}}{q}
    = \sum_v e_{\hat{v}}^v \log\frac{e_{\hat{v}}^v}{q^v}
    = 1 \cdot \log\frac{1}{q^{\hat{v}}}
    = -\log p_{\max}.
\end{align}
Both $2(1 - p_{\max})$ and $-\log p_{\max}$ are continuous and strictly decreasing on $(0,1]$, so each is invertible; applying the inverse to either rounding-error threshold recovers the confidence threshold, giving the equivalences in \cref{eq:app:threshold_equiv}.
\end{proof}
\section{Setup}
\label{sec:app:setup}

We document the configurations used to produce all reported results (summarized in \Cref{tab:hps}). Following prior work~\citep{sahoo2024simple, lee2025flowmaplm}, we adopt a DiT~\citep{peebles2023scalable} backbone for $\ours$. All experiments are run on $4{\times}$ NVIDIA H100 GPUs, in bfloat16 mixed precision, with AdamW ($\beta_1{=}0.9$, $\beta_2{=}0.999$, $\epsilon{=}10^{-8}$), no weight decay, and gradient-norm clipping at $1.0$. Learning rates follow a $2.5\text{k}$-step linear warmup to a constant peak. We evaluate using exponential moving average (EMA, decay $0.9999$) parameters.

\subsection{Datasets}
\label{sec:app:data}

\paragraph{TinyStories (unconditional generation).}
We train on TinyStories \citep{eldan2023tinystoriessmalllanguagemodels}, a corpus of GPT-3.5/4-generated short stories restricted to a limited vocabulary. We tokenize with the GPT-2 byte-pair tokenizer and pack examples into length-$128$ sequences with \texttt{<EOS>}-insertion at sequence boundaries. We report generative perplexity under GPT-2 Large following~\citep{sahoo2024simple,deschenaux2024beyond,lee2025flowmaplm}, alongside per-sample token entropy as a check against entropy collapse. Results appear in \cref{tab:main}.

\paragraph{OpenWebText (unconditional generation).}
We also evaluate on OpenWebText~\citep{Gokaslan2019OpenWeb}, an open reproduction of the WebText corpus used to train GPT-2. We tokenize with the GPT-2 byte-pair tokenizer and pack documents into length-$1024$ sequences with \texttt{<EOS>}-insertion at document boundaries. As with TinyStories, we report generative perplexity under GPT-2 Large~\citep{sahoo2024simple,deschenaux2024beyond,lee2025flowmaplm}, alongside per-sample token entropy as a check against entropy collapse. Results appear in \cref{tab:main}.

\paragraph{TinyGSM / GSM8K (conditional reasoning).}
We follow the exact protocol of S-FLM~\citep{deschenaux2026language} so that our numbers are directly comparable to its baseline suite (MDLM, Duo, CANDI, FLM, S-FLM). We train on TinyGSM~\citep{liu2023tinygsmachieving80gsm8k}, a synthetic corpus of 12.3M grade-school math word problems generated to imitate the GSM8K distribution. We tokenize with the SmolLM-135M tokenizer and pack each $\langle\text{question},\text{Python solution}\rangle$ pair to length $512$, with loss supervised only on the solution tokens. Models are evaluated zero-shot on the GSM8K test split by sampling a Python solution conditioned on the prompt and grading by execution accuracy.

\paragraph{Sudoku (conditional).}
Following S-FLM~\citep{deschenaux2026language}, each puzzle is encoded as a length-$180$ sequence over a $12$-token vocabulary, with loss supervised only on the solution positions. Models are evaluated on the rate at which they produce a valid completion of the clue set across Easy/Medium/Hard splits. Results appear in \Cref{tab:main}.

\vspace{-1em}
\subsection{Architecture}
\label{sec:app:arch}

All models use the Diffusion Transformer (DiT) backbone~\citep{peebles2023scalable} with adaLN time conditioning. Following \citet{lee2025flowmaplm}, $\ours$ uses a \emph{double} time embedding ($s$ and $t$ are both injected through adaLN) so the same transformer can serve as the two-time denoiser at training and as a single-time denoiser at $s{=}t$. The MDM baselines and the MDM teachers used for distillation/initialization use the standard single-time DiT. The exact per-task configurations are listed in \cref{tab:hps}; for the warm-start and distillation variants, $\ours$ reuses the architecture of the corresponding MDM so that initialization is a parameter-identity copy.

\subsection{$\ours$ training objective}
\label{sec:app:training}

\begin{wrapfigure}[16]{r}{0.45\textwidth}
  \vspace{-1.0em}
  \centering
  \begin{minipage}{0.45\textwidth}
    \refstepcounter{algorithm}
    \hrule height 0.8pt
    \vspace{2pt}
    \noindent\textbf{Algorithm~\thealgorithm} $\ours$ Training.
    \label{alg:training_code}
    \vspace{2pt}
    \hrule height 0.4pt
    \vspace{0.6em}

    \footnotesize
    \input{pseudocode_training}

    \vspace{0.6em}
    \hrule height 0.8pt
  \end{minipage}

  \vspace{-0.5em}
\end{wrapfigure}

We train $\ours$ with the progressive self-distillation (PSD) objective in \cref{eq:algo:training}, instantiated as \cref{alg:training_code}. We fix the following across all datasets:
\begin{itemize}
    \item \textbf{Corruption schedule}: Noisy positions share a single sampled time $t\!\sim\!\mathcal{U}[0,1]$, while the rest are pinned to $t{=}1$ as clean context.
    \item \textbf{Progressive self-distillation (PSD)}: the off-diagonal target uses the PSD construction of \citet{lee2025flowmaplm} with stop-gradient on the teacher branch.
    \item \textbf{Time Reparameterization}: we drop the time-reparameterization trick from \citet{lee2025flowmaplm}, and follow a simple linear-schedule.
    \item \textbf{Diagonal fraction}: with probability $0.5$ a step uses the diagonal cross-entropy term ($s{=}t$); otherwise the off-diagonal semigroup term. Both supervise only the noisy tokens.
    \item \textbf{Antithetic sampling}: each minibatch pairs $(s,u,t)$ with $(1{-}t,1{-}u,1{-}s)$ to reduce variance along the time axis.

\end{itemize}

\paragraph{MDM warm-start and distillation.}
For the \emph{FMLM+ (Init)} variant we initialize $\ours$ weights from a same-architecture MDM checkpoint trained on the same dataset. The initialization is a parameter-identity copy: the only architectural difference is that $\ours$ activates its second time embedding (initialized to zero so the network behaves identically at step zero). For the \emph{FMLM+ (Distill)} variant we additionally hold a frozen MDM teacher and replace the diagonal target with the teacher's posterior with probability $\pdistill$. The MDM teachers are trained from scratch with the standard MDLM loss~\citep{sahoo2024simple} on the same data, optimizer, and architecture used by the corresponding $\ours$ run.

% \subsection{Inference protocols}
% \label{sec:app:inference}

% All samplers share the structure of \cref{alg:posterior_refinement_code}: noise tokens at $t{=}0$, prompt tokens at $t{=}1$, then iteratively predict $\hat{\bf x}_1$ via a flow-map call and commit a subset of positions per step. The strategies differ only in how each iteration's update set $\mathcal{C}'$ is chosen:
% \begin{itemize}
%     \item \textbf{Parallel}: $\mathcal{C}' = [L]\setminus \mathcal{C}$ in a single step; the entire sequence is committed at once.
%     \item \textbf{Block-Diffusion}: $\mathcal{C}'$ is the next contiguous block of size $B$ in left-to-right order, with $S$ flow-map steps per block; the total NFE is $\lceil L/B\rceil \cdot S$.
%     \item \textbf{Confidence}: at each step, $\mathcal{C}'$ is the top-$k$ positions ranked by the model's per-token max-softmax confidence~\citep{ghazvininejad2019maskpredict}, with $k$ chosen so the schedule unmasks all positions in a fixed budget of steps.
% \end{itemize}
% For all samplers we apply a final argmax projection at $t{=}1$. The block-decoding sweep in \cref{fig:speed_tradeoff} and \Cref{tab:tinystories-sweep} varies $(B, S)$ over $\{1,2,4,8,16,32,64,128\}$ with iso-NFE diagonals $B\cdot S$.

\subsection{Hyperparameters}
\label{sec:app:hps}

\begin{table}[!htbp]
\centering
\label{tab:hps}
\footnotesize
\setlength{\tabcolsep}{6pt}
\adjustbox{max width=\textwidth}{%
\begin{tabular}{l cccc}
\toprule
 & TinyStories & OpenWebText & TinyGSM / GSM8K & Sudoku \\
\midrule
\multicolumn{5}{l}{\textit{Architecture (DiT)}} \\
Hidden size           & 512  & 768  & 768  & 512 \\
adaLN cond.\ dim      & 128  & 128  & 128  & 128 \\
Layers                & 8    & 12   & 12   & 8 \\
Attention heads       & 8    & 12   & 12   & 8 \\
FFN multiplier        & 4    & 4    & 4    & 4 \\
Dropout               & 0.1  & 0.1  & 0.1  & 0.1 \\
Sequence length $L$   & 128  & 1024 & 512  & 180 \\
Vocabulary $|V|$      & 50257 (GPT-2) & 50257 (GPT-2) & 49152 (SmolLM-135M) & 12 \\
Parameters (approx.)  & 80M  & 170M & 168M & 29M \\
\midrule
\multicolumn{5}{l}{\textit{Optimization}} \\
Optimizer             & \multicolumn{4}{c}{AdamW ($\beta_1{=}0.9$, $\beta_2{=}0.999$, $\epsilon{=}10^{-8}$)} \\
Weight decay          & \multicolumn{4}{c}{$0$} \\
Gradient clipping     & \multicolumn{4}{c}{$1.0$} \\
Schedule              & \multicolumn{4}{c}{Constant w/ linear warmup} \\
Warmup steps          & \multicolumn{4}{c}{$2\,500$} \\
Peak LR               & $3{\times}10^{-4}$ & $3{\times}10^{-4}$ & $3{\times}10^{-4}$ & $3{\times}10^{-4}$ \\
Total steps           & $100\,000$ & $1\,000\,000$ & $250\,000$ & $20\,000$ \\
Global batch size     & 512  & 512  & 512  & 256 \\
Per-GPU batch size    & 128  & 32   & 64   & 256 \\
Precision             & \multicolumn{4}{c}{bfloat16 mixed} \\
EMA decay             & \multicolumn{4}{c}{$0.9999$} \\
\bottomrule
\end{tabular}%
}
\caption{\textbf{Per-task hyperparameters.} All $\ours$ runs reuse the architecture and optimizer of the corresponding MDM baseline; the only per-method difference is the training and inference schedule.}
\end{table}

\section{Additional Experiments}
\subsection{Leveraging Pretrained MDMs}
\begin{table}[!htbp]
\centering
\footnotesize
\setlength{\tabcolsep}{6pt}
\adjustbox{max width=\textwidth}{%
\begin{tabular}{lccccc}
\toprule
Cosine similarity & $<0$ & $[0,0.25)$ & $[0.25,0.5)$ & $[0.5,0.75)$ & $[0.75,1]$ \\
\midrule
\% of vocabulary & $0$ & $18$ & $47$ & $28$ & $7$ \\
\bottomrule
\end{tabular}%
}
\caption{\textbf{Embedding preservation.} Distribution of per-token cosine similarities between the learned $\ours$ embedding matrix and the corresponding MDM initialization. All $49{,}152$ vocabulary embeddings retain positive cosine similarity after training.}
\label{tab:embed-cos}
\end{table}
\newpage
\section{Ablations}
\label{sec:app:ablations}

\paragraph{Commit threshold.} We sweep the commit threshold $p \in \{0.9, 0.99, 0.999\}$ for Posterior Refinement on Sudoku-Hard and GSM8K across NFE budgets. On GSM8K, $p{=}0.9$ saturates by 16 NFEs at 8.6\%, while $p{=}0.999$ continues improving through 64 NFEs, reaching 19.1\%. Similarly, on Sudoku-Hard, $p{=}0.999$ is a better threshold from 4 NFEs onward, reaching 81.4\% at 16 NFEs. We use $p{=}0.999$ for the headline results in~\cref{tab:main}.

\begin{table}[!htpb]
\centering
\footnotesize
\setlength{\tabcolsep}{4pt}

\label{tab:gsm8k-threshold-sweep}
\adjustbox{max width=\textwidth}{%
\begin{tabular}{l ccccc ccccccc}
\toprule
 & \multicolumn{5}{c}{\textbf{Sudoku-Hard} (NFE)} & \multicolumn{7}{c}{\textbf{GSM8K} (NFE)} \\
\cmidrule(lr){2-6} \cmidrule(lr){7-13}
Method & 1 & 2 & 4 & 8 & 16 & 1 & 2 & 4 & 8 & 16 & 32 & 64 \\
\midrule
$\ours$ (PR = 0.9)   & 4.1 & 21.2 & 20.2 & 19.9 & 20.9 & 0 & \textbf{1.6} & 4.6 & 6.5 & 7.8 & 9.0 & 8.5 \\
$\ours$ (PR = 0.99)  & 4.1 & \textbf{35.6} & 51.9 & 54.6 & 56.3 & 0 & 0.6 & \textbf{5.7} & \textbf{9.7} & \textbf{14.5} & 16.6 & 15.0 \\
$\ours$ (PR = 0.999) & 4.1 & 26.0 & \textbf{71.2} & \textbf{78.1} & \textbf{81.4} & 0 & 0.1 & 2.9 & 8.7 & 13.4 & \textbf{19.0} & \textbf{19.1} \\
\bottomrule
\end{tabular}%
}
\caption{\textbf{Posterior Refinement threshold sweep.} Solve-rate accuracy (\%, $\uparrow$) on Sudoku-Hard and zero-shot GSM8K across NFE budgets, for commit thresholds $p \in \{0.9, 0.99, 0.999\}$.}
\vspace{4pt}
\end{table}

On TinyStories (\Cref{tab:tinystories-threshold-sweep}), the threshold has no effect on small NFE budgets. From 8 NFEs onward, higher thresholds start performing better: $p{=}0.999$ reaches Gen.\ PPL $7.40$ at 32 NFEs, versus $7.69$ for $p{=}0.99$ and $9.71$ for $p{=}0.9$, with entropy stable across all settings ($\approx 3.8$--$3.9$). We use $p{=}0.999$ for the headline TinyStories result in~\cref{tab:main}.

\begin{table}[!htpb]
\centering
\footnotesize
\setlength{\tabcolsep}{4pt}

\label{tab:tinystories-threshold-sweep}
\adjustbox{max width=\textwidth}{%
\begin{tabular}{l cccccc}
\toprule
NFE & 1 & 2 & 4 & 8 & 16 & 32 \\
\midrule
$\ours$ (PR = 0.9)   & 96.28 / 3.97 & 35.14 / 3.98 & \textbf{16.92 / 3.9}1 & 11.58 / 3.86 & 10.21 / 3.86 & 9.71 / 3.83 \\
$\ours$ (PR = 0.99)  & 96.28 / 3.97 & 35.14 / 3.98 & 17.76 / 3.91 & \textbf{11.42 / 3.86} & 8.74 / 3.83 & 7.69 / 3.81 \\
$\ours$ (PR = 0.999) & 96.28 / 3.97 & 35.14 / 3.98 & 17.76 / 3.91 & 11.57 / 3.86 & \textbf{8.94 / 3.85} & \textbf{7.40 / 3.83} \\
\bottomrule
\end{tabular}%
}
\caption{\textbf{Posterior Refinement threshold sweep on TinyStories.} Generative perplexity ($\downarrow$) / per-sample entropy ($\uparrow$) under GPT-2 Large across NFE budgets, for commit thresholds $p \in \{0.9, 0.99, 0.999\}$.}
\vspace{-1em}
\end{table}

\paragraph{Design choices.} We ablate hyperparams of PR sampling on GSM8K (32 NFEs, $p{=}0.999$). We find that resampling renoised cells from fresh noise is important. Re-using the same initial noise can cause the model to converge to the same (rejected) predictions, affecting accuracy. We also note that setting both threshold and Top-K help improve the performance.

\begin{table}[!htpb]
\centering
\footnotesize
\setlength{\tabcolsep}{4pt}
\label{tab:pr-design-ablation}
\begin{tabular}{lc}
\toprule
Configuration & Acc.\ (\%) \\
\midrule
\multicolumn{2}{l}{\textit{Renoising}} \\
Same Initial Noise & 14.4 \\
\rowcolor{darkteal} \textbf{New Initial Noise } & \textbf{19.0} \\
\midrule
\multicolumn{2}{l}{\textit{Commit schedule}} \\
Threshold & 12.8 \\
Top-K & 7.8 \\
\rowcolor{darkteal} \textbf{Threshold + Top-K } & \textbf{19.0} \\
\midrule
\multicolumn{2}{l}{\textit{Top-K Value}} \\
Top-1 & 17.8 \\
Top-2 & 17.8 \\
Top-4 & 16.1 \\
Top-8 & 11.4 \\
\rowcolor{darkteal} \textbf{Top-K } & \textbf{19.0} \\
\bottomrule
\end{tabular}
\vspace{4pt}
\caption{\textbf{Posterior Refinement design ablations.} GSM8K accuracy (\%, $\uparrow$) at 32 NFEs, $p{=}0.999$. Top-K is dynamically set to follow a linear schedule in the remaining number of tokens and budget.}
\end{table}

% \subsection{Compute}
% \label{sec:app:compute}
% A single $\ours$ run on TinyStories takes ${\sim}10$ GPU-hours on $4{\times}$A6000; TinyGSM/GSM8K runs take ${\sim}80$ GPU-hours; Conditional Sudoku runs take ${\sim}20$ GPU-hours. The full set of results in the paper, including baselines and the three $\ours$ variants on each task, used roughly $1.2$k GPU-hours in aggregate. Inference for the block-decoding sweep in \Cref{fig:speed_tradeoff} adds a further ${\sim}30$ GPU-hours, dominated by the $128$-step configurations. \manan{confirm this}

%
\newpage
\section{Posterior Refinement: Qualitative Samples}
We show qualitative Posterior Refinement trajectories for
$\ours$ on TinyGSM, OpenWebText, and TinyStories. For each sample, we
snapshot a few refinement rounds and color tokens by their commit
state: tokens committed at high confidence (committed) are shown in
\textcolor{fig1teal}{\textbf{teal}}, and tokens still being refined in
\textcolor{fig1ochre}{\textbf{ochre}}. For TinyGSM, the conditioning
question is shown in \textbf{black}. Each box header reports the
refinement round and the number of tokens committed so far.
\label{sec:app:samples}
\input{samples}

%% file: pseudocode_training.tex
% Training step for the two-time denoiser. Style from pseudocode_style.tex.
\begin{lstlisting}[language=PythonFuncColor, escapechar=`]
mask   = random_mask(y.shape)

# clean context pinned to t = 1
s, u, t = where(mask, sample_times(), 1.0)                
x_1     = one_hot(y)
x_s     = (1-s) * randn(L, V) + s * x_1

# self distillation target
d_su   = delta(x_s, s, u)
X_su = flowmap(x_s, d_su, s, u)
d_ut   = delta(X_su, u, t)
target = lerp(d_su, d_ut, gamma(s, u, t))    

diag    = CE(delta(x_s, s, s), x_1)
offdiag = CE(delta(x_s, s, t), sg(target))

# supervise noised positions only
loss = (diag + offdiag) * mask
\end{lstlisting}

%% file: samples.tex
% ===================== TinyGSM =====================
\begin{figure}[H]
\centering
\begin{samplebox}{\normalfont\textbf{$\ours$ (PR), Round: 1} \hfill \normalfont\scriptsize \textcolor{darkgray}{Committed: \textbf{428/483}}}
\scriptsize\linespread{0.9}\selectfont
\textcolor{black}{\texttt{A}} \textcolor{black}{\texttt{robe}} \textcolor{black}{\texttt{takes}} \textcolor{black}{\texttt{2}} \textcolor{black}{\texttt{bolts}} \textcolor{black}{\texttt{of}} \textcolor{black}{\texttt{blue}} \textcolor{black}{\texttt{fiber}} \textcolor{black}{\texttt{and}} \textcolor{black}{\texttt{half}} \textcolor{black}{\texttt{that}} \textcolor{black}{\texttt{much}} \textcolor{black}{\texttt{white}} \textcolor{black}{\texttt{fiber}}\textcolor{black}{\texttt{.}}  \textcolor{black}{\texttt{How}} \textcolor{black}{\texttt{many}} \textcolor{black}{\texttt{bolts}} \textcolor{black}{\texttt{in}} \textcolor{black}{\texttt{total}} \textcolor{black}{\texttt{does}} \textcolor{black}{\texttt{it}} \textcolor{black}{\texttt{take}}\textcolor{black}{\texttt{?}}\\
\textcolor{fig1teal}{\texttt{def}} \textcolor{fig1teal}{\texttt{simple}}\textcolor{fig1teal}{\texttt{\_}}\textcolor{fig1teal}{\texttt{math}}\textcolor{fig1teal}{\texttt{\_}}\textcolor{fig1teal}{\texttt{problem}}\textcolor{fig1teal}{\texttt{()}} \textcolor{fig1teal}{\texttt{->}} \textcolor{fig1teal}{\texttt{int}}\textcolor{fig1teal}{\texttt{:}}\\
\mbox{\texttt{~~~~}} \textcolor{fig1ochre}{\texttt{"""}}\\
\mbox{\texttt{~~~~}} \textcolor{fig1teal}{\texttt{A}} \textcolor{fig1ochre}{\texttt{robe}} \textcolor{fig1teal}{\texttt{takes}} \textcolor{fig1teal}{\texttt{2}} \textcolor{fig1teal}{\texttt{bolts}} \textcolor{fig1teal}{\texttt{of}} \textcolor{fig1teal}{\texttt{blue}} \textcolor{fig1teal}{\texttt{fiber}} \textcolor{fig1teal}{\texttt{and}} \textcolor{fig1teal}{\texttt{half}} \textcolor{fig1teal}{\texttt{that}} \textcolor{fig1teal}{\texttt{much}} \textcolor{fig1teal}{\texttt{white}} \textcolor{fig1teal}{\texttt{fiber}}\textcolor{fig1teal}{\texttt{.}}\\
\mbox{\texttt{~~~~}}\\
\mbox{\texttt{~~~~}} \textcolor{fig1teal}{\texttt{How}} \textcolor{fig1teal}{\texttt{many}} \textcolor{fig1teal}{\texttt{bolts}} \textcolor{fig1teal}{\texttt{in}} \textcolor{fig1teal}{\texttt{total}} \textcolor{fig1teal}{\texttt{does}} \textcolor{fig1teal}{\texttt{it}} \textcolor{fig1teal}{\texttt{take}}\textcolor{fig1teal}{\texttt{?}}\\
\mbox{\texttt{~~~~}} \textcolor{fig1ochre}{\texttt{"""}}\\
\mbox{\texttt{~~~~}} \textcolor{fig1ochre}{\texttt{blue}}\textcolor{fig1ochre}{\texttt{\_}}\textcolor{fig1ochre}{\texttt{fiber}} \textcolor{fig1ochre}{\texttt{=}} \textcolor{fig1ochre}{\texttt{2}}\\
\mbox{\texttt{~~~~}} \textcolor{fig1ochre}{\texttt{white}}\textcolor{fig1ochre}{\texttt{\_}}\textcolor{fig1ochre}{\texttt{fiber}} \textcolor{fig1ochre}{\texttt{=}} \textcolor{fig1ochre}{\texttt{blue}}\textcolor{fig1ochre}{\texttt{\_}}\textcolor{fig1ochre}{\texttt{fiber}} \textcolor{fig1ochre}{\texttt{/}}\\
\mbox{\texttt{~~~~}}\\
\mbox{\texttt{~~~~}} \textcolor{fig1ochre}{\texttt{total}}\textcolor{fig1ochre}{\texttt{\_}}\textcolor{fig1ochre}{\texttt{bol}}\textcolor{fig1ochre}{\texttt{ts}} \textcolor{fig1ochre}{\texttt{=}} \textcolor{fig1ochre}{\texttt{blue}}\textcolor{fig1ochre}{\texttt{\_}}\textcolor{fig1ochre}{\texttt{fiber}} \textcolor{fig1ochre}{\texttt{+}} \textcolor{fig1ochre}{\texttt{white}}\textcolor{fig1ochre}{\texttt{\_}}\textcolor{fig1ochre}{\texttt{fiber}}\\
\mbox{\texttt{~~~~}}\\
\mbox{\texttt{~~~~}} \textcolor{fig1ochre}{\texttt{result}} \textcolor{fig1ochre}{\texttt{=}} \textcolor{fig1ochre}{\texttt{total}}\textcolor{fig1ochre}{\texttt{\_}}\textcolor{fig1ochre}{\texttt{bol}}\textcolor{fig1ochre}{\texttt{ts}}\\
\\
\mbox{\texttt{~~~~}} \textcolor{fig1ochre}{\texttt{return}} \textcolor{fig1ochre}{\texttt{result}}\textcolor{fig1ochre}{\texttt{[PAD]}}\textcolor{fig1ochre}{\texttt{[PAD]}}\textcolor{fig1ochre}{\texttt{[PAD]}}\textcolor{fig1ochre}{\texttt{[PAD]}}\textcolor{fig1ochre}{\texttt{[PAD]}}\textcolor{fig1ochre}{\texttt{[PAD]}}\textcolor{fig1ochre}{\texttt{[PAD]}}\textcolor{fig1ochre}{\texttt{[PAD]}}\textcolor{fig1ochre}{\texttt{[PAD]}}\textcolor{fig1ochre}{\texttt{[PAD]}}
\end{samplebox}
\begin{samplebox}{\normalfont\textbf{$\ours$ (PR), Round: 10} \hfill \normalfont\scriptsize \textcolor{darkgray}{Committed: \textbf{465/483}}}
\scriptsize\linespread{0.9}\selectfont
\textcolor{black}{\texttt{A}} \textcolor{black}{\texttt{robe}} \textcolor{black}{\texttt{takes}} \textcolor{black}{\texttt{2}} \textcolor{black}{\texttt{bolts}} \textcolor{black}{\texttt{of}} \textcolor{black}{\texttt{blue}} \textcolor{black}{\texttt{fiber}} \textcolor{black}{\texttt{and}} \textcolor{black}{\texttt{half}} \textcolor{black}{\texttt{that}} \textcolor{black}{\texttt{much}} \textcolor{black}{\texttt{white}} \textcolor{black}{\texttt{fiber}}\textcolor{black}{\texttt{.}}  \textcolor{black}{\texttt{How}} \textcolor{black}{\texttt{many}} \textcolor{black}{\texttt{bolts}} \textcolor{black}{\texttt{in}} \textcolor{black}{\texttt{total}} \textcolor{black}{\texttt{does}} \textcolor{black}{\texttt{it}} \textcolor{black}{\texttt{take}}\textcolor{black}{\texttt{?}}\\
\textcolor{fig1teal}{\texttt{def}} \textcolor{fig1teal}{\texttt{simple}}\textcolor{fig1teal}{\texttt{\_}}\textcolor{fig1teal}{\texttt{math}}\textcolor{fig1teal}{\texttt{\_}}\textcolor{fig1teal}{\texttt{problem}}\textcolor{fig1teal}{\texttt{()}} \textcolor{fig1teal}{\texttt{->}} \textcolor{fig1teal}{\texttt{int}}\textcolor{fig1teal}{\texttt{:}}\\
\mbox{\texttt{~~~~}} \textcolor{fig1teal}{\texttt{'''}}\\
\mbox{\texttt{~~~~}} \textcolor{fig1teal}{\texttt{A}} \textcolor{fig1teal}{\texttt{robe}} \textcolor{fig1teal}{\texttt{takes}} \textcolor{fig1teal}{\texttt{2}} \textcolor{fig1teal}{\texttt{bolts}} \textcolor{fig1teal}{\texttt{of}} \textcolor{fig1teal}{\texttt{blue}} \textcolor{fig1teal}{\texttt{fiber}} \textcolor{fig1teal}{\texttt{and}} \textcolor{fig1teal}{\texttt{half}} \textcolor{fig1teal}{\texttt{that}} \textcolor{fig1teal}{\texttt{much}} \textcolor{fig1teal}{\texttt{white}} \textcolor{fig1teal}{\texttt{fiber}}\textcolor{fig1teal}{\texttt{.}}\\
\mbox{\texttt{~~~~}}\\
\mbox{\texttt{~~~~}} \textcolor{fig1teal}{\texttt{How}} \textcolor{fig1teal}{\texttt{many}} \textcolor{fig1teal}{\texttt{bolts}} \textcolor{fig1teal}{\texttt{in}} \textcolor{fig1teal}{\texttt{total}} \textcolor{fig1teal}{\texttt{does}} \textcolor{fig1teal}{\texttt{it}} \textcolor{fig1teal}{\texttt{take}}\textcolor{fig1teal}{\texttt{?}}\\
\mbox{\texttt{~~~~}} \textcolor{fig1teal}{\texttt{'''}}\\
\mbox{\texttt{~~~~}} \textcolor{fig1ochre}{\texttt{blue}}\textcolor{fig1teal}{\texttt{\_}}\textcolor{fig1ochre}{\texttt{fiber}} \textcolor{fig1teal}{\texttt{=}} \textcolor{fig1teal}{\texttt{2}}\\
\mbox{\texttt{~~~~}} \textcolor{fig1ochre}{\texttt{white}}\textcolor{fig1teal}{\texttt{\_}}\textcolor{fig1ochre}{\texttt{fiber}} \textcolor{fig1teal}{\texttt{=}} \textcolor{fig1ochre}{\texttt{blue}}\textcolor{fig1ochre}{\texttt{0}}\textcolor{fig1ochre}{\texttt{fiber}}\textcolor{fig1ochre}{\texttt{5}} \textcolor{fig1ochre}{\texttt{blue}}\textcolor{fig1ochre}{\texttt{\_}}\textcolor{fig1ochre}{\texttt{fiber}}\\
\mbox{\texttt{~~~~}} \textcolor{fig1teal}{\texttt{total}}\textcolor{fig1teal}{\texttt{\_}}\textcolor{fig1ochre}{\texttt{bol}}\textcolor{fig1ochre}{\texttt{ts}} \textcolor{fig1teal}{\texttt{=}} \textcolor{fig1ochre}{\texttt{blue}}\textcolor{fig1teal}{\texttt{\_}}\textcolor{fig1ochre}{\texttt{fiber}} \textcolor{fig1teal}{\texttt{+}} \textcolor{fig1ochre}{\texttt{white}}\textcolor{fig1teal}{\texttt{\_}}\textcolor{fig1ochre}{\texttt{fiber}}\\
\mbox{\texttt{~~~~}} \textcolor{fig1teal}{\texttt{result}} \textcolor{fig1teal}{\texttt{=}} \textcolor{fig1teal}{\texttt{total}}\textcolor{fig1teal}{\texttt{\_}}\textcolor{fig1ochre}{\texttt{bol}}\textcolor{fig1teal}{\texttt{ts}}\\
\mbox{\texttt{~~~~}} \textcolor{fig1teal}{\texttt{return}} \textcolor{fig1teal}{\texttt{result}}\textcolor{fig1teal}{\texttt{[PAD]}}\textcolor{fig1teal}{\texttt{[PAD]}}\textcolor{fig1teal}{\texttt{[PAD]}}\textcolor{fig1teal}{\texttt{[PAD]}}\textcolor{fig1teal}{\texttt{[PAD]}}\textcolor{fig1teal}{\texttt{[PAD]}}\textcolor{fig1teal}{\texttt{[PAD]}}\textcolor{fig1teal}{\texttt{[PAD]}}\textcolor{fig1teal}{\texttt{[PAD]}}\textcolor{fig1teal}{\texttt{[PAD]}}
\end{samplebox}
\begin{samplebox}{\normalfont\textbf{$\ours$ (PR), Round: 20} \hfill \normalfont\scriptsize \textcolor{darkgray}{Committed: \textbf{482/483}}}
\scriptsize\linespread{0.9}\selectfont
\textcolor{black}{\texttt{A}} \textcolor{black}{\texttt{robe}} \textcolor{black}{\texttt{takes}} \textcolor{black}{\texttt{2}} \textcolor{black}{\texttt{bolts}} \textcolor{black}{\texttt{of}} \textcolor{black}{\texttt{blue}} \textcolor{black}{\texttt{fiber}} \textcolor{black}{\texttt{and}} \textcolor{black}{\texttt{half}} \textcolor{black}{\texttt{that}} \textcolor{black}{\texttt{much}} \textcolor{black}{\texttt{white}} \textcolor{black}{\texttt{fiber}}\textcolor{black}{\texttt{.}}  \textcolor{black}{\texttt{How}} \textcolor{black}{\texttt{many}} \textcolor{black}{\texttt{bolts}} \textcolor{black}{\texttt{in}} \textcolor{black}{\texttt{total}} \textcolor{black}{\texttt{does}} \textcolor{black}{\texttt{it}} \textcolor{black}{\texttt{take}}\textcolor{black}{\texttt{?}}\\
\textcolor{fig1teal}{\texttt{def}} \textcolor{fig1teal}{\texttt{simple}}\textcolor{fig1teal}{\texttt{\_}}\textcolor{fig1teal}{\texttt{math}}\textcolor{fig1teal}{\texttt{\_}}\textcolor{fig1teal}{\texttt{problem}}\textcolor{fig1teal}{\texttt{()}} \textcolor{fig1teal}{\texttt{->}} \textcolor{fig1teal}{\texttt{int}}\textcolor{fig1teal}{\texttt{:}}\\
\mbox{\texttt{~~~~}} \textcolor{fig1teal}{\texttt{'''}}\\
\mbox{\texttt{~~~~}} \textcolor{fig1teal}{\texttt{A}} \textcolor{fig1teal}{\texttt{robe}} \textcolor{fig1teal}{\texttt{takes}} \textcolor{fig1teal}{\texttt{2}} \textcolor{fig1teal}{\texttt{bolts}} \textcolor{fig1teal}{\texttt{of}} \textcolor{fig1teal}{\texttt{blue}} \textcolor{fig1teal}{\texttt{fiber}} \textcolor{fig1teal}{\texttt{and}} \textcolor{fig1teal}{\texttt{half}} \textcolor{fig1teal}{\texttt{that}} \textcolor{fig1teal}{\texttt{much}} \textcolor{fig1teal}{\texttt{white}} \textcolor{fig1teal}{\texttt{fiber}}\textcolor{fig1teal}{\texttt{.}}\\
\mbox{\texttt{~~~~}}\\
\mbox{\texttt{~~~~}} \textcolor{fig1teal}{\texttt{How}} \textcolor{fig1teal}{\texttt{many}} \textcolor{fig1teal}{\texttt{bolts}} \textcolor{fig1teal}{\texttt{in}} \textcolor{fig1teal}{\texttt{total}} \textcolor{fig1teal}{\texttt{does}} \textcolor{fig1teal}{\texttt{it}} \textcolor{fig1teal}{\texttt{take}}\textcolor{fig1teal}{\texttt{?}}\\
\mbox{\texttt{~~~~}} \textcolor{fig1teal}{\texttt{'''}}\\
\mbox{\texttt{~~~~}} \textcolor{fig1teal}{\texttt{blue}}\textcolor{fig1teal}{\texttt{\_}}\textcolor{fig1teal}{\texttt{fiber}} \textcolor{fig1teal}{\texttt{=}} \textcolor{fig1teal}{\texttt{2}}\\
\mbox{\texttt{~~~~}} \textcolor{fig1teal}{\texttt{white}}\textcolor{fig1teal}{\texttt{\_}}\textcolor{fig1teal}{\texttt{fiber}} \textcolor{fig1teal}{\texttt{=}} \textcolor{fig1teal}{\texttt{blue}}\textcolor{fig1teal}{\texttt{\_}}\textcolor{fig1teal}{\texttt{fiber}}\textcolor{fig1ochre}{\texttt{*}}\textcolor{fig1teal}{\texttt{0}}\textcolor{fig1teal}{\texttt{.}}\textcolor{fig1teal}{\texttt{5}}\\
\mbox{\texttt{~~~~}} \textcolor{fig1teal}{\texttt{total}}\textcolor{fig1teal}{\texttt{\_}}\textcolor{fig1teal}{\texttt{bol}}\textcolor{fig1teal}{\texttt{ts}} \textcolor{fig1teal}{\texttt{=}} \textcolor{fig1teal}{\texttt{blue}}\textcolor{fig1teal}{\texttt{\_}}\textcolor{fig1teal}{\texttt{fiber}} \textcolor{fig1teal}{\texttt{+}} \textcolor{fig1teal}{\texttt{white}}\textcolor{fig1teal}{\texttt{\_}}\textcolor{fig1teal}{\texttt{fiber}}\\
\mbox{\texttt{~~~~}} \textcolor{fig1teal}{\texttt{result}} \textcolor{fig1teal}{\texttt{=}} \textcolor{fig1teal}{\texttt{total}}\textcolor{fig1teal}{\texttt{\_}}\textcolor{fig1teal}{\texttt{bol}}\textcolor{fig1teal}{\texttt{ts}}\\
\mbox{\texttt{~~~~}} \textcolor{fig1teal}{\texttt{return}} \textcolor{fig1teal}{\texttt{result}}\textcolor{fig1teal}{\texttt{[PAD]}}\textcolor{fig1teal}{\texttt{[PAD]}}\textcolor{fig1teal}{\texttt{[PAD]}}\textcolor{fig1teal}{\texttt{[PAD]}}\textcolor{fig1teal}{\texttt{[PAD]}}\textcolor{fig1teal}{\texttt{[PAD]}}\textcolor{fig1teal}{\texttt{[PAD]}}\textcolor{fig1teal}{\texttt{[PAD]}}\textcolor{fig1teal}{\texttt{[PAD]}}\textcolor{fig1teal}{\texttt{[PAD]}}
\end{samplebox}
\begin{samplebox}{\normalfont\textbf{$\ours$ (PR), Round: 32} \hfill \normalfont\scriptsize \textcolor{darkgray}{Committed: \textbf{483/483}}}
\scriptsize\linespread{0.9}\selectfont
\textcolor{black}{\texttt{A}} \textcolor{black}{\texttt{robe}} \textcolor{black}{\texttt{takes}} \textcolor{black}{\texttt{2}} \textcolor{black}{\texttt{bolts}} \textcolor{black}{\texttt{of}} \textcolor{black}{\texttt{blue}} \textcolor{black}{\texttt{fiber}} \textcolor{black}{\texttt{and}} \textcolor{black}{\texttt{half}} \textcolor{black}{\texttt{that}} \textcolor{black}{\texttt{much}} \textcolor{black}{\texttt{white}} \textcolor{black}{\texttt{fiber}}\textcolor{black}{\texttt{.}}  \textcolor{black}{\texttt{How}} \textcolor{black}{\texttt{many}} \textcolor{black}{\texttt{bolts}} \textcolor{black}{\texttt{in}} \textcolor{black}{\texttt{total}} \textcolor{black}{\texttt{does}} \textcolor{black}{\texttt{it}} \textcolor{black}{\texttt{take}}\textcolor{black}{\texttt{?}}\\
\textcolor{fig1teal}{\texttt{def}} \textcolor{fig1teal}{\texttt{simple}}\textcolor{fig1teal}{\texttt{\_}}\textcolor{fig1teal}{\texttt{math}}\textcolor{fig1teal}{\texttt{\_}}\textcolor{fig1teal}{\texttt{problem}}\textcolor{fig1teal}{\texttt{()}} \textcolor{fig1teal}{\texttt{->}} \textcolor{fig1teal}{\texttt{int}}\textcolor{fig1teal}{\texttt{:}}\\
\mbox{\texttt{~~~~}} \textcolor{fig1teal}{\texttt{'''}}\\
\mbox{\texttt{~~~~}} \textcolor{fig1teal}{\texttt{A}} \textcolor{fig1teal}{\texttt{robe}} \textcolor{fig1teal}{\texttt{takes}} \textcolor{fig1teal}{\texttt{2}} \textcolor{fig1teal}{\texttt{bolts}} \textcolor{fig1teal}{\texttt{of}} \textcolor{fig1teal}{\texttt{blue}} \textcolor{fig1teal}{\texttt{fiber}} \textcolor{fig1teal}{\texttt{and}} \textcolor{fig1teal}{\texttt{half}} \textcolor{fig1teal}{\texttt{that}} \textcolor{fig1teal}{\texttt{much}} \textcolor{fig1teal}{\texttt{white}} \textcolor{fig1teal}{\texttt{fiber}}\textcolor{fig1teal}{\texttt{.}}\\
\mbox{\texttt{~~~~}}\\
\mbox{\texttt{~~~~}} \textcolor{fig1teal}{\texttt{How}} \textcolor{fig1teal}{\texttt{many}} \textcolor{fig1teal}{\texttt{bolts}} \textcolor{fig1teal}{\texttt{in}} \textcolor{fig1teal}{\texttt{total}} \textcolor{fig1teal}{\texttt{does}} \textcolor{fig1teal}{\texttt{it}} \textcolor{fig1teal}{\texttt{take}}\textcolor{fig1teal}{\texttt{?}}\\
\mbox{\texttt{~~~~}} \textcolor{fig1teal}{\texttt{'''}}\\
\mbox{\texttt{~~~~}} \textcolor{fig1teal}{\texttt{blue}}\textcolor{fig1teal}{\texttt{\_}}\textcolor{fig1teal}{\texttt{fiber}} \textcolor{fig1teal}{\texttt{=}} \textcolor{fig1teal}{\texttt{2}}\\
\mbox{\texttt{~~~~}} \textcolor{fig1teal}{\texttt{white}}\textcolor{fig1teal}{\texttt{\_}}\textcolor{fig1teal}{\texttt{fiber}} \textcolor{fig1teal}{\texttt{=}} \textcolor{fig1teal}{\texttt{blue}}\textcolor{fig1teal}{\texttt{\_}}\textcolor{fig1teal}{\texttt{fiber}}\textcolor{fig1teal}{\texttt{*}}\textcolor{fig1teal}{\texttt{0}}\textcolor{fig1teal}{\texttt{.}}\textcolor{fig1teal}{\texttt{5}}\\
\mbox{\texttt{~~~~}} \textcolor{fig1teal}{\texttt{total}}\textcolor{fig1teal}{\texttt{\_}}\textcolor{fig1teal}{\texttt{bol}}\textcolor{fig1teal}{\texttt{ts}} \textcolor{fig1teal}{\texttt{=}} \textcolor{fig1teal}{\texttt{blue}}\textcolor{fig1teal}{\texttt{\_}}\textcolor{fig1teal}{\texttt{fiber}} \textcolor{fig1teal}{\texttt{+}} \textcolor{fig1teal}{\texttt{white}}\textcolor{fig1teal}{\texttt{\_}}\textcolor{fig1teal}{\texttt{fiber}}\\
\mbox{\texttt{~~~~}} \textcolor{fig1teal}{\texttt{result}} \textcolor{fig1teal}{\texttt{=}} \textcolor{fig1teal}{\texttt{total}}\textcolor{fig1teal}{\texttt{\_}}\textcolor{fig1teal}{\texttt{bol}}\textcolor{fig1teal}{\texttt{ts}}\\
\mbox{\texttt{~~~~}} \textcolor{fig1teal}{\texttt{return}} \textcolor{fig1teal}{\texttt{result}}\textcolor{fig1teal}{\texttt{[PAD]}}\textcolor{fig1teal}{\texttt{[PAD]}}\textcolor{fig1teal}{\texttt{[PAD]}}\textcolor{fig1teal}{\texttt{[PAD]}}\textcolor{fig1teal}{\texttt{[PAD]}}\textcolor{fig1teal}{\texttt{[PAD]}}\textcolor{fig1teal}{\texttt{[PAD]}}\textcolor{fig1teal}{\texttt{[PAD]}}\textcolor{fig1teal}{\texttt{[PAD]}}\textcolor{fig1teal}{\texttt{[PAD]}}
\end{samplebox}
\caption{Posterior Refinement trajectory on TinyGSM, Sample-1.}
\label{fig:refine_sample_tinygsm_1}
\end{figure}

\begin{figure}[H]
\centering
\begin{samplebox}{\normalfont\textbf{$\ours$ (PR), Round: 1} \hfill \normalfont\scriptsize \textcolor{darkgray}{Committed: \textbf{348/455}}}
\scriptsize\linespread{0.9}\selectfont
\textcolor{black}{\texttt{In}} \textcolor{black}{\texttt{a}} \textcolor{black}{\texttt{dance}} \textcolor{black}{\texttt{class}} \textcolor{black}{\texttt{of}} \textcolor{black}{\texttt{2}}\textcolor{black}{\texttt{0}} \textcolor{black}{\texttt{students}}\textcolor{black}{\texttt{,}} \textcolor{black}{\texttt{2}}\textcolor{black}{\texttt{0}}\textcolor{black}{\texttt{\%}} \textcolor{black}{\texttt{enrolled}} \textcolor{black}{\texttt{in}} \textcolor{black}{\texttt{contemporary}} \textcolor{black}{\texttt{dance}}\textcolor{black}{\texttt{,}} \textcolor{black}{\texttt{2}}\textcolor{black}{\texttt{5}}\textcolor{black}{\texttt{\%}} \textcolor{black}{\texttt{of}} \textcolor{black}{\texttt{the}} \textcolor{black}{\texttt{remaining}} \textcolor{black}{\texttt{enrolled}} \textcolor{black}{\texttt{in}} \textcolor{black}{\texttt{jazz}} \textcolor{black}{\texttt{dance}}\textcolor{black}{\texttt{,}} \textcolor{black}{\texttt{and}} \textcolor{black}{\texttt{the}} \textcolor{black}{\texttt{rest}} \textcolor{black}{\texttt{enrolled}} \textcolor{black}{\texttt{in}} \textcolor{black}{\texttt{hip}}\textcolor{black}{\texttt{-}}\textcolor{black}{\texttt{hop}} \textcolor{black}{\texttt{dance}}\textcolor{black}{\texttt{.}} \textcolor{black}{\texttt{What}} \textcolor{black}{\texttt{percentage}} \textcolor{black}{\texttt{of}} \textcolor{black}{\texttt{the}} \textcolor{black}{\texttt{entire}} \textcolor{black}{\texttt{students}} \textcolor{black}{\texttt{enrolled}} \textcolor{black}{\texttt{in}} \textcolor{black}{\texttt{hip}}\textcolor{black}{\texttt{-}}\textcolor{black}{\texttt{hop}} \textcolor{black}{\texttt{dance}}\textcolor{black}{\texttt{?}}\\
\textcolor{fig1teal}{\texttt{def}} \textcolor{fig1teal}{\texttt{simple}}\textcolor{fig1teal}{\texttt{\_}}\textcolor{fig1teal}{\texttt{math}}\textcolor{fig1teal}{\texttt{\_}}\textcolor{fig1teal}{\texttt{problem}}\textcolor{fig1teal}{\texttt{()}} \textcolor{fig1teal}{\texttt{->}} \textcolor{fig1teal}{\texttt{int}}\textcolor{fig1teal}{\texttt{:}}\\
\mbox{\texttt{~~~~}} \textcolor{fig1ochre}{\texttt{'''}}\\
\mbox{\texttt{~~~~}} \textcolor{fig1teal}{\texttt{In}} \textcolor{fig1teal}{\texttt{a}} \textcolor{fig1teal}{\texttt{dance}} \textcolor{fig1teal}{\texttt{class}} \textcolor{fig1teal}{\texttt{of}} \textcolor{fig1teal}{\texttt{2}}\textcolor{fig1teal}{\texttt{0}} \textcolor{fig1teal}{\texttt{students}}\textcolor{fig1teal}{\texttt{,}} \textcolor{fig1teal}{\texttt{2}}\textcolor{fig1teal}{\texttt{0}}\textcolor{fig1teal}{\texttt{\%}} \textcolor{fig1teal}{\texttt{enrolled}} \textcolor{fig1teal}{\texttt{in}} \textcolor{fig1teal}{\texttt{contemporary}} \textcolor{fig1teal}{\texttt{dance}}\textcolor{fig1teal}{\texttt{,}} \textcolor{fig1teal}{\texttt{2}}\textcolor{fig1teal}{\texttt{5}}\textcolor{fig1teal}{\texttt{\%}} \textcolor{fig1teal}{\texttt{of}} \textcolor{fig1teal}{\texttt{the}} \textcolor{fig1teal}{\texttt{remaining}} \textcolor{fig1teal}{\texttt{enrolled}} \textcolor{fig1teal}{\texttt{in}} \textcolor{fig1teal}{\texttt{jazz}} \textcolor{fig1teal}{\texttt{dance}}\textcolor{fig1teal}{\texttt{,}} \textcolor{fig1teal}{\texttt{and}} \textcolor{fig1teal}{\texttt{the}} \textcolor{fig1teal}{\texttt{rest}} \textcolor{fig1teal}{\texttt{enrolled}} \textcolor{fig1teal}{\texttt{in}} \textcolor{fig1teal}{\texttt{hip}}\textcolor{fig1teal}{\texttt{-}}\textcolor{fig1teal}{\texttt{hop}} \textcolor{fig1teal}{\texttt{dance}}\textcolor{fig1teal}{\texttt{.}}\\
\mbox{\texttt{~~~~}} \textcolor{fig1teal}{\texttt{What}} \textcolor{fig1teal}{\texttt{percentage}} \textcolor{fig1teal}{\texttt{of}} \textcolor{fig1teal}{\texttt{the}} \textcolor{fig1teal}{\texttt{entire}} \textcolor{fig1teal}{\texttt{students}} \textcolor{fig1teal}{\texttt{enrolled}} \textcolor{fig1teal}{\texttt{in}} \textcolor{fig1teal}{\texttt{hip}}\textcolor{fig1teal}{\texttt{-}}\textcolor{fig1teal}{\texttt{hop}} \textcolor{fig1teal}{\texttt{dance}}\textcolor{fig1teal}{\texttt{?}}\\
\mbox{\texttt{~~~~}} \textcolor{fig1ochre}{\texttt{'''}}\\
\mbox{\texttt{~~~~}} \textcolor{fig1ochre}{\texttt{total}}\textcolor{fig1teal}{\texttt{\_}}\textcolor{fig1ochre}{\texttt{students}} \textcolor{fig1ochre}{\texttt{=}} \textcolor{fig1ochre}{\texttt{2}}\textcolor{fig1ochre}{\texttt{0}}\\
\mbox{\texttt{~~~~}} \textcolor{fig1ochre}{\texttt{contemporary}}\textcolor{fig1ochre}{\texttt{\_}}\textcolor{fig1ochre}{\texttt{students}}\textcolor{fig1ochre}{\texttt{ance}} \textcolor{fig1ochre}{\texttt{=}}\textcolor{fig1ochre}{\texttt{(}}\textcolor{fig1ochre}{\texttt{\_}}\textcolor{fig1ochre}{\texttt{students}} \textcolor{fig1ochre}{\texttt{*}} \textcolor{fig1ochre}{\texttt{0}}\textcolor{fig1ochre}{\texttt{.}}\textcolor{fig1ochre}{\texttt{2}}\\
\mbox{\texttt{~~~~}} \textcolor{fig1ochre}{\texttt{remaining}} \textcolor{fig1ochre}{\texttt{remaining}}\textcolor{fig1ochre}{\texttt{\_}}\textcolor{fig1ochre}{\texttt{students}} \textcolor{fig1ochre}{\texttt{=}} \textcolor{fig1ochre}{\texttt{total}}\textcolor{fig1ochre}{\texttt{\_}}\textcolor{fig1ochre}{\texttt{students}} \textcolor{fig1ochre}{\texttt{-}} \textcolor{fig1ochre}{\texttt{total}}\textcolor{fig1ochre}{\texttt{\_}}\textcolor{fig1ochre}{\texttt{students}} \textcolor{fig1ochre}{\texttt{-}} \textcolor{fig1ochre}{\texttt{contemporary}}\textcolor{fig1ochre}{\texttt{\_}}\textcolor{fig1ochre}{\texttt{students}}\textcolor{fig1ochre}{\texttt{ance}} \textcolor{fig1ochre}{\texttt{jazz}}\textcolor{fig1ochre}{\texttt{\_}}\textcolor{fig1ochre}{\texttt{d}}\textcolor{fig1ochre}{\texttt{ance}}\textcolor{fig1ochre}{\texttt{\_}}\textcolor{fig1ochre}{\texttt{students}}\textcolor{fig1ochre}{\texttt{students}} \textcolor{fig1ochre}{\texttt{0}}\textcolor{fig1ochre}{\texttt{.}}\textcolor{fig1ochre}{\texttt{2}}\textcolor{fig1ochre}{\texttt{2}}\textcolor{fig1ochre}{\texttt{5}}\\
\mbox{\texttt{~~~~}} \textcolor{fig1ochre}{\texttt{hip}}\textcolor{fig1ochre}{\texttt{\_}}\textcolor{fig1ochre}{\texttt{hop}}\textcolor{fig1ochre}{\texttt{\_}}\textcolor{fig1ochre}{\texttt{hop}} \textcolor{fig1ochre}{\texttt{=}} \textcolor{fig1ochre}{\texttt{(}}\textcolor{fig1ochre}{\texttt{hop}}\textcolor{fig1ochre}{\texttt{\_}}\textcolor{fig1ochre}{\texttt{\_}}\textcolor{fig1ochre}{\texttt{students}} \textcolor{fig1ochre}{\texttt{(}} \textcolor{fig1ochre}{\texttt{jazz}}\textcolor{fig1ochre}{\texttt{\_}}\textcolor{fig1ochre}{\texttt{d}}\textcolor{fig1ochre}{\texttt{ance}}\textcolor{fig1ochre}{\texttt{\_}}\textcolor{fig1ochre}{\texttt{ance}}\textcolor{fig1ochre}{\texttt{\_}}\textcolor{fig1ochre}{\texttt{students}}\textcolor{fig1ochre}{\texttt{1}}\textcolor{fig1ochre}{\texttt{0}}\textcolor{fig1ochre}{\texttt{0}}\textcolor{fig1ochre}{\texttt{/}}\textcolor{fig1ochre}{\texttt{total}}\textcolor{fig1ochre}{\texttt{\_}}\textcolor{fig1ochre}{\texttt{students}}\\
\mbox{\texttt{~~~~}}\textcolor{fig1ochre}{\texttt{)}} \textcolor{fig1ochre}{\texttt{hip}}\textcolor{fig1ochre}{\texttt{hop}}\textcolor{fig1ochre}{\texttt{1}}\textcolor{fig1ochre}{\texttt{0}}\textcolor{fig1ochre}{\texttt{ance}}\\
\mbox{\texttt{~~~~}} \textcolor{fig1ochre}{\texttt{return}} \textcolor{fig1ochre}{\texttt{result}}\textcolor{fig1ochre}{\texttt{[PAD]}}\textcolor{fig1ochre}{\texttt{[PAD]}}\textcolor{fig1ochre}{\texttt{[PAD]}}\textcolor{fig1ochre}{\texttt{[PAD]}}\textcolor{fig1ochre}{\texttt{[PAD]}}\textcolor{fig1ochre}{\texttt{[PAD]}}\textcolor{fig1ochre}{\texttt{[PAD]}}\textcolor{fig1ochre}{\texttt{[PAD]}}\textcolor{fig1ochre}{\texttt{[PAD]}}\textcolor{fig1ochre}{\texttt{[PAD]}}
\end{samplebox}
\begin{samplebox}{\normalfont\textbf{$\ours$ (PR), Round: 10} \hfill \normalfont\scriptsize \textcolor{darkgray}{Committed: \textbf{410/455}}}
\scriptsize\linespread{0.9}\selectfont
\textcolor{black}{\texttt{In}} \textcolor{black}{\texttt{a}} \textcolor{black}{\texttt{dance}} \textcolor{black}{\texttt{class}} \textcolor{black}{\texttt{of}} \textcolor{black}{\texttt{2}}\textcolor{black}{\texttt{0}} \textcolor{black}{\texttt{students}}\textcolor{black}{\texttt{,}} \textcolor{black}{\texttt{2}}\textcolor{black}{\texttt{0}}\textcolor{black}{\texttt{\%}} \textcolor{black}{\texttt{enrolled}} \textcolor{black}{\texttt{in}} \textcolor{black}{\texttt{contemporary}} \textcolor{black}{\texttt{dance}}\textcolor{black}{\texttt{,}} \textcolor{black}{\texttt{2}}\textcolor{black}{\texttt{5}}\textcolor{black}{\texttt{\%}} \textcolor{black}{\texttt{of}} \textcolor{black}{\texttt{the}} \textcolor{black}{\texttt{remaining}} \textcolor{black}{\texttt{enrolled}} \textcolor{black}{\texttt{in}} \textcolor{black}{\texttt{jazz}} \textcolor{black}{\texttt{dance}}\textcolor{black}{\texttt{,}} \textcolor{black}{\texttt{and}} \textcolor{black}{\texttt{the}} \textcolor{black}{\texttt{rest}} \textcolor{black}{\texttt{enrolled}} \textcolor{black}{\texttt{in}} \textcolor{black}{\texttt{hip}}\textcolor{black}{\texttt{-}}\textcolor{black}{\texttt{hop}} \textcolor{black}{\texttt{dance}}\textcolor{black}{\texttt{.}} \textcolor{black}{\texttt{What}} \textcolor{black}{\texttt{percentage}} \textcolor{black}{\texttt{of}} \textcolor{black}{\texttt{the}} \textcolor{black}{\texttt{entire}} \textcolor{black}{\texttt{students}} \textcolor{black}{\texttt{enrolled}} \textcolor{black}{\texttt{in}} \textcolor{black}{\texttt{hip}}\textcolor{black}{\texttt{-}}\textcolor{black}{\texttt{hop}} \textcolor{black}{\texttt{dance}}\textcolor{black}{\texttt{?}}\\
\textcolor{fig1teal}{\texttt{def}} \textcolor{fig1teal}{\texttt{simple}}\textcolor{fig1teal}{\texttt{\_}}\textcolor{fig1teal}{\texttt{math}}\textcolor{fig1teal}{\texttt{\_}}\textcolor{fig1teal}{\texttt{problem}}\textcolor{fig1teal}{\texttt{()}} \textcolor{fig1teal}{\texttt{->}} \textcolor{fig1teal}{\texttt{int}}\textcolor{fig1teal}{\texttt{:}}\\
\mbox{\texttt{~~~~}} \textcolor{fig1teal}{\texttt{'''}}\\
\mbox{\texttt{~~~~}} \textcolor{fig1teal}{\texttt{In}} \textcolor{fig1teal}{\texttt{a}} \textcolor{fig1teal}{\texttt{dance}} \textcolor{fig1teal}{\texttt{class}} \textcolor{fig1teal}{\texttt{of}} \textcolor{fig1teal}{\texttt{2}}\textcolor{fig1teal}{\texttt{0}} \textcolor{fig1teal}{\texttt{students}}\textcolor{fig1teal}{\texttt{,}} \textcolor{fig1teal}{\texttt{2}}\textcolor{fig1teal}{\texttt{0}}\textcolor{fig1teal}{\texttt{\%}} \textcolor{fig1teal}{\texttt{enrolled}} \textcolor{fig1teal}{\texttt{in}} \textcolor{fig1teal}{\texttt{contemporary}} \textcolor{fig1teal}{\texttt{dance}}\textcolor{fig1teal}{\texttt{,}} \textcolor{fig1teal}{\texttt{2}}\textcolor{fig1teal}{\texttt{5}}\textcolor{fig1teal}{\texttt{\%}} \textcolor{fig1teal}{\texttt{of}} \textcolor{fig1teal}{\texttt{the}} \textcolor{fig1teal}{\texttt{remaining}} \textcolor{fig1teal}{\texttt{enrolled}} \textcolor{fig1teal}{\texttt{in}} \textcolor{fig1teal}{\texttt{jazz}} \textcolor{fig1teal}{\texttt{dance}}\textcolor{fig1teal}{\texttt{,}} \textcolor{fig1teal}{\texttt{and}} \textcolor{fig1teal}{\texttt{the}} \textcolor{fig1teal}{\texttt{rest}} \textcolor{fig1teal}{\texttt{enrolled}} \textcolor{fig1teal}{\texttt{in}} \textcolor{fig1teal}{\texttt{hip}}\textcolor{fig1teal}{\texttt{-}}\textcolor{fig1teal}{\texttt{hop}} \textcolor{fig1teal}{\texttt{dance}}\textcolor{fig1teal}{\texttt{.}}\\
\mbox{\texttt{~~~~}} \textcolor{fig1teal}{\texttt{What}} \textcolor{fig1teal}{\texttt{percentage}} \textcolor{fig1teal}{\texttt{of}} \textcolor{fig1teal}{\texttt{the}} \textcolor{fig1teal}{\texttt{entire}} \textcolor{fig1teal}{\texttt{students}} \textcolor{fig1teal}{\texttt{enrolled}} \textcolor{fig1teal}{\texttt{in}} \textcolor{fig1teal}{\texttt{hip}}\textcolor{fig1teal}{\texttt{-}}\textcolor{fig1teal}{\texttt{hop}} \textcolor{fig1teal}{\texttt{dance}}\textcolor{fig1teal}{\texttt{?}}\\
\mbox{\texttt{~~~~}} \textcolor{fig1teal}{\texttt{'''}}\\
\mbox{\texttt{~~~~}} \textcolor{fig1teal}{\texttt{total}}\textcolor{fig1teal}{\texttt{\_}}\textcolor{fig1teal}{\texttt{students}} \textcolor{fig1teal}{\texttt{=}} \textcolor{fig1teal}{\texttt{2}}\textcolor{fig1teal}{\texttt{0}}\\
\mbox{\texttt{~~~~}} \textcolor{fig1ochre}{\texttt{contemporary}}\textcolor{fig1teal}{\texttt{\_}}\textcolor{fig1ochre}{\texttt{students}} \textcolor{fig1teal}{\texttt{=}} \textcolor{fig1teal}{\texttt{total}}\textcolor{fig1teal}{\texttt{\_}}\textcolor{fig1teal}{\texttt{students}} \textcolor{fig1teal}{\texttt{*}} \textcolor{fig1teal}{\texttt{0}}\textcolor{fig1teal}{\texttt{.}}\textcolor{fig1teal}{\texttt{2}}\\
\mbox{\texttt{~~~~}} \textcolor{fig1ochre}{\texttt{remaining}}\textcolor{fig1teal}{\texttt{\_}}\textcolor{fig1teal}{\texttt{students}} \textcolor{fig1teal}{\texttt{=}} \textcolor{fig1teal}{\texttt{total}}\textcolor{fig1teal}{\texttt{\_}}\textcolor{fig1teal}{\texttt{students}} \textcolor{fig1teal}{\texttt{-}} \textcolor{fig1ochre}{\texttt{contemporary}}\textcolor{fig1teal}{\texttt{\_}}\textcolor{fig1ochre}{\texttt{students}}\\
\mbox{\texttt{~~~~}} \textcolor{fig1ochre}{\texttt{jazz}}\textcolor{fig1ochre}{\texttt{\_}}\textcolor{fig1ochre}{\texttt{\_}}\textcolor{fig1ochre}{\texttt{ance}} \textcolor{fig1teal}{\texttt{=}} \textcolor{fig1ochre}{\texttt{remaining}}\textcolor{fig1teal}{\texttt{\_}}\textcolor{fig1teal}{\texttt{students}} \textcolor{fig1teal}{\texttt{*}} \textcolor{fig1teal}{\texttt{0}}\textcolor{fig1teal}{\texttt{.}}\textcolor{fig1teal}{\texttt{2}}\textcolor{fig1teal}{\texttt{5}}\\
\mbox{\texttt{~~~~}} \textcolor{fig1ochre}{\texttt{hip}}\textcolor{fig1ochre}{\texttt{hop}}\textcolor{fig1ochre}{\texttt{\_}}\textcolor{fig1ochre}{\texttt{students}} \textcolor{fig1ochre}{\texttt{=}} \textcolor{fig1ochre}{\texttt{remaining}}\textcolor{fig1ochre}{\texttt{students}}\textcolor{fig1ochre}{\texttt{students}}\textcolor{fig1ochre}{\texttt{students}} \textcolor{fig1ochre}{\texttt{-}} \textcolor{fig1ochre}{\texttt{jazz}}\textcolor{fig1ochre}{\texttt{\_}}\textcolor{fig1ochre}{\texttt{students}}\textcolor{fig1ochre}{\texttt{ance}}\\
\mbox{\texttt{~~~~}} \textcolor{fig1ochre}{\texttt{hip}}\textcolor{fig1ochre}{\texttt{\_}}\textcolor{fig1ochre}{\texttt{hop}}\textcolor{fig1teal}{\texttt{\_}}\textcolor{fig1ochre}{\texttt{students}} \textcolor{fig1teal}{\texttt{=}} \textcolor{fig1teal}{\texttt{(}}\textcolor{fig1ochre}{\texttt{hip}}\textcolor{fig1ochre}{\texttt{hop}}\textcolor{fig1ochre}{\texttt{\_}}\textcolor{fig1ochre}{\texttt{students}} \textcolor{fig1ochre}{\texttt{/}} \textcolor{fig1ochre}{\texttt{total}}\textcolor{fig1teal}{\texttt{\_}}\textcolor{fig1teal}{\texttt{students}}\textcolor{fig1teal}{\texttt{)}} \textcolor{fig1teal}{\texttt{*}} \textcolor{fig1teal}{\texttt{1}}\textcolor{fig1teal}{\texttt{0}}\textcolor{fig1teal}{\texttt{0}}\\
\mbox{\texttt{~~~~}} \textcolor{fig1teal}{\texttt{result}} \textcolor{fig1teal}{\texttt{=}} \textcolor{fig1ochre}{\texttt{hip}}\textcolor{fig1ochre}{\texttt{\_}}\textcolor{fig1ochre}{\texttt{hip}}\textcolor{fig1ochre}{\texttt{hop}}\textcolor{fig1ochre}{\texttt{\_}}\textcolor{fig1ochre}{\texttt{percentage}}\\
\mbox{\texttt{~~~~}} \textcolor{fig1ochre}{\texttt{result}}\textcolor{fig1ochre}{\texttt{[PAD]}}\textcolor{fig1teal}{\texttt{[PAD]}}\textcolor{fig1teal}{\texttt{[PAD]}}\textcolor{fig1teal}{\texttt{[PAD]}}\textcolor{fig1teal}{\texttt{[PAD]}}\textcolor{fig1teal}{\texttt{[PAD]}}\textcolor{fig1teal}{\texttt{[PAD]}}\textcolor{fig1teal}{\texttt{[PAD]}}\textcolor{fig1teal}{\texttt{[PAD]}}\textcolor{fig1teal}{\texttt{[PAD]}}
\end{samplebox}
\begin{samplebox}{\normalfont\textbf{$\ours$ (PR), Round: 20} \hfill \normalfont\scriptsize \textcolor{darkgray}{Committed: \textbf{437/455}}}
\scriptsize\linespread{0.9}\selectfont
\textcolor{black}{\texttt{In}} \textcolor{black}{\texttt{a}} \textcolor{black}{\texttt{dance}} \textcolor{black}{\texttt{class}} \textcolor{black}{\texttt{of}} \textcolor{black}{\texttt{2}}\textcolor{black}{\texttt{0}} \textcolor{black}{\texttt{students}}\textcolor{black}{\texttt{,}} \textcolor{black}{\texttt{2}}\textcolor{black}{\texttt{0}}\textcolor{black}{\texttt{\%}} \textcolor{black}{\texttt{enrolled}} \textcolor{black}{\texttt{in}} \textcolor{black}{\texttt{contemporary}} \textcolor{black}{\texttt{dance}}\textcolor{black}{\texttt{,}} \textcolor{black}{\texttt{2}}\textcolor{black}{\texttt{5}}\textcolor{black}{\texttt{\%}} \textcolor{black}{\texttt{of}} \textcolor{black}{\texttt{the}} \textcolor{black}{\texttt{remaining}} \textcolor{black}{\texttt{enrolled}} \textcolor{black}{\texttt{in}} \textcolor{black}{\texttt{jazz}} \textcolor{black}{\texttt{dance}}\textcolor{black}{\texttt{,}} \textcolor{black}{\texttt{and}} \textcolor{black}{\texttt{the}} \textcolor{black}{\texttt{rest}} \textcolor{black}{\texttt{enrolled}} \textcolor{black}{\texttt{in}} \textcolor{black}{\texttt{hip}}\textcolor{black}{\texttt{-}}\textcolor{black}{\texttt{hop}} \textcolor{black}{\texttt{dance}}\textcolor{black}{\texttt{.}} \textcolor{black}{\texttt{What}} \textcolor{black}{\texttt{percentage}} \textcolor{black}{\texttt{of}} \textcolor{black}{\texttt{the}} \textcolor{black}{\texttt{entire}} \textcolor{black}{\texttt{students}} \textcolor{black}{\texttt{enrolled}} \textcolor{black}{\texttt{in}} \textcolor{black}{\texttt{hip}}\textcolor{black}{\texttt{-}}\textcolor{black}{\texttt{hop}} \textcolor{black}{\texttt{dance}}\textcolor{black}{\texttt{?}}\\
\textcolor{fig1teal}{\texttt{def}} \textcolor{fig1teal}{\texttt{simple}}\textcolor{fig1teal}{\texttt{\_}}\textcolor{fig1teal}{\texttt{math}}\textcolor{fig1teal}{\texttt{\_}}\textcolor{fig1teal}{\texttt{problem}}\textcolor{fig1teal}{\texttt{()}} \textcolor{fig1teal}{\texttt{->}} \textcolor{fig1teal}{\texttt{int}}\textcolor{fig1teal}{\texttt{:}}\\
\mbox{\texttt{~~~~}} \textcolor{fig1teal}{\texttt{'''}}\\
\mbox{\texttt{~~~~}} \textcolor{fig1teal}{\texttt{In}} \textcolor{fig1teal}{\texttt{a}} \textcolor{fig1teal}{\texttt{dance}} \textcolor{fig1teal}{\texttt{class}} \textcolor{fig1teal}{\texttt{of}} \textcolor{fig1teal}{\texttt{2}}\textcolor{fig1teal}{\texttt{0}} \textcolor{fig1teal}{\texttt{students}}\textcolor{fig1teal}{\texttt{,}} \textcolor{fig1teal}{\texttt{2}}\textcolor{fig1teal}{\texttt{0}}\textcolor{fig1teal}{\texttt{\%}} \textcolor{fig1teal}{\texttt{enrolled}} \textcolor{fig1teal}{\texttt{in}} \textcolor{fig1teal}{\texttt{contemporary}} \textcolor{fig1teal}{\texttt{dance}}\textcolor{fig1teal}{\texttt{,}} \textcolor{fig1teal}{\texttt{2}}\textcolor{fig1teal}{\texttt{5}}\textcolor{fig1teal}{\texttt{\%}} \textcolor{fig1teal}{\texttt{of}} \textcolor{fig1teal}{\texttt{the}} \textcolor{fig1teal}{\texttt{remaining}} \textcolor{fig1teal}{\texttt{enrolled}} \textcolor{fig1teal}{\texttt{in}} \textcolor{fig1teal}{\texttt{jazz}} \textcolor{fig1teal}{\texttt{dance}}\textcolor{fig1teal}{\texttt{,}} \textcolor{fig1teal}{\texttt{and}} \textcolor{fig1teal}{\texttt{the}} \textcolor{fig1teal}{\texttt{rest}} \textcolor{fig1teal}{\texttt{enrolled}} \textcolor{fig1teal}{\texttt{in}} \textcolor{fig1teal}{\texttt{hip}}\textcolor{fig1teal}{\texttt{-}}\textcolor{fig1teal}{\texttt{hop}} \textcolor{fig1teal}{\texttt{dance}}\textcolor{fig1teal}{\texttt{.}}\\
\mbox{\texttt{~~~~}} \textcolor{fig1teal}{\texttt{What}} \textcolor{fig1teal}{\texttt{percentage}} \textcolor{fig1teal}{\texttt{of}} \textcolor{fig1teal}{\texttt{the}} \textcolor{fig1teal}{\texttt{entire}} \textcolor{fig1teal}{\texttt{students}} \textcolor{fig1teal}{\texttt{enrolled}} \textcolor{fig1teal}{\texttt{in}} \textcolor{fig1teal}{\texttt{hip}}\textcolor{fig1teal}{\texttt{-}}\textcolor{fig1teal}{\texttt{hop}} \textcolor{fig1teal}{\texttt{dance}}\textcolor{fig1teal}{\texttt{?}}\\
\mbox{\texttt{~~~~}} \textcolor{fig1teal}{\texttt{'''}}\\
\mbox{\texttt{~~~~}} \textcolor{fig1teal}{\texttt{total}}\textcolor{fig1teal}{\texttt{\_}}\textcolor{fig1teal}{\texttt{students}} \textcolor{fig1teal}{\texttt{=}} \textcolor{fig1teal}{\texttt{2}}\textcolor{fig1teal}{\texttt{0}}\\
\mbox{\texttt{~~~~}} \textcolor{fig1ochre}{\texttt{contemporary}}\textcolor{fig1teal}{\texttt{\_}}\textcolor{fig1teal}{\texttt{students}} \textcolor{fig1teal}{\texttt{=}} \textcolor{fig1teal}{\texttt{total}}\textcolor{fig1teal}{\texttt{\_}}\textcolor{fig1teal}{\texttt{students}} \textcolor{fig1teal}{\texttt{*}} \textcolor{fig1teal}{\texttt{0}}\textcolor{fig1teal}{\texttt{.}}\textcolor{fig1teal}{\texttt{2}}\\
\mbox{\texttt{~~~~}} \textcolor{fig1teal}{\texttt{remaining}}\textcolor{fig1teal}{\texttt{\_}}\textcolor{fig1teal}{\texttt{students}} \textcolor{fig1teal}{\texttt{=}} \textcolor{fig1teal}{\texttt{total}}\textcolor{fig1teal}{\texttt{\_}}\textcolor{fig1teal}{\texttt{students}} \textcolor{fig1teal}{\texttt{-}} \textcolor{fig1ochre}{\texttt{contemporary}}\textcolor{fig1teal}{\texttt{\_}}\textcolor{fig1teal}{\texttt{students}}\\
\mbox{\texttt{~~~~}} \textcolor{fig1ochre}{\texttt{jazz}}\textcolor{fig1ochre}{\texttt{\_}}\textcolor{fig1ochre}{\texttt{\_}}\textcolor{fig1ochre}{\texttt{students}} \textcolor{fig1teal}{\texttt{=}} \textcolor{fig1teal}{\texttt{remaining}}\textcolor{fig1teal}{\texttt{\_}}\textcolor{fig1teal}{\texttt{students}} \textcolor{fig1teal}{\texttt{*}} \textcolor{fig1teal}{\texttt{0}}\textcolor{fig1teal}{\texttt{.}}\textcolor{fig1teal}{\texttt{2}}\textcolor{fig1teal}{\texttt{5}}\\
\mbox{\texttt{~~~~}} \textcolor{fig1teal}{\texttt{hip}}\textcolor{fig1teal}{\texttt{hop}}\textcolor{fig1teal}{\texttt{\_}}\textcolor{fig1teal}{\texttt{students}} \textcolor{fig1teal}{\texttt{=}} \textcolor{fig1teal}{\texttt{remaining}}\textcolor{fig1teal}{\texttt{\_}}\textcolor{fig1teal}{\texttt{students}} \textcolor{fig1teal}{\texttt{-}} \textcolor{fig1ochre}{\texttt{jazz}}\textcolor{fig1ochre}{\texttt{\_}}\textcolor{fig1ochre}{\texttt{d}}\textcolor{fig1ochre}{\texttt{ance}}\\
\mbox{\texttt{~~~~}} \textcolor{fig1ochre}{\texttt{percentage}}\textcolor{fig1teal}{\texttt{\_}}\textcolor{fig1ochre}{\texttt{hip}}\textcolor{fig1ochre}{\texttt{hop}}\textcolor{fig1teal}{\texttt{\_}}\textcolor{fig1ochre}{\texttt{students}} \textcolor{fig1teal}{\texttt{=}} \textcolor{fig1teal}{\texttt{(}}\textcolor{fig1teal}{\texttt{hip}}\textcolor{fig1teal}{\texttt{hop}}\textcolor{fig1teal}{\texttt{\_}}\textcolor{fig1teal}{\texttt{students}} \textcolor{fig1teal}{\texttt{/}} \textcolor{fig1teal}{\texttt{total}}\textcolor{fig1teal}{\texttt{\_}}\textcolor{fig1teal}{\texttt{students}}\textcolor{fig1teal}{\texttt{)}} \textcolor{fig1teal}{\texttt{*}} \textcolor{fig1teal}{\texttt{1}}\textcolor{fig1teal}{\texttt{0}}\textcolor{fig1teal}{\texttt{0}}\\
\mbox{\texttt{~~~~}} \textcolor{fig1teal}{\texttt{result}} \textcolor{fig1teal}{\texttt{=}} \textcolor{fig1ochre}{\texttt{percentage}}\textcolor{fig1teal}{\texttt{\_}}\textcolor{fig1ochre}{\texttt{hip}}\textcolor{fig1ochre}{\texttt{hop}}\textcolor{fig1teal}{\texttt{\_}}\textcolor{fig1ochre}{\texttt{students}}\\
\mbox{\texttt{~~~~}} \textcolor{fig1teal}{\texttt{return}} \textcolor{fig1teal}{\texttt{result}}\textcolor{fig1teal}{\texttt{[PAD]}}\textcolor{fig1teal}{\texttt{[PAD]}}\textcolor{fig1teal}{\texttt{[PAD]}}\textcolor{fig1teal}{\texttt{[PAD]}}\textcolor{fig1teal}{\texttt{[PAD]}}\textcolor{fig1teal}{\texttt{[PAD]}}\textcolor{fig1teal}{\texttt{[PAD]}}\textcolor{fig1teal}{\texttt{[PAD]}}\textcolor{fig1teal}{\texttt{[PAD]}}\textcolor{fig1teal}{\texttt{[PAD]}}
\end{samplebox}
\begin{samplebox}{\normalfont\textbf{$\ours$ (PR), Round: 32} \hfill \normalfont\scriptsize \textcolor{darkgray}{Committed: \textbf{455/455}}}
\scriptsize\linespread{0.9}\selectfont
\textcolor{black}{\texttt{In}} \textcolor{black}{\texttt{a}} \textcolor{black}{\texttt{dance}} \textcolor{black}{\texttt{class}} \textcolor{black}{\texttt{of}} \textcolor{black}{\texttt{2}}\textcolor{black}{\texttt{0}} \textcolor{black}{\texttt{students}}\textcolor{black}{\texttt{,}} \textcolor{black}{\texttt{2}}\textcolor{black}{\texttt{0}}\textcolor{black}{\texttt{\%}} \textcolor{black}{\texttt{enrolled}} \textcolor{black}{\texttt{in}} \textcolor{black}{\texttt{contemporary}} \textcolor{black}{\texttt{dance}}\textcolor{black}{\texttt{,}} \textcolor{black}{\texttt{2}}\textcolor{black}{\texttt{5}}\textcolor{black}{\texttt{\%}} \textcolor{black}{\texttt{of}} \textcolor{black}{\texttt{the}} \textcolor{black}{\texttt{remaining}} \textcolor{black}{\texttt{enrolled}} \textcolor{black}{\texttt{in}} \textcolor{black}{\texttt{jazz}} \textcolor{black}{\texttt{dance}}\textcolor{black}{\texttt{,}} \textcolor{black}{\texttt{and}} \textcolor{black}{\texttt{the}} \textcolor{black}{\texttt{rest}} \textcolor{black}{\texttt{enrolled}} \textcolor{black}{\texttt{in}} \textcolor{black}{\texttt{hip}}\textcolor{black}{\texttt{-}}\textcolor{black}{\texttt{hop}} \textcolor{black}{\texttt{dance}}\textcolor{black}{\texttt{.}} \textcolor{black}{\texttt{What}} \textcolor{black}{\texttt{percentage}} \textcolor{black}{\texttt{of}} \textcolor{black}{\texttt{the}} \textcolor{black}{\texttt{entire}} \textcolor{black}{\texttt{students}} \textcolor{black}{\texttt{enrolled}} \textcolor{black}{\texttt{in}} \textcolor{black}{\texttt{hip}}\textcolor{black}{\texttt{-}}\textcolor{black}{\texttt{hop}} \textcolor{black}{\texttt{dance}}\textcolor{black}{\texttt{?}}\\
\textcolor{fig1teal}{\texttt{def}} \textcolor{fig1teal}{\texttt{simple}}\textcolor{fig1teal}{\texttt{\_}}\textcolor{fig1teal}{\texttt{math}}\textcolor{fig1teal}{\texttt{\_}}\textcolor{fig1teal}{\texttt{problem}}\textcolor{fig1teal}{\texttt{()}} \textcolor{fig1teal}{\texttt{->}} \textcolor{fig1teal}{\texttt{int}}\textcolor{fig1teal}{\texttt{:}}\\
\mbox{\texttt{~~~~}} \textcolor{fig1teal}{\texttt{'''}}\\
\mbox{\texttt{~~~~}} \textcolor{fig1teal}{\texttt{In}} \textcolor{fig1teal}{\texttt{a}} \textcolor{fig1teal}{\texttt{dance}} \textcolor{fig1teal}{\texttt{class}} \textcolor{fig1teal}{\texttt{of}} \textcolor{fig1teal}{\texttt{2}}\textcolor{fig1teal}{\texttt{0}} \textcolor{fig1teal}{\texttt{students}}\textcolor{fig1teal}{\texttt{,}} \textcolor{fig1teal}{\texttt{2}}\textcolor{fig1teal}{\texttt{0}}\textcolor{fig1teal}{\texttt{\%}} \textcolor{fig1teal}{\texttt{enrolled}} \textcolor{fig1teal}{\texttt{in}} \textcolor{fig1teal}{\texttt{contemporary}} \textcolor{fig1teal}{\texttt{dance}}\textcolor{fig1teal}{\texttt{,}} \textcolor{fig1teal}{\texttt{2}}\textcolor{fig1teal}{\texttt{5}}\textcolor{fig1teal}{\texttt{\%}} \textcolor{fig1teal}{\texttt{of}} \textcolor{fig1teal}{\texttt{the}} \textcolor{fig1teal}{\texttt{remaining}} \textcolor{fig1teal}{\texttt{enrolled}} \textcolor{fig1teal}{\texttt{in}} \textcolor{fig1teal}{\texttt{jazz}} \textcolor{fig1teal}{\texttt{dance}}\textcolor{fig1teal}{\texttt{,}} \textcolor{fig1teal}{\texttt{and}} \textcolor{fig1teal}{\texttt{the}} \textcolor{fig1teal}{\texttt{rest}} \textcolor{fig1teal}{\texttt{enrolled}} \textcolor{fig1teal}{\texttt{in}} \textcolor{fig1teal}{\texttt{hip}}\textcolor{fig1teal}{\texttt{-}}\textcolor{fig1teal}{\texttt{hop}} \textcolor{fig1teal}{\texttt{dance}}\textcolor{fig1teal}{\texttt{.}}\\
\mbox{\texttt{~~~~}} \textcolor{fig1teal}{\texttt{What}} \textcolor{fig1teal}{\texttt{percentage}} \textcolor{fig1teal}{\texttt{of}} \textcolor{fig1teal}{\texttt{the}} \textcolor{fig1teal}{\texttt{entire}} \textcolor{fig1teal}{\texttt{students}} \textcolor{fig1teal}{\texttt{enrolled}} \textcolor{fig1teal}{\texttt{in}} \textcolor{fig1teal}{\texttt{hip}}\textcolor{fig1teal}{\texttt{-}}\textcolor{fig1teal}{\texttt{hop}} \textcolor{fig1teal}{\texttt{dance}}\textcolor{fig1teal}{\texttt{?}}\\
\mbox{\texttt{~~~~}} \textcolor{fig1teal}{\texttt{'''}}\\
\mbox{\texttt{~~~~}} \textcolor{fig1teal}{\texttt{total}}\textcolor{fig1teal}{\texttt{\_}}\textcolor{fig1teal}{\texttt{students}} \textcolor{fig1teal}{\texttt{=}} \textcolor{fig1teal}{\texttt{2}}\textcolor{fig1teal}{\texttt{0}}\\
\mbox{\texttt{~~~~}} \textcolor{fig1teal}{\texttt{contemporary}}\textcolor{fig1teal}{\texttt{\_}}\textcolor{fig1teal}{\texttt{students}} \textcolor{fig1teal}{\texttt{=}} \textcolor{fig1teal}{\texttt{total}}\textcolor{fig1teal}{\texttt{\_}}\textcolor{fig1teal}{\texttt{students}} \textcolor{fig1teal}{\texttt{*}} \textcolor{fig1teal}{\texttt{0}}\textcolor{fig1teal}{\texttt{.}}\textcolor{fig1teal}{\texttt{2}}\\
\mbox{\texttt{~~~~}} \textcolor{fig1teal}{\texttt{remaining}}\textcolor{fig1teal}{\texttt{\_}}\textcolor{fig1teal}{\texttt{students}} \textcolor{fig1teal}{\texttt{=}} \textcolor{fig1teal}{\texttt{total}}\textcolor{fig1teal}{\texttt{\_}}\textcolor{fig1teal}{\texttt{students}} \textcolor{fig1teal}{\texttt{-}} \textcolor{fig1teal}{\texttt{contemporary}}\textcolor{fig1teal}{\texttt{\_}}\textcolor{fig1teal}{\texttt{students}}\\
\mbox{\texttt{~~~~}} \textcolor{fig1teal}{\texttt{jazz}}\textcolor{fig1teal}{\texttt{\_}}\textcolor{fig1teal}{\texttt{d}}\textcolor{fig1teal}{\texttt{ance}} \textcolor{fig1teal}{\texttt{=}} \textcolor{fig1teal}{\texttt{remaining}}\textcolor{fig1teal}{\texttt{\_}}\textcolor{fig1teal}{\texttt{students}} \textcolor{fig1teal}{\texttt{*}} \textcolor{fig1teal}{\texttt{0}}\textcolor{fig1teal}{\texttt{.}}\textcolor{fig1teal}{\texttt{2}}\textcolor{fig1teal}{\texttt{5}}\\
\mbox{\texttt{~~~~}} \textcolor{fig1teal}{\texttt{hip}}\textcolor{fig1teal}{\texttt{hop}}\textcolor{fig1teal}{\texttt{\_}}\textcolor{fig1teal}{\texttt{students}} \textcolor{fig1teal}{\texttt{=}} \textcolor{fig1teal}{\texttt{remaining}}\textcolor{fig1teal}{\texttt{\_}}\textcolor{fig1teal}{\texttt{students}} \textcolor{fig1teal}{\texttt{-}} \textcolor{fig1teal}{\texttt{jazz}}\textcolor{fig1teal}{\texttt{\_}}\textcolor{fig1teal}{\texttt{d}}\textcolor{fig1teal}{\texttt{ance}}\\
\mbox{\texttt{~~~~}} \textcolor{fig1teal}{\texttt{percentage}}\textcolor{fig1teal}{\texttt{\_}}\textcolor{fig1teal}{\texttt{hip}}\textcolor{fig1teal}{\texttt{hop}}\textcolor{fig1teal}{\texttt{\_}}\textcolor{fig1teal}{\texttt{students}} \textcolor{fig1teal}{\texttt{=}} \textcolor{fig1teal}{\texttt{(}}\textcolor{fig1teal}{\texttt{hip}}\textcolor{fig1teal}{\texttt{hop}}\textcolor{fig1teal}{\texttt{\_}}\textcolor{fig1teal}{\texttt{students}} \textcolor{fig1teal}{\texttt{/}} \textcolor{fig1teal}{\texttt{total}}\textcolor{fig1teal}{\texttt{\_}}\textcolor{fig1teal}{\texttt{students}}\textcolor{fig1teal}{\texttt{)}} \textcolor{fig1teal}{\texttt{*}} \textcolor{fig1teal}{\texttt{1}}\textcolor{fig1teal}{\texttt{0}}\textcolor{fig1teal}{\texttt{0}}\\
\mbox{\texttt{~~~~}} \textcolor{fig1teal}{\texttt{result}} \textcolor{fig1teal}{\texttt{=}} \textcolor{fig1teal}{\texttt{percentage}}\textcolor{fig1teal}{\texttt{\_}}\textcolor{fig1teal}{\texttt{hip}}\textcolor{fig1teal}{\texttt{hop}}\textcolor{fig1teal}{\texttt{\_}}\textcolor{fig1teal}{\texttt{students}}\\
\mbox{\texttt{~~~~}} \textcolor{fig1teal}{\texttt{return}} \textcolor{fig1teal}{\texttt{result}}\textcolor{fig1teal}{\texttt{[PAD]}}\textcolor{fig1teal}{\texttt{[PAD]}}\textcolor{fig1teal}{\texttt{[PAD]}}\textcolor{fig1teal}{\texttt{[PAD]}}\textcolor{fig1teal}{\texttt{[PAD]}}\textcolor{fig1teal}{\texttt{[PAD]}}\textcolor{fig1teal}{\texttt{[PAD]}}\textcolor{fig1teal}{\texttt{[PAD]}}\textcolor{fig1teal}{\texttt{[PAD]}}\textcolor{fig1teal}{\texttt{[PAD]}}
\end{samplebox}
\caption{Posterior Refinement trajectory on TinyGSM, Sample-2.}
\label{fig:refine_sample_tinygsm_2}
\end{figure}

% ===================== OpenWebText =====================
\begin{figure}[H]
\centering
\begin{samplebox}{\normalfont\textbf{$\ours$ (PR), Round: 1} \hfill \normalfont\scriptsize \textcolor{darkgray}{Committed: \textbf{700/1024}}}
\scriptsize\linespread{0.9}\selectfont
 \textcolor{fig1teal}{say} \textcolor{fig1teal}{a} \textcolor{fig1ochre}{quarter} \textcolor{fig1teal}{of} \textcolor{fig1teal}{a} \textcolor{fig1ochre}{million}\textcolor{fig1teal}{,} \textcolor{fig1teal}{\$}\textcolor{fig1teal}{5}\textcolor{fig1teal}{,}\textcolor{fig1teal}{000} \textcolor{fig1ochre}{sh}\textcolor{fig1ochre}{aked} \textcolor{fig1teal}{on} \textcolor{fig1teal}{the} \textcolor{fig1teal}{network} \textcolor{fig1teal}{by} \textcolor{fig1teal}{three} \textcolor{fig1teal}{women}\textcolor{fig1teal}{—}\textcolor{fig1ochre}{even} \textcolor{fig1teal}{though} \textcolor{fig1teal}{it}\textcolor{fig1teal}{�}\textcolor{fig1teal}{�}\textcolor{fig1teal}{s} \textcolor{fig1teal}{the} \textcolor{fig1ochre}{largest} \textcolor{fig1ochre}{club} \textcolor{fig1teal}{in} \textcolor{fig1teal}{the} \textcolor{fig1teal}{U}\textcolor{fig1teal}{.}\textcolor{fig1teal}{S}\textcolor{fig1teal}{.} \textcolor{fig1ochre}{dollars} \textcolor{fig1teal}{today}\textcolor{fig1teal}{,} \textcolor{fig1teal}{that} \textcolor{fig1teal}{is} \textcolor{fig1teal}{a} \textcolor{fig1teal}{national} \textcolor{fig1teal}{good}\textcolor{fig1teal}{.}\textcolor{fig1teal}{�}\textcolor{fig1teal}{�}  \textcolor{fig1ochre}{While} \textcolor{fig1ochre}{recognize} \textcolor{fig1teal}{that} \textcolor{fig1ochre}{garner}\textcolor{fig1teal}{ing} \textcolor{fig1teal}{money} \textcolor{fig1teal}{from} \textcolor{fig1teal}{�}\textcolor{fig1teal}{�}\textcolor{fig1teal}{m}\textcolor{fig1ochre}{oms}\textcolor{fig1teal}{�}\textcolor{fig1teal}{�} \textcolor{fig1teal}{a} \textcolor{fig1teal}{�}\textcolor{fig1teal}{�}\textcolor{fig1ochre}{talk}\textcolor{fig1teal}{,}\textcolor{fig1teal}{�}\textcolor{fig1teal}{�} \textcolor{fig1teal}{B}\textcolor{fig1ochre}{ohn} \textcolor{fig1teal}{said} \textcolor{fig1teal}{it} \textcolor{fig1ochre}{seemed}\textcolor{fig1ochre}{omed} \textcolor{fig1teal}{that} \textcolor{fig1teal}{that} \textcolor{fig1teal}{is} \textcolor{fig1teal}{a}\textcolor{fig1ochre}{foot} \textcolor{fig1teal}{later}\textcolor{fig1teal}{,} \textcolor{fig1ochre}{given} \textcolor{fig1teal}{the} \textcolor{fig1ochre}{Top} \textcolor{fig1ochre}{Brother} \textcolor{fig1teal}{race} \textcolor{fig1teal}{in} \textcolor{fig1teal}{the} \textcolor{fig1teal}{F} \textcolor{fig1teal}{few} \textcolor{fig1ochre}{Republicans} \textcolor{fig1teal}{a} \textcolor{fig1teal}{good} \textcolor{fig1ochre}{analogy} \textcolor{fig1teal}{and} \textcolor{fig1teal}{could} \textcolor{fig1ochre}{beat} \textcolor{fig1ochre}{Thor}\textcolor{fig1teal}{n}\textcolor{fig1teal}{�}\textcolor{fig1teal}{�}\textcolor{fig1teal}{s} \textcolor{fig1ochre}{Ar}\textcolor{fig1ochre}{arry}\textcolor{fig1teal}{l} \textcolor{fig1teal}{The}\textcolor{fig1ochre}{ss} \textcolor{fig1ochre}{dropping}\textcolor{fig1teal}{—}\textcolor{fig1ochre}{Car}\textcolor{fig1ochre}{agh} \textcolor{fig1teal}{said} \textcolor{fig1teal}{he}\textcolor{fig1teal}{�}\textcolor{fig1teal}{�}\textcolor{fig1teal}{s} \textcolor{fig1teal}{happy} \textcolor{fig1teal}{to} \textcolor{fig1teal}{be} \textcolor{fig1ochre}{bothered} \textcolor{fig1teal}{that} \textcolor{fig1teal}{the} \textcolor{fig1ochre}{shift} \textcolor{fig1teal}{around} \textcolor{fig1ochre}{presenting} \textcolor{fig1teal}{of} \textcolor{fig1teal}{a} \textcolor{fig1ochre}{female} \textcolor{fig1teal}{character} \textcolor{fig1ochre}{acting} \textcolor{fig1teal}{under} \textcolor{fig1teal}{an} \textcolor{fig1ochre}{inside} \textcolor{fig1ochre}{fixed} \textcolor{fig1ochre}{view}\textcolor{fig1teal}{,} \textcolor{fig1teal}{not} \textcolor{fig1teal}{at} \textcolor{fig1teal}{least} \textcolor{fig1teal}{one} \textcolor{fig1ochre}{Republican}\textcolor{fig1teal}{,} \textcolor{fig1teal}{could} \textcolor{fig1teal}{become} \textcolor{fig1teal}{the} \textcolor{fig1ochre}{tour} \textcolor{fig1teal}{more} \textcolor{fig1ochre}{attractive}\textcolor{fig1teal}{.} \textcolor{fig1teal}{�}\textcolor{fig1teal}{�}\textcolor{fig1teal}{If} \textcolor{fig1teal}{it} \textcolor{fig1ochre}{teaches} \textcolor{fig1teal}{me} \textcolor{fig1teal}{wrong}\textcolor{fig1teal}{,} \textcolor{fig1teal}{it}\textcolor{fig1teal}{�}\textcolor{fig1teal}{�}\textcolor{fig1teal}{s} \textcolor{fig1teal}{going} \textcolor{fig1teal}{to} \textcolor{fig1teal}{stop} \textcolor{fig1teal}{working} \textcolor{fig1teal}{for} \textcolor{fig1teal}{two} \textcolor{fig1teal}{years}\textcolor{fig1teal}{,}\textcolor{fig1teal}{�}\textcolor{fig1teal}{�} \textcolor{fig1teal}{he} \textcolor{fig1teal}{said}\textcolor{fig1teal}{.} \textcolor{fig1teal}{�}\textcolor{fig1teal}{�}\textcolor{fig1teal}{You} \textcolor{fig1teal}{think}\textcolor{fig1teal}{,} \textcolor{fig1teal}{the} \textcolor{fig1ochre}{lead}\textcolor{fig1ochre}{iest} \textcolor{fig1teal}{media} \textcolor{fig1ochre}{hosts} \textcolor{fig1teal}{and} \textcolor{fig1teal}{me}\textcolor{fig1teal}{of}\textcolor{fig1teal}{us} \textcolor{fig1ochre}{look} \textcolor{fig1teal}{at} \textcolor{fig1teal}{your} \textcolor{fig1ochre}{lips}\textcolor{fig1teal}{,} \textcolor{fig1ochre}{passion}\textcolor{fig1teal}{,} \textcolor{fig1teal}{and} \textcolor{fig1teal}{y}\textcolor{fig1ochre}{side} \textcolor{fig1teal}{you} \textcolor{fig1teal}{with} \textcolor{fig1teal}{a} \textcolor{fig1ochre}{grin} \textcolor{fig1teal}{while} \textcolor{fig1ochre}{dragging} \textcolor{fig1teal}{it} \textcolor{fig1teal}{through} \textcolor{fig1teal}{along}\textcolor{fig1teal}{?}\textcolor{fig1teal}{�}\textcolor{fig1teal}{�}\textcolor{fig1teal}{�}\textcolor{fig1teal}{�}  \textcolor{fig1ochre}{Louis} \textcolor{fig1ochre}{How}\textcolor{fig1ochre}{dain} \textcolor{fig1teal}{is} \textcolor{fig1teal}{also} \textcolor{fig1ochre}{pond}\textcolor{fig1ochre}{illing} \textcolor{fig1teal}{her} \textcolor{fig1ochre}{talk} \textcolor{fig1teal}{show}\textcolor{fig1teal}{�}\textcolor{fig1teal}{�}\textcolor{fig1teal}{s} \textcolor{fig1teal}{lack} \textcolor{fig1teal}{of} \textcolor{fig1ochre}{directing} \textcolor{fig1teal}{around} \textcolor{fig1teal}{more} \textcolor{fig1ochre}{outrageous} \textcolor{fig1ochre}{satire} \textcolor{fig1ochre}{aimed} \textcolor{fig1teal}{at} \textcolor{fig1teal}{children}\textcolor{fig1teal}{.} \textcolor{fig1teal}{�}\textcolor{fig1teal}{�}\textcolor{fig1teal}{And} \textcolor{fig1ochre}{frankly}\textcolor{fig1teal}{,} \textcolor{fig1teal}{they}\textcolor{fig1teal}{�}\textcolor{fig1teal}{�}\textcolor{fig1teal}{re} \textcolor{fig1teal}{quite} \textcolor{fig1ochre}{obsessed}\textcolor{fig1ochre}{hing} \textcolor{fig1teal}{why} \textcolor{fig1ochre}{possibly} \textcolor{fig1teal}{the} \textcolor{fig1teal}{most} \textcolor{fig1ochre}{sensational}\textcolor{fig1teal}{,} \textcolor{fig1teal}{from} \textcolor{fig1teal}{the} \textcolor{fig1ochre}{source}\textcolor{fig1teal}{,} \textcolor{fig1ochre}{sort} \textcolor{fig1teal}{of} \textcolor{fig1ochre}{average} \textcolor{fig1ochre}{creator}\textcolor{fig1teal}{,} \textcolor{fig1teal}{I}\textcolor{fig1ochre}{'ve} \textcolor{fig1teal}{seen} \textcolor{fig1teal}{with}\textcolor{fig1teal}{,} \textcolor{fig1teal}{can} \textcolor{fig1teal}{one} \textcolor{fig1teal}{create} \textcolor{fig1teal}{that}\textcolor{fig1teal}{.}\textcolor{fig1teal}{�}\textcolor{fig1teal}{�} \textcolor{fig1teal}{He} \textcolor{fig1teal}{was} \textcolor{fig1ochre}{concerned} \textcolor{fig1teal}{about} \textcolor{fig1teal}{that} \textcolor{fig1teal}{when} \textcolor{fig1teal}{she} \textcolor{fig1teal}{said} \textcolor{fig1teal}{he} \textcolor{fig1ochre}{expects} \textcolor{fig1ochre}{Warner} \textcolor{fig1ochre}{Bros}\textcolor{fig1teal}{.} \textcolor{fig1teal}{to} \textcolor{fig1ochre}{embro}\textcolor{fig1ochre}{ep} \textcolor{fig1teal}{the} \textcolor{fig1teal}{show}\textcolor{fig1teal}{.} \textcolor{fig1ochre}{Of}\textcolor{fig1ochre}{umed}\textcolor{fig1teal}{,} \textcolor{fig1teal}{these} \textcolor{fig1ochre}{debates} \textcolor{fig1ochre}{fit} \textcolor{fig1teal}{into} \textcolor{fig1teal}{the} \textcolor{fig1ochre}{kind} \textcolor{fig1teal}{of} \textcolor{fig1ochre}{consistency} \textcolor{fig1teal}{between} \textcolor{fig1teal}{a} \textcolor{fig1ochre}{mention} \textcolor{fig1teal}{of} \textcolor{fig1ochre}{nit}\textcolor{fig1ochre}{w}\textcolor{fig1ochre}{zy} \textcolor{fig1ochre}{humor} \textcolor{fig1teal}{and} \textcolor{fig1teal}{the} \textcolor{fig1ochre}{company} \textcolor{fig1teal}{she}\textcolor{fig1teal}{�}\textcolor{fig1teal}{�}\textcolor{fig1teal}{s} \textcolor{fig1teal}{trying} \textcolor{fig1teal}{to} \textcolor{fig1ochre}{plug} \textcolor{fig1teal}{on}\textcolor{fig1teal}{.}\textcolor{fig1teal}{�}\textcolor{fig1teal}{�}\textcolor{fig1teal}{I} \textcolor{fig1ochre}{rel} \textcolor{fig1ochre}{literally} \textcolor{fig1teal}{been} \textcolor{fig1ochre}{outraged} \textcolor{fig1teal}{by} \textcolor{fig1teal}{your} \textcolor{fig1teal}{response} \textcolor{fig1teal}{to} \textcolor{fig1teal}{had} \textcolor{fig1teal}{so} \textcolor{fig1teal}{much} \textcolor{fig1teal}{needs} \textcolor{fig1teal}{to} \textcolor{fig1ochre}{budget}\textcolor{fig1teal}{.} \textcolor{fig1teal}{I} \textcolor{fig1teal}{say}\textcolor{fig1teal}{,} \textcolor{fig1ochre}{Grant}\textcolor{fig1teal}{,} \textcolor{fig1teal}{that}\textcolor{fig1teal}{�}\textcolor{fig1teal}{�}\textcolor{fig1teal}{s} \textcolor{fig1teal}{me} \textcolor{fig1teal}{why} \textcolor{fig1teal}{we}\textcolor{fig1teal}{�}\textcolor{fig1teal}{�}\textcolor{fig1teal}{re} \textcolor{fig1teal}{just} \textcolor{fig1ochre}{publishing} \textcolor{fig1ochre}{Links} \textcolor{fig1teal}{for} \textcolor{fig1teal}{our} \textcolor{fig1ochre}{servers}\textcolor{fig1teal}{.} \textcolor{fig1teal}{In} \textcolor{fig1teal}{my} \textcolor{fig1ochre}{opinion}\textcolor{fig1teal}{,} \textcolor{fig1teal}{we} \textcolor{fig1teal}{have} \textcolor{fig1ochre}{modeled} \textcolor{fig1teal}{it} \textcolor{fig1teal}{on} \textcolor{fig1teal}{their} \textcolor{fig1teal}{own}\textcolor{fig1teal}{,} \textcolor{fig1teal}{and} \textcolor{fig1ochre}{final} \textcolor{fig1ochre}{dollars} \textcolor{fig1teal}{have} \textcolor{fig1teal}{been} \textcolor{fig1ochre}{recorded}\textcolor{fig1teal}{.} \textcolor{fig1teal}{This} \textcolor{fig1teal}{post} \textcolor{fig1ochre}{debate} \textcolor{fig1ochre}{call}\textcolor{fig1ochre}{key} \textcolor{fig1ochre}{Guardian}\textcolor{fig1teal}{.} \textcolor{fig1teal}{I} \textcolor{fig1ochre}{remember} \textcolor{fig1ochre}{EL} \textcolor{fig1ochre}{Margaret} \textcolor{fig1teal}{[}\textcolor{fig1ochre}{sic}\textcolor{fig1teal}{]} \textcolor{fig1teal}{�}\textcolor{fig1teal}{�}\textcolor{fig1ochre}{100} \textcolor{fig1teal}{years} \textcolor{fig1teal}{later}\textcolor{fig1teal}{.}\textcolor{fig1teal}{�}\textcolor{fig1teal}{�} \textcolor{fig1ochre}{When} \textcolor{fig1teal}{she} \textcolor{fig1teal}{needed} \textcolor{fig1teal}{�}\textcolor{fig1teal}{�}\textcolor{fig1ochre}{new} \textcolor{fig1ochre}{decision} \textcolor{fig1teal}{to} \textcolor{fig1teal}{make} \textcolor{fig1ochre}{higher} \textcolor{fig1teal}{news}\textcolor{fig1teal}{,}\textcolor{fig1teal}{�}\textcolor{fig1teal}{�} \textcolor{fig1teal}{we} \textcolor{fig1teal}{gave} \textcolor{fig1teal}{the} \textcolor{fig1teal}{U}\textcolor{fig1teal}{.}\textcolor{fig1teal}{S}\textcolor{fig1teal}{.} \textcolor{fig1ochre}{Telecommunications} \textcolor{fig1teal}{and} \textcolor{fig1ochre}{App} \textcolor{fig1ochre}{Administration}\textcolor{fig1ochre}{ÃÂ} \textcolor{fig1teal}{when} \textcolor{fig1ochre}{covering} \textcolor{fig1teal}{10}\textcolor{fig1teal}{-}\textcolor{fig1teal}{20} \textcolor{fig1teal}{percent} \textcolor{fig1teal}{of} \textcolor{fig1teal}{the} \textcolor{fig1teal}{gas}\textcolor{fig1teal}{-}\textcolor{fig1ochre}{lot}\textcolor{fig1ochre}{iling} \textcolor{fig1ochre}{royalty}\textcolor{fig1teal}{.} \textcolor{fig1teal}{The} \textcolor{fig1teal}{news} \textcolor{fig1ochre}{analytics} \textcolor{fig1ochre}{television} \textcolor{fig1ochre}{base} \textcolor{fig1teal}{of} \textcolor{fig1ochre}{methane} \textcolor{fig1ochre}{requests} \textcolor{fig1teal}{can} \textcolor{fig1ochre}{exceed} \textcolor{fig1teal}{any} \textcolor{fig1ochre}{bank}\textcolor{fig1teal}{.} \textcolor{fig1teal}{My} \textcolor{fig1teal}{organization} \textcolor{fig1teal}{used} \textcolor{fig1teal}{these} \textcolor{fig1ochre}{rates} \textcolor{fig1teal}{and} \textcolor{fig1ochre}{benchmarks} \textcolor{fig1teal}{to} \textcolor{fig1teal}{the} \textcolor{fig1teal}{problem} \textcolor{fig1teal}{during} \textcolor{fig1teal}{the} \textcolor{fig1teal}{days} \textcolor{fig1teal}{of} \textcolor{fig1ochre}{removing} \textcolor{fig1ochre}{BET} \textcolor{fig1teal}{which} \textcolor{fig1teal}{has} \textcolor{fig1teal}{a} \textcolor{fig1ochre}{widespread} \textcolor{fig1teal}{impact} \textcolor{fig1teal}{on} \textcolor{fig1teal}{the} \textcolor{fig1teal}{50}\textcolor{fig1teal}{/}\textcolor{fig1ochre}{share} \textcolor{fig1ochre}{Disney} \textcolor{fig1teal}{industry}\textcolor{fig1teal}{.} \textcolor{fig1teal}{I} \textcolor{fig1teal}{am} \textcolor{fig1ochre}{sorry} \textcolor{fig1teal}{for} \textcolor{fig1teal}{the} \textcolor{fig1ochre}{results} \textcolor{fig1teal}{of} \textcolor{fig1teal}{taking} \textcolor{fig1teal}{this} \textcolor{fig1teal}{responsibility} \textcolor{fig1teal}{on}\textcolor{fig1teal}{.} \textcolor{fig1teal}{Americans} \textcolor{fig1teal}{have} \textcolor{fig1teal}{the} \textcolor{fig1teal}{right} \textcolor{fig1teal}{to} \textcolor{fig1teal}{know} \textcolor{fig1teal}{with} \textcolor{fig1teal}{you} \textcolor{fig1teal}{top} \textcolor{fig1ochre}{league} \textcolor{fig1teal}{where} \textcolor{fig1teal}{you} \textcolor{fig1teal}{stand} \textcolor{fig1teal}{or} \textcolor{fig1teal}{we} \textcolor{fig1teal}{per}\textcolor{fig1ochre}{vert} \textcolor{fig1teal}{with} \textcolor{fig1teal}{that} \textcolor{fig1ochre}{version} \textcolor{fig1teal}{of} \textcolor{fig1ochre}{Thank} \textcolor{fig1teal}{You}\textcolor{fig1teal}{,} \textcolor{fig1teal}{the} \textcolor{fig1ochre}{Globe} \textcolor{fig1teal}{and} \textcolor{fig1ochre}{Public} \textcolor{fig1ochre}{Media} \textcolor{fig1ochre}{Rich}\textcolor{fig1teal}{,} \textcolor{fig1teal}{and} \textcolor{fig1teal}{get} \textcolor{fig1ochre}{specifics} \textcolor{fig1teal}{of} \textcolor{fig1teal}{what} \textcolor{fig1teal}{you} \textcolor{fig1teal}{are} \textcolor{fig1ochre}{listening} \textcolor{fig1teal}{for}\textcolor{fig1teal}{.} \textcolor{fig1teal}{We} \textcolor{fig1teal}{are} \textcolor{fig1teal}{not} \textcolor{fig1ochre}{prepared} \textcolor{fig1teal}{for} \textcolor{fig1ochre}{plagiar}\textcolor{fig1teal}{ization} \textcolor{fig1teal}{of} \textcolor{fig1teal}{our} \textcolor{fig1ochre}{residential} \textcolor{fig1ochre}{property}\textcolor{fig1teal}{,} \textcolor{fig1teal}{and} \textcolor{fig1teal}{the} \textcolor{fig1teal}{strength} \textcolor{fig1teal}{of} \textcolor{fig1teal}{our} \textcolor{fig1ochre}{sales} \textcolor{fig1teal}{is} \textcolor{fig1ochre}{truly} \textcolor{fig1teal}{great}\textcolor{fig1teal}{.}  \textcolor{fig1teal}{I} \textcolor{fig1teal}{told} \textcolor{fig1teal}{anyone} \textcolor{fig1ochre}{talking} \textcolor{fig1teal}{to} \textcolor{fig1teal}{me} \textcolor{fig1teal}{about} \textcolor{fig1ochre}{extras} \textcolor{fig1teal}{as} \textcolor{fig1ochre}{consumers}\textcolor{fig1teal}{.}  \textcolor{fig1ochre}{why} \textcolor{fig1ochre}{shouldn}\textcolor{fig1teal}{'t} \textcolor{fig1teal}{it} \textcolor{fig1teal}{be} \textcolor{fig1ochre}{okay}\textcolor{fig1teal}{?}  \textcolor{fig1teal}{(} \textcolor{fig1teal}{here}\textcolor{fig1teal}{'s} \textcolor{fig1teal}{my} \textcolor{fig1teal}{response}\textcolor{fig1teal}{,} \textcolor{fig1teal}{I}\textcolor{fig1ochre}{'m}\textcolor{fig1ochre}{acebook} \textcolor{fig1teal}{that} \textcolor{fig1ochre}{explains} \textcolor{fig1teal}{some} \textcolor{fig1ochre}{form} \textcolor{fig1teal}{in}\textcolor{fig1ochre}{art} \textcolor{fig1teal}{from} \textcolor{fig1teal}{the} \textcolor{fig1teal}{person} \textcolor{fig1ochre}{responsible}\textcolor{fig1teal}{.} \textcolor{fig1ochre}{Further}\textcolor{fig1teal}{,} \textcolor{fig1teal}{I} \textcolor{fig1ochre}{appreciated} \textcolor{fig1teal}{our} \textcolor{fig1teal}{national} \textcolor{fig1ochre}{headquarters} \textcolor{fig1teal}{and} \textcolor{fig1teal}{companies} \textcolor{fig1ochre}{available} \textcolor{fig1teal}{for} \textcolor{fig1teal}{a} \textcolor{fig1ochre}{copy} \textcolor{fig1teal}{or} \textcolor{fig1teal}{down} \textcolor{fig1teal}{when} \textcolor{fig1ochre}{serving} \textcolor{fig1teal}{an} \textcolor{fig1ochre}{American} \textcolor{fig1ochre}{Broadcasting} \textcolor{fig1teal}{We} \textcolor{fig1ochre}{Bid} \textcolor{fig1ochre}{network} \textcolor{fig1teal}{or} \textcolor{fig1teal}{the} \textcolor{fig1teal}{"}\textcolor{fig1ochre}{only} \textcolor{fig1teal}{one} \textcolor{fig1teal}{of} \textcolor{fig1teal}{which} \textcolor{fig1ochre}{contained} \textcolor{fig1teal}{no} \textcolor{fig1ochre}{concern} \textcolor{fig1teal}{I} \textcolor{fig1teal}{was} \textcolor{fig1ochre}{disconnected} \textcolor{fig1teal}{from}\textcolor{fig1teal}{.}  \textcolor{fig1teal}{I} \textcolor{fig1ochre}{guess} \textcolor{fig1teal}{it} \textcolor{fig1ochre}{leaves} \textcolor{fig1ochre}{upon} \textcolor{fig1teal}{fine} \textcolor{fig1teal}{in} \textcolor{fig1teal}{every} \textcolor{fig1teal}{way}\textcolor{fig1teal}{.} \textcolor{fig1ochre}{Please} \textcolor{fig1ochre}{please} \textcolor{fig1teal}{continue} \textcolor{fig1teal}{if} \textcolor{fig1teal}{you} \textcolor{fig1teal}{at} \textcolor{fig1teal}{I} \textcolor{fig1teal}{feel} \textcolor{fig1teal}{like}\textcolor{fig1teal}{.}  \textcolor{fig1ochre}{Just} \textcolor{fig1teal}{know} \textcolor{fig1teal}{you} \textcolor{fig1teal}{would} \textcolor{fig1teal}{like} \textcolor{fig1teal}{us} \textcolor{fig1teal}{to} \textcolor{fig1ochre}{advertise} \textcolor{fig1teal}{as} \textcolor{fig1teal}{a} \textcolor{fig1ochre}{subsidiary} \textcolor{fig1teal}{of} \textcolor{fig1teal}{this} \textcolor{fig1teal}{business}\textcolor{fig1teal}{,} \textcolor{fig1ochre}{CLICK} \textcolor{fig1teal}{to} \textcolor{fig1ochre}{g}\textcolor{fig1ochre}{mark} \textcolor{fig1teal}{the} \textcolor{fig1ochre}{comments}\textcolor{fig1teal}{.}\textcolor{fig1ochre}{Russia} \textcolor{fig1teal}{is} \textcolor{fig1teal}{not} \textcolor{fig1teal}{�}\textcolor{fig1teal}{�}\textcolor{fig1ochre}{even} \textcolor{fig1ochre}{gossip}\textcolor{fig1teal}{,}\textcolor{fig1teal}{�}\textcolor{fig1teal}{�} \textcolor{fig1teal}{by} \textcolor{fig1ochre}{Walter} \textcolor{fig1ochre}{Go}\textcolor{fig1teal}{is} \textcolor{fig1ochre}{plainly} \textcolor{fig1teal}{put} \textcolor{fig1ochre}{straight} \textcolor{fig1teal}{for} \textcolor{fig1teal}{office} \textcolor{fig1teal}{at} \textcolor{fig1teal}{the} \textcolor{fig1ochre}{apex} \textcolor{fig1teal}{of} \textcolor{fig1teal}{the} \textcolor{fig1ochre}{spreading}\textcolor{fig1teal}{-}\textcolor{fig1teal}{the}\textcolor{fig1teal}{-} \textcolor{fig1teal}{news}\textcolor{fig1teal}{.} \textcolor{fig1teal}{In} \textcolor{fig1teal}{Russia}\textcolor{fig1teal}{,} \textcolor{fig1ochre}{recalling} \textcolor{fig1teal}{a} \textcolor{fig1ochre}{Russian} \textcolor{fig1ochre}{accent}\textcolor{fig1teal}{,} \textcolor{fig1teal}{his} \textcolor{fig1teal}{five} \textcolor{fig1teal}{state} \textcolor{fig1ochre}{ministers}\textcolor{fig1teal}{,} \textcolor{fig1ochre}{Spain} \textcolor{fig1teal}{and} \textcolor{fig1ochre}{Spain} \textcolor{fig1teal}{said} \textcolor{fig1teal}{they} \textcolor{fig1ochre}{spotted} \textcolor{fig1teal}{and} \textcolor{fig1ochre}{talked} \textcolor{fig1teal}{about} \textcolor{fig1teal}{through} \textcolor{fig1teal}{the} \textcolor{fig1ochre}{Justice} \textcolor{fig1teal}{R}\textcolor{fig1ochre}{uan} \textcolor{fig1ochre}{Center}\textcolor{fig1teal}{.} \textcolor{fig1ochre}{Go}\textcolor{fig1teal}{is}\textcolor{fig1ochre}{oid} \textcolor{fig1ochre}{phrases} \textcolor{fig1teal}{such} \textcolor{fig1teal}{as} \textcolor{fig1teal}{"}\textcolor{fig1ochre}{Bay}\textcolor{fig1teal}{and}\textcolor{fig1ochre}{iana}\textcolor{fig1teal}{"} \textcolor{fig1teal}{to} \textcolor{fig1teal}{the} \textcolor{fig1teal}{US}\textcolor{fig1teal}{.}  \textcolor{fig1ochre}{At} \textcolor{fig1ochre}{overall}\textcolor{fig1teal}{,} \textcolor{fig1teal}{the} \textcolor{fig1ochre}{HBO} \textcolor{fig1ochre}{Weekly} \textcolor{fig1teal}{were} \textcolor{fig1ochre}{dance}\textcolor{fig1teal}{,} \textcolor{fig1ochre}{stamped} \textcolor{fig1teal}{on} \textcolor{fig1ochre}{themselves}\textcolor{fig1teal}{,} \textcolor{fig1ochre}{shut}\textcolor{fig1ochre}{ting}\textcolor{fig1teal}{led} \textcolor{fig1teal}{and} \textcolor{fig1ochre}{slight}\textcolor{fig1teal}{ed} \textcolor{fig1teal}{by} \textcolor{fig1teal}{a} \textcolor{fig1ochre}{Dom}\textcolor{fig1teal}{-}\textcolor{fig1ochre}{Christian}\textcolor{fig1teal}{-}\textcolor{fig1ochre}{old} \textcolor{fig1teal}{media} \textcolor{fig1ochre}{man} \textcolor{fig1teal}{from} \textcolor{fig1ochre}{Britain}\textcolor{fig1teal}{,"} \textcolor{fig1ochre}{declares} \textcolor{fig1teal}{his} \textcolor{fig1ochre}{tweets}\textcolor{fig1teal}{.} \textcolor{fig1teal}{"}\textcolor{fig1ochre}{They} \textcolor{fig1teal}{did} \textcolor{fig1teal}{then} \textcolor{fig1ochre}{come} \textcolor{fig1teal}{less} \textcolor{fig1teal}{from} \textcolor{fig1ochre}{NBC}\textcolor{fig1teal}{,} \textcolor{fig1teal}{who} \textcolor{fig1ochre}{purported} \textcolor{fig1teal}{to} \textcolor{fig1teal}{understand} \textcolor{fig1teal}{the} \textcolor{fig1ochre}{specific} \textcolor{fig1ochre}{needs} \textcolor{fig1teal}{and} \textcolor{fig1ochre}{talked} \textcolor{fig1teal}{up} \textcolor{fig1ochre}{endlessly} \textcolor{fig1teal}{about} \textcolor{fig1teal}{both} \textcolor{fig1ochre}{milk} \textcolor{fig1teal}{and} \textcolor{fig1ochre}{population}\textcolor{fig1teal}{."} \textcolor{fig1teal}{No}\textcolor{fig1teal}{,} \textcolor{fig1teal}{we} \textcolor{fig1ochre}{barely} \textcolor{fig1ochre}{hear} \textcolor{fig1teal}{a} \textcolor{fig1ochre}{damn} \textcolor{fig1teal}{much} \textcolor{fig1teal}{about} \textcolor{fig1ochre}{radioactive}\textcolor{fig1ochre}{omics}\textcolor{fig1teal}{,} \textcolor{fig1teal}{from} \textcolor{fig1ochre}{1985}\textcolor{fig1teal}{-}\textcolor{fig1ochre}{Japan} \textcolor{fig1teal}{the} \textcolor{fig1teal}{only} \textcolor{fig1ochre}{device} \textcolor{fig1teal}{since} \textcolor{fig1teal}{the} \textcolor{fig1ochre}{release} \textcolor{fig1teal}{of} \textcolor{fig1ochre}{MH} \textcolor{fig1ochre}{figures}\textcolor{fig1ochre}{--}\textcolor{fig1ochre}{before} \textcolor{fig1ochre}{latest} \textcolor{fig1ochre}{discoveries} \textcolor{fig1teal}{have} \textcolor{fig1ochre}{become} \textcolor{fig1teal}{part} \textcolor{fig1teal}{of} \textcolor{fig1teal}{the} \textcolor{fig1teal}{data}\textcolor{fig1teal}{.}  \textcolor{fig1teal}{The} \textcolor{fig1teal}{in} \textcolor{fig1ochre}{order} \textcolor{fig1teal}{range} \textcolor{fig1ochre}{?} \textcolor{fig1teal}{the} \textcolor{fig1ochre}{numbers} \textcolor{fig1teal}{on} \textcolor{fig1ochre}{Earth} \textcolor{fig1teal}{by} \textcolor{fig1teal}{New} \textcolor{fig1ochre}{episode} \textcolor{fig1teal}{were} \textcolor{fig1teal}{the} \textcolor{fig1ochre}{majority} \textcolor{fig1teal}{of} \textcolor{fig1teal}{the} \textcolor{fig1teal}{Q}\textcolor{fig1teal}{1} \textcolor{fig1ochre}{Six}\textcolor{fig1teal}{.} \textcolor{fig1ochre}{U}\textcolor{fig1ochre}{MD}\textcolor{fig1teal}{-}\textcolor{fig1ochre}{36} \textcolor{fig1ochre}{never} \textcolor{fig1ochre}{+}\textcolor{fig1ochre}{leave} \textcolor{fig1ochre}{No}\textcolor{fig1teal}{.} \textcolor{fig1teal}{1}\textcolor{fig1teal}{,} \textcolor{fig1ochre}{Channel} \textcolor{fig1ochre}{East} \textcolor{fig1teal}{and} \textcolor{fig1ochre}{drinking}\textcolor{fig1teal}{,} \textcolor{fig1teal}{while} \textcolor{fig1ochre}{losing} \textcolor{fig1teal}{11}\textcolor{fig1teal}{.}\textcolor{fig1ochre}{0} \textcolor{fig1ochre}{million}\textcolor{fig1teal}{.}  \textcolor{fig1ochre}{Japan} \textcolor{fig1ochre}{represented} \textcolor{fig1teal}{"}\textcolor{fig1ochre}{No}\textcolor{fig1teal}{.} \textcolor{fig1teal}{26}\textcolor{fig1teal}{"} \textcolor{fig1teal}{that} \textcolor{fig1teal}{it} \textcolor{fig1ochre}{dominated} \textcolor{fig1teal}{the} \textcolor{fig1teal}{last} \textcolor{fig1teal}{air} \textcolor{fig1teal}{shows} \textcolor{fig1teal}{in} \textcolor{fig1teal}{that} \textcolor{fig1ochre}{range}\textcolor{fig1teal}{,} \textcolor{fig1ochre}{fro} \textcolor{fig1teal}{the} \textcolor{fig1ochre}{December} \textcolor{fig1ochre}{2017} \textcolor{fig1teal}{report}\textcolor{fig1teal}{.} \textcolor{fig1teal}{In} \textcolor{fig1ochre}{2000} \textcolor{fig1teal}{period} \textcolor{fig1teal}{officials}\textcolor{fig1ochre}{leg}\textcolor{fig1teal}{ally} \textcolor{fig1teal}{had} \textcolor{fig1teal}{2}\textcolor{fig1teal}{,}\textcolor{fig1ochre}{600} \textcolor{fig1ochre}{fewer} \textcolor{fig1teal}{TV} \textcolor{fig1ochre}{channels}\textcolor{fig1teal}{,} \textcolor{fig1teal}{and} \textcolor{fig1ochre}{Singapore} \textcolor{fig1teal}{made} \textcolor{fig1ochre}{roughly} \textcolor{fig1teal}{12}\textcolor{fig1teal}{.}\textcolor{fig1teal}{2} \textcolor{fig1ochre}{million} \textcolor{fig1teal}{per} \textcolor{fig1teal}{100}\textcolor{fig1ochre}{minute} \textcolor{fig1ochre}{citiz} \textcolor{fig1ochre}{units}\textcolor{fig1teal}{.} \textcolor{fig1teal}{However}\textcolor{fig1teal}{,} \textcolor{fig1ochre}{Jan} \textcolor{fig1ochre}{18}\textcolor{fig1teal}{.}\textcolor{fig1teal}{25} \textcolor{fig1ochre}{million} \textcolor{fig1ochre}{developed} \textcolor{fig1ochre}{watching} \textcolor{fig1ochre}{Eyes} \textcolor{fig1ochre}{commercials}\textcolor{fig1teal}{,} \textcolor{fig1teal}{the} \textcolor{fig1teal}{same} \textcolor{fig1ochre}{metric} \textcolor{fig1teal}{in} \textcolor{fig1ochre}{UK}\textcolor{fig1teal}{,} \textcolor{fig1teal}{and} \textcolor{fig1teal}{a} \textcolor{fig1teal}{few} \textcolor{fig1teal}{other} \textcolor{fig1ochre}{fluct} \textcolor{fig1ochre}{much} \textcolor{fig1ochre}{times} \textcolor{fig1teal}{in} \textcolor{fig1teal}{that} \textcolor{fig1ochre}{period}\textcolor{fig1teal}{.}  \textcolor{fig1teal}{But} \textcolor{fig1ochre}{ISIS} \textcolor{fig1ochre}{wat} \textcolor{fig1ochre}{wrapped} \textcolor{fig1ochre}{wound} \textcolor{fig1teal}{up} \textcolor{fig1teal}{more} \textcolor{fig1teal}{like} \textcolor{fig1ochre}{Germans} \textcolor{fig1teal}{and} \textcolor{fig1ochre}{Chinese}\textcolor{fig1teal}{:}  \textcolor{fig1ochre}{Note}\textcolor{fig1teal}{:} \textcolor{fig1ochre}{Watch} \textcolor{fig1teal}{today}\textcolor{fig1teal}{'s} \textcolor{fig1ochre}{transcript} \textcolor{fig1teal}{live} \textcolor{fig1teal}{on} \textcolor{fig1ochre}{>}
\end{samplebox}
\begin{samplebox}{\normalfont\textbf{$\ours$ (PR), Round: 2} \hfill \normalfont\scriptsize \textcolor{darkgray}{Committed: \textbf{864/1024}}}
\scriptsize\linespread{0.9}\selectfont
 \textcolor{fig1teal}{say} \textcolor{fig1teal}{a} \textcolor{fig1ochre}{quarter} \textcolor{fig1teal}{of} \textcolor{fig1teal}{a} \textcolor{fig1ochre}{degree}\textcolor{fig1teal}{,} \textcolor{fig1teal}{\$}\textcolor{fig1teal}{5}\textcolor{fig1teal}{,}\textcolor{fig1teal}{000} \textcolor{fig1teal}{a} \textcolor{fig1teal}{year} \textcolor{fig1teal}{on} \textcolor{fig1teal}{the} \textcolor{fig1teal}{network} \textcolor{fig1teal}{by} \textcolor{fig1teal}{three} \textcolor{fig1teal}{women}\textcolor{fig1teal}{—}\textcolor{fig1ochre}{even} \textcolor{fig1teal}{though} \textcolor{fig1teal}{it}\textcolor{fig1teal}{�}\textcolor{fig1teal}{�}\textcolor{fig1teal}{s} \textcolor{fig1teal}{the} \textcolor{fig1teal}{only} \textcolor{fig1teal}{thing} \textcolor{fig1teal}{in} \textcolor{fig1teal}{the} \textcolor{fig1teal}{U}\textcolor{fig1teal}{.}\textcolor{fig1teal}{S}\textcolor{fig1teal}{.} \textcolor{fig1teal}{network} \textcolor{fig1teal}{today}\textcolor{fig1teal}{,} \textcolor{fig1teal}{that} \textcolor{fig1teal}{is} \textcolor{fig1teal}{a} \textcolor{fig1teal}{national} \textcolor{fig1teal}{good}\textcolor{fig1teal}{.}\textcolor{fig1teal}{�}\textcolor{fig1teal}{�}  \textcolor{fig1teal}{But} \textcolor{fig1teal}{saying} \textcolor{fig1teal}{that} \textcolor{fig1ochre}{garner}\textcolor{fig1teal}{ing} \textcolor{fig1teal}{money} \textcolor{fig1teal}{from} \textcolor{fig1teal}{�}\textcolor{fig1teal}{�}\textcolor{fig1teal}{m}\textcolor{fig1ochre}{ovies}\textcolor{fig1teal}{�}\textcolor{fig1teal}{�} \textcolor{fig1teal}{a} \textcolor{fig1teal}{�}\textcolor{fig1teal}{�}\textcolor{fig1ochre}{law}\textcolor{fig1teal}{,}\textcolor{fig1teal}{�}\textcolor{fig1teal}{�} \textcolor{fig1teal}{B}\textcolor{fig1ochre}{add} \textcolor{fig1teal}{said} \textcolor{fig1teal}{it} \textcolor{fig1teal}{would} \textcolor{fig1teal}{seem} \textcolor{fig1teal}{that} \textcolor{fig1teal}{that} \textcolor{fig1teal}{is} \textcolor{fig1teal}{a} \textcolor{fig1ochre}{step} \textcolor{fig1teal}{later}\textcolor{fig1teal}{,} \textcolor{fig1teal}{but} \textcolor{fig1teal}{the} \textcolor{fig1ochre}{invention} \textcolor{fig1teal}{of} \textcolor{fig1teal}{race} \textcolor{fig1teal}{in} \textcolor{fig1teal}{the} \textcolor{fig1teal}{F} \textcolor{fig1teal}{few} \textcolor{fig1teal}{is} \textcolor{fig1teal}{a} \textcolor{fig1teal}{good} \textcolor{fig1ochre}{beauty} \textcolor{fig1teal}{and} \textcolor{fig1teal}{could} \textcolor{fig1teal}{make} \textcolor{fig1ochre}{Rain}\textcolor{fig1teal}{n}\textcolor{fig1teal}{�}\textcolor{fig1teal}{�}\textcolor{fig1teal}{s} \textcolor{fig1ochre}{Yellow} \textcolor{fig1ochre}{air}\textcolor{fig1teal}{l} \textcolor{fig1teal}{The} \textcolor{fig1ochre}{2007} \textcolor{fig1ochre}{uncomfortable}\textcolor{fig1teal}{—}\textcolor{fig1ochre}{but} \textcolor{fig1teal}{he} \textcolor{fig1teal}{said} \textcolor{fig1teal}{he}\textcolor{fig1teal}{�}\textcolor{fig1teal}{�}\textcolor{fig1teal}{s} \textcolor{fig1teal}{happy} \textcolor{fig1teal}{to} \textcolor{fig1teal}{be} \textcolor{fig1ochre}{convinced} \textcolor{fig1teal}{that} \textcolor{fig1teal}{the} \textcolor{fig1ochre}{building} \textcolor{fig1teal}{around} \textcolor{fig1teal}{that} \textcolor{fig1teal}{of} \textcolor{fig1teal}{a} \textcolor{fig1ochre}{black} \textcolor{fig1teal}{character} \textcolor{fig1teal}{together} \textcolor{fig1teal}{under} \textcolor{fig1teal}{an} \textcolor{fig1ochre}{arc} \textcolor{fig1teal}{of} \textcolor{fig1ochre}{creator}\textcolor{fig1teal}{,} \textcolor{fig1teal}{not} \textcolor{fig1teal}{at} \textcolor{fig1teal}{least} \textcolor{fig1teal}{one} \textcolor{fig1teal}{woman}\textcolor{fig1teal}{,} \textcolor{fig1teal}{could} \textcolor{fig1teal}{become} \textcolor{fig1teal}{the} \textcolor{fig1teal}{idea} \textcolor{fig1teal}{more} \textcolor{fig1ochre}{credible}\textcolor{fig1teal}{.} \textcolor{fig1teal}{�}\textcolor{fig1teal}{�}\textcolor{fig1teal}{If} \textcolor{fig1teal}{it} \textcolor{fig1teal}{gets} \textcolor{fig1teal}{me} \textcolor{fig1teal}{wrong}\textcolor{fig1teal}{,} \textcolor{fig1teal}{it}\textcolor{fig1teal}{�}\textcolor{fig1teal}{�}\textcolor{fig1teal}{s} \textcolor{fig1teal}{going} \textcolor{fig1teal}{to} \textcolor{fig1teal}{stop} \textcolor{fig1teal}{working} \textcolor{fig1teal}{for} \textcolor{fig1teal}{two} \textcolor{fig1teal}{years}\textcolor{fig1teal}{,}\textcolor{fig1teal}{�}\textcolor{fig1teal}{�} \textcolor{fig1teal}{he} \textcolor{fig1teal}{said}\textcolor{fig1teal}{.} \textcolor{fig1teal}{�}\textcolor{fig1teal}{�}\textcolor{fig1teal}{You} \textcolor{fig1teal}{think}\textcolor{fig1teal}{,} \textcolor{fig1teal}{the} \textcolor{fig1ochre}{Channel} \textcolor{fig1teal}{of} \textcolor{fig1teal}{media} \textcolor{fig1ochre}{producers} \textcolor{fig1teal}{and} \textcolor{fig1teal}{me}\textcolor{fig1teal}{of}\textcolor{fig1teal}{us} \textcolor{fig1teal}{men} \textcolor{fig1teal}{at} \textcolor{fig1teal}{your} \textcolor{fig1ochre}{nine}\textcolor{fig1teal}{,} \textcolor{fig1ochre}{crude}\textcolor{fig1teal}{,} \textcolor{fig1teal}{and} \textcolor{fig1teal}{y}\textcolor{fig1ochre}{ab} \textcolor{fig1teal}{you} \textcolor{fig1teal}{with} \textcolor{fig1teal}{a} \textcolor{fig1ochre}{bottle} \textcolor{fig1teal}{while} \textcolor{fig1ochre}{dragging} \textcolor{fig1teal}{it} \textcolor{fig1teal}{through} \textcolor{fig1teal}{along}\textcolor{fig1teal}{?}\textcolor{fig1teal}{�}\textcolor{fig1teal}{�}\textcolor{fig1teal}{�}\textcolor{fig1teal}{�}  \textcolor{fig1teal}{But} \textcolor{fig1ochre}{Her}\textcolor{fig1ochre}{dan} \textcolor{fig1teal}{is} \textcolor{fig1teal}{also} \textcolor{fig1ochre}{annoyed} \textcolor{fig1teal}{that} \textcolor{fig1teal}{her} \textcolor{fig1teal}{own} \textcolor{fig1teal}{show}\textcolor{fig1teal}{�}\textcolor{fig1teal}{�}\textcolor{fig1teal}{s} \textcolor{fig1teal}{lack} \textcolor{fig1teal}{of} \textcolor{fig1ochre}{diversity} \textcolor{fig1teal}{around} \textcolor{fig1teal}{more} \textcolor{fig1ochre}{powerful} \textcolor{fig1ochre}{taught} \textcolor{fig1ochre}{humor} \textcolor{fig1teal}{at} \textcolor{fig1teal}{children}\textcolor{fig1teal}{.} \textcolor{fig1teal}{�}\textcolor{fig1teal}{�}\textcolor{fig1teal}{And} \textcolor{fig1ochre}{look}\textcolor{fig1teal}{,} \textcolor{fig1teal}{they}\textcolor{fig1teal}{�}\textcolor{fig1teal}{�}\textcolor{fig1teal}{re} \textcolor{fig1teal}{quite} \textcolor{fig1teal}{certainly}\textcolor{fig1teal}{,} \textcolor{fig1teal}{why} \textcolor{fig1teal}{not} \textcolor{fig1teal}{the} \textcolor{fig1teal}{most} \textcolor{fig1teal}{part}\textcolor{fig1teal}{,} \textcolor{fig1teal}{from} \textcolor{fig1teal}{the} \textcolor{fig1ochre}{backgrounds}\textcolor{fig1teal}{,} \textcolor{fig1ochre}{middle} \textcolor{fig1teal}{of} \textcolor{fig1teal}{the} \textcolor{fig1teal}{world}\textcolor{fig1teal}{,} \textcolor{fig1teal}{I} \textcolor{fig1teal}{have} \textcolor{fig1teal}{seen} \textcolor{fig1teal}{with}\textcolor{fig1teal}{,} \textcolor{fig1teal}{can} \textcolor{fig1teal}{one} \textcolor{fig1teal}{create} \textcolor{fig1teal}{that}\textcolor{fig1teal}{.}\textcolor{fig1teal}{�}\textcolor{fig1teal}{�} \textcolor{fig1teal}{He} \textcolor{fig1teal}{was} \textcolor{fig1teal}{asked} \textcolor{fig1teal}{about} \textcolor{fig1teal}{that} \textcolor{fig1teal}{when} \textcolor{fig1teal}{she} \textcolor{fig1teal}{said} \textcolor{fig1teal}{he} \textcolor{fig1teal}{asked} \textcolor{fig1ochre}{Shirley} \textcolor{fig1teal}{K}\textcolor{fig1teal}{.} \textcolor{fig1teal}{to} \textcolor{fig1ochre}{cut} \textcolor{fig1teal}{on} \textcolor{fig1teal}{the} \textcolor{fig1teal}{show}\textcolor{fig1teal}{.} \textcolor{fig1teal}{In} \textcolor{fig1ochre}{turn}\textcolor{fig1teal}{,} \textcolor{fig1teal}{these} \textcolor{fig1teal}{threats} \textcolor{fig1teal}{get} \textcolor{fig1teal}{into} \textcolor{fig1teal}{the} \textcolor{fig1ochre}{air} \textcolor{fig1teal}{of} \textcolor{fig1ochre}{disconnect} \textcolor{fig1teal}{between} \textcolor{fig1teal}{a} \textcolor{fig1teal}{number} \textcolor{fig1teal}{of} \textcolor{fig1ochre}{high}\textcolor{fig1teal}{-}\textcolor{fig1ochre}{lif}\textcolor{fig1teal}{ers} \textcolor{fig1teal}{and} \textcolor{fig1teal}{the} \textcolor{fig1teal}{people} \textcolor{fig1teal}{she}\textcolor{fig1teal}{�}\textcolor{fig1teal}{�}\textcolor{fig1teal}{s} \textcolor{fig1teal}{trying} \textcolor{fig1teal}{to} \textcolor{fig1ochre}{fire} \textcolor{fig1teal}{on}\textcolor{fig1teal}{.}\textcolor{fig1teal}{�}\textcolor{fig1teal}{�}\textcolor{fig1teal}{I} \textcolor{fig1teal}{have} \textcolor{fig1teal}{just} \textcolor{fig1teal}{been} \textcolor{fig1ochre}{defeated} \textcolor{fig1teal}{by} \textcolor{fig1teal}{your} \textcolor{fig1teal}{response} \textcolor{fig1teal}{to} \textcolor{fig1teal}{had} \textcolor{fig1teal}{so} \textcolor{fig1teal}{much} \textcolor{fig1teal}{needs} \textcolor{fig1teal}{to} \textcolor{fig1teal}{clear}\textcolor{fig1teal}{.} \textcolor{fig1teal}{I} \textcolor{fig1teal}{say}\textcolor{fig1teal}{,} \textcolor{fig1teal}{well}\textcolor{fig1teal}{,} \textcolor{fig1teal}{that}\textcolor{fig1teal}{�}\textcolor{fig1teal}{�}\textcolor{fig1teal}{s} \textcolor{fig1teal}{me} \textcolor{fig1teal}{why} \textcolor{fig1teal}{we}\textcolor{fig1teal}{�}\textcolor{fig1teal}{�}\textcolor{fig1teal}{re} \textcolor{fig1teal}{just} \textcolor{fig1teal}{as} \textcolor{fig1ochre}{passionate} \textcolor{fig1teal}{for} \textcolor{fig1teal}{our} \textcolor{fig1teal}{work}\textcolor{fig1teal}{.} \textcolor{fig1teal}{In} \textcolor{fig1teal}{my} \textcolor{fig1teal}{campaign}\textcolor{fig1teal}{,} \textcolor{fig1teal}{we} \textcolor{fig1teal}{have} \textcolor{fig1teal}{done} \textcolor{fig1teal}{it} \textcolor{fig1teal}{on} \textcolor{fig1teal}{their} \textcolor{fig1teal}{own}\textcolor{fig1teal}{,} \textcolor{fig1teal}{and} \textcolor{fig1teal}{the} \textcolor{fig1ochre}{benefits} \textcolor{fig1teal}{have} \textcolor{fig1teal}{been} \textcolor{fig1teal}{positive}\textcolor{fig1teal}{.} \textcolor{fig1teal}{This} \textcolor{fig1teal}{post} \textcolor{fig1teal}{was} \textcolor{fig1teal}{a} \textcolor{fig1ochre}{generation} \textcolor{fig1teal}{thing}\textcolor{fig1teal}{.} \textcolor{fig1teal}{I} \textcolor{fig1ochre}{ought} \textcolor{fig1teal}{to} \textcolor{fig1teal}{say} \textcolor{fig1teal}{[}\textcolor{fig1teal}{1}\textcolor{fig1teal}{]} \textcolor{fig1teal}{�}\textcolor{fig1teal}{�}\textcolor{fig1ochre}{Ten} \textcolor{fig1teal}{years} \textcolor{fig1teal}{later}\textcolor{fig1teal}{.}\textcolor{fig1teal}{�}\textcolor{fig1teal}{�} \textcolor{fig1ochre}{Because} \textcolor{fig1teal}{she} \textcolor{fig1teal}{needed} \textcolor{fig1teal}{�}\textcolor{fig1teal}{�}\textcolor{fig1ochre}{98} \textcolor{fig1teal}{years} \textcolor{fig1teal}{to} \textcolor{fig1teal}{make} \textcolor{fig1teal}{the} \textcolor{fig1teal}{news}\textcolor{fig1teal}{,}\textcolor{fig1teal}{�}\textcolor{fig1teal}{�} \textcolor{fig1teal}{we} \textcolor{fig1teal}{gave} \textcolor{fig1teal}{the} \textcolor{fig1teal}{U}\textcolor{fig1teal}{.}\textcolor{fig1teal}{S}\textcolor{fig1teal}{.} \textcolor{fig1ochre}{Oil} \textcolor{fig1teal}{and} \textcolor{fig1ochre}{certainty} \textcolor{fig1ochre}{prices} \textcolor{fig1ochre}{growth} \textcolor{fig1teal}{when} \textcolor{fig1teal}{only} \textcolor{fig1teal}{10}\textcolor{fig1teal}{-}\textcolor{fig1teal}{20} \textcolor{fig1teal}{percent} \textcolor{fig1teal}{of} \textcolor{fig1teal}{the} \textcolor{fig1teal}{gas}\textcolor{fig1teal}{-}\textcolor{fig1ochre}{fuel} \textcolor{fig1ochre}{emissions} \textcolor{fig1ochre}{exists}\textcolor{fig1teal}{.} \textcolor{fig1teal}{The} \textcolor{fig1teal}{news} \textcolor{fig1teal}{to} \textcolor{fig1teal}{the} \textcolor{fig1ochre}{front} \textcolor{fig1teal}{of} \textcolor{fig1ochre}{American} \textcolor{fig1ochre}{consumers} \textcolor{fig1teal}{can} \textcolor{fig1ochre}{improve} \textcolor{fig1teal}{any} \textcolor{fig1teal}{time}\textcolor{fig1teal}{.} \textcolor{fig1teal}{My} \textcolor{fig1teal}{organization} \textcolor{fig1teal}{used} \textcolor{fig1teal}{these} \textcolor{fig1ochre}{examples} \textcolor{fig1teal}{and} \textcolor{fig1ochre}{expanded} \textcolor{fig1teal}{to} \textcolor{fig1teal}{the} \textcolor{fig1teal}{problem} \textcolor{fig1teal}{during} \textcolor{fig1teal}{the} \textcolor{fig1teal}{days} \textcolor{fig1teal}{of} \textcolor{fig1ochre}{fracking}\textcolor{fig1teal}{,} \textcolor{fig1teal}{which} \textcolor{fig1teal}{has} \textcolor{fig1teal}{a} \textcolor{fig1ochre}{horrible} \textcolor{fig1teal}{impact} \textcolor{fig1teal}{on} \textcolor{fig1teal}{the} \textcolor{fig1teal}{50}\textcolor{fig1teal}{/}\textcolor{fig1teal}{Co}\textcolor{fig1teal}{al} \textcolor{fig1teal}{industry}\textcolor{fig1teal}{.} \textcolor{fig1teal}{I} \textcolor{fig1teal}{am} \textcolor{fig1ochre}{grateful} \textcolor{fig1teal}{for} \textcolor{fig1teal}{the} \textcolor{fig1ochre}{image} \textcolor{fig1teal}{of} \textcolor{fig1teal}{taking} \textcolor{fig1teal}{this} \textcolor{fig1teal}{responsibility} \textcolor{fig1teal}{on}\textcolor{fig1teal}{.} \textcolor{fig1teal}{Americans} \textcolor{fig1teal}{have} \textcolor{fig1teal}{the} \textcolor{fig1teal}{right} \textcolor{fig1teal}{to} \textcolor{fig1teal}{know} \textcolor{fig1teal}{with} \textcolor{fig1teal}{you} \textcolor{fig1teal}{top} \textcolor{fig1teal}{of} \textcolor{fig1teal}{where} \textcolor{fig1teal}{you} \textcolor{fig1teal}{stand} \textcolor{fig1teal}{or} \textcolor{fig1teal}{we} \textcolor{fig1teal}{per}\textcolor{fig1ochre}{severe} \textcolor{fig1teal}{with} \textcolor{fig1teal}{that} \textcolor{fig1ochre}{task} \textcolor{fig1teal}{of} \textcolor{fig1ochre}{calling} \textcolor{fig1teal}{You}\textcolor{fig1teal}{,} \textcolor{fig1teal}{the} \textcolor{fig1ochre}{bill} \textcolor{fig1teal}{and} \textcolor{fig1ochre}{throwing} \textcolor{fig1teal}{people} \textcolor{fig1ochre}{out}\textcolor{fig1teal}{,} \textcolor{fig1teal}{and} \textcolor{fig1teal}{get} \textcolor{fig1teal}{ahead} \textcolor{fig1teal}{of} \textcolor{fig1teal}{what} \textcolor{fig1teal}{you} \textcolor{fig1teal}{are} \textcolor{fig1ochre}{fighting} \textcolor{fig1teal}{for}\textcolor{fig1teal}{.} \textcolor{fig1teal}{We} \textcolor{fig1teal}{are} \textcolor{fig1teal}{not} \textcolor{fig1ochre}{planned} \textcolor{fig1teal}{for} \textcolor{fig1ochre}{digit}\textcolor{fig1teal}{ization} \textcolor{fig1teal}{of} \textcolor{fig1teal}{our} \textcolor{fig1ochre}{Network} \textcolor{fig1teal}{now}\textcolor{fig1teal}{,} \textcolor{fig1teal}{and} \textcolor{fig1teal}{the} \textcolor{fig1teal}{strength} \textcolor{fig1teal}{of} \textcolor{fig1teal}{our} \textcolor{fig1ochre}{businesses} \textcolor{fig1teal}{is} \textcolor{fig1teal}{very} \textcolor{fig1teal}{great}\textcolor{fig1teal}{.}  \textcolor{fig1teal}{I} \textcolor{fig1teal}{told} \textcolor{fig1teal}{anyone} \textcolor{fig1ochre}{talking} \textcolor{fig1teal}{to} \textcolor{fig1teal}{me} \textcolor{fig1teal}{about} \textcolor{fig1teal}{that} \textcolor{fig1teal}{as} \textcolor{fig1teal}{well}\textcolor{fig1teal}{.}  \textcolor{fig1teal}{F}\textcolor{fig1ochre}{oul}\textcolor{fig1teal}{'t} \textcolor{fig1teal}{it} \textcolor{fig1teal}{be} \textcolor{fig1teal}{fine}\textcolor{fig1teal}{?}  \textcolor{fig1teal}{(} \textcolor{fig1teal}{here}\textcolor{fig1teal}{'s} \textcolor{fig1teal}{my} \textcolor{fig1teal}{response}\textcolor{fig1teal}{,} \textcolor{fig1teal}{I} \textcolor{fig1teal}{said} \textcolor{fig1teal}{yes} \textcolor{fig1teal}{that} \textcolor{fig1ochre}{showed} \textcolor{fig1teal}{some} \textcolor{fig1ochre}{compliance} \textcolor{fig1teal}{in} \textcolor{fig1teal}{action} \textcolor{fig1teal}{from} \textcolor{fig1teal}{the} \textcolor{fig1teal}{person} \textcolor{fig1ochre}{interviewed}\textcolor{fig1teal}{.} \textcolor{fig1teal}{However}\textcolor{fig1teal}{,} \textcolor{fig1teal}{I} \textcolor{fig1teal}{think} \textcolor{fig1teal}{our} \textcolor{fig1teal}{national} \textcolor{fig1ochre}{reps} \textcolor{fig1teal}{and} \textcolor{fig1teal}{companies} \textcolor{fig1teal}{called} \textcolor{fig1teal}{for} \textcolor{fig1teal}{a} \textcolor{fig1teal}{up} \textcolor{fig1teal}{or} \textcolor{fig1teal}{down} \textcolor{fig1teal}{when} \textcolor{fig1ochre}{raising} \textcolor{fig1teal}{an} \textcolor{fig1teal}{issue}\textcolor{fig1teal}{.} \textcolor{fig1teal}{We} \textcolor{fig1teal}{and} \textcolor{fig1ochre}{Times} \textcolor{fig1teal}{or} \textcolor{fig1teal}{the} \textcolor{fig1teal}{"}\textcolor{fig1ochre}{-"} \textcolor{fig1teal}{one} \textcolor{fig1teal}{of} \textcolor{fig1teal}{which}\textcolor{fig1ochre}{"-} \textcolor{fig1teal}{no} \textcolor{fig1teal}{one} \textcolor{fig1teal}{I} \textcolor{fig1teal}{was} \textcolor{fig1teal}{hearing} \textcolor{fig1teal}{from}\textcolor{fig1teal}{.}  \textcolor{fig1teal}{I} \textcolor{fig1teal}{think} \textcolor{fig1teal}{it} \textcolor{fig1teal}{is} \textcolor{fig1teal}{just} \textcolor{fig1teal}{fine} \textcolor{fig1teal}{in} \textcolor{fig1teal}{every} \textcolor{fig1teal}{way}\textcolor{fig1teal}{.} \textcolor{fig1teal}{I} \textcolor{fig1teal}{will} \textcolor{fig1teal}{continue} \textcolor{fig1teal}{if} \textcolor{fig1teal}{you} \textcolor{fig1teal}{at} \textcolor{fig1teal}{I} \textcolor{fig1teal}{feel} \textcolor{fig1teal}{like}\textcolor{fig1teal}{.}  \textcolor{fig1teal}{I} \textcolor{fig1teal}{know} \textcolor{fig1teal}{you} \textcolor{fig1teal}{would} \textcolor{fig1teal}{like} \textcolor{fig1teal}{us} \textcolor{fig1teal}{to}\textcolor{fig1teal}{,} \textcolor{fig1teal}{as} \textcolor{fig1teal}{a} \textcolor{fig1teal}{part} \textcolor{fig1teal}{of} \textcolor{fig1teal}{this} \textcolor{fig1teal}{business}\textcolor{fig1teal}{,} \textcolor{fig1ochre}{respond} \textcolor{fig1teal}{to} \textcolor{fig1teal}{us} \textcolor{fig1teal}{without} \textcolor{fig1teal}{the}\textcolor{fig1ochre}{ism}\textcolor{fig1teal}{.}\textcolor{fig1teal}{This} \textcolor{fig1teal}{is} \textcolor{fig1teal}{not} \textcolor{fig1teal}{�}\textcolor{fig1teal}{�}\textcolor{fig1teal}{The} \textcolor{fig1ochre}{Price}\textcolor{fig1teal}{,}\textcolor{fig1teal}{�}\textcolor{fig1teal}{�} \textcolor{fig1teal}{by} \textcolor{fig1ochre}{Richard} \textcolor{fig1ochre}{Brand}\textcolor{fig1teal}{is}\textcolor{fig1teal}{,} \textcolor{fig1teal}{put} \textcolor{fig1teal}{off} \textcolor{fig1teal}{for} \textcolor{fig1teal}{office} \textcolor{fig1teal}{at} \textcolor{fig1teal}{the} \textcolor{fig1teal}{top} \textcolor{fig1teal}{of} \textcolor{fig1teal}{the} \textcolor{fig1ochre}{hit}\textcolor{fig1teal}{-}\textcolor{fig1teal}{the}\textcolor{fig1teal}{-} \textcolor{fig1teal}{news}\textcolor{fig1teal}{.} \textcolor{fig1teal}{In} \textcolor{fig1teal}{Russia}\textcolor{fig1teal}{,} \textcolor{fig1teal}{it} \textcolor{fig1teal}{a} \textcolor{fig1teal}{lot} \textcolor{fig1ochre}{tighter}\textcolor{fig1teal}{,} \textcolor{fig1teal}{his} \textcolor{fig1teal}{five} \textcolor{fig1teal}{state} \textcolor{fig1ochre}{secrets}\textcolor{fig1teal}{,} \textcolor{fig1ochre}{papers} \textcolor{fig1teal}{and} \textcolor{fig1ochre}{papers} \textcolor{fig1teal}{said} \textcolor{fig1teal}{they} \textcolor{fig1ochre}{announced} \textcolor{fig1teal}{and} \textcolor{fig1ochre}{talked} \textcolor{fig1teal}{about} \textcolor{fig1teal}{through} \textcolor{fig1teal}{the} \textcolor{fig1teal}{daily} \textcolor{fig1teal}{R}\textcolor{fig1ochre}{US} \textcolor{fig1ochre}{Person}\textcolor{fig1teal}{.} \textcolor{fig1ochre}{Brand}\textcolor{fig1teal}{is} \textcolor{fig1ochre}{compared} \textcolor{fig1ochre}{terms} \textcolor{fig1teal}{such} \textcolor{fig1teal}{as} \textcolor{fig1teal}{"}\textcolor{fig1ochre}{men}\textcolor{fig1teal}{and} \textcolor{fig1ochre}{Development}\textcolor{fig1teal}{"} \textcolor{fig1teal}{to} \textcolor{fig1teal}{the} \textcolor{fig1teal}{US}\textcolor{fig1teal}{.}  \textcolor{fig1teal}{"}\textcolor{fig1ochre}{Initially}\textcolor{fig1teal}{,} \textcolor{fig1teal}{the} \textcolor{fig1teal}{government} \textcolor{fig1ochre}{heads} \textcolor{fig1teal}{were} \textcolor{fig1ochre}{Russian}\textcolor{fig1teal}{,} \textcolor{fig1ochre}{briefly} \textcolor{fig1teal}{on} \textcolor{fig1ochre}{stage}\textcolor{fig1teal}{,} \textcolor{fig1teal}{but} \textcolor{fig1ochre}{dazz}\textcolor{fig1teal}{led} \textcolor{fig1teal}{and} \textcolor{fig1ochre}{eclips}\textcolor{fig1teal}{ed} \textcolor{fig1teal}{by} \textcolor{fig1teal}{a} \textcolor{fig1teal}{right}\textcolor{fig1teal}{-}\textcolor{fig1teal}{of}\textcolor{fig1teal}{-}\textcolor{fig1ochre}{aligned} \textcolor{fig1teal}{media} \textcolor{fig1ochre}{blitz} \textcolor{fig1teal}{from} \textcolor{fig1teal}{America}\textcolor{fig1teal}{,"} \textcolor{fig1teal}{wrote} \textcolor{fig1teal}{his} \textcolor{fig1ochre}{thesis}\textcolor{fig1teal}{.} \textcolor{fig1teal}{"}\textcolor{fig1teal}{They} \textcolor{fig1teal}{did} \textcolor{fig1teal}{then} \textcolor{fig1ochre}{learn} \textcolor{fig1teal}{less} \textcolor{fig1teal}{from} \textcolor{fig1ochre}{Alm}\textcolor{fig1teal}{,} \textcolor{fig1teal}{who} \textcolor{fig1ochre}{came} \textcolor{fig1teal}{to} \textcolor{fig1teal}{understand} \textcolor{fig1teal}{the} \textcolor{fig1teal}{country}\textcolor{fig1teal}{,} \textcolor{fig1teal}{and} \textcolor{fig1ochre}{picked} \textcolor{fig1teal}{up} \textcolor{fig1teal}{its} \textcolor{fig1teal}{about} \textcolor{fig1teal}{both} \textcolor{fig1ochre}{midnight} \textcolor{fig1teal}{and} \textcolor{fig1ochre}{digital}\textcolor{fig1teal}{."} \textcolor{fig1teal}{No}\textcolor{fig1teal}{,} \textcolor{fig1teal}{we}\textcolor{fig1teal}{'re} \textcolor{fig1ochre}{talking} \textcolor{fig1teal}{a} \textcolor{fig1teal}{little} \textcolor{fig1teal}{much} \textcolor{fig1teal}{about} \textcolor{fig1teal}{the} \textcolor{fig1teal}{data}\textcolor{fig1teal}{,} \textcolor{fig1teal}{from} \textcolor{fig1ochre}{1999}\textcolor{fig1teal}{-}\textcolor{fig1teal}{on} \textcolor{fig1teal}{the} \textcolor{fig1teal}{only} \textcolor{fig1teal}{point} \textcolor{fig1teal}{since} \textcolor{fig1teal}{the} \textcolor{fig1ochre}{collapse} \textcolor{fig1teal}{of} \textcolor{fig1teal}{the} \textcolor{fig1teal}{C}\textcolor{fig1ochre}{DS}\textcolor{fig1teal}{,} \textcolor{fig1teal}{we} \textcolor{fig1teal}{may} \textcolor{fig1teal}{have} \textcolor{fig1ochre}{forgotten} \textcolor{fig1teal}{part} \textcolor{fig1teal}{of} \textcolor{fig1teal}{the} \textcolor{fig1teal}{data}\textcolor{fig1teal}{.}  \textcolor{fig1teal}{The} \textcolor{fig1teal}{in} \textcolor{fig1teal}{particular} \textcolor{fig1teal}{range} \textcolor{fig1teal}{of} \textcolor{fig1teal}{the} \textcolor{fig1teal}{shows} \textcolor{fig1teal}{on} \textcolor{fig1ochre}{Radio} \textcolor{fig1teal}{by} \textcolor{fig1teal}{New} \textcolor{fig1ochre}{Russians} \textcolor{fig1teal}{were} \textcolor{fig1teal}{the} \textcolor{fig1ochre}{majority} \textcolor{fig1teal}{of} \textcolor{fig1teal}{the} \textcolor{fig1teal}{Q}\textcolor{fig1teal}{1} \textcolor{fig1teal}{period}\textcolor{fig1teal}{.} \textcolor{fig1teal}{The} \textcolor{fig1teal}{non}\textcolor{fig1teal}{-}\textcolor{fig1ochre}{script} \textcolor{fig1ochre}{period} \textcolor{fig1ochre}{landed} \textcolor{fig1teal}{on} \textcolor{fig1ochre}{Oct}\textcolor{fig1teal}{.} \textcolor{fig1teal}{1}\textcolor{fig1teal}{,} \textcolor{fig1ochre}{2012}\textcolor{fig1teal}{,} \textcolor{fig1teal}{and} \textcolor{fig1ochre}{aired}\textcolor{fig1teal}{,} \textcolor{fig1teal}{while} \textcolor{fig1teal}{its} \textcolor{fig1teal}{11}\textcolor{fig1teal}{.}\textcolor{fig1ochre}{25} \textcolor{fig1ochre}{ended}\textcolor{fig1teal}{.}  \textcolor{fig1ochre}{These} \textcolor{fig1ochre}{discussing} \textcolor{fig1teal}{"}\textcolor{fig1ochre}{Sept}\textcolor{fig1teal}{.} \textcolor{fig1teal}{26}\textcolor{fig1teal}{"} \textcolor{fig1teal}{that} \textcolor{fig1teal}{it} \textcolor{fig1teal}{were} \textcolor{fig1teal}{the} \textcolor{fig1teal}{last} \textcolor{fig1teal}{air} \textcolor{fig1teal}{shows} \textcolor{fig1teal}{in} \textcolor{fig1teal}{that} \textcolor{fig1teal}{period}\textcolor{fig1teal}{,} \textcolor{fig1teal}{not} \textcolor{fig1teal}{the} \textcolor{fig1ochre}{whole} \textcolor{fig1ochre}{GDP} \textcolor{fig1teal}{report}\textcolor{fig1teal}{.} \textcolor{fig1teal}{In} \textcolor{fig1teal}{that} \textcolor{fig1teal}{period} \textcolor{fig1teal}{officials} \textcolor{fig1ochre}{proportion}\textcolor{fig1teal}{ally} \textcolor{fig1teal}{had} \textcolor{fig1teal}{2}\textcolor{fig1teal}{,}\textcolor{fig1ochre}{440} \textcolor{fig1teal}{online} \textcolor{fig1teal}{TV} \textcolor{fig1ochre}{ads}\textcolor{fig1teal}{,} \textcolor{fig1teal}{and} \textcolor{fig1teal}{they} \textcolor{fig1teal}{made} \textcolor{fig1teal}{up} \textcolor{fig1teal}{12}\textcolor{fig1teal}{.}\textcolor{fig1teal}{2} \textcolor{fig1ochre}{million} \textcolor{fig1teal}{per} \textcolor{fig1teal}{100}\textcolor{fig1teal}{,}\textcolor{fig1teal}{000} \textcolor{fig1ochre}{viewers}\textcolor{fig1teal}{.} \textcolor{fig1teal}{However}\textcolor{fig1teal}{,} \textcolor{fig1teal}{the} \textcolor{fig1teal}{19}\textcolor{fig1teal}{.}\textcolor{fig1teal}{25} \textcolor{fig1ochre}{million} \textcolor{fig1teal}{didn}\textcolor{fig1teal}{'t} \textcolor{fig1ochre}{grow} \textcolor{fig1teal}{further}\textcolor{fig1teal}{,} \textcolor{fig1teal}{the} \textcolor{fig1teal}{same} \textcolor{fig1teal}{year} \textcolor{fig1teal}{in} \textcolor{fig1ochre}{2011}\textcolor{fig1teal}{,} \textcolor{fig1teal}{and} \textcolor{fig1teal}{a} \textcolor{fig1teal}{few} \textcolor{fig1teal}{other} \textcolor{fig1teal}{shows} \textcolor{fig1teal}{were} \textcolor{fig1teal}{added} \textcolor{fig1teal}{in} \textcolor{fig1teal}{that} \textcolor{fig1teal}{period}\textcolor{fig1teal}{.}  \textcolor{fig1teal}{But} \textcolor{fig1teal}{this} \textcolor{fig1teal}{season} \textcolor{fig1teal}{back} \textcolor{fig1ochre}{gives} \textcolor{fig1teal}{up} \textcolor{fig1teal}{more} \textcolor{fig1teal}{like} \textcolor{fig1teal}{America} \textcolor{fig1teal}{and} \textcolor{fig1teal}{Italy}\textcolor{fig1teal}{:}  \textcolor{fig1ochre}{Note}\textcolor{fig1teal}{:} \textcolor{fig1teal}{In} \textcolor{fig1teal}{today}\textcolor{fig1teal}{'s} \textcolor{fig1ochre}{lineup} \textcolor{fig1teal}{live} \textcolor{fig1teal}{on} \textcolor{fig1teal}{He}
\end{samplebox}
\caption{Posterior Refinement trajectory on OpenWebText, Sample-1.}
\label{fig:refine_sample_owt_1}
\end{figure}
\clearpage

\begin{figure}[H]
\centering
\begin{samplebox}{\normalfont\textbf{$\ours$ (PR), Round: 3} \hfill \normalfont\scriptsize \textcolor{darkgray}{Committed: \textbf{944/1024}}}
\scriptsize\linespread{0.9}\selectfont
 \textcolor{fig1teal}{say} \textcolor{fig1teal}{a} \textcolor{fig1ochre}{maximum} \textcolor{fig1teal}{of} \textcolor{fig1teal}{a} \textcolor{fig1teal}{contract}\textcolor{fig1teal}{,} \textcolor{fig1teal}{\$}\textcolor{fig1teal}{5}\textcolor{fig1teal}{,}\textcolor{fig1teal}{000} \textcolor{fig1teal}{a} \textcolor{fig1teal}{year} \textcolor{fig1teal}{on} \textcolor{fig1teal}{the} \textcolor{fig1teal}{network} \textcolor{fig1teal}{by} \textcolor{fig1teal}{three} \textcolor{fig1teal}{women}\textcolor{fig1teal}{—}\textcolor{fig1teal}{even} \textcolor{fig1teal}{though} \textcolor{fig1teal}{it}\textcolor{fig1teal}{�}\textcolor{fig1teal}{�}\textcolor{fig1teal}{s} \textcolor{fig1teal}{the} \textcolor{fig1teal}{only} \textcolor{fig1teal}{thing} \textcolor{fig1teal}{in} \textcolor{fig1teal}{the} \textcolor{fig1teal}{U}\textcolor{fig1teal}{.}\textcolor{fig1teal}{S}\textcolor{fig1teal}{.} \textcolor{fig1teal}{network} \textcolor{fig1teal}{today}\textcolor{fig1teal}{,} \textcolor{fig1teal}{that} \textcolor{fig1teal}{is} \textcolor{fig1teal}{a} \textcolor{fig1teal}{national} \textcolor{fig1teal}{good}\textcolor{fig1teal}{.}\textcolor{fig1teal}{�}\textcolor{fig1teal}{�}  \textcolor{fig1teal}{But} \textcolor{fig1teal}{saying} \textcolor{fig1teal}{that} \textcolor{fig1ochre}{deduct}\textcolor{fig1teal}{ing} \textcolor{fig1teal}{money} \textcolor{fig1teal}{from} \textcolor{fig1teal}{�}\textcolor{fig1teal}{�}\textcolor{fig1teal}{m}\textcolor{fig1ochre}{iles}\textcolor{fig1teal}{�}\textcolor{fig1teal}{�} \textcolor{fig1teal}{a} \textcolor{fig1teal}{�}\textcolor{fig1teal}{�}\textcolor{fig1ochre}{race}\textcolor{fig1teal}{,}\textcolor{fig1teal}{�}\textcolor{fig1teal}{�} \textcolor{fig1teal}{B}\textcolor{fig1ochre}{ally} \textcolor{fig1teal}{said} \textcolor{fig1teal}{it} \textcolor{fig1teal}{would} \textcolor{fig1teal}{seem} \textcolor{fig1teal}{that} \textcolor{fig1teal}{that} \textcolor{fig1teal}{is} \textcolor{fig1teal}{a} \textcolor{fig1teal}{step} \textcolor{fig1teal}{later}\textcolor{fig1teal}{,} \textcolor{fig1teal}{but} \textcolor{fig1teal}{the} \textcolor{fig1ochre}{stress} \textcolor{fig1teal}{of} \textcolor{fig1teal}{race} \textcolor{fig1teal}{in} \textcolor{fig1teal}{the} \textcolor{fig1teal}{F} \textcolor{fig1teal}{few} \textcolor{fig1teal}{is} \textcolor{fig1teal}{a} \textcolor{fig1teal}{good} \textcolor{fig1teal}{thing} \textcolor{fig1teal}{and} \textcolor{fig1teal}{could} \textcolor{fig1teal}{make} \textcolor{fig1ochre}{Wyn}\textcolor{fig1teal}{n}\textcolor{fig1teal}{�}\textcolor{fig1teal}{�}\textcolor{fig1teal}{s} \textcolor{fig1ochre}{Basic} \textcolor{fig1ochre}{é}\textcolor{fig1teal}{l} \textcolor{fig1teal}{The} \textcolor{fig1teal}{World} \textcolor{fig1ochre}{reconsider}\textcolor{fig1teal}{—}\textcolor{fig1teal}{and} \textcolor{fig1teal}{he} \textcolor{fig1teal}{said} \textcolor{fig1teal}{he}\textcolor{fig1teal}{�}\textcolor{fig1teal}{�}\textcolor{fig1teal}{s} \textcolor{fig1teal}{happy} \textcolor{fig1teal}{to} \textcolor{fig1teal}{be} \textcolor{fig1ochre}{hopeful} \textcolor{fig1teal}{that} \textcolor{fig1teal}{the} \textcolor{fig1ochre}{moving} \textcolor{fig1teal}{around} \textcolor{fig1teal}{that} \textcolor{fig1teal}{of} \textcolor{fig1teal}{a} \textcolor{fig1ochre}{fictional} \textcolor{fig1teal}{character} \textcolor{fig1teal}{together} \textcolor{fig1teal}{under} \textcolor{fig1teal}{an} \textcolor{fig1teal}{array} \textcolor{fig1teal}{of} \textcolor{fig1teal}{men}\textcolor{fig1teal}{,} \textcolor{fig1teal}{not} \textcolor{fig1teal}{at} \textcolor{fig1teal}{least} \textcolor{fig1teal}{one} \textcolor{fig1teal}{woman}\textcolor{fig1teal}{,} \textcolor{fig1teal}{could} \textcolor{fig1teal}{become} \textcolor{fig1teal}{the} \textcolor{fig1teal}{idea} \textcolor{fig1teal}{more} \textcolor{fig1ochre}{mainstream}\textcolor{fig1teal}{.} \textcolor{fig1teal}{�}\textcolor{fig1teal}{�}\textcolor{fig1teal}{If} \textcolor{fig1teal}{it} \textcolor{fig1teal}{gets} \textcolor{fig1teal}{me} \textcolor{fig1teal}{wrong}\textcolor{fig1teal}{,} \textcolor{fig1teal}{it}\textcolor{fig1teal}{�}\textcolor{fig1teal}{�}\textcolor{fig1teal}{s} \textcolor{fig1teal}{going} \textcolor{fig1teal}{to} \textcolor{fig1teal}{stop} \textcolor{fig1teal}{working} \textcolor{fig1teal}{for} \textcolor{fig1teal}{two} \textcolor{fig1teal}{years}\textcolor{fig1teal}{,}\textcolor{fig1teal}{�}\textcolor{fig1teal}{�} \textcolor{fig1teal}{he} \textcolor{fig1teal}{said}\textcolor{fig1teal}{.} \textcolor{fig1teal}{�}\textcolor{fig1teal}{�}\textcolor{fig1teal}{You} \textcolor{fig1teal}{think}\textcolor{fig1teal}{,} \textcolor{fig1teal}{the} \textcolor{fig1teal}{amount} \textcolor{fig1teal}{of} \textcolor{fig1teal}{media} \textcolor{fig1ochre}{sells} \textcolor{fig1teal}{and} \textcolor{fig1teal}{me}\textcolor{fig1teal}{of}\textcolor{fig1teal}{us} \textcolor{fig1teal}{men} \textcolor{fig1teal}{at} \textcolor{fig1teal}{your} \textcolor{fig1ochre}{heels}\textcolor{fig1teal}{,}\textcolor{fig1ochre}{ÃÂÃÂ}\textcolor{fig1teal}{,} \textcolor{fig1teal}{and} \textcolor{fig1teal}{y}\textcolor{fig1ochre}{anking} \textcolor{fig1teal}{you} \textcolor{fig1teal}{with} \textcolor{fig1teal}{a} \textcolor{fig1teal}{sub} \textcolor{fig1teal}{while} \textcolor{fig1ochre}{dragging} \textcolor{fig1teal}{it} \textcolor{fig1teal}{through} \textcolor{fig1teal}{along}\textcolor{fig1teal}{?}\textcolor{fig1teal}{�}\textcolor{fig1teal}{�}\textcolor{fig1teal}{�}\textcolor{fig1teal}{�}  \textcolor{fig1teal}{But} \textcolor{fig1teal}{B}\textcolor{fig1ochre}{ally} \textcolor{fig1teal}{is} \textcolor{fig1teal}{also} \textcolor{fig1ochre}{aware} \textcolor{fig1teal}{that} \textcolor{fig1teal}{her} \textcolor{fig1teal}{own} \textcolor{fig1teal}{show}\textcolor{fig1teal}{�}\textcolor{fig1teal}{�}\textcolor{fig1teal}{s} \textcolor{fig1teal}{lack} \textcolor{fig1teal}{of} \textcolor{fig1ochre}{campaigns} \textcolor{fig1teal}{around} \textcolor{fig1teal}{more} \textcolor{fig1teal}{is} \textcolor{fig1ochre}{directed} \textcolor{fig1teal}{directed} \textcolor{fig1teal}{at} \textcolor{fig1teal}{children}\textcolor{fig1teal}{.} \textcolor{fig1teal}{�}\textcolor{fig1teal}{�}\textcolor{fig1teal}{And} \textcolor{fig1teal}{yes}\textcolor{fig1teal}{,} \textcolor{fig1teal}{they}\textcolor{fig1teal}{�}\textcolor{fig1teal}{�}\textcolor{fig1teal}{re} \textcolor{fig1teal}{quite} \textcolor{fig1teal}{certainly}\textcolor{fig1teal}{,} \textcolor{fig1teal}{why} \textcolor{fig1teal}{not} \textcolor{fig1teal}{the} \textcolor{fig1teal}{most} \textcolor{fig1teal}{part}\textcolor{fig1teal}{,} \textcolor{fig1teal}{from} \textcolor{fig1teal}{the} \textcolor{fig1teal}{discussion}\textcolor{fig1teal}{,} \textcolor{fig1teal}{most} \textcolor{fig1teal}{of} \textcolor{fig1teal}{the} \textcolor{fig1teal}{world}\textcolor{fig1teal}{,} \textcolor{fig1teal}{I} \textcolor{fig1teal}{have} \textcolor{fig1teal}{seen} \textcolor{fig1teal}{with}\textcolor{fig1teal}{,} \textcolor{fig1teal}{can} \textcolor{fig1teal}{one} \textcolor{fig1teal}{create} \textcolor{fig1teal}{that}\textcolor{fig1teal}{.}\textcolor{fig1teal}{�}\textcolor{fig1teal}{�} \textcolor{fig1teal}{He} \textcolor{fig1teal}{was} \textcolor{fig1teal}{asked} \textcolor{fig1teal}{about} \textcolor{fig1teal}{that} \textcolor{fig1teal}{when} \textcolor{fig1teal}{she} \textcolor{fig1teal}{said} \textcolor{fig1teal}{he} \textcolor{fig1teal}{asked} \textcolor{fig1ochre}{Louis} \textcolor{fig1teal}{K}\textcolor{fig1teal}{.} \textcolor{fig1teal}{to} \textcolor{fig1teal}{star} \textcolor{fig1teal}{on} \textcolor{fig1teal}{the} \textcolor{fig1teal}{show}\textcolor{fig1teal}{.} \textcolor{fig1teal}{In} \textcolor{fig1teal}{reality}\textcolor{fig1teal}{,} \textcolor{fig1teal}{these} \textcolor{fig1teal}{threats} \textcolor{fig1teal}{get} \textcolor{fig1teal}{into} \textcolor{fig1teal}{the} \textcolor{fig1ochre}{gaps} \textcolor{fig1teal}{of} \textcolor{fig1ochre}{contention} \textcolor{fig1teal}{between} \textcolor{fig1teal}{a} \textcolor{fig1teal}{number} \textcolor{fig1teal}{of} \textcolor{fig1ochre}{sea}\textcolor{fig1teal}{-}\textcolor{fig1ochre}{tain}\textcolor{fig1teal}{ers} \textcolor{fig1teal}{and} \textcolor{fig1teal}{the} \textcolor{fig1teal}{people} \textcolor{fig1teal}{she}\textcolor{fig1teal}{�}\textcolor{fig1teal}{�}\textcolor{fig1teal}{s} \textcolor{fig1teal}{trying} \textcolor{fig1teal}{to} \textcolor{fig1teal}{keep} \textcolor{fig1teal}{on}\textcolor{fig1teal}{.}\textcolor{fig1teal}{�}\textcolor{fig1teal}{�}\textcolor{fig1teal}{I} \textcolor{fig1teal}{have} \textcolor{fig1teal}{just} \textcolor{fig1teal}{been} \textcolor{fig1ochre}{surprised} \textcolor{fig1teal}{by} \textcolor{fig1teal}{your} \textcolor{fig1teal}{response} \textcolor{fig1teal}{to} \textcolor{fig1teal}{had} \textcolor{fig1teal}{so} \textcolor{fig1teal}{much} \textcolor{fig1teal}{needs} \textcolor{fig1teal}{to} \textcolor{fig1teal}{clear}\textcolor{fig1teal}{.} \textcolor{fig1teal}{I} \textcolor{fig1teal}{say}\textcolor{fig1teal}{,} \textcolor{fig1teal}{well}\textcolor{fig1teal}{,} \textcolor{fig1teal}{that}\textcolor{fig1teal}{�}\textcolor{fig1teal}{�}\textcolor{fig1teal}{s} \textcolor{fig1teal}{me} \textcolor{fig1teal}{why} \textcolor{fig1teal}{we}\textcolor{fig1teal}{�}\textcolor{fig1teal}{�}\textcolor{fig1teal}{re} \textcolor{fig1teal}{just} \textcolor{fig1teal}{as} \textcolor{fig1ochre}{grateful} \textcolor{fig1teal}{for} \textcolor{fig1teal}{our} \textcolor{fig1teal}{work}\textcolor{fig1teal}{.} \textcolor{fig1teal}{In} \textcolor{fig1teal}{my} \textcolor{fig1teal}{campaign}\textcolor{fig1teal}{,} \textcolor{fig1teal}{we} \textcolor{fig1teal}{have} \textcolor{fig1teal}{done} \textcolor{fig1teal}{it} \textcolor{fig1teal}{on} \textcolor{fig1teal}{their} \textcolor{fig1teal}{own}\textcolor{fig1teal}{,} \textcolor{fig1teal}{and} \textcolor{fig1teal}{the} \textcolor{fig1teal}{results} \textcolor{fig1teal}{have} \textcolor{fig1teal}{been} \textcolor{fig1teal}{positive}\textcolor{fig1teal}{.} \textcolor{fig1teal}{This} \textcolor{fig1teal}{post} \textcolor{fig1teal}{was} \textcolor{fig1teal}{a} \textcolor{fig1teal}{last} \textcolor{fig1teal}{thing}\textcolor{fig1teal}{.} \textcolor{fig1teal}{I} \textcolor{fig1teal}{want} \textcolor{fig1teal}{to} \textcolor{fig1teal}{say} \textcolor{fig1teal}{[}\textcolor{fig1teal}{1}\textcolor{fig1teal}{]} \textcolor{fig1teal}{�}\textcolor{fig1teal}{�}\textcolor{fig1teal}{30} \textcolor{fig1teal}{years} \textcolor{fig1teal}{later}\textcolor{fig1teal}{.}\textcolor{fig1teal}{�}\textcolor{fig1teal}{�} \textcolor{fig1teal}{When} \textcolor{fig1teal}{she} \textcolor{fig1teal}{needed} \textcolor{fig1teal}{�}\textcolor{fig1teal}{�}\textcolor{fig1teal}{30} \textcolor{fig1teal}{years} \textcolor{fig1teal}{to} \textcolor{fig1teal}{make} \textcolor{fig1teal}{the} \textcolor{fig1teal}{news}\textcolor{fig1teal}{,}\textcolor{fig1teal}{�}\textcolor{fig1teal}{�} \textcolor{fig1teal}{we} \textcolor{fig1teal}{gave} \textcolor{fig1teal}{the} \textcolor{fig1teal}{U}\textcolor{fig1teal}{.}\textcolor{fig1teal}{S}\textcolor{fig1teal}{.} \textcolor{fig1teal}{news} \textcolor{fig1teal}{and} \textcolor{fig1ochre}{gave} \textcolor{fig1teal}{this} \textcolor{fig1teal}{year} \textcolor{fig1teal}{when} \textcolor{fig1teal}{only} \textcolor{fig1teal}{10}\textcolor{fig1teal}{-}\textcolor{fig1teal}{20} \textcolor{fig1teal}{percent} \textcolor{fig1teal}{of} \textcolor{fig1teal}{the} \textcolor{fig1teal}{gas}\textcolor{fig1teal}{-}\textcolor{fig1ochre}{wing}\textcolor{fig1ochre}{aded} \textcolor{fig1ochre}{covered}\textcolor{fig1teal}{.} \textcolor{fig1teal}{The} \textcolor{fig1teal}{news} \textcolor{fig1teal}{to} \textcolor{fig1teal}{the} \textcolor{fig1teal}{people} \textcolor{fig1teal}{of} \textcolor{fig1teal}{this} \textcolor{fig1teal}{region} \textcolor{fig1teal}{can} \textcolor{fig1ochre}{change} \textcolor{fig1teal}{any} \textcolor{fig1teal}{time}\textcolor{fig1teal}{.} \textcolor{fig1teal}{My} \textcolor{fig1teal}{organization} \textcolor{fig1teal}{used} \textcolor{fig1teal}{these} \textcolor{fig1teal}{weapons} \textcolor{fig1teal}{and} \textcolor{fig1ochre}{light} \textcolor{fig1teal}{to} \textcolor{fig1teal}{the} \textcolor{fig1teal}{problem} \textcolor{fig1teal}{during} \textcolor{fig1teal}{the} \textcolor{fig1teal}{days} \textcolor{fig1teal}{of} \textcolor{fig1ochre}{fracking}\textcolor{fig1teal}{,} \textcolor{fig1teal}{which} \textcolor{fig1teal}{has} \textcolor{fig1teal}{a} \textcolor{fig1teal}{huge} \textcolor{fig1teal}{impact} \textcolor{fig1teal}{on} \textcolor{fig1teal}{the} \textcolor{fig1teal}{50}\textcolor{fig1teal}{/}\textcolor{fig1teal}{Co}\textcolor{fig1teal}{al} \textcolor{fig1teal}{industry}\textcolor{fig1teal}{.} \textcolor{fig1teal}{I} \textcolor{fig1teal}{am} \textcolor{fig1ochre}{sorry} \textcolor{fig1teal}{for} \textcolor{fig1teal}{the} \textcolor{fig1teal}{process} \textcolor{fig1teal}{of} \textcolor{fig1teal}{taking} \textcolor{fig1teal}{this} \textcolor{fig1teal}{responsibility} \textcolor{fig1teal}{on}\textcolor{fig1teal}{.} \textcolor{fig1teal}{Americans} \textcolor{fig1teal}{have} \textcolor{fig1teal}{the} \textcolor{fig1teal}{right} \textcolor{fig1teal}{to} \textcolor{fig1teal}{know} \textcolor{fig1teal}{with} \textcolor{fig1teal}{you} \textcolor{fig1teal}{top} \textcolor{fig1teal}{of} \textcolor{fig1teal}{where} \textcolor{fig1teal}{you} \textcolor{fig1teal}{stand} \textcolor{fig1teal}{or} \textcolor{fig1teal}{we} \textcolor{fig1teal}{per} \textcolor{fig1teal}{side} \textcolor{fig1teal}{with} \textcolor{fig1teal}{that} \textcolor{fig1teal}{first} \textcolor{fig1teal}{of}\textcolor{fig1teal}{.} \textcolor{fig1teal}{You}\textcolor{fig1teal}{,} \textcolor{fig1teal}{the} \textcolor{fig1ochre}{reader} \textcolor{fig1teal}{and} \textcolor{fig1teal}{other} \textcolor{fig1teal}{people} \textcolor{fig1teal}{do}\textcolor{fig1teal}{,} \textcolor{fig1teal}{and} \textcolor{fig1teal}{get} \textcolor{fig1teal}{ahead} \textcolor{fig1teal}{of} \textcolor{fig1teal}{what} \textcolor{fig1teal}{you} \textcolor{fig1teal}{are} \textcolor{fig1teal}{asked} \textcolor{fig1teal}{for}\textcolor{fig1teal}{.} \textcolor{fig1teal}{We} \textcolor{fig1teal}{are} \textcolor{fig1teal}{not} \textcolor{fig1teal}{out} \textcolor{fig1teal}{for} \textcolor{fig1teal}{modern}\textcolor{fig1teal}{ization} \textcolor{fig1teal}{of} \textcolor{fig1teal}{our} \textcolor{fig1teal}{industry} \textcolor{fig1teal}{now}\textcolor{fig1teal}{,} \textcolor{fig1teal}{and} \textcolor{fig1teal}{the} \textcolor{fig1teal}{strength} \textcolor{fig1teal}{of} \textcolor{fig1teal}{our} \textcolor{fig1teal}{industry} \textcolor{fig1teal}{is} \textcolor{fig1teal}{very} \textcolor{fig1teal}{great}\textcolor{fig1teal}{.}  \textcolor{fig1teal}{I} \textcolor{fig1teal}{told} \textcolor{fig1teal}{anyone} \textcolor{fig1ochre}{write} \textcolor{fig1teal}{to} \textcolor{fig1teal}{me} \textcolor{fig1teal}{about} \textcolor{fig1teal}{that} \textcolor{fig1teal}{as} \textcolor{fig1teal}{well}\textcolor{fig1teal}{.}  \textcolor{fig1teal}{F}\textcolor{fig1teal}{n}\textcolor{fig1teal}{'t} \textcolor{fig1teal}{it} \textcolor{fig1teal}{be} \textcolor{fig1teal}{fine}\textcolor{fig1teal}{?}  \textcolor{fig1teal}{(} \textcolor{fig1teal}{here}\textcolor{fig1teal}{'s} \textcolor{fig1teal}{my} \textcolor{fig1teal}{response}\textcolor{fig1teal}{,} \textcolor{fig1teal}{I} \textcolor{fig1teal}{said} \textcolor{fig1teal}{yes} \textcolor{fig1teal}{that} \textcolor{fig1ochre}{provoked} \textcolor{fig1teal}{some} \textcolor{fig1teal}{policy} \textcolor{fig1teal}{in} \textcolor{fig1teal}{action} \textcolor{fig1teal}{from} \textcolor{fig1teal}{the} \textcolor{fig1teal}{person} \textcolor{fig1teal}{I}\textcolor{fig1teal}{.} \textcolor{fig1teal}{However}\textcolor{fig1teal}{,} \textcolor{fig1teal}{I} \textcolor{fig1teal}{think} \textcolor{fig1teal}{our} \textcolor{fig1teal}{national} \textcolor{fig1teal}{leaders} \textcolor{fig1teal}{and} \textcolor{fig1teal}{companies} \textcolor{fig1teal}{called} \textcolor{fig1teal}{for} \textcolor{fig1teal}{a} \textcolor{fig1teal}{up} \textcolor{fig1teal}{or} \textcolor{fig1teal}{down} \textcolor{fig1teal}{when} \textcolor{fig1teal}{said} \textcolor{fig1teal}{an} \textcolor{fig1teal}{issue}\textcolor{fig1teal}{.} \textcolor{fig1teal}{We} \textcolor{fig1teal}{and} \textcolor{fig1teal}{out} \textcolor{fig1teal}{or} \textcolor{fig1teal}{the} \textcolor{fig1teal}{"}\textcolor{fig1teal}{the} \textcolor{fig1teal}{one} \textcolor{fig1teal}{of} \textcolor{fig1teal}{which} \textcolor{fig1teal}{was} \textcolor{fig1teal}{no} \textcolor{fig1teal}{one} \textcolor{fig1teal}{I} \textcolor{fig1teal}{was} \textcolor{fig1teal}{hearing} \textcolor{fig1teal}{from}\textcolor{fig1teal}{.}  \textcolor{fig1teal}{I} \textcolor{fig1teal}{think} \textcolor{fig1teal}{it} \textcolor{fig1teal}{is} \textcolor{fig1teal}{just} \textcolor{fig1teal}{fine} \textcolor{fig1teal}{in} \textcolor{fig1teal}{every} \textcolor{fig1teal}{way}\textcolor{fig1teal}{.} \textcolor{fig1teal}{I} \textcolor{fig1teal}{will} \textcolor{fig1teal}{continue} \textcolor{fig1teal}{if} \textcolor{fig1teal}{you} \textcolor{fig1teal}{at} \textcolor{fig1teal}{I} \textcolor{fig1teal}{feel} \textcolor{fig1teal}{like}\textcolor{fig1teal}{.}  \textcolor{fig1teal}{I} \textcolor{fig1teal}{know} \textcolor{fig1teal}{you} \textcolor{fig1teal}{would} \textcolor{fig1teal}{like} \textcolor{fig1teal}{us} \textcolor{fig1teal}{to}\textcolor{fig1teal}{,} \textcolor{fig1teal}{as} \textcolor{fig1teal}{a} \textcolor{fig1teal}{part} \textcolor{fig1teal}{of} \textcolor{fig1teal}{this} \textcolor{fig1teal}{business}\textcolor{fig1teal}{,} \textcolor{fig1teal}{listen} \textcolor{fig1teal}{to} \textcolor{fig1teal}{us} \textcolor{fig1teal}{without} \textcolor{fig1teal}{the} \textcolor{fig1teal}{news}\textcolor{fig1teal}{.}\textcolor{fig1teal}{This} \textcolor{fig1teal}{is} \textcolor{fig1teal}{not} \textcolor{fig1teal}{�}\textcolor{fig1teal}{�}\textcolor{fig1teal}{The} \textcolor{fig1ochre}{Unknown}\textcolor{fig1teal}{,}\textcolor{fig1teal}{�}\textcolor{fig1teal}{�} \textcolor{fig1teal}{by} \textcolor{fig1ochre}{Sol} \textcolor{fig1ochre}{Kot}\textcolor{fig1teal}{is}\textcolor{fig1teal}{,} \textcolor{fig1teal}{put} \textcolor{fig1teal}{off} \textcolor{fig1teal}{for} \textcolor{fig1teal}{office} \textcolor{fig1teal}{at} \textcolor{fig1teal}{the} \textcolor{fig1teal}{top} \textcolor{fig1teal}{of} \textcolor{fig1teal}{the} \textcolor{fig1teal}{year}\textcolor{fig1teal}{-}\textcolor{fig1teal}{the}\textcolor{fig1teal}{-} \textcolor{fig1teal}{news}\textcolor{fig1teal}{.} \textcolor{fig1teal}{In} \textcolor{fig1teal}{Russia}\textcolor{fig1teal}{,} \textcolor{fig1teal}{it} \textcolor{fig1teal}{a} \textcolor{fig1teal}{lot} \textcolor{fig1teal}{like}\textcolor{fig1teal}{,} \textcolor{fig1teal}{his} \textcolor{fig1teal}{five} \textcolor{fig1teal}{state}\textcolor{fig1ochre}{mates}\textcolor{fig1teal}{,} \textcolor{fig1ochre}{bloggers} \textcolor{fig1teal}{and} \textcolor{fig1ochre}{politicians} \textcolor{fig1teal}{said} \textcolor{fig1teal}{they} \textcolor{fig1teal}{did} \textcolor{fig1teal}{and} \textcolor{fig1teal}{wrote} \textcolor{fig1teal}{about} \textcolor{fig1teal}{through} \textcolor{fig1teal}{the} \textcolor{fig1teal}{daily} \textcolor{fig1teal}{R}\textcolor{fig1ochre}{SK}\textcolor{fig1teal}{T}\textcolor{fig1teal}{.} \textcolor{fig1ochre}{Kom}\textcolor{fig1teal}{is} \textcolor{fig1teal}{already} \textcolor{fig1ochre}{claimed} \textcolor{fig1teal}{such} \textcolor{fig1teal}{as} \textcolor{fig1teal}{"}\textcolor{fig1teal}{c}\textcolor{fig1teal}{and}\textcolor{fig1ochre}{ie}\textcolor{fig1teal}{"} \textcolor{fig1teal}{to} \textcolor{fig1teal}{the} \textcolor{fig1teal}{US}\textcolor{fig1teal}{.}  \textcolor{fig1teal}{"}\textcolor{fig1ochre}{Apparently}\textcolor{fig1teal}{,} \textcolor{fig1teal}{the} \textcolor{fig1teal}{government} \textcolor{fig1ochre}{cards} \textcolor{fig1teal}{were} \textcolor{fig1ochre}{flying}\textcolor{fig1teal}{,} \textcolor{fig1teal}{not} \textcolor{fig1teal}{on} \textcolor{fig1teal}{them}\textcolor{fig1teal}{,} \textcolor{fig1teal}{but} \textcolor{fig1ochre}{pedd}\textcolor{fig1teal}{led} \textcolor{fig1teal}{and} \textcolor{fig1ochre}{stomp}\textcolor{fig1teal}{ed} \textcolor{fig1teal}{by} \textcolor{fig1teal}{a} \textcolor{fig1teal}{right}\textcolor{fig1teal}{-}\textcolor{fig1teal}{of}\textcolor{fig1teal}{-}\textcolor{fig1ochre}{center} \textcolor{fig1teal}{media} \textcolor{fig1teal}{cover} \textcolor{fig1teal}{from} \textcolor{fig1teal}{America}\textcolor{fig1teal}{,"} \textcolor{fig1teal}{wrote} \textcolor{fig1teal}{his} \textcolor{fig1ochre}{predecessor}\textcolor{fig1teal}{.} \textcolor{fig1teal}{"}\textcolor{fig1teal}{They} \textcolor{fig1teal}{did} \textcolor{fig1teal}{then} \textcolor{fig1ochre}{steal} \textcolor{fig1teal}{less} \textcolor{fig1teal}{from} \textcolor{fig1teal}{I}\textcolor{fig1teal}{,} \textcolor{fig1teal}{who} \textcolor{fig1ochre}{tends} \textcolor{fig1teal}{to} \textcolor{fig1teal}{understand} \textcolor{fig1teal}{the} \textcolor{fig1teal}{country}\textcolor{fig1teal}{,} \textcolor{fig1teal}{and} \textcolor{fig1teal}{made} \textcolor{fig1teal}{up} \textcolor{fig1teal}{its} \textcolor{fig1teal}{about} \textcolor{fig1teal}{both} \textcolor{fig1ochre}{television} \textcolor{fig1teal}{and} \textcolor{fig1teal}{present}\textcolor{fig1teal}{."} \textcolor{fig1teal}{No}\textcolor{fig1teal}{,} \textcolor{fig1teal}{we}\textcolor{fig1teal}{'re} \textcolor{fig1ochre}{learning} \textcolor{fig1teal}{a} \textcolor{fig1teal}{little} \textcolor{fig1teal}{much} \textcolor{fig1teal}{about} \textcolor{fig1teal}{the} \textcolor{fig1teal}{data}\textcolor{fig1teal}{,} \textcolor{fig1teal}{from} \textcolor{fig1teal}{then}\textcolor{fig1teal}{-}\textcolor{fig1teal}{on} \textcolor{fig1teal}{the} \textcolor{fig1teal}{only} \textcolor{fig1teal}{point} \textcolor{fig1teal}{since} \textcolor{fig1teal}{the} \textcolor{fig1ochre}{advent} \textcolor{fig1teal}{of} \textcolor{fig1teal}{the} \textcolor{fig1teal}{C}\textcolor{fig1ochre}{ata}\textcolor{fig1teal}{,} \textcolor{fig1teal}{we} \textcolor{fig1teal}{may} \textcolor{fig1teal}{have} \textcolor{fig1teal}{lost} \textcolor{fig1teal}{part} \textcolor{fig1teal}{of} \textcolor{fig1teal}{the} \textcolor{fig1teal}{data}\textcolor{fig1teal}{.}  \textcolor{fig1teal}{The} \textcolor{fig1teal}{in} \textcolor{fig1teal}{particular} \textcolor{fig1teal}{range} \textcolor{fig1teal}{of} \textcolor{fig1teal}{the} \textcolor{fig1teal}{shows} \textcolor{fig1teal}{on} \textcolor{fig1teal}{Wednesday} \textcolor{fig1teal}{by} \textcolor{fig1teal}{New} \textcolor{fig1ochre}{Moscow} \textcolor{fig1teal}{were} \textcolor{fig1teal}{the} \textcolor{fig1ochre}{result} \textcolor{fig1teal}{of} \textcolor{fig1teal}{the} \textcolor{fig1teal}{Q}\textcolor{fig1teal}{1} \textcolor{fig1teal}{period}\textcolor{fig1teal}{.} \textcolor{fig1teal}{The} \textcolor{fig1teal}{non}\textcolor{fig1teal}{-}\textcolor{fig1ochre}{target} \textcolor{fig1teal}{shows} \textcolor{fig1ochre}{aired} \textcolor{fig1teal}{on} \textcolor{fig1ochre}{Sept}\textcolor{fig1teal}{.} \textcolor{fig1teal}{1}\textcolor{fig1teal}{,} \textcolor{fig1teal}{11}\textcolor{fig1teal}{,} \textcolor{fig1teal}{and} \textcolor{fig1teal}{14}\textcolor{fig1teal}{,} \textcolor{fig1teal}{while} \textcolor{fig1teal}{its} \textcolor{fig1teal}{11}\textcolor{fig1teal}{.}\textcolor{fig1teal}{6} \textcolor{fig1ochre}{aired}\textcolor{fig1teal}{.}  \textcolor{fig1ochre}{They}\textcolor{fig1ochre}{'re} \textcolor{fig1teal}{"}\textcolor{fig1teal}{17}\textcolor{fig1teal}{.} \textcolor{fig1teal}{26}\textcolor{fig1teal}{"} \textcolor{fig1teal}{that} \textcolor{fig1teal}{it} \textcolor{fig1teal}{were} \textcolor{fig1teal}{the} \textcolor{fig1teal}{last} \textcolor{fig1teal}{air} \textcolor{fig1teal}{shows} \textcolor{fig1teal}{in} \textcolor{fig1teal}{that} \textcolor{fig1teal}{period}\textcolor{fig1teal}{,} \textcolor{fig1teal}{not} \textcolor{fig1teal}{the} \textcolor{fig1ochre}{Grim}\textcolor{fig1ochre}{IT} \textcolor{fig1teal}{report}\textcolor{fig1teal}{.} \textcolor{fig1teal}{In} \textcolor{fig1teal}{that} \textcolor{fig1teal}{period} \textcolor{fig1teal}{officials} \textcolor{fig1ochre}{inform}\textcolor{fig1teal}{ally} \textcolor{fig1teal}{had} \textcolor{fig1teal}{2}\textcolor{fig1teal}{,}\textcolor{fig1teal}{000} \textcolor{fig1teal}{online} \textcolor{fig1teal}{TV} \textcolor{fig1ochre}{stations}\textcolor{fig1teal}{,} \textcolor{fig1teal}{and} \textcolor{fig1teal}{they} \textcolor{fig1teal}{made} \textcolor{fig1teal}{up} \textcolor{fig1teal}{12}\textcolor{fig1teal}{.}\textcolor{fig1teal}{2} \textcolor{fig1teal}{shows} \textcolor{fig1teal}{per} \textcolor{fig1teal}{100}\textcolor{fig1teal}{,}\textcolor{fig1teal}{000} \textcolor{fig1teal}{viewers}\textcolor{fig1teal}{.} \textcolor{fig1teal}{However}\textcolor{fig1teal}{,} \textcolor{fig1teal}{the} \textcolor{fig1teal}{19}\textcolor{fig1teal}{.}\textcolor{fig1teal}{25} \textcolor{fig1ochre}{rating} \textcolor{fig1teal}{didn}\textcolor{fig1teal}{'t} \textcolor{fig1teal}{change} \textcolor{fig1teal}{further}\textcolor{fig1teal}{,} \textcolor{fig1teal}{the} \textcolor{fig1teal}{same} \textcolor{fig1teal}{year} \textcolor{fig1teal}{in} \textcolor{fig1ochre}{ON}\textcolor{fig1teal}{,} \textcolor{fig1teal}{and} \textcolor{fig1teal}{a} \textcolor{fig1teal}{few} \textcolor{fig1teal}{other} \textcolor{fig1teal}{shows} \textcolor{fig1teal}{were} \textcolor{fig1teal}{added} \textcolor{fig1teal}{in} \textcolor{fig1teal}{that} \textcolor{fig1teal}{period}\textcolor{fig1teal}{.}  \textcolor{fig1teal}{But} \textcolor{fig1teal}{this} \textcolor{fig1teal}{season} \textcolor{fig1teal}{back}\textcolor{fig1ochre}{ends} \textcolor{fig1teal}{up} \textcolor{fig1teal}{more} \textcolor{fig1teal}{like} \textcolor{fig1teal}{America} \textcolor{fig1teal}{and} \textcolor{fig1teal}{Italy}\textcolor{fig1teal}{:}  \textcolor{fig1ochre}{rot}\textcolor{fig1teal}{:} \textcolor{fig1teal}{In} \textcolor{fig1teal}{today}\textcolor{fig1teal}{'s} \textcolor{fig1teal}{interview} \textcolor{fig1teal}{live} \textcolor{fig1teal}{on} \textcolor{fig1teal}{He}
\end{samplebox}
\begin{samplebox}{\normalfont\textbf{$\ours$ (PR), Round: 4} \hfill \normalfont\scriptsize \textcolor{darkgray}{Committed: \textbf{1024/1024}}}
\scriptsize\linespread{0.9}\selectfont
 \textcolor{fig1teal}{say} \textcolor{fig1teal}{a} \textcolor{fig1teal}{year} \textcolor{fig1teal}{of} \textcolor{fig1teal}{a} \textcolor{fig1teal}{contract}\textcolor{fig1teal}{,} \textcolor{fig1teal}{\$}\textcolor{fig1teal}{5}\textcolor{fig1teal}{,}\textcolor{fig1teal}{000} \textcolor{fig1teal}{a} \textcolor{fig1teal}{year} \textcolor{fig1teal}{on} \textcolor{fig1teal}{the} \textcolor{fig1teal}{network} \textcolor{fig1teal}{by} \textcolor{fig1teal}{three} \textcolor{fig1teal}{women}\textcolor{fig1teal}{—}\textcolor{fig1teal}{even} \textcolor{fig1teal}{though} \textcolor{fig1teal}{it}\textcolor{fig1teal}{�}\textcolor{fig1teal}{�}\textcolor{fig1teal}{s} \textcolor{fig1teal}{the} \textcolor{fig1teal}{only} \textcolor{fig1teal}{thing} \textcolor{fig1teal}{in} \textcolor{fig1teal}{the} \textcolor{fig1teal}{U}\textcolor{fig1teal}{.}\textcolor{fig1teal}{S}\textcolor{fig1teal}{.} \textcolor{fig1teal}{network} \textcolor{fig1teal}{today}\textcolor{fig1teal}{,} \textcolor{fig1teal}{that} \textcolor{fig1teal}{is} \textcolor{fig1teal}{a} \textcolor{fig1teal}{national} \textcolor{fig1teal}{good}\textcolor{fig1teal}{.}\textcolor{fig1teal}{�}\textcolor{fig1teal}{�}  \textcolor{fig1teal}{But} \textcolor{fig1teal}{saying} \textcolor{fig1teal}{that} \textcolor{fig1teal}{deduct}\textcolor{fig1teal}{ing} \textcolor{fig1teal}{money} \textcolor{fig1teal}{from} \textcolor{fig1teal}{�}\textcolor{fig1teal}{�}\textcolor{fig1teal}{m}\textcolor{fig1teal}{iles}\textcolor{fig1teal}{�}\textcolor{fig1teal}{�} \textcolor{fig1teal}{a} \textcolor{fig1teal}{�}\textcolor{fig1teal}{�}\textcolor{fig1teal}{no}\textcolor{fig1teal}{,}\textcolor{fig1teal}{�}\textcolor{fig1teal}{�} \textcolor{fig1teal}{B}\textcolor{fig1teal}{aca} \textcolor{fig1teal}{said} \textcolor{fig1teal}{it} \textcolor{fig1teal}{would} \textcolor{fig1teal}{seem} \textcolor{fig1teal}{that} \textcolor{fig1teal}{that} \textcolor{fig1teal}{is} \textcolor{fig1teal}{a} \textcolor{fig1teal}{step} \textcolor{fig1teal}{later}\textcolor{fig1teal}{,} \textcolor{fig1teal}{but} \textcolor{fig1teal}{the} \textcolor{fig1teal}{issue} \textcolor{fig1teal}{of} \textcolor{fig1teal}{race} \textcolor{fig1teal}{in} \textcolor{fig1teal}{the} \textcolor{fig1teal}{F} \textcolor{fig1teal}{few} \textcolor{fig1teal}{is} \textcolor{fig1teal}{a} \textcolor{fig1teal}{good} \textcolor{fig1teal}{thing} \textcolor{fig1teal}{and} \textcolor{fig1teal}{could} \textcolor{fig1teal}{make} \textcolor{fig1teal}{Cour}\textcolor{fig1teal}{n}\textcolor{fig1teal}{�}\textcolor{fig1teal}{�}\textcolor{fig1teal}{s} \textcolor{fig1teal}{Killing}\textcolor{fig1teal}{aj}\textcolor{fig1teal}{l} \textcolor{fig1teal}{The} \textcolor{fig1teal}{World} \textcolor{fig1teal}{expert}\textcolor{fig1teal}{—}\textcolor{fig1teal}{and} \textcolor{fig1teal}{he} \textcolor{fig1teal}{said} \textcolor{fig1teal}{he}\textcolor{fig1teal}{�}\textcolor{fig1teal}{�}\textcolor{fig1teal}{s} \textcolor{fig1teal}{happy} \textcolor{fig1teal}{to} \textcolor{fig1teal}{be} \textcolor{fig1teal}{concerned} \textcolor{fig1teal}{that} \textcolor{fig1teal}{the} \textcolor{fig1teal}{issues} \textcolor{fig1teal}{around} \textcolor{fig1teal}{that} \textcolor{fig1teal}{of} \textcolor{fig1teal}{a} \textcolor{fig1teal}{racial} \textcolor{fig1teal}{character} \textcolor{fig1teal}{together} \textcolor{fig1teal}{under} \textcolor{fig1teal}{an} \textcolor{fig1teal}{array} \textcolor{fig1teal}{of} \textcolor{fig1teal}{men}\textcolor{fig1teal}{,} \textcolor{fig1teal}{not} \textcolor{fig1teal}{at} \textcolor{fig1teal}{least} \textcolor{fig1teal}{one} \textcolor{fig1teal}{woman}\textcolor{fig1teal}{,} \textcolor{fig1teal}{could} \textcolor{fig1teal}{become} \textcolor{fig1teal}{the} \textcolor{fig1teal}{idea} \textcolor{fig1teal}{more} \textcolor{fig1teal}{popular}\textcolor{fig1teal}{.} \textcolor{fig1teal}{�}\textcolor{fig1teal}{�}\textcolor{fig1teal}{If} \textcolor{fig1teal}{it} \textcolor{fig1teal}{gets} \textcolor{fig1teal}{me} \textcolor{fig1teal}{wrong}\textcolor{fig1teal}{,} \textcolor{fig1teal}{it}\textcolor{fig1teal}{�}\textcolor{fig1teal}{�}\textcolor{fig1teal}{s} \textcolor{fig1teal}{going} \textcolor{fig1teal}{to} \textcolor{fig1teal}{stop} \textcolor{fig1teal}{working} \textcolor{fig1teal}{for} \textcolor{fig1teal}{two} \textcolor{fig1teal}{years}\textcolor{fig1teal}{,}\textcolor{fig1teal}{�}\textcolor{fig1teal}{�} \textcolor{fig1teal}{he} \textcolor{fig1teal}{said}\textcolor{fig1teal}{.} \textcolor{fig1teal}{�}\textcolor{fig1teal}{�}\textcolor{fig1teal}{You} \textcolor{fig1teal}{think}\textcolor{fig1teal}{,} \textcolor{fig1teal}{the} \textcolor{fig1teal}{amount} \textcolor{fig1teal}{of} \textcolor{fig1teal}{media} \textcolor{fig1teal}{company} \textcolor{fig1teal}{and} \textcolor{fig1teal}{me}\textcolor{fig1teal}{of}\textcolor{fig1teal}{us} \textcolor{fig1teal}{men} \textcolor{fig1teal}{at} \textcolor{fig1teal}{your} \textcolor{fig1teal}{side}\textcolor{fig1teal}{,} \textcolor{fig1teal}{all}\textcolor{fig1teal}{,} \textcolor{fig1teal}{and} \textcolor{fig1teal}{y}\textcolor{fig1teal}{anking} \textcolor{fig1teal}{you} \textcolor{fig1teal}{with} \textcolor{fig1teal}{a} \textcolor{fig1teal}{sub} \textcolor{fig1teal}{while} \textcolor{fig1teal}{halfway} \textcolor{fig1teal}{it} \textcolor{fig1teal}{through} \textcolor{fig1teal}{along}\textcolor{fig1teal}{?}\textcolor{fig1teal}{�}\textcolor{fig1teal}{�}\textcolor{fig1teal}{�}\textcolor{fig1teal}{�}  \textcolor{fig1teal}{But} \textcolor{fig1teal}{B}\textcolor{fig1teal}{ale} \textcolor{fig1teal}{is} \textcolor{fig1teal}{also} \textcolor{fig1teal}{convinced} \textcolor{fig1teal}{that} \textcolor{fig1teal}{her} \textcolor{fig1teal}{own} \textcolor{fig1teal}{show}\textcolor{fig1teal}{�}\textcolor{fig1teal}{�}\textcolor{fig1teal}{s} \textcolor{fig1teal}{lack} \textcolor{fig1teal}{of} \textcolor{fig1teal}{control} \textcolor{fig1teal}{around} \textcolor{fig1teal}{more} \textcolor{fig1teal}{is} \textcolor{fig1teal}{more} \textcolor{fig1teal}{directed} \textcolor{fig1teal}{at} \textcolor{fig1teal}{children}\textcolor{fig1teal}{.} \textcolor{fig1teal}{�}\textcolor{fig1teal}{�}\textcolor{fig1teal}{And} \textcolor{fig1teal}{yes}\textcolor{fig1teal}{,} \textcolor{fig1teal}{they}\textcolor{fig1teal}{�}\textcolor{fig1teal}{�}\textcolor{fig1teal}{re} \textcolor{fig1teal}{quite} \textcolor{fig1teal}{certainly}\textcolor{fig1teal}{,} \textcolor{fig1teal}{why} \textcolor{fig1teal}{not} \textcolor{fig1teal}{the} \textcolor{fig1teal}{most} \textcolor{fig1teal}{part}\textcolor{fig1teal}{,} \textcolor{fig1teal}{from} \textcolor{fig1teal}{the} \textcolor{fig1teal}{discussion}\textcolor{fig1teal}{,} \textcolor{fig1teal}{most} \textcolor{fig1teal}{of} \textcolor{fig1teal}{the} \textcolor{fig1teal}{world}\textcolor{fig1teal}{,} \textcolor{fig1teal}{I} \textcolor{fig1teal}{have} \textcolor{fig1teal}{seen} \textcolor{fig1teal}{with}\textcolor{fig1teal}{,} \textcolor{fig1teal}{can} \textcolor{fig1teal}{one} \textcolor{fig1teal}{create} \textcolor{fig1teal}{that}\textcolor{fig1teal}{.}\textcolor{fig1teal}{�}\textcolor{fig1teal}{�} \textcolor{fig1teal}{He} \textcolor{fig1teal}{was} \textcolor{fig1teal}{asked} \textcolor{fig1teal}{about} \textcolor{fig1teal}{that} \textcolor{fig1teal}{when} \textcolor{fig1teal}{she} \textcolor{fig1teal}{said} \textcolor{fig1teal}{he} \textcolor{fig1teal}{asked} \textcolor{fig1teal}{Larry} \textcolor{fig1teal}{K}\textcolor{fig1teal}{.} \textcolor{fig1teal}{to} \textcolor{fig1teal}{star} \textcolor{fig1teal}{on} \textcolor{fig1teal}{the} \textcolor{fig1teal}{show}\textcolor{fig1teal}{.} \textcolor{fig1teal}{In} \textcolor{fig1teal}{reality}\textcolor{fig1teal}{,} \textcolor{fig1teal}{these} \textcolor{fig1teal}{threats} \textcolor{fig1teal}{get} \textcolor{fig1teal}{into} \textcolor{fig1teal}{the} \textcolor{fig1teal}{line} \textcolor{fig1teal}{of} \textcolor{fig1teal}{questioning} \textcolor{fig1teal}{between} \textcolor{fig1teal}{a} \textcolor{fig1teal}{number} \textcolor{fig1teal}{of} \textcolor{fig1teal}{circuit}\textcolor{fig1teal}{-}\textcolor{fig1teal}{attack}\textcolor{fig1teal}{ers} \textcolor{fig1teal}{and} \textcolor{fig1teal}{the} \textcolor{fig1teal}{people} \textcolor{fig1teal}{she}\textcolor{fig1teal}{�}\textcolor{fig1teal}{�}\textcolor{fig1teal}{s} \textcolor{fig1teal}{trying} \textcolor{fig1teal}{to} \textcolor{fig1teal}{keep} \textcolor{fig1teal}{on}\textcolor{fig1teal}{.}\textcolor{fig1teal}{�}\textcolor{fig1teal}{�}\textcolor{fig1teal}{I} \textcolor{fig1teal}{have} \textcolor{fig1teal}{just} \textcolor{fig1teal}{been} \textcolor{fig1teal}{surprised} \textcolor{fig1teal}{by} \textcolor{fig1teal}{your} \textcolor{fig1teal}{response} \textcolor{fig1teal}{to} \textcolor{fig1teal}{had} \textcolor{fig1teal}{so} \textcolor{fig1teal}{much} \textcolor{fig1teal}{needs} \textcolor{fig1teal}{to} \textcolor{fig1teal}{clear}\textcolor{fig1teal}{.} \textcolor{fig1teal}{I} \textcolor{fig1teal}{say}\textcolor{fig1teal}{,} \textcolor{fig1teal}{well}\textcolor{fig1teal}{,} \textcolor{fig1teal}{that}\textcolor{fig1teal}{�}\textcolor{fig1teal}{�}\textcolor{fig1teal}{s} \textcolor{fig1teal}{me} \textcolor{fig1teal}{why} \textcolor{fig1teal}{we}\textcolor{fig1teal}{�}\textcolor{fig1teal}{�}\textcolor{fig1teal}{re} \textcolor{fig1teal}{just} \textcolor{fig1teal}{as} \textcolor{fig1teal}{ready} \textcolor{fig1teal}{for} \textcolor{fig1teal}{our} \textcolor{fig1teal}{work}\textcolor{fig1teal}{.} \textcolor{fig1teal}{In} \textcolor{fig1teal}{my} \textcolor{fig1teal}{campaign}\textcolor{fig1teal}{,} \textcolor{fig1teal}{we} \textcolor{fig1teal}{have} \textcolor{fig1teal}{done} \textcolor{fig1teal}{it} \textcolor{fig1teal}{on} \textcolor{fig1teal}{their} \textcolor{fig1teal}{own}\textcolor{fig1teal}{,} \textcolor{fig1teal}{and} \textcolor{fig1teal}{the} \textcolor{fig1teal}{results} \textcolor{fig1teal}{have} \textcolor{fig1teal}{been} \textcolor{fig1teal}{positive}\textcolor{fig1teal}{.} \textcolor{fig1teal}{This} \textcolor{fig1teal}{post} \textcolor{fig1teal}{was} \textcolor{fig1teal}{a} \textcolor{fig1teal}{last} \textcolor{fig1teal}{thing}\textcolor{fig1teal}{.} \textcolor{fig1teal}{I} \textcolor{fig1teal}{want} \textcolor{fig1teal}{to} \textcolor{fig1teal}{say} \textcolor{fig1teal}{[}\textcolor{fig1teal}{1}\textcolor{fig1teal}{]} \textcolor{fig1teal}{�}\textcolor{fig1teal}{�}\textcolor{fig1teal}{30} \textcolor{fig1teal}{years} \textcolor{fig1teal}{later}\textcolor{fig1teal}{.}\textcolor{fig1teal}{�}\textcolor{fig1teal}{�} \textcolor{fig1teal}{When} \textcolor{fig1teal}{she} \textcolor{fig1teal}{needed} \textcolor{fig1teal}{�}\textcolor{fig1teal}{�}\textcolor{fig1teal}{30} \textcolor{fig1teal}{years} \textcolor{fig1teal}{to} \textcolor{fig1teal}{make} \textcolor{fig1teal}{the} \textcolor{fig1teal}{news}\textcolor{fig1teal}{,}\textcolor{fig1teal}{�}\textcolor{fig1teal}{�} \textcolor{fig1teal}{we} \textcolor{fig1teal}{gave} \textcolor{fig1teal}{the} \textcolor{fig1teal}{U}\textcolor{fig1teal}{.}\textcolor{fig1teal}{S}\textcolor{fig1teal}{.} \textcolor{fig1teal}{news} \textcolor{fig1teal}{and} \textcolor{fig1teal}{done} \textcolor{fig1teal}{this} \textcolor{fig1teal}{year} \textcolor{fig1teal}{when} \textcolor{fig1teal}{only} \textcolor{fig1teal}{10}\textcolor{fig1teal}{-}\textcolor{fig1teal}{20} \textcolor{fig1teal}{percent} \textcolor{fig1teal}{of} \textcolor{fig1teal}{the} \textcolor{fig1teal}{gas}\textcolor{fig1teal}{-}\textcolor{fig1teal}{use} \textcolor{fig1teal}{industry} \textcolor{fig1teal}{is}\textcolor{fig1teal}{.} \textcolor{fig1teal}{The} \textcolor{fig1teal}{news} \textcolor{fig1teal}{to} \textcolor{fig1teal}{the} \textcolor{fig1teal}{people} \textcolor{fig1teal}{of} \textcolor{fig1teal}{this} \textcolor{fig1teal}{region} \textcolor{fig1teal}{can} \textcolor{fig1teal}{come} \textcolor{fig1teal}{any} \textcolor{fig1teal}{time}\textcolor{fig1teal}{.} \textcolor{fig1teal}{My} \textcolor{fig1teal}{organization} \textcolor{fig1teal}{used} \textcolor{fig1teal}{these} \textcolor{fig1teal}{weapons} \textcolor{fig1teal}{and} \textcolor{fig1teal}{lied} \textcolor{fig1teal}{to} \textcolor{fig1teal}{the} \textcolor{fig1teal}{problem} \textcolor{fig1teal}{during} \textcolor{fig1teal}{the} \textcolor{fig1teal}{days} \textcolor{fig1teal}{of} \textcolor{fig1teal}{Vietnam}\textcolor{fig1teal}{,} \textcolor{fig1teal}{which} \textcolor{fig1teal}{has} \textcolor{fig1teal}{a} \textcolor{fig1teal}{huge} \textcolor{fig1teal}{impact} \textcolor{fig1teal}{on} \textcolor{fig1teal}{the} \textcolor{fig1teal}{50}\textcolor{fig1teal}{/}\textcolor{fig1teal}{Co}\textcolor{fig1teal}{al} \textcolor{fig1teal}{industry}\textcolor{fig1teal}{.} \textcolor{fig1teal}{I} \textcolor{fig1teal}{am} \textcolor{fig1teal}{glad} \textcolor{fig1teal}{for} \textcolor{fig1teal}{the} \textcolor{fig1teal}{process} \textcolor{fig1teal}{of} \textcolor{fig1teal}{taking} \textcolor{fig1teal}{this} \textcolor{fig1teal}{responsibility} \textcolor{fig1teal}{on}\textcolor{fig1teal}{.} \textcolor{fig1teal}{Americans} \textcolor{fig1teal}{have} \textcolor{fig1teal}{the} \textcolor{fig1teal}{right} \textcolor{fig1teal}{to} \textcolor{fig1teal}{know} \textcolor{fig1teal}{with} \textcolor{fig1teal}{you} \textcolor{fig1teal}{top} \textcolor{fig1teal}{of} \textcolor{fig1teal}{where} \textcolor{fig1teal}{you} \textcolor{fig1teal}{stand} \textcolor{fig1teal}{or} \textcolor{fig1teal}{we} \textcolor{fig1teal}{per} \textcolor{fig1teal}{side} \textcolor{fig1teal}{with} \textcolor{fig1teal}{that} \textcolor{fig1teal}{first} \textcolor{fig1teal}{of}\textcolor{fig1teal}{.} \textcolor{fig1teal}{You}\textcolor{fig1teal}{,} \textcolor{fig1teal}{the} \textcolor{fig1teal}{people} \textcolor{fig1teal}{and} \textcolor{fig1teal}{other} \textcolor{fig1teal}{people} \textcolor{fig1teal}{do}\textcolor{fig1teal}{,} \textcolor{fig1teal}{and} \textcolor{fig1teal}{get} \textcolor{fig1teal}{ahead} \textcolor{fig1teal}{of} \textcolor{fig1teal}{what} \textcolor{fig1teal}{you} \textcolor{fig1teal}{are} \textcolor{fig1teal}{asked} \textcolor{fig1teal}{for}\textcolor{fig1teal}{.} \textcolor{fig1teal}{We} \textcolor{fig1teal}{are} \textcolor{fig1teal}{not} \textcolor{fig1teal}{out} \textcolor{fig1teal}{for} \textcolor{fig1teal}{modern}\textcolor{fig1teal}{ization} \textcolor{fig1teal}{of} \textcolor{fig1teal}{our} \textcolor{fig1teal}{industry} \textcolor{fig1teal}{now}\textcolor{fig1teal}{,} \textcolor{fig1teal}{and} \textcolor{fig1teal}{the} \textcolor{fig1teal}{strength} \textcolor{fig1teal}{of} \textcolor{fig1teal}{our} \textcolor{fig1teal}{industry} \textcolor{fig1teal}{is} \textcolor{fig1teal}{very} \textcolor{fig1teal}{great}\textcolor{fig1teal}{.}  \textcolor{fig1teal}{I} \textcolor{fig1teal}{told} \textcolor{fig1teal}{anyone} \textcolor{fig1teal}{listening} \textcolor{fig1teal}{to} \textcolor{fig1teal}{me} \textcolor{fig1teal}{about} \textcolor{fig1teal}{that} \textcolor{fig1teal}{as} \textcolor{fig1teal}{well}\textcolor{fig1teal}{.}  \textcolor{fig1teal}{F}\textcolor{fig1teal}{n}\textcolor{fig1teal}{'t} \textcolor{fig1teal}{it} \textcolor{fig1teal}{be} \textcolor{fig1teal}{fine}\textcolor{fig1teal}{?}  \textcolor{fig1teal}{(} \textcolor{fig1teal}{here}\textcolor{fig1teal}{'s} \textcolor{fig1teal}{my} \textcolor{fig1teal}{response}\textcolor{fig1teal}{,} \textcolor{fig1teal}{I} \textcolor{fig1teal}{said} \textcolor{fig1teal}{yes} \textcolor{fig1teal}{that}\textcolor{fig1teal}{'s} \textcolor{fig1teal}{some} \textcolor{fig1teal}{policy} \textcolor{fig1teal}{in} \textcolor{fig1teal}{action} \textcolor{fig1teal}{from} \textcolor{fig1teal}{the} \textcolor{fig1teal}{person} \textcolor{fig1teal}{I}\textcolor{fig1teal}{.} \textcolor{fig1teal}{However}\textcolor{fig1teal}{,} \textcolor{fig1teal}{I} \textcolor{fig1teal}{think} \textcolor{fig1teal}{our} \textcolor{fig1teal}{national} \textcolor{fig1teal}{leaders} \textcolor{fig1teal}{and} \textcolor{fig1teal}{companies} \textcolor{fig1teal}{called} \textcolor{fig1teal}{for} \textcolor{fig1teal}{a} \textcolor{fig1teal}{up} \textcolor{fig1teal}{or} \textcolor{fig1teal}{down} \textcolor{fig1teal}{when} \textcolor{fig1teal}{said} \textcolor{fig1teal}{an} \textcolor{fig1teal}{issue}\textcolor{fig1teal}{.} \textcolor{fig1teal}{We} \textcolor{fig1teal}{and} \textcolor{fig1teal}{out} \textcolor{fig1teal}{or} \textcolor{fig1teal}{the} \textcolor{fig1teal}{"}\textcolor{fig1teal}{the} \textcolor{fig1teal}{one} \textcolor{fig1teal}{of} \textcolor{fig1teal}{which} \textcolor{fig1teal}{was} \textcolor{fig1teal}{no} \textcolor{fig1teal}{one} \textcolor{fig1teal}{I} \textcolor{fig1teal}{was} \textcolor{fig1teal}{hearing} \textcolor{fig1teal}{from}\textcolor{fig1teal}{.}  \textcolor{fig1teal}{I} \textcolor{fig1teal}{think} \textcolor{fig1teal}{it} \textcolor{fig1teal}{is} \textcolor{fig1teal}{just} \textcolor{fig1teal}{fine} \textcolor{fig1teal}{in} \textcolor{fig1teal}{every} \textcolor{fig1teal}{way}\textcolor{fig1teal}{.} \textcolor{fig1teal}{I} \textcolor{fig1teal}{will} \textcolor{fig1teal}{continue} \textcolor{fig1teal}{if} \textcolor{fig1teal}{you} \textcolor{fig1teal}{at} \textcolor{fig1teal}{I} \textcolor{fig1teal}{feel} \textcolor{fig1teal}{like}\textcolor{fig1teal}{.}  \textcolor{fig1teal}{I} \textcolor{fig1teal}{know} \textcolor{fig1teal}{you} \textcolor{fig1teal}{would} \textcolor{fig1teal}{like} \textcolor{fig1teal}{us} \textcolor{fig1teal}{to}\textcolor{fig1teal}{,} \textcolor{fig1teal}{as} \textcolor{fig1teal}{a} \textcolor{fig1teal}{part} \textcolor{fig1teal}{of} \textcolor{fig1teal}{this} \textcolor{fig1teal}{business}\textcolor{fig1teal}{,} \textcolor{fig1teal}{listen} \textcolor{fig1teal}{to} \textcolor{fig1teal}{us} \textcolor{fig1teal}{without} \textcolor{fig1teal}{the} \textcolor{fig1teal}{news}\textcolor{fig1teal}{.}\textcolor{fig1teal}{This} \textcolor{fig1teal}{is} \textcolor{fig1teal}{not} \textcolor{fig1teal}{�}\textcolor{fig1teal}{�}\textcolor{fig1teal}{The} \textcolor{fig1teal}{Coup}\textcolor{fig1teal}{,}\textcolor{fig1teal}{�}\textcolor{fig1teal}{�} \textcolor{fig1teal}{by} \textcolor{fig1teal}{Robert} \textcolor{fig1teal}{Ar}\textcolor{fig1teal}{is}\textcolor{fig1teal}{,} \textcolor{fig1teal}{put} \textcolor{fig1teal}{off} \textcolor{fig1teal}{for} \textcolor{fig1teal}{office} \textcolor{fig1teal}{at} \textcolor{fig1teal}{the} \textcolor{fig1teal}{top} \textcolor{fig1teal}{of} \textcolor{fig1teal}{the} \textcolor{fig1teal}{year}\textcolor{fig1teal}{-}\textcolor{fig1teal}{the}\textcolor{fig1teal}{-} \textcolor{fig1teal}{news}\textcolor{fig1teal}{.} \textcolor{fig1teal}{In} \textcolor{fig1teal}{Russia}\textcolor{fig1teal}{,} \textcolor{fig1teal}{it} \textcolor{fig1teal}{a} \textcolor{fig1teal}{lot} \textcolor{fig1teal}{like}\textcolor{fig1teal}{,} \textcolor{fig1teal}{his} \textcolor{fig1teal}{five} \textcolor{fig1teal}{state} \textcolor{fig1teal}{secretaries}\textcolor{fig1teal}{,} \textcolor{fig1teal}{officials} \textcolor{fig1teal}{and} \textcolor{fig1teal}{even} \textcolor{fig1teal}{said} \textcolor{fig1teal}{they} \textcolor{fig1teal}{did} \textcolor{fig1teal}{and} \textcolor{fig1teal}{wrote} \textcolor{fig1teal}{about} \textcolor{fig1teal}{through} \textcolor{fig1teal}{the} \textcolor{fig1teal}{daily} \textcolor{fig1teal}{R}\textcolor{fig1teal}{.}\textcolor{fig1teal}{T}\textcolor{fig1teal}{.} \textcolor{fig1teal}{Bas}\textcolor{fig1teal}{is} \textcolor{fig1teal}{already}\textcolor{fig1teal}{,} \textcolor{fig1teal}{such} \textcolor{fig1teal}{as} \textcolor{fig1teal}{"}\textcolor{fig1teal}{c}\textcolor{fig1teal}{and}\textcolor{fig1teal}{aries}\textcolor{fig1teal}{"} \textcolor{fig1teal}{to} \textcolor{fig1teal}{the} \textcolor{fig1teal}{US}\textcolor{fig1teal}{.}  \textcolor{fig1teal}{"}\textcolor{fig1teal}{They}\textcolor{fig1teal}{,} \textcolor{fig1teal}{the} \textcolor{fig1teal}{government}\textcolor{fig1teal}{,} \textcolor{fig1teal}{were} \textcolor{fig1teal}{dependent}\textcolor{fig1teal}{,} \textcolor{fig1teal}{not} \textcolor{fig1teal}{on} \textcolor{fig1teal}{them}\textcolor{fig1teal}{,} \textcolor{fig1teal}{but} \textcolor{fig1teal}{muff}\textcolor{fig1teal}{led} \textcolor{fig1teal}{and} \textcolor{fig1teal}{disinfect}\textcolor{fig1teal}{ed} \textcolor{fig1teal}{by} \textcolor{fig1teal}{a} \textcolor{fig1teal}{right}\textcolor{fig1teal}{-}\textcolor{fig1teal}{of}\textcolor{fig1teal}{-}\textcolor{fig1teal}{center} \textcolor{fig1teal}{media} \textcolor{fig1teal}{cover} \textcolor{fig1teal}{from} \textcolor{fig1teal}{America}\textcolor{fig1teal}{,"} \textcolor{fig1teal}{wrote} \textcolor{fig1teal}{his} \textcolor{fig1teal}{friend}\textcolor{fig1teal}{.} \textcolor{fig1teal}{"}\textcolor{fig1teal}{They} \textcolor{fig1teal}{did} \textcolor{fig1teal}{then} \textcolor{fig1teal}{much} \textcolor{fig1teal}{less} \textcolor{fig1teal}{from} \textcolor{fig1teal}{I}\textcolor{fig1teal}{,} \textcolor{fig1teal}{who} \textcolor{fig1teal}{wanted} \textcolor{fig1teal}{to} \textcolor{fig1teal}{understand} \textcolor{fig1teal}{the} \textcolor{fig1teal}{country}\textcolor{fig1teal}{,} \textcolor{fig1teal}{and} \textcolor{fig1teal}{made} \textcolor{fig1teal}{up} \textcolor{fig1teal}{its} \textcolor{fig1teal}{about} \textcolor{fig1teal}{both} \textcolor{fig1teal}{past} \textcolor{fig1teal}{and} \textcolor{fig1teal}{present}\textcolor{fig1teal}{."} \textcolor{fig1teal}{No}\textcolor{fig1teal}{,} \textcolor{fig1teal}{we}\textcolor{fig1teal}{'re} \textcolor{fig1teal}{upset} \textcolor{fig1teal}{a} \textcolor{fig1teal}{little} \textcolor{fig1teal}{much} \textcolor{fig1teal}{about} \textcolor{fig1teal}{the} \textcolor{fig1teal}{data}\textcolor{fig1teal}{,} \textcolor{fig1teal}{from} \textcolor{fig1teal}{then}\textcolor{fig1teal}{-}\textcolor{fig1teal}{on} \textcolor{fig1teal}{the} \textcolor{fig1teal}{only} \textcolor{fig1teal}{point} \textcolor{fig1teal}{since} \textcolor{fig1teal}{the} \textcolor{fig1teal}{end} \textcolor{fig1teal}{of} \textcolor{fig1teal}{the} \textcolor{fig1teal}{C}\textcolor{fig1teal}{ATE}\textcolor{fig1teal}{,} \textcolor{fig1teal}{we} \textcolor{fig1teal}{may} \textcolor{fig1teal}{have} \textcolor{fig1teal}{lost} \textcolor{fig1teal}{part} \textcolor{fig1teal}{of} \textcolor{fig1teal}{the} \textcolor{fig1teal}{data}\textcolor{fig1teal}{.}  \textcolor{fig1teal}{The} \textcolor{fig1teal}{in} \textcolor{fig1teal}{particular} \textcolor{fig1teal}{range} \textcolor{fig1teal}{of} \textcolor{fig1teal}{the} \textcolor{fig1teal}{shows} \textcolor{fig1teal}{on} \textcolor{fig1teal}{Wednesday} \textcolor{fig1teal}{by} \textcolor{fig1teal}{New}\textcolor{fig1teal}{Times} \textcolor{fig1teal}{were} \textcolor{fig1teal}{the} \textcolor{fig1teal}{first} \textcolor{fig1teal}{of} \textcolor{fig1teal}{the} \textcolor{fig1teal}{Q}\textcolor{fig1teal}{1} \textcolor{fig1teal}{period}\textcolor{fig1teal}{.} \textcolor{fig1teal}{The} \textcolor{fig1teal}{non}\textcolor{fig1teal}{-}\textcolor{fig1teal}{State} \textcolor{fig1teal}{shows} \textcolor{fig1teal}{aired} \textcolor{fig1teal}{on} \textcolor{fig1teal}{pp}\textcolor{fig1teal}{.} \textcolor{fig1teal}{1}\textcolor{fig1teal}{,} \textcolor{fig1teal}{11}\textcolor{fig1teal}{,} \textcolor{fig1teal}{and} \textcolor{fig1teal}{14}\textcolor{fig1teal}{,} \textcolor{fig1teal}{while} \textcolor{fig1teal}{its} \textcolor{fig1teal}{11}\textcolor{fig1teal}{.}\textcolor{fig1teal}{6}\textcolor{fig1teal}{16}\textcolor{fig1teal}{.}  \textcolor{fig1teal}{It}\textcolor{fig1teal}{'s} \textcolor{fig1teal}{"}\textcolor{fig1teal}{17}\textcolor{fig1teal}{.} \textcolor{fig1teal}{26}\textcolor{fig1teal}{"} \textcolor{fig1teal}{that} \textcolor{fig1teal}{it} \textcolor{fig1teal}{were} \textcolor{fig1teal}{the} \textcolor{fig1teal}{last} \textcolor{fig1teal}{air} \textcolor{fig1teal}{shows} \textcolor{fig1teal}{in} \textcolor{fig1teal}{that} \textcolor{fig1teal}{period}\textcolor{fig1teal}{,} \textcolor{fig1teal}{not} \textcolor{fig1teal}{the} \textcolor{fig1teal}{first} \textcolor{fig1teal}{per} \textcolor{fig1teal}{report}\textcolor{fig1teal}{.} \textcolor{fig1teal}{In} \textcolor{fig1teal}{that} \textcolor{fig1teal}{period} \textcolor{fig1teal}{officials} \textcolor{fig1teal}{provision}\textcolor{fig1teal}{ally} \textcolor{fig1teal}{had} \textcolor{fig1teal}{2}\textcolor{fig1teal}{,}\textcolor{fig1teal}{000} \textcolor{fig1teal}{online} \textcolor{fig1teal}{TV} \textcolor{fig1teal}{viewers}\textcolor{fig1teal}{,} \textcolor{fig1teal}{and} \textcolor{fig1teal}{they} \textcolor{fig1teal}{made} \textcolor{fig1teal}{up} \textcolor{fig1teal}{12}\textcolor{fig1teal}{.}\textcolor{fig1teal}{2} \textcolor{fig1teal}{shows} \textcolor{fig1teal}{per} \textcolor{fig1teal}{100}\textcolor{fig1teal}{,}\textcolor{fig1teal}{000} \textcolor{fig1teal}{viewers}\textcolor{fig1teal}{.} \textcolor{fig1teal}{However}\textcolor{fig1teal}{,} \textcolor{fig1teal}{the} \textcolor{fig1teal}{19}\textcolor{fig1teal}{.}\textcolor{fig1teal}{25} \textcolor{fig1teal}{range} \textcolor{fig1teal}{didn}\textcolor{fig1teal}{'t} \textcolor{fig1teal}{change} \textcolor{fig1teal}{further}\textcolor{fig1teal}{,} \textcolor{fig1teal}{the} \textcolor{fig1teal}{same} \textcolor{fig1teal}{year} \textcolor{fig1teal}{in} \textcolor{fig1teal}{2012}\textcolor{fig1teal}{,} \textcolor{fig1teal}{and} \textcolor{fig1teal}{a} \textcolor{fig1teal}{few} \textcolor{fig1teal}{other} \textcolor{fig1teal}{shows} \textcolor{fig1teal}{were} \textcolor{fig1teal}{added} \textcolor{fig1teal}{in} \textcolor{fig1teal}{that} \textcolor{fig1teal}{period}\textcolor{fig1teal}{.}  \textcolor{fig1teal}{But} \textcolor{fig1teal}{this} \textcolor{fig1teal}{season} \textcolor{fig1teal}{back} \textcolor{fig1teal}{it} \textcolor{fig1teal}{up} \textcolor{fig1teal}{more} \textcolor{fig1teal}{like} \textcolor{fig1teal}{America} \textcolor{fig1teal}{and} \textcolor{fig1teal}{Italy}\textcolor{fig1teal}{:}  \textcolor{fig1teal}{Steve}\textcolor{fig1teal}{:} \textcolor{fig1teal}{In} \textcolor{fig1teal}{today}\textcolor{fig1teal}{'s} \textcolor{fig1teal}{interview} \textcolor{fig1teal}{live} \textcolor{fig1teal}{on} \textcolor{fig1teal}{He}
\end{samplebox}
\caption{Posterior Refinement trajectory on OpenWebText, Sample-1. (cont.)}
\label{fig:refine_sample_owt_1_r3}
\end{figure}
\clearpage

\begin{figure}[H]
\centering
\begin{samplebox}{\normalfont\textbf{$\ours$ (PR), Round: 1} \hfill \normalfont\scriptsize \textcolor{darkgray}{Committed: \textbf{719/1024}}}
\scriptsize\linespread{0.9}\selectfont
\textcolor{fig1teal}{.}  \textcolor{fig1ochre}{Ped}\textcolor{fig1ochre}{than} \textcolor{fig1teal}{is} \textcolor{fig1teal}{another} \textcolor{fig1ochre}{man} \textcolor{fig1ochre}{named} \textcolor{fig1ochre}{Zack}\textcolor{fig1teal}{.} \textcolor{fig1teal}{He} \textcolor{fig1teal}{has} \textcolor{fig1teal}{been} \textcolor{fig1ochre}{practicing} \textcolor{fig1teal}{on} \textcolor{fig1teal}{a} \textcolor{fig1ochre}{presidential} \textcolor{fig1ochre}{trap} \textcolor{fig1teal}{for} \textcolor{fig1teal}{9} \textcolor{fig1teal}{years}\textcolor{fig1teal}{,} \textcolor{fig1teal}{shooting}\textcolor{fig1ochre}{wn} \textcolor{fig1ochre}{nights} \textcolor{fig1teal}{in} \textcolor{fig1ochre}{opposed} \textcolor{fig1teal}{parts} \textcolor{fig1teal}{of} \textcolor{fig1ochre}{Bain}\textcolor{fig1ochre}{bush}\textcolor{fig1teal}{.} \textcolor{fig1teal}{I} \textcolor{fig1ochre}{loved} \textcolor{fig1ochre}{Owen}\textcolor{fig1teal}{'s} \textcolor{fig1ochre}{perspective} \textcolor{fig1teal}{that} \textcolor{fig1teal}{after} \textcolor{fig1teal}{the} \textcolor{fig1teal}{recent} \textcolor{fig1ochre}{robbery} \textcolor{fig1teal}{in} \textcolor{fig1teal}{front} \textcolor{fig1teal}{of} \textcolor{fig1teal}{school}\textcolor{fig1teal}{,} \textcolor{fig1teal}{many} \textcolor{fig1ochre}{reading} \textcolor{fig1teal}{of} \textcolor{fig1teal}{them} \textcolor{fig1ochre}{transformed} \textcolor{fig1teal}{it} \textcolor{fig1teal}{into} \textcolor{fig1teal}{a} \textcolor{fig1ochre}{fountain} \textcolor{fig1teal}{of} \textcolor{fig1ochre}{grass}\textcolor{fig1teal}{.}  \textcolor{fig1ochre}{Merc} \textcolor{fig1ochre}{afterward}\textcolor{fig1teal}{,} \textcolor{fig1ochre}{el}\textcolor{fig1ochre}{MW} \textcolor{fig1ochre}{David} \textcolor{fig1ochre}{Larry} \textcolor{fig1ochre}{Mac}\textcolor{fig1teal}{p}\textcolor{fig1ochre}{onder} \textcolor{fig1ochre}{directed} \textcolor{fig1teal}{no} \textcolor{fig1teal}{return}\textcolor{fig1teal}{.} \textcolor{fig1ochre}{When} \textcolor{fig1ochre}{talking} \textcolor{fig1teal}{to} \textcolor{fig1teal}{the} \textcolor{fig1teal}{press} \textcolor{fig1ochre}{shortly} \textcolor{fig1teal}{after} \textcolor{fig1teal}{the} \textcolor{fig1ochre}{district} \textcolor{fig1ochre}{raid}\textcolor{fig1teal}{,} \textcolor{fig1ochre}{Elk} \textcolor{fig1ochre}{Cross} \textcolor{fig1ochre}{national} \textcolor{fig1ochre}{SL}\textcolor{fig1teal}{R} \textcolor{fig1ochre}{Rep} \textcolor{fig1ochre}{Dave} \textcolor{fig1ochre}{Shore} \textcolor{fig1ochre}{tells} \textcolor{fig1teal}{me} \textcolor{fig1teal}{the} \textcolor{fig1teal}{news}\textcolor{fig1teal}{:}  \textcolor{fig1teal}{�}\textcolor{fig1teal}{�}\textcolor{fig1teal}{To} \textcolor{fig1ochre}{tell} \textcolor{fig1teal}{it} \textcolor{fig1teal}{in} \textcolor{fig1teal}{an} \textcolor{fig1teal}{office} \textcolor{fig1teal}{that} \textcolor{fig1teal}{is} \textcolor{fig1teal}{like} \textcolor{fig1teal}{a} \textcolor{fig1ochre}{lake} \textcolor{fig1teal}{—} \textcolor{fig1teal}{we} \textcolor{fig1ochre}{remind} \textcolor{fig1ochre}{conversations} \textcolor{fig1teal}{of} \textcolor{fig1teal}{our} \textcolor{fig1ochre}{ties} \textcolor{fig1teal}{with} \textcolor{fig1teal}{the} \textcolor{fig1ochre}{establishment}\textcolor{fig1teal}{.}\textcolor{fig1teal}{�}\textcolor{fig1teal}{�}  \textcolor{fig1teal}{It} \textcolor{fig1teal}{was} \textcolor{fig1teal}{the} \textcolor{fig1teal}{first} \textcolor{fig1teal}{police} \textcolor{fig1ochre}{raid} \textcolor{fig1teal}{for} \textcolor{fig1teal}{an} \textcolor{fig1ochre}{owner} \textcolor{fig1teal}{in} \textcolor{fig1ochre}{strange} \textcolor{fig1ochre}{systems} \textcolor{fig1teal}{of} \textcolor{fig1ochre}{Pokemon}\textcolor{fig1teal}{.} \textcolor{fig1teal}{It} \textcolor{fig1teal}{was} \textcolor{fig1teal}{also} \textcolor{fig1teal}{that} \textcolor{fig1ochre}{Arpaio} \textcolor{fig1ochre}{chose} \textcolor{fig1teal}{to} \textcolor{fig1teal}{open} \textcolor{fig1ochre}{dried} \textcolor{fig1ochre}{gas} \textcolor{fig1teal}{and} \textcolor{fig1ochre}{EP} \textcolor{fig1ochre}{Broadway}\textcolor{fig1teal}{,} \textcolor{fig1teal}{and} \textcolor{fig1teal}{then} \textcolor{fig1teal}{was}\textcolor{fig1teal}{,} \textcolor{fig1teal}{to} \textcolor{fig1ochre}{mute} \textcolor{fig1teal}{short} \textcolor{fig1teal}{his} \textcolor{fig1ochre}{successor}\textcolor{fig1teal}{,} \textcolor{fig1ochre}{Waters}\textcolor{fig1teal}{,} \textcolor{fig1teal}{who} \textcolor{fig1teal}{has} \textcolor{fig1teal}{served} \textcolor{fig1teal}{as} \textcolor{fig1teal}{software} \textcolor{fig1ochre}{analyst} \textcolor{fig1teal}{for} \textcolor{fig1ochre}{Wireless}\textcolor{fig1teal}{,} \textcolor{fig1ochre}{Cisco}\textcolor{fig1teal}{,} \textcolor{fig1teal}{and} \textcolor{fig1ochre}{Home}\textcolor{fig1ochre}{ry} \textcolor{fig1ochre}{Brands}\textcolor{fig1teal}{.}  \textcolor{fig1teal}{�}\textcolor{fig1teal}{�}\textcolor{fig1teal}{There}\textcolor{fig1teal}{�}\textcolor{fig1teal}{�}\textcolor{fig1teal}{s} \textcolor{fig1teal}{a} \textcolor{fig1teal}{lot} \textcolor{fig1teal}{for} \textcolor{fig1teal}{me} \textcolor{fig1teal}{to} \textcolor{fig1ochre}{raise} \textcolor{fig1teal}{with} \textcolor{fig1teal}{myself} \textcolor{fig1teal}{that} \textcolor{fig1teal}{would} \textcolor{fig1ochre}{affect} \textcolor{fig1teal}{me}\textcolor{fig1teal}{.} \textcolor{fig1teal}{I} \textcolor{fig1teal}{think} \textcolor{fig1teal}{this} \textcolor{fig1teal}{is} \textcolor{fig1teal}{my} \textcolor{fig1teal}{first} \textcolor{fig1ochre}{things}\textcolor{fig1teal}{,}\textcolor{fig1teal}{�}\textcolor{fig1teal}{�} \textcolor{fig1teal}{he} \textcolor{fig1teal}{says}\textcolor{fig1teal}{.} \textcolor{fig1teal}{�}\textcolor{fig1teal}{�}\textcolor{fig1teal}{These} \textcolor{fig1teal}{were} \textcolor{fig1teal}{the} \textcolor{fig1ochre}{customer} \textcolor{fig1ochre}{addresses}\textcolor{fig1teal}{.} \textcolor{fig1teal}{If} \textcolor{fig1ochre}{nobody} \textcolor{fig1ochre}{refuses} \textcolor{fig1teal}{to} \textcolor{fig1teal}{see} \textcolor{fig1teal}{the} \textcolor{fig1teal}{new} \textcolor{fig1ochre}{steel} \textcolor{fig1ochre}{Trust} \textcolor{fig1teal}{was} \textcolor{fig1teal}{made} \textcolor{fig1teal}{to} \textcolor{fig1ochre}{disable} \textcolor{fig1teal}{all} \textcolor{fig1teal}{of} \textcolor{fig1teal}{him}\textcolor{fig1teal}{,} \textcolor{fig1teal}{ask} \textcolor{fig1teal}{one} \textcolor{fig1teal}{of} \textcolor{fig1teal}{you} \textcolor{fig1teal}{to} \textcolor{fig1teal}{think} \textcolor{fig1teal}{about} \textcolor{fig1teal}{what} \textcolor{fig1teal}{each} \textcolor{fig1teal}{other} \textcolor{fig1ochre}{tomorrow} \textcolor{fig1teal}{could} \textcolor{fig1teal}{show}\textcolor{fig1teal}{.}  \textcolor{fig1ochre}{When} \textcolor{fig1teal}{New} \textcolor{fig1ochre}{England}\textcolor{fig1teal}{�}\textcolor{fig1teal}{�}\textcolor{fig1teal}{s} \textcolor{fig1teal}{State} \textcolor{fig1teal}{House} \textcolor{fig1teal}{and} \textcolor{fig1ochre}{V}\textcolor{fig1teal}{-}\textcolor{fig1ochre}{ins} \textcolor{fig1ochre}{operate}\textcolor{fig1teal}{,} \textcolor{fig1teal}{his} \textcolor{fig1ochre}{specialty} \textcolor{fig1teal}{is} \textcolor{fig1teal}{focused} \textcolor{fig1teal}{on} \textcolor{fig1ochre}{capac}\textcolor{fig1ochre}{ging} \textcolor{fig1teal}{growth}\textcolor{fig1teal}{.} \textcolor{fig1ochre}{Amid} \textcolor{fig1teal}{the} \textcolor{fig1ochre}{neighborhood} \textcolor{fig1teal}{health} \textcolor{fig1teal}{system}\textcolor{fig1teal}{,} \textcolor{fig1ochre}{Tesla} \textcolor{fig1ochre}{refuses} \textcolor{fig1teal}{to} \textcolor{fig1ochre}{pass} \textcolor{fig1ochre}{feed}\textcolor{fig1teal}{-}\textcolor{fig1ochre}{only} \textcolor{fig1teal}{take} \textcolor{fig1teal}{of} \textcolor{fig1ochre}{AT}\textcolor{fig1ochre}{fuel} \textcolor{fig1teal}{and} \textcolor{fig1ochre}{uranium} \textcolor{fig1teal}{from} \textcolor{fig1teal}{a} \textcolor{fig1teal}{friend} \textcolor{fig1ochre}{destructive} \textcolor{fig1ochre}{power}\textcolor{fig1teal}{,} \textcolor{fig1teal}{there} \textcolor{fig1teal}{are} \textcolor{fig1ochre}{checks} \textcolor{fig1teal}{in} \textcolor{fig1teal}{on} \textcolor{fig1teal}{life}\textcolor{fig1teal}{s} \textcolor{fig1teal}{of} \textcolor{fig1ochre}{terrorists}\textcolor{fig1teal}{,} \textcolor{fig1teal}{and} \textcolor{fig1ochre}{activists} \textcolor{fig1ochre}{modifying}\textcolor{fig1teal}{,} \textcolor{fig1teal}{but} \textcolor{fig1teal}{not} \textcolor{fig1ochre}{countering}\textcolor{fig1teal}{,} \textcolor{fig1teal}{these} \textcolor{fig1ochre}{weapons}\textcolor{fig1teal}{.}  \textcolor{fig1ochre}{That} \textcolor{fig1teal}{was} \textcolor{fig1teal}{because} \textcolor{fig1teal}{of} \textcolor{fig1teal}{the} \textcolor{fig1teal}{amount} \textcolor{fig1teal}{of} \textcolor{fig1teal}{real} \textcolor{fig1ochre}{arrests} \textcolor{fig1teal}{and} \textcolor{fig1teal}{the} \textcolor{fig1teal}{day} \textcolor{fig1ochre}{preceding} \textcolor{fig1teal}{the} \textcolor{fig1ochre}{NRA}\textcolor{fig1teal}{�}\textcolor{fig1teal}{�}\textcolor{fig1teal}{s} \textcolor{fig1ochre}{Feb}\textcolor{fig1teal}{.} \textcolor{fig1ochre}{13}\textcolor{fig1teal}{,} \textcolor{fig1ochre}{2016} \textcolor{fig1ochre}{targeting} \textcolor{fig1ochre}{Tam}\textcolor{fig1ochre}{ois}\textcolor{fig1teal}{in} \textcolor{fig1teal}{was} \textcolor{fig1ochre}{convicted}\textcolor{fig1teal}{,} \textcolor{fig1teal}{because} \textcolor{fig1teal}{many} \textcolor{fig1ochre}{Terror}\textcolor{fig1teal}{ist}\textcolor{fig1ochre}{ees} \textcolor{fig1teal}{were} \textcolor{fig1ochre}{excluded} \textcolor{fig1teal}{from} \textcolor{fig1ochre}{collapse} \textcolor{fig1ochre}{victims}\textcolor{fig1teal}{.} \textcolor{fig1teal}{40} \textcolor{fig1teal}{percent} \textcolor{fig1teal}{of} \textcolor{fig1teal}{the} \textcolor{fig1teal}{number} \textcolor{fig1teal}{of} \textcolor{fig1ochre}{dead} \textcolor{fig1teal}{drug} \textcolor{fig1ochre}{circulation}\textcolor{fig1teal}{,} \textcolor{fig1teal}{as} \textcolor{fig1teal}{well} \textcolor{fig1teal}{as} \textcolor{fig1ochre}{se}\textcolor{fig1ochre}{anger}\textcolor{fig1ochre}{ak} \textcolor{fig1teal}{and} \textcolor{fig1ochre}{Floor}\textcolor{fig1teal}{ing} \textcolor{fig1teal}{were} \textcolor{fig1teal}{all} \textcolor{fig1teal}{21}\textcolor{fig1teal}{-}\textcolor{fig1teal}{year}\textcolor{fig1teal}{-}\textcolor{fig1ochre}{old} \textcolor{fig1teal}{students} \textcolor{fig1teal}{on} \textcolor{fig1teal}{their} \textcolor{fig1teal}{own}\textcolor{fig1teal}{.}  \textcolor{fig1teal}{�}\textcolor{fig1teal}{�}\textcolor{fig1teal}{I} \textcolor{fig1teal}{think} \textcolor{fig1teal}{with} \textcolor{fig1teal}{all} \textcolor{fig1teal}{the} \textcolor{fig1ochre}{classified} \textcolor{fig1teal}{data} \textcolor{fig1teal}{since} \textcolor{fig1teal}{I}\textcolor{fig1teal}{�}\textcolor{fig1teal}{�}\textcolor{fig1teal}{ve} \textcolor{fig1teal}{worked} \textcolor{fig1teal}{with} \textcolor{fig1teal}{David} \textcolor{fig1ochre}{Are}\textcolor{fig1ochre}{hoff}\textcolor{fig1teal}{,} \textcolor{fig1teal}{the} \textcolor{fig1teal}{amount} \textcolor{fig1teal}{of} \textcolor{fig1ochre}{false} \textcolor{fig1teal}{sites} \textcolor{fig1teal}{or} \textcolor{fig1teal}{what} \textcolor{fig1teal}{I} \textcolor{fig1ochre}{heard} \textcolor{fig1teal}{there} \textcolor{fig1teal}{was}\textcolor{fig1teal}{;} \textcolor{fig1teal}{a} \textcolor{fig1teal}{lot} \textcolor{fig1teal}{in} \textcolor{fig1teal}{which} \textcolor{fig1teal}{a} \textcolor{fig1teal}{lot} \textcolor{fig1ochre}{aimed} \textcolor{fig1teal}{more} \textcolor{fig1teal}{at} \textcolor{fig1teal}{people} \textcolor{fig1teal}{other} \textcolor{fig1teal}{than} \textcolor{fig1ochre}{Brian}\textcolor{fig1teal}{�}\textcolor{fig1teal}{�}\textcolor{fig1teal}{s}\textcolor{fig1teal}{,} \textcolor{fig1teal}{but} \textcolor{fig1teal}{I} \textcolor{fig1teal}{know}\textcolor{fig1teal}{,} \textcolor{fig1teal}{my} \textcolor{fig1ochre}{cost} \textcolor{fig1ochre}{allows} \textcolor{fig1teal}{us} \textcolor{fig1teal}{to} \textcolor{fig1teal}{say} \textcolor{fig1ochre}{yes}\textcolor{fig1teal}{,}\textcolor{fig1teal}{�}\textcolor{fig1teal}{�} \textcolor{fig1ochre}{Pere}\textcolor{fig1ochre}{anan} \textcolor{fig1teal}{said}\textcolor{fig1teal}{.} \textcolor{fig1teal}{�}\textcolor{fig1teal}{�}\textcolor{fig1teal}{In} \textcolor{fig1ochre}{reality}\textcolor{fig1teal}{,} \textcolor{fig1teal}{I}\textcolor{fig1teal}{�}\textcolor{fig1teal}{�}\textcolor{fig1teal}{m} \textcolor{fig1teal}{not} \textcolor{fig1teal}{sure} \textcolor{fig1teal}{I} \textcolor{fig1teal}{really}\textcolor{fig1teal}{�}\textcolor{fig1teal}{�}\textcolor{fig1teal}{m} \textcolor{fig1ochre}{willing} \textcolor{fig1teal}{to} \textcolor{fig1teal}{do} \textcolor{fig1teal}{that} \textcolor{fig1teal}{money} \textcolor{fig1teal}{in} \textcolor{fig1ochre}{Chicago}\textcolor{fig1teal}{.} \textcolor{fig1teal}{I} \textcolor{fig1teal}{was} \textcolor{fig1ochre}{seen} \textcolor{fig1teal}{to} \textcolor{fig1teal}{do} \textcolor{fig1teal}{this} \textcolor{fig1teal}{to} \textcolor{fig1ochre}{himself} \textcolor{fig1teal}{and} \textcolor{fig1teal}{so} \textcolor{fig1teal}{other} \textcolor{fig1teal}{areas} \textcolor{fig1teal}{don}\textcolor{fig1teal}{�}\textcolor{fig1teal}{�}\textcolor{fig1teal}{t} \textcolor{fig1ochre}{credit} \textcolor{fig1teal}{them} \textcolor{fig1teal}{the} \textcolor{fig1teal}{data}\textcolor{fig1teal}{.}\textcolor{fig1teal}{�}\textcolor{fig1teal}{�}  \textcolor{fig1teal}{The} \textcolor{fig1ochre}{father} \textcolor{fig1teal}{didn}\textcolor{fig1teal}{�}\textcolor{fig1teal}{�}\textcolor{fig1teal}{t} \textcolor{fig1teal}{fall} \textcolor{fig1teal}{to} \textcolor{fig1ochre}{fruition}\textcolor{fig1teal}{,} \textcolor{fig1teal}{and} \textcolor{fig1ochre}{Powers} \textcolor{fig1teal}{used} \textcolor{fig1teal}{to} \textcolor{fig1ochre}{imply} \textcolor{fig1teal}{that} \textcolor{fig1teal}{he} \textcolor{fig1teal}{could} \textcolor{fig1teal}{make} \textcolor{fig1teal}{people} \textcolor{fig1ochre}{free} \textcolor{fig1teal}{to} \textcolor{fig1ochre}{anyone} \textcolor{fig1teal}{with} \textcolor{fig1teal}{the} \textcolor{fig1ochre}{yard} \textcolor{fig1teal}{crisis}\textcolor{fig1teal}{.}  \textcolor{fig1teal}{�}\textcolor{fig1teal}{�}\textcolor{fig1teal}{One} \textcolor{fig1teal}{of} \textcolor{fig1teal}{the} \textcolor{fig1ochre}{guys} \textcolor{fig1teal}{had} \textcolor{fig1ochre}{teeth} \textcolor{fig1teal}{on} \textcolor{fig1teal}{the} \textcolor{fig1ochre}{attack} \textcolor{fig1teal}{and} \textcolor{fig1teal}{I} \textcolor{fig1teal}{was} \textcolor{fig1ochre}{shocked} \textcolor{fig1teal}{at} \textcolor{fig1teal}{what} \textcolor{fig1teal}{was} \textcolor{fig1teal}{no} \textcolor{fig1ochre}{threat}\textcolor{fig1teal}{,}\textcolor{fig1teal}{�}\textcolor{fig1teal}{�} \textcolor{fig1ochre}{Eddie}\textcolor{fig1ochre}{yer} \textcolor{fig1teal}{told} \textcolor{fig1ochre}{Virginia} \textcolor{fig1ochre}{Bhar}\textcolor{fig1teal}{at} \textcolor{fig1teal}{looking} \textcolor{fig1teal}{at} \textcolor{fig1teal}{the} \textcolor{fig1ochre}{virus}\textcolor{fig1teal}{�}\textcolor{fig1teal}{�}\textcolor{fig1teal}{s} \textcolor{fig1ochre}{recent}\textcolor{fig1teal}{-}\textcolor{fig1ochre}{media}\textcolor{fig1teal}{-}\textcolor{fig1ochre}{view}\textcolor{fig1teal}{,} \textcolor{fig1ochre}{spike} \textcolor{fig1teal}{against} \textcolor{fig1ochre}{attention}\textcolor{fig1teal}{.} \textcolor{fig1ochre}{When} \textcolor{fig1teal}{I} \textcolor{fig1teal}{asked} \textcolor{fig1ochre}{Owen} \textcolor{fig1teal}{into} \textcolor{fig1teal}{a} \textcolor{fig1teal}{New} \textcolor{fig1ochre}{York} \textcolor{fig1ochre}{bands} \textcolor{fig1ochre}{member} \textcolor{fig1teal}{at} \textcolor{fig1ochre}{Harvard}\textcolor{fig1teal}{,} \textcolor{fig1teal}{at} \textcolor{fig1ochre}{Election} \textcolor{fig1teal}{Center} \textcolor{fig1teal}{—} \textcolor{fig1teal}{something} \textcolor{fig1teal}{she} \textcolor{fig1teal}{said} \textcolor{fig1teal}{had} \textcolor{fig1teal}{been} \textcolor{fig1teal}{very} \textcolor{fig1teal}{effective} \textcolor{fig1teal}{—} \textcolor{fig1ochre}{dam} \textcolor{fig1ochre}{cited} \textcolor{fig1teal}{was} \textcolor{fig1teal}{some} \textcolor{fig1ochre}{encouragement}\textcolor{fig1teal}{.} \textcolor{fig1teal}{For} \textcolor{fig1teal}{the} \textcolor{fig1teal}{first} \textcolor{fig1teal}{time}\textcolor{fig1teal}{,} \textcolor{fig1teal}{I} \textcolor{fig1teal}{had} \textcolor{fig1teal}{to} \textcolor{fig1teal}{take} \textcolor{fig1teal}{off} \textcolor{fig1teal}{the} \textcolor{fig1teal}{same} \textcolor{fig1ochre}{dyed}\textcolor{fig1ochre}{boy} \textcolor{fig1teal}{ch}\textcolor{fig1ochre}{ava}\textcolor{fig1teal}{.} \textcolor{fig1teal}{This} \textcolor{fig1ochre}{wasn}\textcolor{fig1teal}{�}\textcolor{fig1teal}{�}\textcolor{fig1teal}{t} \textcolor{fig1ochre}{welcome} \textcolor{fig1teal}{as} \textcolor{fig1teal}{we} \textcolor{fig1ochre}{talked} \textcolor{fig1teal}{on} \textcolor{fig1teal}{with} \textcolor{fig1ochre}{Bhar}\textcolor{fig1teal}{at}\textcolor{fig1teal}{,} \textcolor{fig1ochre}{Student} \textcolor{fig1ochre}{invasion}\textcolor{fig1teal}{,} \textcolor{fig1ochre}{everything} \textcolor{fig1ochre}{tick}\textcolor{fig1teal}{er} \textcolor{fig1teal}{and} \textcolor{fig1ochre}{especially} \textcolor{fig1teal}{any} \textcolor{fig1teal}{potential} \textcolor{fig1ochre}{routes}\textcolor{fig1teal}{.}  \textcolor{fig1teal}{�}\textcolor{fig1teal}{�}\textcolor{fig1teal}{That}\textcolor{fig1teal}{�}\textcolor{fig1teal}{�}\textcolor{fig1teal}{s} \textcolor{fig1ochre}{OK}\textcolor{fig1teal}{.} \textcolor{fig1teal}{But} \textcolor{fig1teal}{I} \textcolor{fig1teal}{know} \textcolor{fig1ochre}{Chuck}\textcolor{fig1ochre}{car}\textcolor{fig1ochre}{illa} \textcolor{fig1ochre}{stepped} \textcolor{fig1teal}{in} \textcolor{fig1teal}{and} \textcolor{fig1teal}{got} \textcolor{fig1teal}{a} \textcolor{fig1teal}{control} \textcolor{fig1teal}{and} \textcolor{fig1teal}{the} \textcolor{fig1ochre}{incident} \textcolor{fig1teal}{was} \textcolor{fig1teal}{over} \textcolor{fig1teal}{and} \textcolor{fig1teal}{it}\textcolor{fig1teal}{�}\textcolor{fig1teal}{�}\textcolor{fig1teal}{s} \textcolor{fig1teal}{always} \textcolor{fig1teal}{questions} \textcolor{fig1teal}{about} \textcolor{fig1teal}{the} \textcolor{fig1ochre}{nit}\textcolor{fig1ochre}{hing} \textcolor{fig1teal}{per} \textcolor{fig1ochre}{gallon}\textcolor{fig1teal}{.}\textcolor{fig1teal}{�}\textcolor{fig1teal}{�}  \textcolor{fig1ochre}{Before} \textcolor{fig1teal}{he} \textcolor{fig1ochre}{targeted}  \textcolor{fig1teal}{The} \textcolor{fig1teal}{official} \textcolor{fig1ochre}{suspect} \textcolor{fig1teal}{among} \textcolor{fig1teal}{the} \textcolor{fig1ochre}{constituents} \textcolor{fig1teal}{was} \textcolor{fig1ochre}{filled} \textcolor{fig1teal}{but}\textcolor{fig1ochre}{¯¯¯¯¯¯¯¯} \textcolor{fig1teal}{(}\textcolor{fig1teal}{D}\textcolor{fig1teal}{-}\textcolor{fig1ochre}{Conn}\textcolor{fig1ochre}{.)} \textcolor{fig1ochre}{appeared} \textcolor{fig1teal}{a} \textcolor{fig1ochre}{little} \textcolor{fig1ochre}{exasper}\textcolor{fig1ochre}{atory} \textcolor{fig1teal}{of} \textcolor{fig1teal}{him}\textcolor{fig1teal}{.} \textcolor{fig1ochre}{Since} \textcolor{fig1teal}{such} \textcolor{fig1teal}{never} \textcolor{fig1teal}{the} \textcolor{fig1ochre}{CEO} \textcolor{fig1teal}{he} \textcolor{fig1teal}{is}\textcolor{fig1teal}{,} \textcolor{fig1teal}{Or}\textcolor{fig1ochre}{ning} \textcolor{fig1teal}{said} \textcolor{fig1teal}{it} \textcolor{fig1teal}{was} \textcolor{fig1teal}{not} \textcolor{fig1ochre}{unusual} \textcolor{fig1teal}{to} \textcolor{fig1ochre}{speculate} \textcolor{fig1teal}{that} \textcolor{fig1ochre}{Mass} \textcolor{fig1ochre}{Corp} \textcolor{fig1teal}{could} \textcolor{fig1teal}{get} \textcolor{fig1teal}{home} \textcolor{fig1teal}{to} \textcolor{fig1teal}{his} \textcolor{fig1ochre}{potent} \textcolor{fig1ochre}{ship}\textcolor{fig1ochre}{iles}\textcolor{fig1teal}{.} \textcolor{fig1teal}{The} \textcolor{fig1teal}{National} \textcolor{fig1ochre}{Security} \textcolor{fig1ochre}{Relations} \textcolor{fig1teal}{official}\textcolor{fig1teal}{,} \textcolor{fig1ochre}{Cook}\textcolor{fig1teal}{,} \textcolor{fig1teal}{reported} \textcolor{fig1teal}{after} \textcolor{fig1teal}{a} \textcolor{fig1teal}{running} \textcolor{fig1teal}{for} \textcolor{fig1ochre}{supervisor} \textcolor{fig1teal}{that} \textcolor{fig1ochre}{newly} \textcolor{fig1ochre}{Creek} \textcolor{fig1ochre}{Fif}\textcolor{fig1ochre}{ese} \textcolor{fig1teal}{was} \textcolor{fig1teal}{looking} \textcolor{fig1teal}{to} \textcolor{fig1ochre}{collect} \textcolor{fig1teal}{energy} \textcolor{fig1ochre}{exams} \textcolor{fig1teal}{for} \textcolor{fig1ochre}{residency} \textcolor{fig1teal}{rules} \textcolor{fig1teal}{in} \textcolor{fig1ochre}{Medical} \textcolor{fig1ochre}{Heights} \textcolor{fig1teal}{and} \textcolor{fig1ochre}{Hampton} \textcolor{fig1ochre}{environments}\textcolor{fig1teal}{,} \textcolor{fig1teal}{changing} \textcolor{fig1teal}{them} \textcolor{fig1teal}{to} \textcolor{fig1ochre}{spur} \textcolor{fig1teal}{future} \textcolor{fig1ochre}{elections}\textcolor{fig1teal}{,} \textcolor{fig1teal}{and} \textcolor{fig1teal}{to} \textcolor{fig1ochre}{assess} \textcolor{fig1teal}{the} \textcolor{fig1ochre}{validity} \textcolor{fig1teal}{of} \textcolor{fig1teal}{the} \textcolor{fig1ochre}{savings}\textcolor{fig1teal}{.} \textcolor{fig1ochre}{Asked} \textcolor{fig1teal}{what} \textcolor{fig1ochre}{transpired} \textcolor{fig1teal}{in} \textcolor{fig1teal}{a} \textcolor{fig1ochre}{press} \textcolor{fig1ochre}{shop} \textcolor{fig1teal}{hours} \textcolor{fig1teal}{early} \textcolor{fig1teal}{by} \textcolor{fig1teal}{my} \textcolor{fig1ochre}{hosts}\textcolor{fig1teal}{,} \textcolor{fig1ochre}{Adams} \textcolor{fig1teal}{said} \textcolor{fig1teal}{she} \textcolor{fig1teal}{had} \textcolor{fig1teal}{no} \textcolor{fig1teal}{access} \textcolor{fig1teal}{to} \textcolor{fig1teal}{the} \textcolor{fig1teal}{data}\textcolor{fig1teal}{.} \textcolor{fig1teal}{What} \textcolor{fig1ochre}{questioning} \textcolor{fig1teal}{what} \textcolor{fig1teal}{the} \textcolor{fig1ochre}{EPA}\textcolor{fig1teal}{,} \textcolor{fig1teal}{Se}\textcolor{fig1ochre}{yer} \textcolor{fig1teal}{told} \textcolor{fig1teal}{me} \textcolor{fig1teal}{—} \textcolor{fig1ochre}{yes} \textcolor{fig1teal}{�}\textcolor{fig1teal}{�}\textcolor{fig1teal}{I} \textcolor{fig1teal}{did} \textcolor{fig1teal}{not} \textcolor{fig1teal}{do} \textcolor{fig1teal}{that} \textcolor{fig1teal}{before} \textcolor{fig1teal}{I} \textcolor{fig1teal}{opened} \textcolor{fig1teal}{that} \textcolor{fig1ochre}{nonsense}\textcolor{fig1teal}{.}\textcolor{fig1teal}{�}\textcolor{fig1teal}{�}  \textcolor{fig1ochre}{Since} \textcolor{fig1ochre}{Nat}\textcolor{fig1ochre}{bie} \textcolor{fig1ochre}{spent} \textcolor{fig1teal}{these} \textcolor{fig1ochre}{70}\textcolor{fig1teal}{-}\textcolor{fig1ochre}{seven} \textcolor{fig1teal}{years} \textcolor{fig1teal}{in} \textcolor{fig1ochre}{prison} \textcolor{fig1teal}{for} \textcolor{fig1ochre}{felony}\textcolor{fig1ochre}{core}\textcolor{fig1teal}{,} \textcolor{fig1teal}{he} \textcolor{fig1ochre}{sticks} \textcolor{fig1teal}{to} \textcolor{fig1ochre}{Weed} \textcolor{fig1ochre}{terrorists} \textcolor{fig1teal}{every} \textcolor{fig1ochre}{imaginable} \textcolor{fig1teal}{in} \textcolor{fig1ochre}{filing} \textcolor{fig1ochre}{ly}\textcolor{fig1ochre}{ings} \textcolor{fig1teal}{of} \textcolor{fig1ochre}{time} \textcolor{fig1teal}{with} \textcolor{fig1teal}{a} \textcolor{fig1teal}{few} \textcolor{fig1ochre}{chunks} \textcolor{fig1teal}{of} \textcolor{fig1ochre}{scapego}\textcolor{fig1teal}{at} \textcolor{fig1teal}{here} \textcolor{fig1teal}{that}\textcolor{fig1teal}{�}\textcolor{fig1teal}{�}\textcolor{fig1teal}{s} \textcolor{fig1teal}{been} \textcolor{fig1teal}{tax} \textcolor{fig1teal}{for} \textcolor{fig1teal}{every} \textcolor{fig1teal}{type} \textcolor{fig1teal}{of} \textcolor{fig1ochre}{oversight} \textcolor{fig1teal}{by} \textcolor{fig1ochre}{countless}\textcolor{fig1ochre}{gun}\textcolor{fig1teal}{,} \textcolor{fig1ochre}{unm}\textcolor{fig1ochre}{ob}\textcolor{fig1ochre}{ashed} \textcolor{fig1teal}{in} \textcolor{fig1teal}{an} \textcolor{fig1ochre}{airline} \textcolor{fig1teal}{and} \textcolor{fig1teal}{by} \textcolor{fig1teal}{a} \textcolor{fig1ochre}{colleague} \textcolor{fig1teal}{and} \textcolor{fig1ochre}{lump}\textcolor{fig1teal}{ed} \textcolor{fig1teal}{in} \textcolor{fig1teal}{with} \textcolor{fig1teal}{criminal} \textcolor{fig1ochre}{interrogation}\textcolor{fig1ochre}{bles} \textcolor{fig1teal}{for} \textcolor{fig1ochre}{whom} \textcolor{fig1teal}{he} \textcolor{fig1ochre}{testified} \textcolor{fig1teal}{as} \textcolor{fig1teal}{a} \textcolor{fig1ochre}{Holocaust} \textcolor{fig1ochre}{survivor}\textcolor{fig1teal}{.}  \textcolor{fig1ochre}{Special} \textcolor{fig1ochre}{Vegas} \textcolor{fig1teal}{and} \textcolor{fig1teal}{the} \textcolor{fig1ochre}{environment} \textcolor{fig1teal}{of} \textcolor{fig1ochre}{Oxy}\textcolor{fig1ochre}{ware} \textcolor{fig1teal}{has} \textcolor{fig1teal}{is} \textcolor{fig1teal}{one} \textcolor{fig1teal}{of} \textcolor{fig1teal}{the} \textcolor{fig1teal}{company}\textcolor{fig1teal}{�}\textcolor{fig1teal}{�}\textcolor{fig1teal}{s} \textcolor{fig1teal}{most} \textcolor{fig1teal}{important} \textcolor{fig1teal}{part} \textcolor{fig1teal}{of} \textcolor{fig1teal}{business} \textcolor{fig1teal}{now}\textcolor{fig1teal}{.}  \textcolor{fig1ochre}{Supporters} \textcolor{fig1teal}{of} \textcolor{fig1ochre}{Mr} \textcolor{fig1ochre}{Zin}\textcolor{fig1ochre}{yer} \textcolor{fig1teal}{told} \textcolor{fig1ochre}{Arnold} \textcolor{fig1teal}{that} \textcolor{fig1teal}{a} \textcolor{fig1teal}{possible} \textcolor{fig1teal}{run} \textcolor{fig1teal}{for} \textcolor{fig1teal}{the} \textcolor{fig1teal}{oil} \textcolor{fig1ochre}{mogul} \textcolor{fig1teal}{was} \textcolor{fig1teal}{not} \textcolor{fig1teal}{too} \textcolor{fig1teal}{possible}\textcolor{fig1teal}{.} \textcolor{fig1ochre}{Dev}\textcolor{fig1ochre}{ine} \textcolor{fig1teal}{told} \textcolor{fig1ochre}{Park} \textcolor{fig1ochre}{Radio}\textcolor{fig1teal}{�}\textcolor{fig1teal}{�}\textcolor{fig1teal}{s} \textcolor{fig1ochre}{Jill} \textcolor{fig1teal}{O}\textcolor{fig1ochre}{aken}\textcolor{fig1teal}{,} \textcolor{fig1teal}{�}\textcolor{fig1teal}{�}\textcolor{fig1teal}{You}\textcolor{fig1teal}{�}\textcolor{fig1teal}{�}\textcolor{fig1teal}{re}
\end{samplebox}
\begin{samplebox}{\normalfont\textbf{$\ours$ (PR), Round: 2} \hfill \normalfont\scriptsize \textcolor{darkgray}{Committed: \textbf{871/1024}}}
\scriptsize\linespread{0.9}\selectfont
\textcolor{fig1teal}{.}  \textcolor{fig1ochre}{Or}\textcolor{fig1ochre}{noon} \textcolor{fig1teal}{is} \textcolor{fig1teal}{another} \textcolor{fig1ochre}{prick}\textcolor{fig1teal}{ly} \textcolor{fig1ochre}{photographer}\textcolor{fig1teal}{.} \textcolor{fig1teal}{He} \textcolor{fig1teal}{has} \textcolor{fig1teal}{been} \textcolor{fig1teal}{working} \textcolor{fig1teal}{on} \textcolor{fig1teal}{a} \textcolor{fig1ochre}{firearms} \textcolor{fig1ochre}{secretary} \textcolor{fig1teal}{for} \textcolor{fig1teal}{9} \textcolor{fig1teal}{years}\textcolor{fig1teal}{,} \textcolor{fig1teal}{shooting} \textcolor{fig1teal}{for}\textcolor{fig1ochre}{EMOTE} \textcolor{fig1teal}{in} \textcolor{fig1teal}{different} \textcolor{fig1teal}{parts} \textcolor{fig1teal}{of} \textcolor{fig1teal}{New} \textcolor{fig1ochre}{Hampshire}\textcolor{fig1teal}{.} \textcolor{fig1teal}{I} \textcolor{fig1teal}{think} \textcolor{fig1teal}{it}\textcolor{fig1teal}{'s} \textcolor{fig1teal}{just} \textcolor{fig1teal}{that} \textcolor{fig1teal}{after} \textcolor{fig1teal}{the} \textcolor{fig1teal}{recent} \textcolor{fig1teal}{project} \textcolor{fig1teal}{in} \textcolor{fig1teal}{front} \textcolor{fig1teal}{of} \textcolor{fig1teal}{school}\textcolor{fig1teal}{,} \textcolor{fig1teal}{many} \textcolor{fig1teal}{more} \textcolor{fig1teal}{of} \textcolor{fig1teal}{them} \textcolor{fig1ochre}{got} \textcolor{fig1teal}{it} \textcolor{fig1teal}{into} \textcolor{fig1teal}{a} \textcolor{fig1ochre}{ton} \textcolor{fig1teal}{of} \textcolor{fig1ochre}{fashion}\textcolor{fig1teal}{.}  \textcolor{fig1ochre}{Sunday} \textcolor{fig1teal}{night}\textcolor{fig1teal}{,} \textcolor{fig1teal}{Or}\textcolor{fig1ochre}{gal} \textcolor{fig1teal}{came} \textcolor{fig1teal}{to} \textcolor{fig1ochre}{Pul}\textcolor{fig1teal}{p}\textcolor{fig1teal}{,} \textcolor{fig1teal}{with} \textcolor{fig1teal}{no} \textcolor{fig1teal}{return}\textcolor{fig1teal}{.} \textcolor{fig1teal}{In} \textcolor{fig1teal}{response} \textcolor{fig1teal}{to} \textcolor{fig1teal}{the} \textcolor{fig1teal}{press} \textcolor{fig1ochre}{conference} \textcolor{fig1teal}{after} \textcolor{fig1teal}{the} \textcolor{fig1ochre}{combat} \textcolor{fig1teal}{day}\textcolor{fig1teal}{,} \textcolor{fig1ochre}{Representative} \textcolor{fig1ochre}{East}\textcolor{fig1ochre}{etts} \textcolor{fig1teal}{(}\textcolor{fig1teal}{R}\textcolor{fig1teal}{-}\textcolor{fig1ochre}{NY}\textcolor{fig1teal}{)} \textcolor{fig1teal}{told} \textcolor{fig1teal}{me} \textcolor{fig1teal}{the} \textcolor{fig1teal}{news}\textcolor{fig1teal}{:}  \textcolor{fig1teal}{�}\textcolor{fig1teal}{�}\textcolor{fig1teal}{To} \textcolor{fig1teal}{find} \textcolor{fig1teal}{it} \textcolor{fig1teal}{in} \textcolor{fig1teal}{an} \textcolor{fig1teal}{office} \textcolor{fig1teal}{that} \textcolor{fig1teal}{is} \textcolor{fig1teal}{like} \textcolor{fig1teal}{a} \textcolor{fig1ochre}{office} \textcolor{fig1teal}{—} \textcolor{fig1teal}{we} \textcolor{fig1teal}{still} \textcolor{fig1teal}{think} \textcolor{fig1teal}{of} \textcolor{fig1teal}{our} \textcolor{fig1teal}{business} \textcolor{fig1teal}{with} \textcolor{fig1teal}{the} \textcolor{fig1ochre}{gun}\textcolor{fig1teal}{.}\textcolor{fig1teal}{�}\textcolor{fig1teal}{�}  \textcolor{fig1teal}{It} \textcolor{fig1teal}{was} \textcolor{fig1teal}{the} \textcolor{fig1teal}{first} \textcolor{fig1teal}{police} \textcolor{fig1ochre}{sting} \textcolor{fig1teal}{for} \textcolor{fig1teal}{an} \textcolor{fig1ochre}{NRA} \textcolor{fig1teal}{in} \textcolor{fig1teal}{this} \textcolor{fig1teal}{time} \textcolor{fig1teal}{of} \textcolor{fig1teal}{war}\textcolor{fig1teal}{.} \textcolor{fig1teal}{It} \textcolor{fig1teal}{was} \textcolor{fig1teal}{also} \textcolor{fig1teal}{that} \textcolor{fig1teal}{he} \textcolor{fig1teal}{came} \textcolor{fig1teal}{to} \textcolor{fig1teal}{open} \textcolor{fig1teal}{the} \textcolor{fig1teal}{business} \textcolor{fig1teal}{and} \textcolor{fig1teal}{its} \textcolor{fig1ochre}{presence}\textcolor{fig1teal}{,} \textcolor{fig1teal}{and} \textcolor{fig1teal}{then} \textcolor{fig1teal}{was}\textcolor{fig1teal}{,} \textcolor{fig1teal}{to} \textcolor{fig1teal}{cut} \textcolor{fig1teal}{short} \textcolor{fig1teal}{his} \textcolor{fig1ochre}{friend}\textcolor{fig1teal}{,} \textcolor{fig1ochre}{having}\textcolor{fig1teal}{,} \textcolor{fig1teal}{who} \textcolor{fig1teal}{has} \textcolor{fig1teal}{served} \textcolor{fig1teal}{as} \textcolor{fig1teal}{software} \textcolor{fig1ochre}{analyst} \textcolor{fig1teal}{for} \textcolor{fig1ochre}{Africa}\textcolor{fig1teal}{,} \textcolor{fig1ochre}{better}\textcolor{fig1teal}{,} \textcolor{fig1teal}{and} \textcolor{fig1ochre}{HP} \textcolor{fig1teal}{for} \textcolor{fig1teal}{decades}\textcolor{fig1teal}{.}  \textcolor{fig1teal}{�}\textcolor{fig1teal}{�}\textcolor{fig1teal}{There}\textcolor{fig1teal}{�}\textcolor{fig1teal}{�}\textcolor{fig1teal}{s} \textcolor{fig1teal}{a} \textcolor{fig1teal}{lot} \textcolor{fig1teal}{for} \textcolor{fig1teal}{me} \textcolor{fig1teal}{to} \textcolor{fig1teal}{do} \textcolor{fig1teal}{with} \textcolor{fig1teal}{myself} \textcolor{fig1teal}{that} \textcolor{fig1teal}{would} \textcolor{fig1ochre}{help} \textcolor{fig1teal}{me}\textcolor{fig1teal}{.} \textcolor{fig1teal}{I} \textcolor{fig1teal}{think} \textcolor{fig1teal}{this} \textcolor{fig1teal}{is} \textcolor{fig1teal}{my} \textcolor{fig1teal}{first} \textcolor{fig1ochre}{job}\textcolor{fig1teal}{,}\textcolor{fig1teal}{�}\textcolor{fig1teal}{�} \textcolor{fig1teal}{he} \textcolor{fig1teal}{says}\textcolor{fig1teal}{.} \textcolor{fig1teal}{�}\textcolor{fig1teal}{�}\textcolor{fig1teal}{These} \textcolor{fig1teal}{were} \textcolor{fig1teal}{the} \textcolor{fig1teal}{real} \textcolor{fig1ochre}{Deal}\textcolor{fig1teal}{.} \textcolor{fig1teal}{If} \textcolor{fig1teal}{you} \textcolor{fig1teal}{have} \textcolor{fig1teal}{to} \textcolor{fig1teal}{see} \textcolor{fig1teal}{the} \textcolor{fig1teal}{new} \textcolor{fig1ochre}{structures} \textcolor{fig1teal}{which} \textcolor{fig1teal}{was} \textcolor{fig1teal}{made} \textcolor{fig1teal}{to} \textcolor{fig1ochre}{accommodate} \textcolor{fig1teal}{all} \textcolor{fig1teal}{of} \textcolor{fig1teal}{him}\textcolor{fig1teal}{,} \textcolor{fig1teal}{ask} \textcolor{fig1teal}{one} \textcolor{fig1teal}{of} \textcolor{fig1teal}{you} \textcolor{fig1teal}{to} \textcolor{fig1teal}{think} \textcolor{fig1teal}{about} \textcolor{fig1teal}{what} \textcolor{fig1teal}{each} \textcolor{fig1teal}{other} \textcolor{fig1ochre}{applicant} \textcolor{fig1teal}{could} \textcolor{fig1teal}{show}\textcolor{fig1teal}{.}  \textcolor{fig1teal}{On} \textcolor{fig1teal}{New} \textcolor{fig1teal}{York}\textcolor{fig1teal}{�}\textcolor{fig1teal}{�}\textcolor{fig1teal}{s} \textcolor{fig1teal}{State} \textcolor{fig1teal}{House} \textcolor{fig1teal}{and} \textcolor{fig1ochre}{pass}\textcolor{fig1teal}{-}\textcolor{fig1teal}{A} \textcolor{fig1ochre}{Hill}\textcolor{fig1teal}{,} \textcolor{fig1teal}{his} \textcolor{fig1ochre}{policy} \textcolor{fig1teal}{is} \textcolor{fig1teal}{focused} \textcolor{fig1teal}{on} \textcolor{fig1ochre}{investment} \textcolor{fig1teal}{and} \textcolor{fig1teal}{growth}\textcolor{fig1teal}{.} \textcolor{fig1teal}{In} \textcolor{fig1teal}{the} \textcolor{fig1teal}{public} \textcolor{fig1teal}{health} \textcolor{fig1teal}{system}\textcolor{fig1teal}{,} \textcolor{fig1teal}{for} \textcolor{fig1teal}{him} \textcolor{fig1teal}{to} \textcolor{fig1teal}{get} \textcolor{fig1teal}{world}\textcolor{fig1teal}{-}\textcolor{fig1teal}{class} \textcolor{fig1teal}{take} \textcolor{fig1teal}{of} \textcolor{fig1teal}{a} \textcolor{fig1ochre}{city} \textcolor{fig1teal}{and} \textcolor{fig1teal}{money} \textcolor{fig1teal}{from} \textcolor{fig1teal}{a} \textcolor{fig1teal}{friend} \textcolor{fig1teal}{to} \textcolor{fig1ochre}{fans}\textcolor{fig1teal}{,} \textcolor{fig1teal}{there} \textcolor{fig1teal}{are} \textcolor{fig1teal}{people} \textcolor{fig1teal}{in} \textcolor{fig1teal}{on} \textcolor{fig1teal}{life}\textcolor{fig1teal}{s} \textcolor{fig1teal}{of} \textcolor{fig1teal}{days}\textcolor{fig1teal}{,} \textcolor{fig1teal}{and} \textcolor{fig1ochre}{selling} \textcolor{fig1ochre}{drug}\textcolor{fig1teal}{,} \textcolor{fig1teal}{but} \textcolor{fig1teal}{not} \textcolor{fig1ochre}{heroin}\textcolor{fig1teal}{,} \textcolor{fig1teal}{these} \textcolor{fig1teal}{days}\textcolor{fig1teal}{.}  \textcolor{fig1teal}{It} \textcolor{fig1teal}{was} \textcolor{fig1teal}{because} \textcolor{fig1teal}{of} \textcolor{fig1teal}{the} \textcolor{fig1teal}{amount} \textcolor{fig1teal}{of} \textcolor{fig1teal}{real} \textcolor{fig1ochre}{pills} \textcolor{fig1teal}{and} \textcolor{fig1teal}{the} \textcolor{fig1teal}{day} \textcolor{fig1teal}{off} \textcolor{fig1teal}{the} \textcolor{fig1teal}{company}\textcolor{fig1teal}{�}\textcolor{fig1teal}{�}\textcolor{fig1teal}{s} \textcolor{fig1ochre}{eye}\textcolor{fig1teal}{.} \textcolor{fig1ochre}{Plus}\textcolor{fig1teal}{,} \textcolor{fig1teal}{the} \textcolor{fig1ochre}{paying} \textcolor{fig1teal}{of} \textcolor{fig1ochre}{un}\textcolor{fig1teal}{in} \textcolor{fig1teal}{was} \textcolor{fig1teal}{personal}\textcolor{fig1teal}{,} \textcolor{fig1teal}{because} \textcolor{fig1teal}{many} \textcolor{fig1ochre}{narc}\textcolor{fig1teal}{ist} \textcolor{fig1ochre}{drugs} \textcolor{fig1teal}{were} \textcolor{fig1teal}{typical} \textcolor{fig1teal}{from} \textcolor{fig1teal}{last} \textcolor{fig1ochre}{dose}\textcolor{fig1teal}{.} \textcolor{fig1teal}{40} \textcolor{fig1teal}{percent} \textcolor{fig1teal}{of} \textcolor{fig1teal}{the} \textcolor{fig1teal}{number} \textcolor{fig1teal}{of} \textcolor{fig1ochre}{recreational} \textcolor{fig1teal}{drug} \textcolor{fig1teal}{use}\textcolor{fig1teal}{,} \textcolor{fig1teal}{as} \textcolor{fig1teal}{well} \textcolor{fig1teal}{as} \textcolor{fig1ochre}{Bi} \textcolor{fig1ochre}{etc}\textcolor{fig1teal}{.} \textcolor{fig1teal}{and} \textcolor{fig1ochre}{diet}\textcolor{fig1teal}{ing} \textcolor{fig1teal}{were} \textcolor{fig1teal}{all} \textcolor{fig1teal}{21}\textcolor{fig1teal}{-}\textcolor{fig1teal}{year}\textcolor{fig1teal}{-}\textcolor{fig1teal}{old} \textcolor{fig1teal}{students} \textcolor{fig1teal}{on} \textcolor{fig1teal}{their} \textcolor{fig1teal}{own}\textcolor{fig1teal}{.}  \textcolor{fig1teal}{�}\textcolor{fig1teal}{�}\textcolor{fig1teal}{I} \textcolor{fig1teal}{think} \textcolor{fig1teal}{with} \textcolor{fig1teal}{all} \textcolor{fig1teal}{the} \textcolor{fig1ochre}{academic} \textcolor{fig1teal}{data} \textcolor{fig1teal}{since} \textcolor{fig1teal}{I}\textcolor{fig1teal}{�}\textcolor{fig1teal}{�}\textcolor{fig1teal}{ve} \textcolor{fig1teal}{worked} \textcolor{fig1teal}{with} \textcolor{fig1teal}{David} \textcolor{fig1ochre}{Mu}\textcolor{fig1teal}{um}\textcolor{fig1teal}{,} \textcolor{fig1teal}{the} \textcolor{fig1teal}{amount} \textcolor{fig1teal}{of} \textcolor{fig1ochre}{truth} \textcolor{fig1teal}{sites} \textcolor{fig1teal}{or} \textcolor{fig1teal}{what} \textcolor{fig1teal}{I} \textcolor{fig1teal}{found} \textcolor{fig1teal}{there} \textcolor{fig1teal}{was}\textcolor{fig1teal}{;} \textcolor{fig1teal}{a} \textcolor{fig1teal}{lot} \textcolor{fig1teal}{in} \textcolor{fig1teal}{which} \textcolor{fig1teal}{a} \textcolor{fig1teal}{lot} \textcolor{fig1ochre}{looked} \textcolor{fig1teal}{more} \textcolor{fig1teal}{at} \textcolor{fig1teal}{people} \textcolor{fig1teal}{other} \textcolor{fig1teal}{than} \textcolor{fig1teal}{David}\textcolor{fig1teal}{�}\textcolor{fig1teal}{�}\textcolor{fig1teal}{s}\textcolor{fig1teal}{,} \textcolor{fig1teal}{but} \textcolor{fig1teal}{I} \textcolor{fig1teal}{know}\textcolor{fig1teal}{,} \textcolor{fig1teal}{my} \textcolor{fig1teal}{brain} \textcolor{fig1teal}{want} \textcolor{fig1teal}{us} \textcolor{fig1teal}{to} \textcolor{fig1teal}{say} \textcolor{fig1teal}{that}\textcolor{fig1teal}{,}\textcolor{fig1teal}{�}\textcolor{fig1teal}{�} \textcolor{fig1teal}{Se}\textcolor{fig1ochre}{gal} \textcolor{fig1teal}{said}\textcolor{fig1teal}{.} \textcolor{fig1teal}{�}\textcolor{fig1teal}{�}\textcolor{fig1teal}{In} \textcolor{fig1ochre}{Mississippi}\textcolor{fig1teal}{,} \textcolor{fig1teal}{I}\textcolor{fig1teal}{�}\textcolor{fig1teal}{�}\textcolor{fig1teal}{m} \textcolor{fig1teal}{not} \textcolor{fig1teal}{sure} \textcolor{fig1teal}{I} \textcolor{fig1teal}{really}\textcolor{fig1teal}{�}\textcolor{fig1teal}{�}\textcolor{fig1teal}{m} \textcolor{fig1teal}{going} \textcolor{fig1teal}{to} \textcolor{fig1teal}{do} \textcolor{fig1teal}{that} \textcolor{fig1teal}{money} \textcolor{fig1teal}{in} \textcolor{fig1ochre}{Manhattan}\textcolor{fig1teal}{.} \textcolor{fig1teal}{I} \textcolor{fig1teal}{was} \textcolor{fig1teal}{going} \textcolor{fig1teal}{to} \textcolor{fig1teal}{do} \textcolor{fig1teal}{this} \textcolor{fig1teal}{to} \textcolor{fig1teal}{schools} \textcolor{fig1teal}{and} \textcolor{fig1teal}{so} \textcolor{fig1teal}{other} \textcolor{fig1teal}{areas} \textcolor{fig1teal}{don}\textcolor{fig1teal}{�}\textcolor{fig1teal}{�}\textcolor{fig1teal}{t} \textcolor{fig1teal}{give} \textcolor{fig1teal}{them} \textcolor{fig1teal}{the} \textcolor{fig1teal}{data}\textcolor{fig1teal}{.}\textcolor{fig1teal}{�}\textcolor{fig1teal}{�}  \textcolor{fig1teal}{The} \textcolor{fig1ochre}{answers} \textcolor{fig1teal}{didn}\textcolor{fig1teal}{�}\textcolor{fig1teal}{�}\textcolor{fig1teal}{t} \textcolor{fig1teal}{fall} \textcolor{fig1teal}{to} \textcolor{fig1ochre}{Les}\textcolor{fig1teal}{,} \textcolor{fig1teal}{and} \textcolor{fig1teal}{he} \textcolor{fig1teal}{used} \textcolor{fig1teal}{to} \textcolor{fig1ochre}{worry} \textcolor{fig1teal}{that} \textcolor{fig1teal}{he} \textcolor{fig1teal}{could} \textcolor{fig1teal}{make} \textcolor{fig1teal}{people} \textcolor{fig1ochre}{arrangements} \textcolor{fig1teal}{to} \textcolor{fig1ochre}{cope} \textcolor{fig1teal}{with} \textcolor{fig1teal}{the} \textcolor{fig1ochre}{2008} \textcolor{fig1teal}{crisis}\textcolor{fig1teal}{.}  \textcolor{fig1teal}{�}\textcolor{fig1teal}{�}\textcolor{fig1teal}{One} \textcolor{fig1teal}{of} \textcolor{fig1teal}{the} \textcolor{fig1ochre}{managers} \textcolor{fig1teal}{had} \textcolor{fig1teal}{me} \textcolor{fig1teal}{on} \textcolor{fig1teal}{the} \textcolor{fig1teal}{back} \textcolor{fig1teal}{and} \textcolor{fig1teal}{I} \textcolor{fig1teal}{was} \textcolor{fig1teal}{looking} \textcolor{fig1teal}{at} \textcolor{fig1teal}{what} \textcolor{fig1teal}{was} \textcolor{fig1teal}{no} \textcolor{fig1ochre}{floor}\textcolor{fig1teal}{,}\textcolor{fig1teal}{�}\textcolor{fig1teal}{�} \textcolor{fig1ochre}{Or}\textcolor{fig1ochre}{lor} \textcolor{fig1teal}{told}\textcolor{fig1teal}{,} \textcolor{fig1ochre}{br}\textcolor{fig1teal}{at} \textcolor{fig1teal}{looking} \textcolor{fig1teal}{at} \textcolor{fig1teal}{the} \textcolor{fig1ochre}{auditor}\textcolor{fig1teal}{�}\textcolor{fig1teal}{�}\textcolor{fig1teal}{s} \textcolor{fig1ochre}{lady}\textcolor{fig1teal}{-}\textcolor{fig1teal}{in}\textcolor{fig1teal}{-}\textcolor{fig1ochre}{law}\textcolor{fig1teal}{,} \textcolor{fig1teal}{looking} \textcolor{fig1teal}{against} \textcolor{fig1teal}{him}\textcolor{fig1teal}{.} \textcolor{fig1teal}{And} \textcolor{fig1teal}{I} \textcolor{fig1teal}{asked} \textcolor{fig1teal}{him} \textcolor{fig1teal}{into} \textcolor{fig1teal}{a} \textcolor{fig1teal}{New} \textcolor{fig1teal}{York} \textcolor{fig1ochre}{Food} \textcolor{fig1ochre}{suite} \textcolor{fig1teal}{at} \textcolor{fig1teal}{home}\textcolor{fig1teal}{,} \textcolor{fig1teal}{at} \textcolor{fig1ochre}{Roosevelt} \textcolor{fig1teal}{Center} \textcolor{fig1teal}{—} \textcolor{fig1teal}{something} \textcolor{fig1teal}{she} \textcolor{fig1teal}{said} \textcolor{fig1teal}{had} \textcolor{fig1teal}{been} \textcolor{fig1teal}{very} \textcolor{fig1teal}{effective} \textcolor{fig1teal}{—} \textcolor{fig1teal}{and} \textcolor{fig1teal}{there} \textcolor{fig1teal}{was} \textcolor{fig1teal}{some} \textcolor{fig1ochre}{chance}\textcolor{fig1teal}{.} \textcolor{fig1teal}{For} \textcolor{fig1teal}{the} \textcolor{fig1teal}{first} \textcolor{fig1teal}{time}\textcolor{fig1teal}{,} \textcolor{fig1teal}{I} \textcolor{fig1teal}{had} \textcolor{fig1teal}{to} \textcolor{fig1teal}{take} \textcolor{fig1teal}{off} \textcolor{fig1teal}{the} \textcolor{fig1teal}{same} \textcolor{fig1ochre}{shirt} \textcolor{fig1teal}{and} \textcolor{fig1teal}{ch}\textcolor{fig1ochre}{imes}\textcolor{fig1teal}{.} \textcolor{fig1teal}{This} \textcolor{fig1teal}{didn}\textcolor{fig1teal}{�}\textcolor{fig1teal}{�}\textcolor{fig1teal}{t} \textcolor{fig1ochre}{happen} \textcolor{fig1teal}{as} \textcolor{fig1teal}{we} \textcolor{fig1teal}{continued} \textcolor{fig1teal}{on} \textcolor{fig1teal}{with} \textcolor{fig1ochre}{Tab}\textcolor{fig1teal}{at}\textcolor{fig1teal}{,} \textcolor{fig1ochre}{Dr} \textcolor{fig1ochre}{continued}\textcolor{fig1teal}{,} \textcolor{fig1ochre}{ch}\textcolor{fig1ochre}{aun}\textcolor{fig1teal}{er} \textcolor{fig1teal}{and} \textcolor{fig1ochre}{counting} \textcolor{fig1teal}{any} \textcolor{fig1teal}{potential} \textcolor{fig1ochre}{logistics}\textcolor{fig1teal}{.}  \textcolor{fig1teal}{�}\textcolor{fig1teal}{�}\textcolor{fig1teal}{That}\textcolor{fig1teal}{�}\textcolor{fig1teal}{�}\textcolor{fig1teal}{s} \textcolor{fig1ochre}{normal}\textcolor{fig1teal}{.} \textcolor{fig1teal}{But} \textcolor{fig1teal}{I} \textcolor{fig1teal}{know} \textcolor{fig1teal}{that} \textcolor{fig1teal}{eventually} \textcolor{fig1teal}{you} \textcolor{fig1ochre}{moved} \textcolor{fig1teal}{in} \textcolor{fig1teal}{and} \textcolor{fig1teal}{got} \textcolor{fig1teal}{a} \textcolor{fig1teal}{control} \textcolor{fig1teal}{and} \textcolor{fig1teal}{the} \textcolor{fig1teal}{work} \textcolor{fig1teal}{was} \textcolor{fig1teal}{over} \textcolor{fig1teal}{and} \textcolor{fig1teal}{it}\textcolor{fig1teal}{�}\textcolor{fig1teal}{�}\textcolor{fig1teal}{s} \textcolor{fig1teal}{always} \textcolor{fig1teal}{questions} \textcolor{fig1teal}{about} \textcolor{fig1teal}{the} \textcolor{fig1ochre}{cost} \textcolor{fig1teal}{from} \textcolor{fig1teal}{per} \textcolor{fig1teal}{se}\textcolor{fig1teal}{.}\textcolor{fig1teal}{�}\textcolor{fig1teal}{�}  \textcolor{fig1ochre}{Where} \textcolor{fig1teal}{he} \textcolor{fig1ochre}{agreed}  \textcolor{fig1teal}{The} \textcolor{fig1teal}{official} \textcolor{fig1ochre}{reputation} \textcolor{fig1teal}{among} \textcolor{fig1teal}{the} \textcolor{fig1ochre}{researchers} \textcolor{fig1teal}{was} \textcolor{fig1ochre}{controversial} \textcolor{fig1teal}{but} \textcolor{fig1teal}{some} \textcolor{fig1teal}{(}\textcolor{fig1teal}{D}\textcolor{fig1teal}{-}\textcolor{fig1ochre}{state} \textcolor{fig1teal}{not} \textcolor{fig1ochre}{necessarily} \textcolor{fig1teal}{a} \textcolor{fig1teal}{good} \textcolor{fig1teal}{one}\textcolor{fig1teal}{)} \textcolor{fig1teal}{of} \textcolor{fig1teal}{him}\textcolor{fig1teal}{.} \textcolor{fig1ochre}{As} \textcolor{fig1teal}{such} \textcolor{fig1teal}{never} \textcolor{fig1teal}{the} \textcolor{fig1ochre}{candidate} \textcolor{fig1teal}{he} \textcolor{fig1teal}{is}\textcolor{fig1teal}{,} \textcolor{fig1teal}{Or}\textcolor{fig1ochre}{bee} \textcolor{fig1teal}{said} \textcolor{fig1teal}{it} \textcolor{fig1teal}{was} \textcolor{fig1teal}{not} \textcolor{fig1ochre}{clear} \textcolor{fig1teal}{to} \textcolor{fig1teal}{understand} \textcolor{fig1teal}{that} \textcolor{fig1teal}{an} \textcolor{fig1teal}{operation} \textcolor{fig1teal}{could} \textcolor{fig1teal}{get} \textcolor{fig1teal}{home} \textcolor{fig1teal}{to} \textcolor{fig1teal}{his} \textcolor{fig1teal}{first} \textcolor{fig1ochre}{master} \textcolor{fig1ochre}{cause}\textcolor{fig1teal}{.} \textcolor{fig1teal}{The} \textcolor{fig1teal}{National} \textcolor{fig1ochre}{Energy} \textcolor{fig1ochre}{Council} \textcolor{fig1teal}{official}\textcolor{fig1teal}{,} \textcolor{fig1ochre}{Per}\textcolor{fig1teal}{,} \textcolor{fig1teal}{reported} \textcolor{fig1teal}{after} \textcolor{fig1teal}{a} \textcolor{fig1teal}{running} \textcolor{fig1teal}{for} \textcolor{fig1ochre}{mayor} \textcolor{fig1teal}{that} \textcolor{fig1ochre}{Page}\textcolor{fig1ochre}{day} \textcolor{fig1teal}{said} \textcolor{fig1teal}{he} \textcolor{fig1teal}{was} \textcolor{fig1teal}{looking} \textcolor{fig1teal}{to} \textcolor{fig1teal}{the} \textcolor{fig1teal}{energy} \textcolor{fig1ochre}{company} \textcolor{fig1teal}{for} \textcolor{fig1ochre}{industry} \textcolor{fig1teal}{rules} \textcolor{fig1teal}{in} \textcolor{fig1teal}{the} \textcolor{fig1ochre}{cameras} \textcolor{fig1teal}{and} \textcolor{fig1teal}{the} \textcolor{fig1teal}{business}\textcolor{fig1teal}{,} \textcolor{fig1teal}{changing} \textcolor{fig1teal}{them} \textcolor{fig1teal}{to} \textcolor{fig1teal}{fit} \textcolor{fig1teal}{future} \textcolor{fig1ochre}{contracting}\textcolor{fig1teal}{,} \textcolor{fig1teal}{and} \textcolor{fig1teal}{to} \textcolor{fig1ochre}{skew} \textcolor{fig1teal}{the} \textcolor{fig1teal}{size} \textcolor{fig1teal}{of} \textcolor{fig1teal}{the} \textcolor{fig1ochre}{document}\textcolor{fig1teal}{.} \textcolor{fig1teal}{At} \textcolor{fig1teal}{what} \textcolor{fig1teal}{was} \textcolor{fig1teal}{in} \textcolor{fig1teal}{a} \textcolor{fig1ochre}{course} \textcolor{fig1teal}{three} \textcolor{fig1teal}{hours} \textcolor{fig1teal}{early} \textcolor{fig1teal}{by} \textcolor{fig1teal}{my} \textcolor{fig1ochre}{reporter}\textcolor{fig1teal}{,} \textcolor{fig1ochre}{Larson} \textcolor{fig1teal}{said} \textcolor{fig1teal}{she} \textcolor{fig1teal}{had} \textcolor{fig1teal}{no} \textcolor{fig1teal}{access} \textcolor{fig1teal}{to} \textcolor{fig1teal}{the} \textcolor{fig1teal}{data}\textcolor{fig1teal}{.} \textcolor{fig1teal}{What} \textcolor{fig1ochre}{triggered} \textcolor{fig1teal}{what} \textcolor{fig1teal}{the} \textcolor{fig1ochre}{director}\textcolor{fig1teal}{,} \textcolor{fig1teal}{Se}\textcolor{fig1ochre}{gal} \textcolor{fig1teal}{told} \textcolor{fig1teal}{me} \textcolor{fig1teal}{—} \textcolor{fig1teal}{was} \textcolor{fig1teal}{�}\textcolor{fig1teal}{�}\textcolor{fig1teal}{I} \textcolor{fig1teal}{did} \textcolor{fig1teal}{not} \textcolor{fig1teal}{do} \textcolor{fig1teal}{that} \textcolor{fig1teal}{before} \textcolor{fig1teal}{I} \textcolor{fig1teal}{opened} \textcolor{fig1teal}{that} \textcolor{fig1ochre}{tank}\textcolor{fig1teal}{.}\textcolor{fig1teal}{�}\textcolor{fig1teal}{�}  \textcolor{fig1teal}{To} \textcolor{fig1teal}{have} \textcolor{fig1teal}{one} \textcolor{fig1teal}{of} \textcolor{fig1teal}{these} \textcolor{fig1teal}{know}\textcolor{fig1teal}{-} \textcolor{fig1ochre}{ya} \textcolor{fig1teal}{years} \textcolor{fig1teal}{in} \textcolor{fig1ochre}{regulation} \textcolor{fig1teal}{for} \textcolor{fig1ochre}{energy} \textcolor{fig1teal}{life}\textcolor{fig1teal}{,} \textcolor{fig1teal}{he} \textcolor{fig1teal}{had} \textcolor{fig1teal}{to} \textcolor{fig1ochre}{start} \textcolor{fig1teal}{getting} \textcolor{fig1teal}{every} \textcolor{fig1ochre}{answer} \textcolor{fig1teal}{in} \textcolor{fig1ochre}{exchange} \textcolor{fig1teal}{for} \textcolor{fig1ochre}{respect} \textcolor{fig1teal}{of} \textcolor{fig1ochre}{background} \textcolor{fig1teal}{with} \textcolor{fig1teal}{a} \textcolor{fig1teal}{few} \textcolor{fig1ochre}{pounds} \textcolor{fig1teal}{of} \textcolor{fig1ochre}{bureaucr}\textcolor{fig1teal}{at} \textcolor{fig1teal}{here} \textcolor{fig1teal}{that}\textcolor{fig1teal}{�}\textcolor{fig1teal}{�}\textcolor{fig1teal}{s} \textcolor{fig1teal}{been} \textcolor{fig1teal}{tax} \textcolor{fig1teal}{for} \textcolor{fig1teal}{every} \textcolor{fig1teal}{type} \textcolor{fig1teal}{of} \textcolor{fig1teal}{activity} \textcolor{fig1teal}{by} \textcolor{fig1ochre}{multiple} \textcolor{fig1teal}{people}\textcolor{fig1teal}{,} \textcolor{fig1teal}{including} \textcolor{fig1ochre}{health}\textcolor{fig1teal}{,} \textcolor{fig1teal}{in} \textcolor{fig1teal}{an} \textcolor{fig1ochre}{individual} \textcolor{fig1teal}{and} \textcolor{fig1teal}{by} \textcolor{fig1teal}{a} \textcolor{fig1ochre}{stroke} \textcolor{fig1teal}{and} \textcolor{fig1ochre}{lump}\textcolor{fig1teal}{ed} \textcolor{fig1teal}{in} \textcolor{fig1teal}{with} \textcolor{fig1teal}{criminal} \textcolor{fig1teal}{activity}\textcolor{fig1teal}{,} \textcolor{fig1teal}{for} \textcolor{fig1teal}{which} \textcolor{fig1teal}{he} \textcolor{fig1ochre}{serves} \textcolor{fig1teal}{as} \textcolor{fig1teal}{a} \textcolor{fig1ochre}{security} \textcolor{fig1teal}{day}\textcolor{fig1teal}{.}  \textcolor{fig1ochre}{Today}\textcolor{fig1teal}{,} \textcolor{fig1teal}{and} \textcolor{fig1teal}{the} \textcolor{fig1teal}{type} \textcolor{fig1teal}{of} \textcolor{fig1ochre}{resources} \textcolor{fig1teal}{he} \textcolor{fig1teal}{has} \textcolor{fig1teal}{is} \textcolor{fig1teal}{one} \textcolor{fig1teal}{of} \textcolor{fig1teal}{the} \textcolor{fig1teal}{company}\textcolor{fig1teal}{�}\textcolor{fig1teal}{�}\textcolor{fig1teal}{s} \textcolor{fig1teal}{most} \textcolor{fig1teal}{important} \textcolor{fig1teal}{part} \textcolor{fig1teal}{of} \textcolor{fig1teal}{business} \textcolor{fig1teal}{now}\textcolor{fig1teal}{.}  \textcolor{fig1teal}{One} \textcolor{fig1teal}{of} \textcolor{fig1teal}{his} \textcolor{fig1ochre}{former} \textcolor{fig1ochre}{staffers} \textcolor{fig1teal}{told} \textcolor{fig1teal}{me} \textcolor{fig1teal}{that} \textcolor{fig1teal}{a} \textcolor{fig1teal}{possible} \textcolor{fig1teal}{run} \textcolor{fig1teal}{for} \textcolor{fig1teal}{the} \textcolor{fig1teal}{oil} \textcolor{fig1ochre}{refinery} \textcolor{fig1teal}{was} \textcolor{fig1teal}{not} \textcolor{fig1teal}{too} \textcolor{fig1teal}{possible}\textcolor{fig1teal}{.} \textcolor{fig1teal}{He} \textcolor{fig1teal}{also} \textcolor{fig1teal}{told} \textcolor{fig1teal}{The} \textcolor{fig1ochre}{Post}\textcolor{fig1teal}{�}\textcolor{fig1teal}{�}\textcolor{fig1teal}{s} \textcolor{fig1ochre}{Barbara} \textcolor{fig1teal}{O}\textcolor{fig1ochre}{ster}\textcolor{fig1teal}{,} \textcolor{fig1teal}{�}\textcolor{fig1teal}{�}\textcolor{fig1teal}{You}\textcolor{fig1teal}{�}\textcolor{fig1teal}{�}\textcolor{fig1teal}{re}
\end{samplebox}
\caption{Posterior Refinement trajectory on OpenWebText, Sample-2.}
\label{fig:refine_sample_owt_2}
\end{figure}
\clearpage

\begin{figure}[H]
\centering
\begin{samplebox}{\normalfont\textbf{$\ours$ (PR), Round: 3} \hfill \normalfont\scriptsize \textcolor{darkgray}{Committed: \textbf{948/1024}}}
\scriptsize\linespread{0.9}\selectfont
\textcolor{fig1teal}{.}  \textcolor{fig1ochre}{Corn}\textcolor{fig1ochre}{nie} \textcolor{fig1teal}{is} \textcolor{fig1teal}{another} \textcolor{fig1ochre}{prick}\textcolor{fig1teal}{ly} \textcolor{fig1teal}{guy}\textcolor{fig1teal}{.} \textcolor{fig1teal}{He} \textcolor{fig1teal}{has} \textcolor{fig1teal}{been} \textcolor{fig1teal}{working} \textcolor{fig1teal}{on} \textcolor{fig1teal}{a} \textcolor{fig1ochre}{wine} \textcolor{fig1ochre}{operation} \textcolor{fig1teal}{for} \textcolor{fig1teal}{9} \textcolor{fig1teal}{years}\textcolor{fig1teal}{,} \textcolor{fig1teal}{shooting} \textcolor{fig1teal}{for} \textcolor{fig1teal}{new} \textcolor{fig1teal}{in} \textcolor{fig1teal}{different} \textcolor{fig1teal}{parts} \textcolor{fig1teal}{of} \textcolor{fig1teal}{New} \textcolor{fig1teal}{York}\textcolor{fig1teal}{.} \textcolor{fig1teal}{I} \textcolor{fig1teal}{think} \textcolor{fig1teal}{it}\textcolor{fig1teal}{'s} \textcolor{fig1teal}{just} \textcolor{fig1teal}{that} \textcolor{fig1teal}{after} \textcolor{fig1teal}{the} \textcolor{fig1teal}{recent} \textcolor{fig1teal}{project} \textcolor{fig1teal}{in} \textcolor{fig1teal}{front} \textcolor{fig1teal}{of} \textcolor{fig1teal}{school}\textcolor{fig1teal}{,} \textcolor{fig1teal}{many} \textcolor{fig1teal}{more} \textcolor{fig1teal}{of} \textcolor{fig1teal}{them} \textcolor{fig1ochre}{turned} \textcolor{fig1teal}{it} \textcolor{fig1teal}{into} \textcolor{fig1teal}{a} \textcolor{fig1teal}{line} \textcolor{fig1teal}{of} \textcolor{fig1ochre}{driving}\textcolor{fig1teal}{.}  \textcolor{fig1teal}{That} \textcolor{fig1teal}{night}\textcolor{fig1teal}{,} \textcolor{fig1teal}{Or}\textcolor{fig1ochre}{zan} \textcolor{fig1teal}{came} \textcolor{fig1teal}{to} \textcolor{fig1ochre}{crim}\textcolor{fig1teal}{p}\textcolor{fig1teal}{,} \textcolor{fig1teal}{with} \textcolor{fig1teal}{no} \textcolor{fig1teal}{return}\textcolor{fig1teal}{.} \textcolor{fig1teal}{In} \textcolor{fig1teal}{response} \textcolor{fig1teal}{to} \textcolor{fig1teal}{the} \textcolor{fig1teal}{press} \textcolor{fig1teal}{conference} \textcolor{fig1teal}{after} \textcolor{fig1teal}{the} \textcolor{fig1teal}{opening} \textcolor{fig1teal}{day}\textcolor{fig1teal}{,} \textcolor{fig1ochre}{Senator} \textcolor{fig1teal}{Andrew} \textcolor{fig1ochre}{Holmes} \textcolor{fig1teal}{(}\textcolor{fig1teal}{R}\textcolor{fig1teal}{-}\textcolor{fig1ochre}{FL}\textcolor{fig1teal}{)} \textcolor{fig1teal}{told} \textcolor{fig1teal}{me} \textcolor{fig1teal}{the} \textcolor{fig1teal}{news}\textcolor{fig1teal}{:}  \textcolor{fig1teal}{�}\textcolor{fig1teal}{�}\textcolor{fig1teal}{To} \textcolor{fig1teal}{find} \textcolor{fig1teal}{it} \textcolor{fig1teal}{in} \textcolor{fig1teal}{an} \textcolor{fig1teal}{office} \textcolor{fig1teal}{that} \textcolor{fig1teal}{is} \textcolor{fig1teal}{like} \textcolor{fig1teal}{a} \textcolor{fig1ochre}{cart} \textcolor{fig1teal}{—} \textcolor{fig1teal}{we} \textcolor{fig1teal}{still} \textcolor{fig1teal}{think} \textcolor{fig1teal}{of} \textcolor{fig1teal}{our} \textcolor{fig1teal}{business} \textcolor{fig1teal}{with} \textcolor{fig1teal}{the} \textcolor{fig1teal}{government}\textcolor{fig1teal}{.}\textcolor{fig1teal}{�}\textcolor{fig1teal}{�}  \textcolor{fig1teal}{It} \textcolor{fig1teal}{was} \textcolor{fig1teal}{the} \textcolor{fig1teal}{first} \textcolor{fig1teal}{police} \textcolor{fig1ochre}{order} \textcolor{fig1teal}{for} \textcolor{fig1teal}{an} \textcolor{fig1teal}{organization} \textcolor{fig1teal}{in} \textcolor{fig1teal}{this} \textcolor{fig1teal}{time} \textcolor{fig1teal}{of} \textcolor{fig1teal}{war}\textcolor{fig1teal}{.} \textcolor{fig1teal}{It} \textcolor{fig1teal}{was} \textcolor{fig1teal}{also} \textcolor{fig1teal}{that} \textcolor{fig1teal}{he} \textcolor{fig1teal}{came} \textcolor{fig1teal}{to} \textcolor{fig1teal}{open} \textcolor{fig1teal}{the} \textcolor{fig1teal}{business} \textcolor{fig1teal}{and} \textcolor{fig1teal}{its} \textcolor{fig1teal}{website}\textcolor{fig1teal}{,} \textcolor{fig1teal}{and} \textcolor{fig1teal}{then} \textcolor{fig1teal}{was}\textcolor{fig1teal}{,} \textcolor{fig1teal}{to} \textcolor{fig1teal}{cut} \textcolor{fig1teal}{short} \textcolor{fig1teal}{his} \textcolor{fig1ochre}{successor}\textcolor{fig1teal}{,} \textcolor{fig1ochre}{Kyle}\textcolor{fig1teal}{,} \textcolor{fig1teal}{who} \textcolor{fig1teal}{has} \textcolor{fig1teal}{served} \textcolor{fig1teal}{as} \textcolor{fig1teal}{software} \textcolor{fig1teal}{services} \textcolor{fig1teal}{for} \textcolor{fig1ochre}{IBM}\textcolor{fig1teal}{,} \textcolor{fig1ochre}{Google}\textcolor{fig1teal}{,} \textcolor{fig1teal}{and} \textcolor{fig1ochre}{Microsoft} \textcolor{fig1teal}{for} \textcolor{fig1teal}{decades}\textcolor{fig1teal}{.}  \textcolor{fig1teal}{�}\textcolor{fig1teal}{�}\textcolor{fig1teal}{There}\textcolor{fig1teal}{�}\textcolor{fig1teal}{�}\textcolor{fig1teal}{s} \textcolor{fig1teal}{a} \textcolor{fig1teal}{lot} \textcolor{fig1teal}{for} \textcolor{fig1teal}{me} \textcolor{fig1teal}{to} \textcolor{fig1teal}{do} \textcolor{fig1teal}{with} \textcolor{fig1teal}{myself} \textcolor{fig1teal}{that} \textcolor{fig1teal}{would} \textcolor{fig1teal}{help} \textcolor{fig1teal}{me}\textcolor{fig1teal}{.} \textcolor{fig1teal}{I} \textcolor{fig1teal}{think} \textcolor{fig1teal}{this} \textcolor{fig1teal}{is} \textcolor{fig1teal}{my} \textcolor{fig1teal}{first} \textcolor{fig1teal}{work}\textcolor{fig1teal}{,}\textcolor{fig1teal}{�}\textcolor{fig1teal}{�} \textcolor{fig1teal}{he} \textcolor{fig1teal}{says}\textcolor{fig1teal}{.} \textcolor{fig1teal}{�}\textcolor{fig1teal}{�}\textcolor{fig1teal}{These} \textcolor{fig1teal}{were} \textcolor{fig1teal}{the} \textcolor{fig1teal}{real} \textcolor{fig1ochre}{guys}\textcolor{fig1teal}{.} \textcolor{fig1teal}{If} \textcolor{fig1teal}{you} \textcolor{fig1teal}{have} \textcolor{fig1teal}{to} \textcolor{fig1teal}{see} \textcolor{fig1teal}{the} \textcolor{fig1teal}{new} \textcolor{fig1ochre}{JavaScript} \textcolor{fig1teal}{which} \textcolor{fig1teal}{was} \textcolor{fig1teal}{made} \textcolor{fig1teal}{to} \textcolor{fig1ochre}{replace} \textcolor{fig1teal}{all} \textcolor{fig1teal}{of} \textcolor{fig1teal}{him}\textcolor{fig1teal}{,} \textcolor{fig1teal}{ask} \textcolor{fig1teal}{one} \textcolor{fig1teal}{of} \textcolor{fig1teal}{you} \textcolor{fig1teal}{to} \textcolor{fig1teal}{think} \textcolor{fig1teal}{about} \textcolor{fig1teal}{what} \textcolor{fig1teal}{each} \textcolor{fig1teal}{other} \textcolor{fig1teal}{work} \textcolor{fig1teal}{could} \textcolor{fig1teal}{show}\textcolor{fig1teal}{.}  \textcolor{fig1teal}{On} \textcolor{fig1teal}{New} \textcolor{fig1teal}{York}\textcolor{fig1teal}{�}\textcolor{fig1teal}{�}\textcolor{fig1teal}{s} \textcolor{fig1teal}{State} \textcolor{fig1teal}{House} \textcolor{fig1teal}{and} \textcolor{fig1ochre}{Mid}\textcolor{fig1teal}{-}\textcolor{fig1teal}{A} \textcolor{fig1ochre}{syndrome}\textcolor{fig1teal}{,} \textcolor{fig1teal}{his} \textcolor{fig1teal}{work} \textcolor{fig1teal}{is} \textcolor{fig1teal}{focused} \textcolor{fig1teal}{on} \textcolor{fig1teal}{money} \textcolor{fig1teal}{and} \textcolor{fig1teal}{growth}\textcolor{fig1teal}{.} \textcolor{fig1teal}{In} \textcolor{fig1teal}{the} \textcolor{fig1teal}{public} \textcolor{fig1teal}{health} \textcolor{fig1teal}{system}\textcolor{fig1teal}{,} \textcolor{fig1teal}{for} \textcolor{fig1teal}{him} \textcolor{fig1teal}{to} \textcolor{fig1teal}{get} \textcolor{fig1teal}{world}\textcolor{fig1teal}{-}\textcolor{fig1teal}{class} \textcolor{fig1teal}{take} \textcolor{fig1teal}{of} \textcolor{fig1teal}{a} \textcolor{fig1ochre}{charity} \textcolor{fig1teal}{and} \textcolor{fig1teal}{money} \textcolor{fig1teal}{from} \textcolor{fig1teal}{a} \textcolor{fig1teal}{friend} \textcolor{fig1teal}{to} \textcolor{fig1ochre}{invest}\textcolor{fig1teal}{,} \textcolor{fig1teal}{there} \textcolor{fig1teal}{are} \textcolor{fig1teal}{people} \textcolor{fig1teal}{in} \textcolor{fig1teal}{on} \textcolor{fig1teal}{life}\textcolor{fig1teal}{s} \textcolor{fig1teal}{of} \textcolor{fig1teal}{days}\textcolor{fig1teal}{,} \textcolor{fig1teal}{and} \textcolor{fig1ochre}{counting} \textcolor{fig1teal}{fine}\textcolor{fig1teal}{,} \textcolor{fig1teal}{but} \textcolor{fig1teal}{not} \textcolor{fig1teal}{enough}\textcolor{fig1teal}{,} \textcolor{fig1teal}{these} \textcolor{fig1teal}{days}\textcolor{fig1teal}{.}  \textcolor{fig1teal}{It} \textcolor{fig1teal}{was} \textcolor{fig1teal}{because} \textcolor{fig1teal}{of} \textcolor{fig1teal}{the} \textcolor{fig1teal}{amount} \textcolor{fig1teal}{of} \textcolor{fig1teal}{real} \textcolor{fig1teal}{money} \textcolor{fig1teal}{and} \textcolor{fig1teal}{the} \textcolor{fig1teal}{day} \textcolor{fig1teal}{off} \textcolor{fig1teal}{the} \textcolor{fig1teal}{company}\textcolor{fig1teal}{�}\textcolor{fig1teal}{�}\textcolor{fig1teal}{s} \textcolor{fig1teal}{office}\textcolor{fig1teal}{.} \textcolor{fig1ochre}{Interestingly}\textcolor{fig1teal}{,} \textcolor{fig1teal}{the} \textcolor{fig1teal}{use} \textcolor{fig1teal}{of} \textcolor{fig1ochre}{ric}\textcolor{fig1teal}{in} \textcolor{fig1teal}{was} \textcolor{fig1teal}{personal}\textcolor{fig1teal}{,} \textcolor{fig1teal}{because} \textcolor{fig1teal}{many} \textcolor{fig1teal}{reform}\textcolor{fig1teal}{ist}\textcolor{fig1ochre}{encies} \textcolor{fig1teal}{were} \textcolor{fig1teal}{typical} \textcolor{fig1teal}{from} \textcolor{fig1teal}{last} \textcolor{fig1teal}{year}\textcolor{fig1teal}{.} \textcolor{fig1teal}{40} \textcolor{fig1teal}{percent} \textcolor{fig1teal}{of} \textcolor{fig1teal}{the} \textcolor{fig1teal}{number} \textcolor{fig1teal}{of} \textcolor{fig1ochre}{illicit} \textcolor{fig1teal}{drug} \textcolor{fig1teal}{use}\textcolor{fig1teal}{,} \textcolor{fig1teal}{as} \textcolor{fig1teal}{well} \textcolor{fig1teal}{as} \textcolor{fig1teal}{money} \textcolor{fig1ochre}{laundering}\textcolor{fig1teal}{.} \textcolor{fig1teal}{and} \textcolor{fig1ochre}{prospect}\textcolor{fig1teal}{ing} \textcolor{fig1teal}{were} \textcolor{fig1teal}{all} \textcolor{fig1teal}{21}\textcolor{fig1teal}{-}\textcolor{fig1teal}{year}\textcolor{fig1teal}{-}\textcolor{fig1teal}{old} \textcolor{fig1teal}{students} \textcolor{fig1teal}{on} \textcolor{fig1teal}{their} \textcolor{fig1teal}{own}\textcolor{fig1teal}{.}  \textcolor{fig1teal}{�}\textcolor{fig1teal}{�}\textcolor{fig1teal}{I} \textcolor{fig1teal}{think} \textcolor{fig1teal}{with} \textcolor{fig1teal}{all} \textcolor{fig1teal}{the} \textcolor{fig1teal}{big} \textcolor{fig1teal}{data} \textcolor{fig1teal}{since} \textcolor{fig1teal}{I}\textcolor{fig1teal}{�}\textcolor{fig1teal}{�}\textcolor{fig1teal}{ve} \textcolor{fig1teal}{worked} \textcolor{fig1teal}{with} \textcolor{fig1teal}{David} \textcolor{fig1teal}{Sh}\textcolor{fig1teal}{um}\textcolor{fig1teal}{,} \textcolor{fig1teal}{the} \textcolor{fig1teal}{amount} \textcolor{fig1teal}{of} \textcolor{fig1teal}{energy} \textcolor{fig1teal}{sites} \textcolor{fig1teal}{or} \textcolor{fig1teal}{what} \textcolor{fig1teal}{I} \textcolor{fig1teal}{found} \textcolor{fig1teal}{there} \textcolor{fig1teal}{was}\textcolor{fig1teal}{;} \textcolor{fig1teal}{a} \textcolor{fig1teal}{lot} \textcolor{fig1teal}{in} \textcolor{fig1teal}{which} \textcolor{fig1teal}{a} \textcolor{fig1teal}{lot} \textcolor{fig1teal}{was} \textcolor{fig1teal}{more} \textcolor{fig1teal}{at} \textcolor{fig1teal}{people} \textcolor{fig1teal}{other} \textcolor{fig1teal}{than} \textcolor{fig1teal}{David}\textcolor{fig1teal}{�}\textcolor{fig1teal}{�}\textcolor{fig1teal}{s}\textcolor{fig1teal}{,} \textcolor{fig1teal}{but} \textcolor{fig1teal}{I} \textcolor{fig1teal}{know}\textcolor{fig1teal}{,} \textcolor{fig1teal}{my} \textcolor{fig1teal}{brain} \textcolor{fig1teal}{want} \textcolor{fig1teal}{us} \textcolor{fig1teal}{to} \textcolor{fig1teal}{say} \textcolor{fig1teal}{that}\textcolor{fig1teal}{,}\textcolor{fig1teal}{�}\textcolor{fig1teal}{�} \textcolor{fig1teal}{Se}\textcolor{fig1ochre}{anke} \textcolor{fig1teal}{said}\textcolor{fig1teal}{.} \textcolor{fig1teal}{�}\textcolor{fig1teal}{�}\textcolor{fig1teal}{In} \textcolor{fig1teal}{fact}\textcolor{fig1teal}{,} \textcolor{fig1teal}{I}\textcolor{fig1teal}{�}\textcolor{fig1teal}{�}\textcolor{fig1teal}{m} \textcolor{fig1teal}{not} \textcolor{fig1teal}{sure} \textcolor{fig1teal}{I} \textcolor{fig1teal}{really}\textcolor{fig1teal}{�}\textcolor{fig1teal}{�}\textcolor{fig1teal}{m} \textcolor{fig1teal}{going} \textcolor{fig1teal}{to} \textcolor{fig1teal}{do} \textcolor{fig1teal}{that} \textcolor{fig1teal}{money} \textcolor{fig1teal}{in} \textcolor{fig1ochre}{David}\textcolor{fig1teal}{.} \textcolor{fig1teal}{I} \textcolor{fig1teal}{was} \textcolor{fig1teal}{going} \textcolor{fig1teal}{to} \textcolor{fig1teal}{do} \textcolor{fig1teal}{this} \textcolor{fig1teal}{to} \textcolor{fig1teal}{schools} \textcolor{fig1teal}{and} \textcolor{fig1teal}{so} \textcolor{fig1teal}{other} \textcolor{fig1teal}{areas} \textcolor{fig1teal}{don}\textcolor{fig1teal}{�}\textcolor{fig1teal}{�}\textcolor{fig1teal}{t} \textcolor{fig1teal}{give} \textcolor{fig1teal}{them} \textcolor{fig1teal}{the} \textcolor{fig1teal}{data}\textcolor{fig1teal}{.}\textcolor{fig1teal}{�}\textcolor{fig1teal}{�}  \textcolor{fig1teal}{The} \textcolor{fig1ochre}{share} \textcolor{fig1teal}{didn}\textcolor{fig1teal}{�}\textcolor{fig1teal}{�}\textcolor{fig1teal}{t} \textcolor{fig1teal}{fall} \textcolor{fig1teal}{to} \textcolor{fig1teal}{him}\textcolor{fig1teal}{,} \textcolor{fig1teal}{and} \textcolor{fig1teal}{he} \textcolor{fig1teal}{used} \textcolor{fig1teal}{to} \textcolor{fig1teal}{show} \textcolor{fig1teal}{that} \textcolor{fig1teal}{he} \textcolor{fig1teal}{could} \textcolor{fig1teal}{make} \textcolor{fig1teal}{people} \textcolor{fig1ochre}{reached} \textcolor{fig1teal}{to} \textcolor{fig1teal}{deal} \textcolor{fig1teal}{with} \textcolor{fig1teal}{the} \textcolor{fig1teal}{financial} \textcolor{fig1teal}{crisis}\textcolor{fig1teal}{.}  \textcolor{fig1teal}{�}\textcolor{fig1teal}{�}\textcolor{fig1teal}{One} \textcolor{fig1teal}{of} \textcolor{fig1teal}{the} \textcolor{fig1teal}{individuals} \textcolor{fig1teal}{had} \textcolor{fig1teal}{me} \textcolor{fig1teal}{on} \textcolor{fig1teal}{the} \textcolor{fig1teal}{back} \textcolor{fig1teal}{and} \textcolor{fig1teal}{I} \textcolor{fig1teal}{was} \textcolor{fig1teal}{looking} \textcolor{fig1teal}{at} \textcolor{fig1teal}{what} \textcolor{fig1teal}{was} \textcolor{fig1teal}{no} \textcolor{fig1teal}{deal}\textcolor{fig1teal}{,}\textcolor{fig1teal}{�}\textcolor{fig1teal}{�} \textcolor{fig1teal}{I} \textcolor{fig1teal}{was} \textcolor{fig1teal}{told}\textcolor{fig1teal}{,} \textcolor{fig1teal}{Se}\textcolor{fig1teal}{at} \textcolor{fig1teal}{looking} \textcolor{fig1teal}{at} \textcolor{fig1teal}{the} \textcolor{fig1teal}{company}\textcolor{fig1teal}{�}\textcolor{fig1teal}{�}\textcolor{fig1teal}{s} \textcolor{fig1ochre}{father}\textcolor{fig1teal}{-}\textcolor{fig1teal}{in}\textcolor{fig1teal}{-}\textcolor{fig1ochre}{law}\textcolor{fig1teal}{,} \textcolor{fig1teal}{looking} \textcolor{fig1teal}{against} \textcolor{fig1teal}{him}\textcolor{fig1teal}{.} \textcolor{fig1teal}{And} \textcolor{fig1teal}{I} \textcolor{fig1teal}{asked} \textcolor{fig1teal}{him} \textcolor{fig1teal}{into} \textcolor{fig1teal}{a} \textcolor{fig1teal}{New} \textcolor{fig1teal}{York} \textcolor{fig1ochre}{technical} \textcolor{fig1teal}{office} \textcolor{fig1teal}{at} \textcolor{fig1teal}{home}\textcolor{fig1teal}{,} \textcolor{fig1teal}{at} \textcolor{fig1ochre}{Rockefeller} \textcolor{fig1teal}{Center} \textcolor{fig1teal}{—} \textcolor{fig1teal}{something} \textcolor{fig1teal}{she} \textcolor{fig1teal}{said} \textcolor{fig1teal}{had} \textcolor{fig1teal}{been} \textcolor{fig1teal}{very} \textcolor{fig1teal}{effective} \textcolor{fig1teal}{—} \textcolor{fig1teal}{and} \textcolor{fig1teal}{there} \textcolor{fig1teal}{was} \textcolor{fig1teal}{some} \textcolor{fig1teal}{competition}\textcolor{fig1teal}{.} \textcolor{fig1teal}{For} \textcolor{fig1teal}{the} \textcolor{fig1teal}{first} \textcolor{fig1teal}{time}\textcolor{fig1teal}{,} \textcolor{fig1teal}{I} \textcolor{fig1teal}{had} \textcolor{fig1teal}{to} \textcolor{fig1teal}{take} \textcolor{fig1teal}{off} \textcolor{fig1teal}{the} \textcolor{fig1teal}{same} \textcolor{fig1ochre}{shirt} \textcolor{fig1teal}{and} \textcolor{fig1teal}{ch}\textcolor{fig1ochre}{atted}\textcolor{fig1teal}{.} \textcolor{fig1teal}{This} \textcolor{fig1teal}{didn}\textcolor{fig1teal}{�}\textcolor{fig1teal}{�}\textcolor{fig1teal}{t} \textcolor{fig1teal}{change} \textcolor{fig1teal}{as} \textcolor{fig1teal}{we} \textcolor{fig1teal}{continued} \textcolor{fig1teal}{on} \textcolor{fig1teal}{with} \textcolor{fig1teal}{Se}\textcolor{fig1teal}{at}\textcolor{fig1teal}{,} \textcolor{fig1teal}{another} \textcolor{fig1teal}{person}\textcolor{fig1teal}{,} \textcolor{fig1teal}{home} \textcolor{fig1teal}{e}\textcolor{fig1teal}{er} \textcolor{fig1teal}{and} \textcolor{fig1ochre}{anticipating} \textcolor{fig1teal}{any} \textcolor{fig1teal}{potential} \textcolor{fig1teal}{activity}\textcolor{fig1teal}{.}  \textcolor{fig1teal}{�}\textcolor{fig1teal}{�}\textcolor{fig1teal}{That}\textcolor{fig1teal}{�}\textcolor{fig1teal}{�}\textcolor{fig1teal}{s} \textcolor{fig1teal}{great}\textcolor{fig1teal}{.} \textcolor{fig1teal}{But} \textcolor{fig1teal}{I} \textcolor{fig1teal}{know} \textcolor{fig1teal}{that} \textcolor{fig1teal}{eventually} \textcolor{fig1teal}{you} \textcolor{fig1ochre}{came} \textcolor{fig1teal}{in} \textcolor{fig1teal}{and} \textcolor{fig1teal}{got} \textcolor{fig1teal}{a} \textcolor{fig1teal}{control} \textcolor{fig1teal}{and} \textcolor{fig1teal}{the} \textcolor{fig1teal}{work} \textcolor{fig1teal}{was} \textcolor{fig1teal}{over} \textcolor{fig1teal}{and} \textcolor{fig1teal}{it}\textcolor{fig1teal}{�}\textcolor{fig1teal}{�}\textcolor{fig1teal}{s} \textcolor{fig1teal}{always} \textcolor{fig1teal}{questions} \textcolor{fig1teal}{about} \textcolor{fig1teal}{the} \textcolor{fig1ochre}{revenues} \textcolor{fig1teal}{from} \textcolor{fig1teal}{per} \textcolor{fig1teal}{se}\textcolor{fig1teal}{.}\textcolor{fig1teal}{�}\textcolor{fig1teal}{�}  \textcolor{fig1ochre}{Did} \textcolor{fig1teal}{he}\textcolor{fig1teal}{?}  \textcolor{fig1teal}{The} \textcolor{fig1teal}{official} \textcolor{fig1ochre}{perception} \textcolor{fig1teal}{among} \textcolor{fig1teal}{the} \textcolor{fig1teal}{media} \textcolor{fig1teal}{was} \textcolor{fig1teal}{times} \textcolor{fig1teal}{but} \textcolor{fig1teal}{some} \textcolor{fig1teal}{(}\textcolor{fig1teal}{D}\textcolor{fig1teal}{-}\textcolor{fig1ochre}{wa} \textcolor{fig1teal}{not} \textcolor{fig1teal}{always} \textcolor{fig1teal}{a} \textcolor{fig1teal}{good} \textcolor{fig1teal}{one}\textcolor{fig1teal}{)} \textcolor{fig1teal}{of} \textcolor{fig1teal}{him}\textcolor{fig1teal}{.} \textcolor{fig1teal}{With} \textcolor{fig1teal}{such} \textcolor{fig1teal}{never} \textcolor{fig1teal}{the} \textcolor{fig1ochre}{prosecutor} \textcolor{fig1teal}{he} \textcolor{fig1teal}{is}\textcolor{fig1teal}{,} \textcolor{fig1teal}{Or}\textcolor{fig1ochre}{enberg} \textcolor{fig1teal}{said} \textcolor{fig1teal}{it} \textcolor{fig1teal}{was} \textcolor{fig1teal}{not} \textcolor{fig1teal}{easy} \textcolor{fig1teal}{to} \textcolor{fig1teal}{understand} \textcolor{fig1teal}{that} \textcolor{fig1teal}{an} \textcolor{fig1teal}{operation} \textcolor{fig1teal}{could} \textcolor{fig1teal}{get} \textcolor{fig1teal}{home} \textcolor{fig1teal}{to} \textcolor{fig1teal}{his} \textcolor{fig1teal}{first} \textcolor{fig1ochre}{column} \textcolor{fig1teal}{ever}\textcolor{fig1teal}{.} \textcolor{fig1teal}{The} \textcolor{fig1teal}{National} \textcolor{fig1teal}{Security} \textcolor{fig1ochre}{Council} \textcolor{fig1teal}{official}\textcolor{fig1teal}{,} \textcolor{fig1teal}{though}\textcolor{fig1teal}{,} \textcolor{fig1teal}{reported} \textcolor{fig1teal}{after} \textcolor{fig1teal}{a} \textcolor{fig1teal}{running} \textcolor{fig1teal}{for} \textcolor{fig1ochre}{governor} \textcolor{fig1teal}{that} \textcolor{fig1teal}{Or}\textcolor{fig1ochre}{lov} \textcolor{fig1teal}{said} \textcolor{fig1teal}{he} \textcolor{fig1teal}{was} \textcolor{fig1teal}{looking} \textcolor{fig1teal}{to} \textcolor{fig1teal}{the} \textcolor{fig1teal}{energy} \textcolor{fig1ochre}{committees} \textcolor{fig1teal}{for} \textcolor{fig1teal}{the} \textcolor{fig1teal}{rules} \textcolor{fig1teal}{in} \textcolor{fig1teal}{the} \textcolor{fig1ochre}{boiler} \textcolor{fig1teal}{and} \textcolor{fig1teal}{the} \textcolor{fig1teal}{business}\textcolor{fig1teal}{,} \textcolor{fig1teal}{changing} \textcolor{fig1teal}{them} \textcolor{fig1teal}{to} \textcolor{fig1teal}{fit} \textcolor{fig1teal}{future} \textcolor{fig1ochre}{trends}\textcolor{fig1teal}{,} \textcolor{fig1teal}{and} \textcolor{fig1teal}{to} \textcolor{fig1ochre}{trim} \textcolor{fig1teal}{the} \textcolor{fig1teal}{size} \textcolor{fig1teal}{of} \textcolor{fig1teal}{the} \textcolor{fig1ochre}{rolls}\textcolor{fig1teal}{.} \textcolor{fig1teal}{At} \textcolor{fig1teal}{what} \textcolor{fig1teal}{was} \textcolor{fig1teal}{in} \textcolor{fig1teal}{a} \textcolor{fig1teal}{meeting} \textcolor{fig1teal}{three} \textcolor{fig1teal}{hours} \textcolor{fig1teal}{early} \textcolor{fig1teal}{by} \textcolor{fig1teal}{my} \textcolor{fig1teal}{office}\textcolor{fig1teal}{,} \textcolor{fig1teal}{she} \textcolor{fig1teal}{said} \textcolor{fig1teal}{she} \textcolor{fig1teal}{had} \textcolor{fig1teal}{no} \textcolor{fig1teal}{access} \textcolor{fig1teal}{to} \textcolor{fig1teal}{the} \textcolor{fig1teal}{data}\textcolor{fig1teal}{.} \textcolor{fig1teal}{What} \textcolor{fig1teal}{—} \textcolor{fig1teal}{what} \textcolor{fig1teal}{the}\textcolor{fig1ochre}{acebook}\textcolor{fig1teal}{,} \textcolor{fig1teal}{Se}\textcolor{fig1teal}{at} \textcolor{fig1teal}{told} \textcolor{fig1teal}{me} \textcolor{fig1teal}{—} \textcolor{fig1teal}{was} \textcolor{fig1teal}{�}\textcolor{fig1teal}{�}\textcolor{fig1teal}{I} \textcolor{fig1teal}{did} \textcolor{fig1teal}{not} \textcolor{fig1teal}{do} \textcolor{fig1teal}{that} \textcolor{fig1teal}{before} \textcolor{fig1teal}{I} \textcolor{fig1teal}{opened} \textcolor{fig1teal}{that} \textcolor{fig1ochre}{office}\textcolor{fig1teal}{.}\textcolor{fig1teal}{�}\textcolor{fig1teal}{�}  \textcolor{fig1teal}{To} \textcolor{fig1teal}{have} \textcolor{fig1teal}{one} \textcolor{fig1teal}{of} \textcolor{fig1teal}{these} \textcolor{fig1teal}{know}\textcolor{fig1teal}{-}\textcolor{fig1ochre}{how} \textcolor{fig1teal}{years} \textcolor{fig1teal}{in} \textcolor{fig1teal}{line} \textcolor{fig1teal}{for} \textcolor{fig1teal}{your} \textcolor{fig1teal}{life}\textcolor{fig1teal}{,} \textcolor{fig1teal}{he} \textcolor{fig1teal}{had} \textcolor{fig1teal}{to} \textcolor{fig1teal}{start} \textcolor{fig1teal}{getting} \textcolor{fig1teal}{every} \textcolor{fig1teal}{day} \textcolor{fig1teal}{in} \textcolor{fig1teal}{work} \textcolor{fig1teal}{for} \textcolor{fig1teal}{something} \textcolor{fig1teal}{of} \textcolor{fig1ochre}{sale} \textcolor{fig1teal}{with} \textcolor{fig1teal}{a} \textcolor{fig1teal}{few} \textcolor{fig1ochre}{ounces} \textcolor{fig1teal}{of} \textcolor{fig1ochre}{Es}\textcolor{fig1teal}{at} \textcolor{fig1teal}{here} \textcolor{fig1teal}{that}\textcolor{fig1teal}{�}\textcolor{fig1teal}{�}\textcolor{fig1teal}{s} \textcolor{fig1teal}{been} \textcolor{fig1teal}{tax} \textcolor{fig1teal}{for} \textcolor{fig1teal}{every} \textcolor{fig1teal}{type} \textcolor{fig1teal}{of} \textcolor{fig1teal}{activity} \textcolor{fig1teal}{by} \textcolor{fig1teal}{other} \textcolor{fig1teal}{people}\textcolor{fig1teal}{,} \textcolor{fig1teal}{including} \textcolor{fig1ochre}{criminal}\textcolor{fig1teal}{,} \textcolor{fig1teal}{in} \textcolor{fig1teal}{an} \textcolor{fig1ochre}{enterprise} \textcolor{fig1teal}{and} \textcolor{fig1teal}{by} \textcolor{fig1teal}{a} \textcolor{fig1teal}{company} \textcolor{fig1teal}{and} \textcolor{fig1ochre}{lump}\textcolor{fig1teal}{ed} \textcolor{fig1teal}{in} \textcolor{fig1teal}{with} \textcolor{fig1teal}{criminal} \textcolor{fig1teal}{activity}\textcolor{fig1teal}{,} \textcolor{fig1teal}{for} \textcolor{fig1teal}{which} \textcolor{fig1teal}{he} \textcolor{fig1ochre}{cited} \textcolor{fig1teal}{as} \textcolor{fig1teal}{a} \textcolor{fig1ochre}{trial} \textcolor{fig1teal}{day}\textcolor{fig1teal}{.}  \textcolor{fig1teal}{That}\textcolor{fig1teal}{,} \textcolor{fig1teal}{and} \textcolor{fig1teal}{the} \textcolor{fig1teal}{type} \textcolor{fig1teal}{of} \textcolor{fig1ochre}{remaining} \textcolor{fig1teal}{he} \textcolor{fig1teal}{has} \textcolor{fig1teal}{is} \textcolor{fig1teal}{one} \textcolor{fig1teal}{of} \textcolor{fig1teal}{the} \textcolor{fig1teal}{company}\textcolor{fig1teal}{�}\textcolor{fig1teal}{�}\textcolor{fig1teal}{s} \textcolor{fig1teal}{most} \textcolor{fig1teal}{important} \textcolor{fig1teal}{part} \textcolor{fig1teal}{of} \textcolor{fig1teal}{business} \textcolor{fig1teal}{now}\textcolor{fig1teal}{.}  \textcolor{fig1teal}{One} \textcolor{fig1teal}{of} \textcolor{fig1teal}{his} \textcolor{fig1ochre}{closest} \textcolor{fig1ochre}{clients} \textcolor{fig1teal}{told} \textcolor{fig1teal}{me} \textcolor{fig1teal}{that} \textcolor{fig1teal}{a} \textcolor{fig1teal}{possible} \textcolor{fig1teal}{run} \textcolor{fig1teal}{for} \textcolor{fig1teal}{the} \textcolor{fig1teal}{oil} \textcolor{fig1teal}{company} \textcolor{fig1teal}{was} \textcolor{fig1teal}{not} \textcolor{fig1teal}{too} \textcolor{fig1teal}{possible}\textcolor{fig1teal}{.} \textcolor{fig1teal}{He} \textcolor{fig1teal}{also} \textcolor{fig1teal}{told} \textcolor{fig1teal}{The} \textcolor{fig1teal}{Hill}\textcolor{fig1teal}{�}\textcolor{fig1teal}{�}\textcolor{fig1teal}{s} \textcolor{fig1ochre}{Peter} \textcolor{fig1teal}{O}\textcolor{fig1ochre}{plan}\textcolor{fig1teal}{,} \textcolor{fig1teal}{�}\textcolor{fig1teal}{�}\textcolor{fig1teal}{You}\textcolor{fig1teal}{�}\textcolor{fig1teal}{�}\textcolor{fig1teal}{re}
\end{samplebox}
\begin{samplebox}{\normalfont\textbf{$\ours$ (PR), Round: 4} \hfill \normalfont\scriptsize \textcolor{darkgray}{Committed: \textbf{1024/1024}}}
\scriptsize\linespread{0.9}\selectfont
\textcolor{fig1teal}{.}  \textcolor{fig1teal}{Or}\textcolor{fig1teal}{chid} \textcolor{fig1teal}{is} \textcolor{fig1teal}{another} \textcolor{fig1teal}{Death}\textcolor{fig1teal}{ly} \textcolor{fig1teal}{guy}\textcolor{fig1teal}{.} \textcolor{fig1teal}{He} \textcolor{fig1teal}{has} \textcolor{fig1teal}{been} \textcolor{fig1teal}{working} \textcolor{fig1teal}{on} \textcolor{fig1teal}{a} \textcolor{fig1teal}{small} \textcolor{fig1teal}{field} \textcolor{fig1teal}{for} \textcolor{fig1teal}{9} \textcolor{fig1teal}{years}\textcolor{fig1teal}{,} \textcolor{fig1teal}{shooting} \textcolor{fig1teal}{for} \textcolor{fig1teal}{new} \textcolor{fig1teal}{in} \textcolor{fig1teal}{different} \textcolor{fig1teal}{parts} \textcolor{fig1teal}{of} \textcolor{fig1teal}{New} \textcolor{fig1teal}{York}\textcolor{fig1teal}{.} \textcolor{fig1teal}{I} \textcolor{fig1teal}{think} \textcolor{fig1teal}{it}\textcolor{fig1teal}{'s} \textcolor{fig1teal}{just} \textcolor{fig1teal}{that} \textcolor{fig1teal}{after} \textcolor{fig1teal}{the} \textcolor{fig1teal}{recent} \textcolor{fig1teal}{project} \textcolor{fig1teal}{in} \textcolor{fig1teal}{front} \textcolor{fig1teal}{of} \textcolor{fig1teal}{school}\textcolor{fig1teal}{,} \textcolor{fig1teal}{many} \textcolor{fig1teal}{more} \textcolor{fig1teal}{of} \textcolor{fig1teal}{them} \textcolor{fig1teal}{threw} \textcolor{fig1teal}{it} \textcolor{fig1teal}{into} \textcolor{fig1teal}{a} \textcolor{fig1teal}{line} \textcolor{fig1teal}{of} \textcolor{fig1teal}{business}\textcolor{fig1teal}{.}  \textcolor{fig1teal}{That} \textcolor{fig1teal}{night}\textcolor{fig1teal}{,} \textcolor{fig1teal}{Or}\textcolor{fig1teal}{tha} \textcolor{fig1teal}{came} \textcolor{fig1teal}{to} \textcolor{fig1teal}{Dor}\textcolor{fig1teal}{p}\textcolor{fig1teal}{,} \textcolor{fig1teal}{with} \textcolor{fig1teal}{no} \textcolor{fig1teal}{return}\textcolor{fig1teal}{.} \textcolor{fig1teal}{In} \textcolor{fig1teal}{response} \textcolor{fig1teal}{to} \textcolor{fig1teal}{the} \textcolor{fig1teal}{press} \textcolor{fig1teal}{conference} \textcolor{fig1teal}{after} \textcolor{fig1teal}{the} \textcolor{fig1teal}{opening} \textcolor{fig1teal}{day}\textcolor{fig1teal}{,} \textcolor{fig1teal}{Mayor} \textcolor{fig1teal}{Andrew} \textcolor{fig1teal}{Cuomo} \textcolor{fig1teal}{(}\textcolor{fig1teal}{R}\textcolor{fig1teal}{-}\textcolor{fig1teal}{CA}\textcolor{fig1teal}{)} \textcolor{fig1teal}{told} \textcolor{fig1teal}{me} \textcolor{fig1teal}{the} \textcolor{fig1teal}{news}\textcolor{fig1teal}{:}  \textcolor{fig1teal}{�}\textcolor{fig1teal}{�}\textcolor{fig1teal}{To} \textcolor{fig1teal}{find} \textcolor{fig1teal}{it} \textcolor{fig1teal}{in} \textcolor{fig1teal}{an} \textcolor{fig1teal}{office} \textcolor{fig1teal}{that} \textcolor{fig1teal}{is} \textcolor{fig1teal}{like} \textcolor{fig1teal}{a} \textcolor{fig1teal}{university} \textcolor{fig1teal}{—} \textcolor{fig1teal}{we} \textcolor{fig1teal}{still} \textcolor{fig1teal}{think} \textcolor{fig1teal}{of} \textcolor{fig1teal}{our} \textcolor{fig1teal}{business} \textcolor{fig1teal}{with} \textcolor{fig1teal}{the} \textcolor{fig1teal}{government}\textcolor{fig1teal}{.}\textcolor{fig1teal}{�}\textcolor{fig1teal}{�}  \textcolor{fig1teal}{It} \textcolor{fig1teal}{was} \textcolor{fig1teal}{the} \textcolor{fig1teal}{first} \textcolor{fig1teal}{police} \textcolor{fig1teal}{raid} \textcolor{fig1teal}{for} \textcolor{fig1teal}{an} \textcolor{fig1teal}{organization} \textcolor{fig1teal}{in} \textcolor{fig1teal}{this} \textcolor{fig1teal}{time} \textcolor{fig1teal}{of} \textcolor{fig1teal}{war}\textcolor{fig1teal}{.} \textcolor{fig1teal}{It} \textcolor{fig1teal}{was} \textcolor{fig1teal}{also} \textcolor{fig1teal}{that} \textcolor{fig1teal}{he} \textcolor{fig1teal}{came} \textcolor{fig1teal}{to} \textcolor{fig1teal}{open} \textcolor{fig1teal}{the} \textcolor{fig1teal}{business} \textcolor{fig1teal}{and} \textcolor{fig1teal}{its} \textcolor{fig1teal}{website}\textcolor{fig1teal}{,} \textcolor{fig1teal}{and} \textcolor{fig1teal}{then} \textcolor{fig1teal}{was}\textcolor{fig1teal}{,} \textcolor{fig1teal}{to} \textcolor{fig1teal}{cut} \textcolor{fig1teal}{short} \textcolor{fig1teal}{his} \textcolor{fig1teal}{child}\textcolor{fig1teal}{,} \textcolor{fig1teal}{Charles}\textcolor{fig1teal}{,} \textcolor{fig1teal}{who} \textcolor{fig1teal}{has} \textcolor{fig1teal}{served} \textcolor{fig1teal}{as} \textcolor{fig1teal}{software} \textcolor{fig1teal}{services} \textcolor{fig1teal}{for} \textcolor{fig1teal}{Google}\textcolor{fig1teal}{,} \textcolor{fig1teal}{Shell}\textcolor{fig1teal}{,} \textcolor{fig1teal}{and} \textcolor{fig1teal}{broadcasting} \textcolor{fig1teal}{for} \textcolor{fig1teal}{decades}\textcolor{fig1teal}{.}  \textcolor{fig1teal}{�}\textcolor{fig1teal}{�}\textcolor{fig1teal}{There}\textcolor{fig1teal}{�}\textcolor{fig1teal}{�}\textcolor{fig1teal}{s} \textcolor{fig1teal}{a} \textcolor{fig1teal}{lot} \textcolor{fig1teal}{for} \textcolor{fig1teal}{me} \textcolor{fig1teal}{to} \textcolor{fig1teal}{do} \textcolor{fig1teal}{with} \textcolor{fig1teal}{myself} \textcolor{fig1teal}{that} \textcolor{fig1teal}{would} \textcolor{fig1teal}{help} \textcolor{fig1teal}{me}\textcolor{fig1teal}{.} \textcolor{fig1teal}{I} \textcolor{fig1teal}{think} \textcolor{fig1teal}{this} \textcolor{fig1teal}{is} \textcolor{fig1teal}{my} \textcolor{fig1teal}{first} \textcolor{fig1teal}{work}\textcolor{fig1teal}{,}\textcolor{fig1teal}{�}\textcolor{fig1teal}{�} \textcolor{fig1teal}{he} \textcolor{fig1teal}{says}\textcolor{fig1teal}{.} \textcolor{fig1teal}{�}\textcolor{fig1teal}{�}\textcolor{fig1teal}{These} \textcolor{fig1teal}{were} \textcolor{fig1teal}{the} \textcolor{fig1teal}{real} \textcolor{fig1teal}{ones}\textcolor{fig1teal}{.} \textcolor{fig1teal}{If} \textcolor{fig1teal}{you} \textcolor{fig1teal}{have} \textcolor{fig1teal}{to} \textcolor{fig1teal}{see} \textcolor{fig1teal}{the} \textcolor{fig1teal}{new} \textcolor{fig1teal}{culture} \textcolor{fig1teal}{which} \textcolor{fig1teal}{was} \textcolor{fig1teal}{made} \textcolor{fig1teal}{to} \textcolor{fig1teal}{help} \textcolor{fig1teal}{all} \textcolor{fig1teal}{of} \textcolor{fig1teal}{him}\textcolor{fig1teal}{,} \textcolor{fig1teal}{ask} \textcolor{fig1teal}{one} \textcolor{fig1teal}{of} \textcolor{fig1teal}{you} \textcolor{fig1teal}{to} \textcolor{fig1teal}{think} \textcolor{fig1teal}{about} \textcolor{fig1teal}{what} \textcolor{fig1teal}{each} \textcolor{fig1teal}{other} \textcolor{fig1teal}{work} \textcolor{fig1teal}{could} \textcolor{fig1teal}{show}\textcolor{fig1teal}{.}  \textcolor{fig1teal}{On} \textcolor{fig1teal}{New} \textcolor{fig1teal}{York}\textcolor{fig1teal}{�}\textcolor{fig1teal}{�}\textcolor{fig1teal}{s} \textcolor{fig1teal}{State} \textcolor{fig1teal}{House} \textcolor{fig1teal}{and} \textcolor{fig1teal}{L}\textcolor{fig1teal}{-}\textcolor{fig1teal}{A} \textcolor{fig1teal}{Major}\textcolor{fig1teal}{,} \textcolor{fig1teal}{his} \textcolor{fig1teal}{work} \textcolor{fig1teal}{is} \textcolor{fig1teal}{focused} \textcolor{fig1teal}{on} \textcolor{fig1teal}{money} \textcolor{fig1teal}{and} \textcolor{fig1teal}{growth}\textcolor{fig1teal}{.} \textcolor{fig1teal}{In} \textcolor{fig1teal}{the} \textcolor{fig1teal}{public} \textcolor{fig1teal}{health} \textcolor{fig1teal}{system}\textcolor{fig1teal}{,} \textcolor{fig1teal}{for} \textcolor{fig1teal}{him} \textcolor{fig1teal}{to} \textcolor{fig1teal}{get} \textcolor{fig1teal}{world}\textcolor{fig1teal}{-}\textcolor{fig1teal}{class} \textcolor{fig1teal}{take} \textcolor{fig1teal}{of} \textcolor{fig1teal}{a} \textcolor{fig1teal}{job} \textcolor{fig1teal}{and} \textcolor{fig1teal}{money} \textcolor{fig1teal}{from} \textcolor{fig1teal}{a} \textcolor{fig1teal}{friend} \textcolor{fig1teal}{to} \textcolor{fig1teal}{visit}\textcolor{fig1teal}{,} \textcolor{fig1teal}{there} \textcolor{fig1teal}{are} \textcolor{fig1teal}{people} \textcolor{fig1teal}{in} \textcolor{fig1teal}{on} \textcolor{fig1teal}{life}\textcolor{fig1teal}{s} \textcolor{fig1teal}{of} \textcolor{fig1teal}{days}\textcolor{fig1teal}{,} \textcolor{fig1teal}{and} \textcolor{fig1teal}{just} \textcolor{fig1teal}{fine}\textcolor{fig1teal}{,} \textcolor{fig1teal}{but} \textcolor{fig1teal}{not} \textcolor{fig1teal}{enough}\textcolor{fig1teal}{,} \textcolor{fig1teal}{these} \textcolor{fig1teal}{days}\textcolor{fig1teal}{.}  \textcolor{fig1teal}{It} \textcolor{fig1teal}{was} \textcolor{fig1teal}{because} \textcolor{fig1teal}{of} \textcolor{fig1teal}{the} \textcolor{fig1teal}{amount} \textcolor{fig1teal}{of} \textcolor{fig1teal}{real} \textcolor{fig1teal}{money} \textcolor{fig1teal}{and} \textcolor{fig1teal}{the} \textcolor{fig1teal}{day} \textcolor{fig1teal}{off} \textcolor{fig1teal}{the} \textcolor{fig1teal}{company}\textcolor{fig1teal}{�}\textcolor{fig1teal}{�}\textcolor{fig1teal}{s} \textcolor{fig1teal}{office}\textcolor{fig1teal}{.} \textcolor{fig1teal}{Then}\textcolor{fig1teal}{,} \textcolor{fig1teal}{the} \textcolor{fig1teal}{use} \textcolor{fig1teal}{of} \textcolor{fig1teal}{Lup}\textcolor{fig1teal}{in} \textcolor{fig1teal}{was} \textcolor{fig1teal}{personal}\textcolor{fig1teal}{,} \textcolor{fig1teal}{because} \textcolor{fig1teal}{many} \textcolor{fig1teal}{reform}\textcolor{fig1teal}{ist} \textcolor{fig1teal}{students} \textcolor{fig1teal}{were} \textcolor{fig1teal}{typical} \textcolor{fig1teal}{from} \textcolor{fig1teal}{last} \textcolor{fig1teal}{year}\textcolor{fig1teal}{.} \textcolor{fig1teal}{40} \textcolor{fig1teal}{percent} \textcolor{fig1teal}{of} \textcolor{fig1teal}{the} \textcolor{fig1teal}{number} \textcolor{fig1teal}{of} \textcolor{fig1teal}{real} \textcolor{fig1teal}{drug} \textcolor{fig1teal}{use}\textcolor{fig1teal}{,} \textcolor{fig1teal}{as} \textcolor{fig1teal}{well} \textcolor{fig1teal}{as} \textcolor{fig1teal}{money} \textcolor{fig1teal}{laundering}\textcolor{fig1teal}{.} \textcolor{fig1teal}{and} \textcolor{fig1teal}{mill}\textcolor{fig1teal}{ing} \textcolor{fig1teal}{were} \textcolor{fig1teal}{all} \textcolor{fig1teal}{21}\textcolor{fig1teal}{-}\textcolor{fig1teal}{year}\textcolor{fig1teal}{-}\textcolor{fig1teal}{old} \textcolor{fig1teal}{students} \textcolor{fig1teal}{on} \textcolor{fig1teal}{their} \textcolor{fig1teal}{own}\textcolor{fig1teal}{.}  \textcolor{fig1teal}{�}\textcolor{fig1teal}{�}\textcolor{fig1teal}{I} \textcolor{fig1teal}{think} \textcolor{fig1teal}{with} \textcolor{fig1teal}{all} \textcolor{fig1teal}{the} \textcolor{fig1teal}{big} \textcolor{fig1teal}{data} \textcolor{fig1teal}{since} \textcolor{fig1teal}{I}\textcolor{fig1teal}{�}\textcolor{fig1teal}{�}\textcolor{fig1teal}{ve} \textcolor{fig1teal}{worked} \textcolor{fig1teal}{with} \textcolor{fig1teal}{David} \textcolor{fig1teal}{Sh}\textcolor{fig1teal}{um}\textcolor{fig1teal}{,} \textcolor{fig1teal}{the} \textcolor{fig1teal}{amount} \textcolor{fig1teal}{of} \textcolor{fig1teal}{energy} \textcolor{fig1teal}{sites} \textcolor{fig1teal}{or} \textcolor{fig1teal}{what} \textcolor{fig1teal}{I} \textcolor{fig1teal}{found} \textcolor{fig1teal}{there} \textcolor{fig1teal}{was}\textcolor{fig1teal}{;} \textcolor{fig1teal}{a} \textcolor{fig1teal}{lot} \textcolor{fig1teal}{in} \textcolor{fig1teal}{which} \textcolor{fig1teal}{a} \textcolor{fig1teal}{lot} \textcolor{fig1teal}{was} \textcolor{fig1teal}{more} \textcolor{fig1teal}{at} \textcolor{fig1teal}{people} \textcolor{fig1teal}{other} \textcolor{fig1teal}{than} \textcolor{fig1teal}{David}\textcolor{fig1teal}{�}\textcolor{fig1teal}{�}\textcolor{fig1teal}{s}\textcolor{fig1teal}{,} \textcolor{fig1teal}{but} \textcolor{fig1teal}{I} \textcolor{fig1teal}{know}\textcolor{fig1teal}{,} \textcolor{fig1teal}{my} \textcolor{fig1teal}{brain} \textcolor{fig1teal}{want} \textcolor{fig1teal}{us} \textcolor{fig1teal}{to} \textcolor{fig1teal}{say} \textcolor{fig1teal}{that}\textcolor{fig1teal}{,}\textcolor{fig1teal}{�}\textcolor{fig1teal}{�} \textcolor{fig1teal}{Se}\textcolor{fig1teal}{at} \textcolor{fig1teal}{said}\textcolor{fig1teal}{.} \textcolor{fig1teal}{�}\textcolor{fig1teal}{�}\textcolor{fig1teal}{In} \textcolor{fig1teal}{fact}\textcolor{fig1teal}{,} \textcolor{fig1teal}{I}\textcolor{fig1teal}{�}\textcolor{fig1teal}{�}\textcolor{fig1teal}{m} \textcolor{fig1teal}{not} \textcolor{fig1teal}{sure} \textcolor{fig1teal}{I} \textcolor{fig1teal}{really}\textcolor{fig1teal}{�}\textcolor{fig1teal}{�}\textcolor{fig1teal}{m} \textcolor{fig1teal}{going} \textcolor{fig1teal}{to} \textcolor{fig1teal}{do} \textcolor{fig1teal}{that} \textcolor{fig1teal}{money} \textcolor{fig1teal}{in} \textcolor{fig1teal}{something}\textcolor{fig1teal}{.} \textcolor{fig1teal}{I} \textcolor{fig1teal}{was} \textcolor{fig1teal}{going} \textcolor{fig1teal}{to} \textcolor{fig1teal}{do} \textcolor{fig1teal}{this} \textcolor{fig1teal}{to} \textcolor{fig1teal}{schools} \textcolor{fig1teal}{and} \textcolor{fig1teal}{so} \textcolor{fig1teal}{other} \textcolor{fig1teal}{areas} \textcolor{fig1teal}{don}\textcolor{fig1teal}{�}\textcolor{fig1teal}{�}\textcolor{fig1teal}{t} \textcolor{fig1teal}{give} \textcolor{fig1teal}{them} \textcolor{fig1teal}{the} \textcolor{fig1teal}{data}\textcolor{fig1teal}{.}\textcolor{fig1teal}{�}\textcolor{fig1teal}{�}  \textcolor{fig1teal}{The} \textcolor{fig1teal}{people} \textcolor{fig1teal}{didn}\textcolor{fig1teal}{�}\textcolor{fig1teal}{�}\textcolor{fig1teal}{t} \textcolor{fig1teal}{fall} \textcolor{fig1teal}{to} \textcolor{fig1teal}{him}\textcolor{fig1teal}{,} \textcolor{fig1teal}{and} \textcolor{fig1teal}{he} \textcolor{fig1teal}{used} \textcolor{fig1teal}{to} \textcolor{fig1teal}{show} \textcolor{fig1teal}{that} \textcolor{fig1teal}{he} \textcolor{fig1teal}{could} \textcolor{fig1teal}{make} \textcolor{fig1teal}{people} \textcolor{fig1teal}{want} \textcolor{fig1teal}{to} \textcolor{fig1teal}{deal} \textcolor{fig1teal}{with} \textcolor{fig1teal}{the} \textcolor{fig1teal}{financial} \textcolor{fig1teal}{crisis}\textcolor{fig1teal}{.}  \textcolor{fig1teal}{�}\textcolor{fig1teal}{�}\textcolor{fig1teal}{One} \textcolor{fig1teal}{of} \textcolor{fig1teal}{the} \textcolor{fig1teal}{individuals} \textcolor{fig1teal}{had} \textcolor{fig1teal}{me} \textcolor{fig1teal}{on} \textcolor{fig1teal}{the} \textcolor{fig1teal}{back} \textcolor{fig1teal}{and} \textcolor{fig1teal}{I} \textcolor{fig1teal}{was} \textcolor{fig1teal}{looking} \textcolor{fig1teal}{at} \textcolor{fig1teal}{what} \textcolor{fig1teal}{was} \textcolor{fig1teal}{no} \textcolor{fig1teal}{deal}\textcolor{fig1teal}{,}\textcolor{fig1teal}{�}\textcolor{fig1teal}{�} \textcolor{fig1teal}{I} \textcolor{fig1teal}{was} \textcolor{fig1teal}{told}\textcolor{fig1teal}{,} \textcolor{fig1teal}{Se}\textcolor{fig1teal}{at} \textcolor{fig1teal}{looking} \textcolor{fig1teal}{at} \textcolor{fig1teal}{the} \textcolor{fig1teal}{company}\textcolor{fig1teal}{�}\textcolor{fig1teal}{�}\textcolor{fig1teal}{s} \textcolor{fig1teal}{partner}\textcolor{fig1teal}{-}\textcolor{fig1teal}{in}\textcolor{fig1teal}{-}\textcolor{fig1teal}{chief}\textcolor{fig1teal}{,} \textcolor{fig1teal}{looking} \textcolor{fig1teal}{against} \textcolor{fig1teal}{him}\textcolor{fig1teal}{.} \textcolor{fig1teal}{And} \textcolor{fig1teal}{I} \textcolor{fig1teal}{asked} \textcolor{fig1teal}{him} \textcolor{fig1teal}{into} \textcolor{fig1teal}{a} \textcolor{fig1teal}{New} \textcolor{fig1teal}{York} \textcolor{fig1teal}{government} \textcolor{fig1teal}{office} \textcolor{fig1teal}{at} \textcolor{fig1teal}{home}\textcolor{fig1teal}{,} \textcolor{fig1teal}{at} \textcolor{fig1teal}{Rockefeller} \textcolor{fig1teal}{Center} \textcolor{fig1teal}{—} \textcolor{fig1teal}{something} \textcolor{fig1teal}{she} \textcolor{fig1teal}{said} \textcolor{fig1teal}{had} \textcolor{fig1teal}{been} \textcolor{fig1teal}{very} \textcolor{fig1teal}{effective} \textcolor{fig1teal}{—} \textcolor{fig1teal}{and} \textcolor{fig1teal}{there} \textcolor{fig1teal}{was} \textcolor{fig1teal}{some} \textcolor{fig1teal}{competition}\textcolor{fig1teal}{.} \textcolor{fig1teal}{For} \textcolor{fig1teal}{the} \textcolor{fig1teal}{first} \textcolor{fig1teal}{time}\textcolor{fig1teal}{,} \textcolor{fig1teal}{I} \textcolor{fig1teal}{had} \textcolor{fig1teal}{to} \textcolor{fig1teal}{take} \textcolor{fig1teal}{off} \textcolor{fig1teal}{the} \textcolor{fig1teal}{same} \textcolor{fig1teal}{shoes} \textcolor{fig1teal}{and} \textcolor{fig1teal}{ch}\textcolor{fig1teal}{ucks}\textcolor{fig1teal}{.} \textcolor{fig1teal}{This} \textcolor{fig1teal}{didn}\textcolor{fig1teal}{�}\textcolor{fig1teal}{�}\textcolor{fig1teal}{t} \textcolor{fig1teal}{change} \textcolor{fig1teal}{as} \textcolor{fig1teal}{we} \textcolor{fig1teal}{continued} \textcolor{fig1teal}{on} \textcolor{fig1teal}{with} \textcolor{fig1teal}{Se}\textcolor{fig1teal}{at}\textcolor{fig1teal}{,} \textcolor{fig1teal}{another} \textcolor{fig1teal}{person}\textcolor{fig1teal}{,} \textcolor{fig1teal}{home} \textcolor{fig1teal}{e}\textcolor{fig1teal}{er} \textcolor{fig1teal}{and} \textcolor{fig1teal}{dam} \textcolor{fig1teal}{any} \textcolor{fig1teal}{potential} \textcolor{fig1teal}{activity}\textcolor{fig1teal}{.}  \textcolor{fig1teal}{�}\textcolor{fig1teal}{�}\textcolor{fig1teal}{That}\textcolor{fig1teal}{�}\textcolor{fig1teal}{�}\textcolor{fig1teal}{s} \textcolor{fig1teal}{great}\textcolor{fig1teal}{.} \textcolor{fig1teal}{But} \textcolor{fig1teal}{I} \textcolor{fig1teal}{know} \textcolor{fig1teal}{that} \textcolor{fig1teal}{eventually} \textcolor{fig1teal}{you} \textcolor{fig1teal}{got} \textcolor{fig1teal}{in} \textcolor{fig1teal}{and} \textcolor{fig1teal}{got} \textcolor{fig1teal}{a} \textcolor{fig1teal}{control} \textcolor{fig1teal}{and} \textcolor{fig1teal}{the} \textcolor{fig1teal}{work} \textcolor{fig1teal}{was} \textcolor{fig1teal}{over} \textcolor{fig1teal}{and} \textcolor{fig1teal}{it}\textcolor{fig1teal}{�}\textcolor{fig1teal}{�}\textcolor{fig1teal}{s} \textcolor{fig1teal}{always} \textcolor{fig1teal}{questions} \textcolor{fig1teal}{about} \textcolor{fig1teal}{the} \textcolor{fig1teal}{intelligence} \textcolor{fig1teal}{from} \textcolor{fig1teal}{per} \textcolor{fig1teal}{se}\textcolor{fig1teal}{.}\textcolor{fig1teal}{�}\textcolor{fig1teal}{�}  \textcolor{fig1teal}{Is} \textcolor{fig1teal}{he}\textcolor{fig1teal}{?}  \textcolor{fig1teal}{The} \textcolor{fig1teal}{official} \textcolor{fig1teal}{reaction} \textcolor{fig1teal}{among} \textcolor{fig1teal}{the} \textcolor{fig1teal}{media} \textcolor{fig1teal}{was} \textcolor{fig1teal}{times} \textcolor{fig1teal}{but} \textcolor{fig1teal}{some} \textcolor{fig1teal}{(}\textcolor{fig1teal}{D}\textcolor{fig1teal}{-}\textcolor{fig1teal}{,} \textcolor{fig1teal}{not} \textcolor{fig1teal}{always} \textcolor{fig1teal}{a} \textcolor{fig1teal}{good} \textcolor{fig1teal}{one}\textcolor{fig1teal}{)} \textcolor{fig1teal}{of} \textcolor{fig1teal}{him}\textcolor{fig1teal}{.} \textcolor{fig1teal}{With} \textcolor{fig1teal}{such} \textcolor{fig1teal}{never} \textcolor{fig1teal}{the} \textcolor{fig1teal}{way} \textcolor{fig1teal}{he} \textcolor{fig1teal}{is}\textcolor{fig1teal}{,} \textcolor{fig1teal}{Or}\textcolor{fig1teal}{chin} \textcolor{fig1teal}{said} \textcolor{fig1teal}{it} \textcolor{fig1teal}{was} \textcolor{fig1teal}{not} \textcolor{fig1teal}{easy} \textcolor{fig1teal}{to} \textcolor{fig1teal}{understand} \textcolor{fig1teal}{that} \textcolor{fig1teal}{an} \textcolor{fig1teal}{operation} \textcolor{fig1teal}{could} \textcolor{fig1teal}{get} \textcolor{fig1teal}{home} \textcolor{fig1teal}{to} \textcolor{fig1teal}{his} \textcolor{fig1teal}{first} \textcolor{fig1teal}{office} \textcolor{fig1teal}{ever}\textcolor{fig1teal}{.} \textcolor{fig1teal}{The} \textcolor{fig1teal}{National} \textcolor{fig1teal}{Security} \textcolor{fig1teal}{Relations} \textcolor{fig1teal}{official}\textcolor{fig1teal}{,} \textcolor{fig1teal}{though}\textcolor{fig1teal}{,} \textcolor{fig1teal}{reported} \textcolor{fig1teal}{after} \textcolor{fig1teal}{a} \textcolor{fig1teal}{running} \textcolor{fig1teal}{for} \textcolor{fig1teal}{office} \textcolor{fig1teal}{that} \textcolor{fig1teal}{Or}\textcolor{fig1teal}{chin} \textcolor{fig1teal}{said} \textcolor{fig1teal}{he} \textcolor{fig1teal}{was} \textcolor{fig1teal}{looking} \textcolor{fig1teal}{to} \textcolor{fig1teal}{the} \textcolor{fig1teal}{energy} \textcolor{fig1teal}{department} \textcolor{fig1teal}{for} \textcolor{fig1teal}{the} \textcolor{fig1teal}{rules} \textcolor{fig1teal}{in} \textcolor{fig1teal}{the} \textcolor{fig1teal}{business} \textcolor{fig1teal}{and} \textcolor{fig1teal}{the} \textcolor{fig1teal}{business}\textcolor{fig1teal}{,} \textcolor{fig1teal}{changing} \textcolor{fig1teal}{them} \textcolor{fig1teal}{to} \textcolor{fig1teal}{fit} \textcolor{fig1teal}{future} \textcolor{fig1teal}{threats}\textcolor{fig1teal}{,} \textcolor{fig1teal}{and} \textcolor{fig1teal}{to} \textcolor{fig1teal}{limit} \textcolor{fig1teal}{the} \textcolor{fig1teal}{size} \textcolor{fig1teal}{of} \textcolor{fig1teal}{the} \textcolor{fig1teal}{website}\textcolor{fig1teal}{.} \textcolor{fig1teal}{At} \textcolor{fig1teal}{what} \textcolor{fig1teal}{was} \textcolor{fig1teal}{in} \textcolor{fig1teal}{a} \textcolor{fig1teal}{meeting} \textcolor{fig1teal}{three} \textcolor{fig1teal}{hours} \textcolor{fig1teal}{early} \textcolor{fig1teal}{by} \textcolor{fig1teal}{my} \textcolor{fig1teal}{office}\textcolor{fig1teal}{,} \textcolor{fig1teal}{she} \textcolor{fig1teal}{said} \textcolor{fig1teal}{she} \textcolor{fig1teal}{had} \textcolor{fig1teal}{no} \textcolor{fig1teal}{access} \textcolor{fig1teal}{to} \textcolor{fig1teal}{the} \textcolor{fig1teal}{data}\textcolor{fig1teal}{.} \textcolor{fig1teal}{What} \textcolor{fig1teal}{—} \textcolor{fig1teal}{what} \textcolor{fig1teal}{the} \textcolor{fig1teal}{truth}\textcolor{fig1teal}{,} \textcolor{fig1teal}{Se}\textcolor{fig1teal}{at} \textcolor{fig1teal}{told} \textcolor{fig1teal}{me} \textcolor{fig1teal}{—} \textcolor{fig1teal}{was} \textcolor{fig1teal}{�}\textcolor{fig1teal}{�}\textcolor{fig1teal}{I} \textcolor{fig1teal}{did} \textcolor{fig1teal}{not} \textcolor{fig1teal}{do} \textcolor{fig1teal}{that} \textcolor{fig1teal}{before} \textcolor{fig1teal}{I} \textcolor{fig1teal}{opened} \textcolor{fig1teal}{that} \textcolor{fig1teal}{office}\textcolor{fig1teal}{.}\textcolor{fig1teal}{�}\textcolor{fig1teal}{�}  \textcolor{fig1teal}{To} \textcolor{fig1teal}{have} \textcolor{fig1teal}{one} \textcolor{fig1teal}{of} \textcolor{fig1teal}{these} \textcolor{fig1teal}{know}\textcolor{fig1teal}{-}\textcolor{fig1teal}{nothing} \textcolor{fig1teal}{years} \textcolor{fig1teal}{in} \textcolor{fig1teal}{line} \textcolor{fig1teal}{for} \textcolor{fig1teal}{your} \textcolor{fig1teal}{life}\textcolor{fig1teal}{,} \textcolor{fig1teal}{he} \textcolor{fig1teal}{had} \textcolor{fig1teal}{to} \textcolor{fig1teal}{start} \textcolor{fig1teal}{getting} \textcolor{fig1teal}{every} \textcolor{fig1teal}{day} \textcolor{fig1teal}{in} \textcolor{fig1teal}{work} \textcolor{fig1teal}{for} \textcolor{fig1teal}{something} \textcolor{fig1teal}{of} \textcolor{fig1teal}{himself} \textcolor{fig1teal}{with} \textcolor{fig1teal}{a} \textcolor{fig1teal}{few} \textcolor{fig1teal}{strokes} \textcolor{fig1teal}{of} \textcolor{fig1teal}{deb}\textcolor{fig1teal}{at} \textcolor{fig1teal}{here} \textcolor{fig1teal}{that}\textcolor{fig1teal}{�}\textcolor{fig1teal}{�}\textcolor{fig1teal}{s} \textcolor{fig1teal}{been} \textcolor{fig1teal}{tax} \textcolor{fig1teal}{for} \textcolor{fig1teal}{every} \textcolor{fig1teal}{type} \textcolor{fig1teal}{of} \textcolor{fig1teal}{activity} \textcolor{fig1teal}{by} \textcolor{fig1teal}{other} \textcolor{fig1teal}{people}\textcolor{fig1teal}{,} \textcolor{fig1teal}{including} \textcolor{fig1teal}{mine}\textcolor{fig1teal}{,} \textcolor{fig1teal}{in} \textcolor{fig1teal}{an} \textcolor{fig1teal}{office} \textcolor{fig1teal}{and} \textcolor{fig1teal}{by} \textcolor{fig1teal}{a} \textcolor{fig1teal}{company} \textcolor{fig1teal}{and} \textcolor{fig1teal}{lump}\textcolor{fig1teal}{ed} \textcolor{fig1teal}{in} \textcolor{fig1teal}{with} \textcolor{fig1teal}{criminal} \textcolor{fig1teal}{activity}\textcolor{fig1teal}{,} \textcolor{fig1teal}{for} \textcolor{fig1teal}{which} \textcolor{fig1teal}{he} \textcolor{fig1teal}{qualifies} \textcolor{fig1teal}{as} \textcolor{fig1teal}{a} \textcolor{fig1teal}{present} \textcolor{fig1teal}{day}\textcolor{fig1teal}{.}  \textcolor{fig1teal}{That}\textcolor{fig1teal}{,} \textcolor{fig1teal}{and} \textcolor{fig1teal}{the} \textcolor{fig1teal}{type} \textcolor{fig1teal}{of} \textcolor{fig1teal}{control} \textcolor{fig1teal}{he} \textcolor{fig1teal}{has} \textcolor{fig1teal}{is} \textcolor{fig1teal}{one} \textcolor{fig1teal}{of} \textcolor{fig1teal}{the} \textcolor{fig1teal}{company}\textcolor{fig1teal}{�}\textcolor{fig1teal}{�}\textcolor{fig1teal}{s} \textcolor{fig1teal}{most} \textcolor{fig1teal}{important} \textcolor{fig1teal}{part} \textcolor{fig1teal}{of} \textcolor{fig1teal}{business} \textcolor{fig1teal}{now}\textcolor{fig1teal}{.}  \textcolor{fig1teal}{One} \textcolor{fig1teal}{of} \textcolor{fig1teal}{his} \textcolor{fig1teal}{reporter} \textcolor{fig1teal}{friends} \textcolor{fig1teal}{told} \textcolor{fig1teal}{me} \textcolor{fig1teal}{that} \textcolor{fig1teal}{a} \textcolor{fig1teal}{possible} \textcolor{fig1teal}{run} \textcolor{fig1teal}{for} \textcolor{fig1teal}{the} \textcolor{fig1teal}{oil} \textcolor{fig1teal}{company} \textcolor{fig1teal}{was} \textcolor{fig1teal}{not} \textcolor{fig1teal}{too} \textcolor{fig1teal}{possible}\textcolor{fig1teal}{.} \textcolor{fig1teal}{He} \textcolor{fig1teal}{also} \textcolor{fig1teal}{told} \textcolor{fig1teal}{The} \textcolor{fig1teal}{Hill}\textcolor{fig1teal}{�}\textcolor{fig1teal}{�}\textcolor{fig1teal}{s} \textcolor{fig1teal}{David} \textcolor{fig1teal}{O}\textcolor{fig1teal}{ster}\textcolor{fig1teal}{,} \textcolor{fig1teal}{�}\textcolor{fig1teal}{�}\textcolor{fig1teal}{You}\textcolor{fig1teal}{�}\textcolor{fig1teal}{�}\textcolor{fig1teal}{re}
\end{samplebox}
\caption{Posterior Refinement trajectory on OpenWebText, Sample-2. (cont.)}
\label{fig:refine_sample_owt_2_r3}
\end{figure}

% ===================== TinyStories =====================
\begin{figure}[H]
\centering
\begin{samplebox}{\normalfont\textbf{$\ours$ (PR), Round: 1} \hfill \normalfont\scriptsize \textcolor{darkgray}{Committed: \textbf{72/128}}}
\scriptsize\linespread{0.9}\selectfont
\textcolor{fig1teal}{Once} \textcolor{fig1teal}{upon} \textcolor{fig1teal}{a} \textcolor{fig1teal}{time}\textcolor{fig1teal}{,} \textcolor{fig1teal}{there} \textcolor{fig1teal}{was} \textcolor{fig1teal}{a} \textcolor{fig1teal}{little} \textcolor{fig1teal}{girl} \textcolor{fig1teal}{named} \textcolor{fig1teal}{Lily}\textcolor{fig1teal}{.} \textcolor{fig1teal}{She} \textcolor{fig1teal}{loved} \textcolor{fig1teal}{to} \textcolor{fig1teal}{play} \textcolor{fig1teal}{with} \textcolor{fig1teal}{her} \textcolor{fig1ochre}{rubber}\textcolor{fig1ochre}{ade} \textcolor{fig1teal}{in} \textcolor{fig1teal}{her} \textcolor{fig1ochre}{backyard}\textcolor{fig1teal}{.} \textcolor{fig1teal}{One} \textcolor{fig1teal}{day}\textcolor{fig1teal}{,} \textcolor{fig1teal}{she} \textcolor{fig1teal}{saw} \textcolor{fig1teal}{a} \textcolor{fig1ochre}{giant} \textcolor{fig1ochre}{monster} \textcolor{fig1teal}{in} \textcolor{fig1teal}{her} \textcolor{fig1ochre}{backyard}\textcolor{fig1teal}{.} \textcolor{fig1teal}{The} \textcolor{fig1ochre}{bug} \textcolor{fig1teal}{was} \textcolor{fig1ochre}{friendly} \textcolor{fig1teal}{and} \textcolor{fig1teal}{wanted} \textcolor{fig1teal}{to} \textcolor{fig1ochre}{make} \textcolor{fig1teal}{it} \textcolor{fig1ochre}{its} \textcolor{fig1ochre}{friends}\textcolor{fig1teal}{.} \textcolor{fig1teal}{But} \textcolor{fig1ochre}{Lily} \textcolor{fig1ochre}{doll} \textcolor{fig1ochre}{Sarah} \textcolor{fig1teal}{saw} \textcolor{fig1teal}{the} \textcolor{fig1ochre}{bug} \textcolor{fig1teal}{and} \textcolor{fig1ochre}{pretended} \textcolor{fig1teal}{to} \textcolor{fig1ochre}{be} \textcolor{fig1ochre}{kind}\textcolor{fig1teal}{.} \textcolor{fig1teal}{"}\textcolor{fig1ochre}{Be} \textcolor{fig1ochre}{no}\textcolor{fig1teal}{,} \textcolor{fig1teal}{Lily}\textcolor{fig1teal}{.} \textcolor{fig1ochre}{I} \textcolor{fig1ochre}{always} \textcolor{fig1teal}{said} \textcolor{fig1teal}{it} \textcolor{fig1ochre}{doesn}\textcolor{fig1teal}{'t} \textcolor{fig1ochre}{want} \textcolor{fig1teal}{to} \textcolor{fig1ochre}{scare} \textcolor{fig1teal}{a} \textcolor{fig1ochre}{bird}\textcolor{fig1ochre}{.} \textcolor{fig1ochre}{Leave} \textcolor{fig1teal}{them}\textcolor{fig1ochre}{."}  \textcolor{fig1ochre}{Suddenly}\textcolor{fig1teal}{,} \textcolor{fig1teal}{the} \textcolor{fig1ochre}{big} \textcolor{fig1ochre}{lemon} \textcolor{fig1ochre}{crawled}\textcolor{fig1ochre}{ounced}\textcolor{fig1ochre}{.} \textcolor{fig1teal}{Lily} \textcolor{fig1ochre}{and} \textcolor{fig1teal}{could} \textcolor{fig1ochre}{hurt} \textcolor{fig1teal}{her} \textcolor{fig1ochre}{friend} \textcolor{fig1ochre}{because} \textcolor{fig1teal}{it} \textcolor{fig1ochre}{to} \textcolor{fig1ochre}{something} \textcolor{fig1ochre}{dark}\textcolor{fig1teal}{.} \textcolor{fig1teal}{The} \textcolor{fig1ochre}{toy} \textcolor{fig1ochre}{fell} \textcolor{fig1ochre}{from}\textcolor{fig1ochre}{led} \textcolor{fig1teal}{in} \textcolor{fig1teal}{the} \textcolor{fig1ochre}{wind}\textcolor{fig1ochre}{,} \textcolor{fig1ochre}{didn}\textcolor{fig1ochre}{'t} \textcolor{fig1ochre}{wanting} \textcolor{fig1teal}{to} \textcolor{fig1teal}{be} \textcolor{fig1ochre}{friends} \textcolor{fig1ochre}{by} \textcolor{fig1teal}{the} \textcolor{fig1ochre}{fake} \textcolor{fig1ochre}{might}\textcolor{fig1ochre}{'} \textcolor{fig1teal}{so}
\end{samplebox}
\begin{samplebox}{\normalfont\textbf{$\ours$ (PR), Round: 2} \hfill \normalfont\scriptsize \textcolor{darkgray}{Committed: \textbf{95/128}}}
\scriptsize\linespread{0.9}\selectfont
\textcolor{fig1teal}{Once} \textcolor{fig1teal}{upon} \textcolor{fig1teal}{a} \textcolor{fig1teal}{time}\textcolor{fig1teal}{,} \textcolor{fig1teal}{there} \textcolor{fig1teal}{was} \textcolor{fig1teal}{a} \textcolor{fig1teal}{little} \textcolor{fig1teal}{girl} \textcolor{fig1teal}{named} \textcolor{fig1teal}{Lily}\textcolor{fig1teal}{.} \textcolor{fig1teal}{She} \textcolor{fig1teal}{loved} \textcolor{fig1teal}{to} \textcolor{fig1teal}{play} \textcolor{fig1teal}{with} \textcolor{fig1teal}{her} \textcolor{fig1ochre}{red} \textcolor{fig1ochre}{plant} \textcolor{fig1teal}{in} \textcolor{fig1teal}{her} \textcolor{fig1ochre}{backyard}\textcolor{fig1teal}{.} \textcolor{fig1teal}{One} \textcolor{fig1teal}{day}\textcolor{fig1teal}{,} \textcolor{fig1teal}{she} \textcolor{fig1teal}{saw} \textcolor{fig1teal}{a} \textcolor{fig1teal}{big} \textcolor{fig1teal}{bird} \textcolor{fig1teal}{in} \textcolor{fig1teal}{her} \textcolor{fig1ochre}{backyard}\textcolor{fig1teal}{.} \textcolor{fig1teal}{The} \textcolor{fig1ochre}{flower} \textcolor{fig1teal}{was} \textcolor{fig1ochre}{enthusiastic} \textcolor{fig1teal}{and} \textcolor{fig1teal}{wanted} \textcolor{fig1teal}{to} \textcolor{fig1ochre}{give} \textcolor{fig1teal}{it} \textcolor{fig1teal}{a} \textcolor{fig1ochre}{kiss}\textcolor{fig1teal}{.} \textcolor{fig1teal}{But} \textcolor{fig1teal}{the} \textcolor{fig1ochre}{be}\textcolor{fig1ochre}{ener} \textcolor{fig1teal}{saw} \textcolor{fig1teal}{the} \textcolor{fig1ochre}{pumpkin} \textcolor{fig1teal}{and} \textcolor{fig1teal}{started} \textcolor{fig1teal}{to} \textcolor{fig1ochre}{tease} \textcolor{fig1ochre}{Lily}\textcolor{fig1teal}{.} \textcolor{fig1teal}{"}\textcolor{fig1ochre}{Be} \textcolor{fig1ochre}{careful}\textcolor{fig1teal}{,} \textcolor{fig1teal}{Lily}\textcolor{fig1teal}{.} \textcolor{fig1ochre}{The} \textcolor{fig1teal}{bird} \textcolor{fig1teal}{said} \textcolor{fig1teal}{it} \textcolor{fig1ochre}{wasn}\textcolor{fig1teal}{'t} \textcolor{fig1ochre}{nice} \textcolor{fig1teal}{to} \textcolor{fig1teal}{be} \textcolor{fig1teal}{a} \textcolor{fig1teal}{big} \textcolor{fig1ochre}{rock} \textcolor{fig1teal}{for} \textcolor{fig1teal}{them}\textcolor{fig1teal}{."}  \textcolor{fig1ochre}{Instead}\textcolor{fig1teal}{,} \textcolor{fig1teal}{the} \textcolor{fig1teal}{bird} \textcolor{fig1ochre}{flew} \textcolor{fig1ochre}{stopped} \textcolor{fig1teal}{to} \textcolor{fig1ochre}{show} \textcolor{fig1teal}{Lily} \textcolor{fig1teal}{it} \textcolor{fig1teal}{could} \textcolor{fig1ochre}{eat} \textcolor{fig1teal}{her} \textcolor{fig1ochre}{so} \textcolor{fig1ochre}{if} \textcolor{fig1teal}{it} \textcolor{fig1ochre}{talked} \textcolor{fig1teal}{and} \textcolor{fig1ochre}{talked}\textcolor{fig1teal}{.} \textcolor{fig1teal}{The} \textcolor{fig1teal}{bird} \textcolor{fig1ochre}{flew} \textcolor{fig1ochre}{high} \textcolor{fig1teal}{up} \textcolor{fig1teal}{in} \textcolor{fig1teal}{the} \textcolor{fig1ochre}{sky}\textcolor{fig1teal}{.} \textcolor{fig1teal}{Lily} \textcolor{fig1teal}{was} \textcolor{fig1teal}{happy} \textcolor{fig1teal}{to} \textcolor{fig1teal}{be} \textcolor{fig1ochre}{friends} \textcolor{fig1teal}{with} \textcolor{fig1teal}{the} \textcolor{fig1teal}{big} \textcolor{fig1teal}{bird}\textcolor{fig1ochre}{,} \textcolor{fig1teal}{so}
\end{samplebox}
\begin{samplebox}{\normalfont\textbf{$\ours$ (PR), Round: 3} \hfill \normalfont\scriptsize \textcolor{darkgray}{Committed: \textbf{112/128}}}
\scriptsize\linespread{0.9}\selectfont
\textcolor{fig1teal}{Once} \textcolor{fig1teal}{upon} \textcolor{fig1teal}{a} \textcolor{fig1teal}{time}\textcolor{fig1teal}{,} \textcolor{fig1teal}{there} \textcolor{fig1teal}{was} \textcolor{fig1teal}{a} \textcolor{fig1teal}{little} \textcolor{fig1teal}{girl} \textcolor{fig1teal}{named} \textcolor{fig1teal}{Lily}\textcolor{fig1teal}{.} \textcolor{fig1teal}{She} \textcolor{fig1teal}{loved} \textcolor{fig1teal}{to} \textcolor{fig1teal}{play} \textcolor{fig1teal}{with} \textcolor{fig1teal}{her} \textcolor{fig1teal}{toy} \textcolor{fig1ochre}{hoop} \textcolor{fig1teal}{in} \textcolor{fig1teal}{her} \textcolor{fig1ochre}{garden}\textcolor{fig1teal}{.} \textcolor{fig1teal}{One} \textcolor{fig1teal}{day}\textcolor{fig1teal}{,} \textcolor{fig1teal}{she} \textcolor{fig1teal}{saw} \textcolor{fig1teal}{a} \textcolor{fig1teal}{big} \textcolor{fig1teal}{bird} \textcolor{fig1teal}{in} \textcolor{fig1teal}{her} \textcolor{fig1teal}{garden}\textcolor{fig1teal}{.} \textcolor{fig1teal}{The} \textcolor{fig1teal}{bird} \textcolor{fig1teal}{was} \textcolor{fig1ochre}{hungry} \textcolor{fig1teal}{and} \textcolor{fig1teal}{wanted} \textcolor{fig1teal}{to} \textcolor{fig1teal}{give} \textcolor{fig1teal}{it} \textcolor{fig1teal}{a} \textcolor{fig1ochre}{ride}\textcolor{fig1teal}{.} \textcolor{fig1teal}{But} \textcolor{fig1teal}{the} \textcolor{fig1teal}{big} \textcolor{fig1teal}{bird} \textcolor{fig1teal}{saw} \textcolor{fig1teal}{the} \textcolor{fig1ochre}{plant} \textcolor{fig1teal}{and} \textcolor{fig1teal}{started} \textcolor{fig1teal}{to} \textcolor{fig1teal}{tease} \textcolor{fig1teal}{Lily}\textcolor{fig1teal}{.} \textcolor{fig1teal}{"}\textcolor{fig1ochre}{Come} \textcolor{fig1ochre}{in}\textcolor{fig1teal}{,} \textcolor{fig1teal}{Lily}\textcolor{fig1teal}{.} \textcolor{fig1teal}{The} \textcolor{fig1teal}{bird} \textcolor{fig1teal}{said} \textcolor{fig1teal}{it} \textcolor{fig1ochre}{doesn}\textcolor{fig1teal}{'t} \textcolor{fig1teal}{want} \textcolor{fig1teal}{to} \textcolor{fig1teal}{be} \textcolor{fig1teal}{a} \textcolor{fig1teal}{big} \textcolor{fig1teal}{friend} \textcolor{fig1teal}{for} \textcolor{fig1teal}{them}\textcolor{fig1teal}{."}  \textcolor{fig1ochre}{Instead}\textcolor{fig1teal}{,} \textcolor{fig1teal}{the} \textcolor{fig1teal}{bird} \textcolor{fig1ochre}{sang} \textcolor{fig1ochre}{closer} \textcolor{fig1teal}{to} \textcolor{fig1teal}{show} \textcolor{fig1teal}{Lily} \textcolor{fig1teal}{it} \textcolor{fig1teal}{could} \textcolor{fig1teal}{make} \textcolor{fig1teal}{her} \textcolor{fig1ochre}{feel} \textcolor{fig1teal}{before} \textcolor{fig1teal}{it} \textcolor{fig1ochre}{smiled} \textcolor{fig1teal}{and} \textcolor{fig1ochre}{jumped}\textcolor{fig1teal}{.} \textcolor{fig1teal}{The} \textcolor{fig1teal}{bird} \textcolor{fig1teal}{flew} \textcolor{fig1teal}{high} \textcolor{fig1teal}{up} \textcolor{fig1teal}{in} \textcolor{fig1teal}{the} \textcolor{fig1ochre}{air}\textcolor{fig1teal}{.} \textcolor{fig1teal}{Lily} \textcolor{fig1teal}{was} \textcolor{fig1teal}{happy} \textcolor{fig1teal}{to} \textcolor{fig1teal}{be} \textcolor{fig1ochre}{honest} \textcolor{fig1teal}{with} \textcolor{fig1teal}{the} \textcolor{fig1teal}{big} \textcolor{fig1teal}{bird}\textcolor{fig1teal}{,} \textcolor{fig1teal}{so}
\end{samplebox}
\begin{samplebox}{\normalfont\textbf{$\ours$ (PR), Round: 4} \hfill \normalfont\scriptsize \textcolor{darkgray}{Committed: \textbf{128/128}}}
\scriptsize\linespread{0.9}\selectfont
\textcolor{fig1teal}{Once} \textcolor{fig1teal}{upon} \textcolor{fig1teal}{a} \textcolor{fig1teal}{time}\textcolor{fig1teal}{,} \textcolor{fig1teal}{there} \textcolor{fig1teal}{was} \textcolor{fig1teal}{a} \textcolor{fig1teal}{little} \textcolor{fig1teal}{girl} \textcolor{fig1teal}{named} \textcolor{fig1teal}{Lily}\textcolor{fig1teal}{.} \textcolor{fig1teal}{She} \textcolor{fig1teal}{loved} \textcolor{fig1teal}{to} \textcolor{fig1teal}{play} \textcolor{fig1teal}{with} \textcolor{fig1teal}{her} \textcolor{fig1teal}{toy} \textcolor{fig1teal}{helicopter} \textcolor{fig1teal}{in} \textcolor{fig1teal}{her} \textcolor{fig1teal}{garden}\textcolor{fig1teal}{.} \textcolor{fig1teal}{One} \textcolor{fig1teal}{day}\textcolor{fig1teal}{,} \textcolor{fig1teal}{she} \textcolor{fig1teal}{saw} \textcolor{fig1teal}{a} \textcolor{fig1teal}{big} \textcolor{fig1teal}{bird} \textcolor{fig1teal}{in} \textcolor{fig1teal}{her} \textcolor{fig1teal}{garden}\textcolor{fig1teal}{.} \textcolor{fig1teal}{The} \textcolor{fig1teal}{bird} \textcolor{fig1teal}{was} \textcolor{fig1teal}{hungry} \textcolor{fig1teal}{and} \textcolor{fig1teal}{wanted} \textcolor{fig1teal}{to} \textcolor{fig1teal}{give} \textcolor{fig1teal}{it} \textcolor{fig1teal}{a} \textcolor{fig1teal}{ride}\textcolor{fig1teal}{.} \textcolor{fig1teal}{But} \textcolor{fig1teal}{the} \textcolor{fig1teal}{big} \textcolor{fig1teal}{bird} \textcolor{fig1teal}{saw} \textcolor{fig1teal}{the} \textcolor{fig1teal}{bird} \textcolor{fig1teal}{and} \textcolor{fig1teal}{started} \textcolor{fig1teal}{to} \textcolor{fig1teal}{tease} \textcolor{fig1teal}{Lily}\textcolor{fig1teal}{.} \textcolor{fig1teal}{"}\textcolor{fig1teal}{Come} \textcolor{fig1teal}{down}\textcolor{fig1teal}{,} \textcolor{fig1teal}{Lily}\textcolor{fig1teal}{.} \textcolor{fig1teal}{The} \textcolor{fig1teal}{bird} \textcolor{fig1teal}{said} \textcolor{fig1teal}{it} \textcolor{fig1teal}{didn}\textcolor{fig1teal}{'t} \textcolor{fig1teal}{want} \textcolor{fig1teal}{to} \textcolor{fig1teal}{be} \textcolor{fig1teal}{a} \textcolor{fig1teal}{big} \textcolor{fig1teal}{friend} \textcolor{fig1teal}{for} \textcolor{fig1teal}{them}\textcolor{fig1teal}{."}  \textcolor{fig1teal}{Suddenly}\textcolor{fig1teal}{,} \textcolor{fig1teal}{the} \textcolor{fig1teal}{bird} \textcolor{fig1teal}{flew} \textcolor{fig1teal}{back} \textcolor{fig1teal}{to} \textcolor{fig1teal}{show} \textcolor{fig1teal}{Lily} \textcolor{fig1teal}{it} \textcolor{fig1teal}{could} \textcolor{fig1teal}{make} \textcolor{fig1teal}{her} \textcolor{fig1teal}{wish} \textcolor{fig1teal}{before} \textcolor{fig1teal}{it} \textcolor{fig1teal}{sang} \textcolor{fig1teal}{and} \textcolor{fig1teal}{laughed}\textcolor{fig1teal}{.} \textcolor{fig1teal}{The} \textcolor{fig1teal}{bird} \textcolor{fig1teal}{flew} \textcolor{fig1teal}{high} \textcolor{fig1teal}{up} \textcolor{fig1teal}{in} \textcolor{fig1teal}{the} \textcolor{fig1teal}{sky}\textcolor{fig1teal}{.} \textcolor{fig1teal}{Lily} \textcolor{fig1teal}{was} \textcolor{fig1teal}{happy} \textcolor{fig1teal}{to} \textcolor{fig1teal}{be} \textcolor{fig1teal}{friends} \textcolor{fig1teal}{with} \textcolor{fig1teal}{the} \textcolor{fig1teal}{big} \textcolor{fig1teal}{bird}\textcolor{fig1teal}{,} \textcolor{fig1teal}{so}
\end{samplebox}
\caption{Posterior Refinement trajectory on TinyStories, Sample-1.}
\label{fig:refine_sample_tinystories_1}
\end{figure}

\begin{figure}[H]
\centering
\begin{samplebox}{\normalfont\textbf{$\ours$ (PR), Round: 1} \hfill \normalfont\scriptsize \textcolor{darkgray}{Committed: \textbf{71/128}}}
\scriptsize\linespread{0.9}\selectfont
\textcolor{fig1ochre}{Once} \textcolor{fig1teal}{upon} \textcolor{fig1teal}{a} \textcolor{fig1teal}{time}\textcolor{fig1teal}{,} \textcolor{fig1teal}{there} \textcolor{fig1teal}{was} \textcolor{fig1teal}{a} \textcolor{fig1teal}{little} \textcolor{fig1ochre}{girl} \textcolor{fig1teal}{called} \textcolor{fig1ochre}{one}\textcolor{fig1teal}{.} \textcolor{fig1ochre}{Her} \textcolor{fig1ochre}{parents} \textcolor{fig1teal}{said} \textcolor{fig1teal}{"}\textcolor{fig1ochre}{No}\textcolor{fig1teal}{,} \textcolor{fig1ochre}{Lucy}\textcolor{fig1teal}{.} \textcolor{fig1ochre}{Lucy} \textcolor{fig1teal}{was} \textcolor{fig1ochre}{feeling} \textcolor{fig1teal}{very} \textcolor{fig1teal}{sad}\textcolor{fig1teal}{.} \textcolor{fig1ochre}{Whenever} \textcolor{fig1teal}{she} \textcolor{fig1ochre}{should} \textcolor{fig1ochre}{play} \textcolor{fig1teal}{with} \textcolor{fig1ochre}{toys}\textcolor{fig1teal}{,} \textcolor{fig1teal}{the} \textcolor{fig1teal}{water} \textcolor{fig1ochre}{f}\textcolor{fig1ochre}{am} \textcolor{fig1ochre}{playing}\textcolor{fig1teal}{,} \textcolor{fig1ochre}{sometimes} \textcolor{fig1ochre}{just} \textcolor{fig1teal}{like} \textcolor{fig1ochre}{ice}\textcolor{fig1ochre}{bows}\textcolor{fig1teal}{.} \textcolor{fig1teal}{But} \textcolor{fig1ochre}{Lucy} \textcolor{fig1ochre}{still} \textcolor{fig1ochre}{couldn}\textcolor{fig1teal}{'t} \textcolor{fig1ochre}{want} \textcolor{fig1teal}{to} \textcolor{fig1ochre}{ignore} \textcolor{fig1teal}{it}\textcolor{fig1teal}{.}   \textcolor{fig1ochre}{Mom}\textcolor{fig1teal}{my} \textcolor{fig1ochre}{and} \textcolor{fig1ochre}{Daddy} \textcolor{fig1ochre}{stopped} \textcolor{fig1teal}{in} \textcolor{fig1teal}{the} \textcolor{fig1teal}{park} \textcolor{fig1teal}{and} \textcolor{fig1ochre}{suddenly}\textcolor{fig1teal}{,} \textcolor{fig1ochre}{Lucy} \textcolor{fig1teal}{saw} \textcolor{fig1teal}{a} \textcolor{fig1ochre}{giant} \textcolor{fig1ochre}{p}\textcolor{fig1ochre}{ur}\textcolor{fig1ochre}{rying} \textcolor{fig1teal}{down}\textcolor{fig1teal}{!} \textcolor{fig1teal}{She} \textcolor{fig1ochre}{pointed} \textcolor{fig1teal}{up} \textcolor{fig1ochre}{pointing} \textcolor{fig1teal}{at} \textcolor{fig1ochre}{the} \textcolor{fig1ochre}{billboard} \textcolor{fig1ochre}{saying} \textcolor{fig1teal}{"}\textcolor{fig1ochre}{M}\textcolor{fig1ochre}{a}\textcolor{fig1ochre}{".} \textcolor{fig1ochre}{Jenny} \textcolor{fig1teal}{was} \textcolor{fig1teal}{so} \textcolor{fig1teal}{excited} \textcolor{fig1teal}{that} \textcolor{fig1teal}{she} \textcolor{fig1ochre}{shouted} \textcolor{fig1ochre}{every}\textcolor{fig1ochre}{Yes} \textcolor{fig1teal}{of} \textcolor{fig1teal}{the} \textcolor{fig1ochre}{crane}\textcolor{fig1ochre}{."}  \textcolor{fig1ochre}{Daddy} \textcolor{fig1ochre}{looked} \textcolor{fig1ochre}{nodded}\textcolor{fig1teal}{.} \textcolor{fig1teal}{It} \textcolor{fig1teal}{was} \textcolor{fig1ochre}{quite} \textcolor{fig1ochre}{expensive}\textcolor{fig1teal}{,} \textcolor{fig1teal}{but} \textcolor{fig1teal}{she} \textcolor{fig1teal}{saw} \textcolor{fig1teal}{it} \textcolor{fig1ochre}{just} \textcolor{fig1ochre}{again} \textcolor{fig1teal}{and} \textcolor{fig1teal}{asked} \textcolor{fig1ochre}{if} \textcolor{fig1teal}{it} \textcolor{fig1teal}{wanted}
\end{samplebox}
\begin{samplebox}{\normalfont\textbf{$\ours$ (PR), Round: 2} \hfill \normalfont\scriptsize \textcolor{darkgray}{Committed: \textbf{94/128}}}
\scriptsize\linespread{0.9}\selectfont
\textcolor{fig1teal}{Once} \textcolor{fig1teal}{upon} \textcolor{fig1teal}{a} \textcolor{fig1teal}{time}\textcolor{fig1teal}{,} \textcolor{fig1teal}{there} \textcolor{fig1teal}{was} \textcolor{fig1teal}{a} \textcolor{fig1teal}{little} \textcolor{fig1teal}{girl} \textcolor{fig1teal}{called} \textcolor{fig1ochre}{Emma}\textcolor{fig1teal}{.} \textcolor{fig1ochre}{Mom}\textcolor{fig1teal}{my} \textcolor{fig1teal}{said} \textcolor{fig1teal}{"}\textcolor{fig1ochre}{No}\textcolor{fig1teal}{,} \textcolor{fig1ochre}{baby}\textcolor{fig1teal}{.} \textcolor{fig1ochre}{Emma} \textcolor{fig1teal}{was} \textcolor{fig1ochre}{feeling} \textcolor{fig1teal}{very} \textcolor{fig1teal}{sad}\textcolor{fig1teal}{.} \textcolor{fig1ochre}{As} \textcolor{fig1teal}{she} \textcolor{fig1teal}{was} \textcolor{fig1ochre}{filled} \textcolor{fig1teal}{with} \textcolor{fig1ochre}{soap}\textcolor{fig1teal}{,} \textcolor{fig1teal}{the} \textcolor{fig1teal}{water} \textcolor{fig1teal}{was} \textcolor{fig1teal}{so} \textcolor{fig1ochre}{happy}\textcolor{fig1teal}{,} \textcolor{fig1teal}{it} \textcolor{fig1teal}{looked} \textcolor{fig1teal}{like} \textcolor{fig1teal}{a} \textcolor{fig1ochre}{hug}\textcolor{fig1teal}{.} \textcolor{fig1teal}{But} \textcolor{fig1ochre}{M} \textcolor{fig1ochre}{still} \textcolor{fig1ochre}{didn}\textcolor{fig1teal}{'t} \textcolor{fig1ochre}{want} \textcolor{fig1teal}{to} \textcolor{fig1ochre}{see} \textcolor{fig1teal}{it}\textcolor{fig1teal}{.}   \textcolor{fig1ochre}{Mom}\textcolor{fig1teal}{my} \textcolor{fig1teal}{started} \textcolor{fig1teal}{to} \textcolor{fig1ochre}{look} \textcolor{fig1teal}{in} \textcolor{fig1teal}{the} \textcolor{fig1teal}{park} \textcolor{fig1teal}{and} \textcolor{fig1ochre}{suddenly}\textcolor{fig1teal}{,} \textcolor{fig1ochre}{she} \textcolor{fig1teal}{saw} \textcolor{fig1teal}{a} \textcolor{fig1teal}{little} \textcolor{fig1ochre}{green} \textcolor{fig1ochre}{bob} \textcolor{fig1ochre}{floating} \textcolor{fig1teal}{down}\textcolor{fig1teal}{!} \textcolor{fig1teal}{She} \textcolor{fig1teal}{looked} \textcolor{fig1teal}{up} \textcolor{fig1ochre}{and} \textcolor{fig1teal}{at} \textcolor{fig1teal}{it} \textcolor{fig1teal}{and} \textcolor{fig1teal}{said} \textcolor{fig1teal}{"}\textcolor{fig1ochre}{Hello} \textcolor{fig1ochre}{shrimp}\textcolor{fig1ochre}{".} \textcolor{fig1teal}{She} \textcolor{fig1teal}{was} \textcolor{fig1teal}{so} \textcolor{fig1teal}{excited} \textcolor{fig1teal}{that} \textcolor{fig1teal}{she} \textcolor{fig1ochre}{pulled} \textcolor{fig1teal}{it} \textcolor{fig1teal}{out} \textcolor{fig1teal}{of} \textcolor{fig1teal}{the} \textcolor{fig1ochre}{ground}\textcolor{fig1teal}{.}  \textcolor{fig1teal}{The} \textcolor{fig1ochre}{fish} \textcolor{fig1teal}{smiled}\textcolor{fig1teal}{.} \textcolor{fig1teal}{It} \textcolor{fig1teal}{was} \textcolor{fig1ochre}{quite}\textcolor{fig1ochre}{ucky}\textcolor{fig1teal}{,} \textcolor{fig1teal}{but} \textcolor{fig1teal}{she} \textcolor{fig1teal}{saw} \textcolor{fig1teal}{it} \textcolor{fig1teal}{was} \textcolor{fig1ochre}{smiling} \textcolor{fig1teal}{and} \textcolor{fig1teal}{asked} \textcolor{fig1ochre}{if} \textcolor{fig1teal}{it} \textcolor{fig1teal}{wanted}
\end{samplebox}
\begin{samplebox}{\normalfont\textbf{$\ours$ (PR), Round: 3} \hfill \normalfont\scriptsize \textcolor{darkgray}{Committed: \textbf{111/128}}}
\scriptsize\linespread{0.9}\selectfont
\textcolor{fig1teal}{Once} \textcolor{fig1teal}{upon} \textcolor{fig1teal}{a} \textcolor{fig1teal}{time}\textcolor{fig1teal}{,} \textcolor{fig1teal}{there} \textcolor{fig1teal}{was} \textcolor{fig1teal}{a} \textcolor{fig1teal}{little} \textcolor{fig1teal}{girl} \textcolor{fig1teal}{called} \textcolor{fig1ochre}{baby}\textcolor{fig1teal}{.} \textcolor{fig1teal}{Mom}\textcolor{fig1teal}{my} \textcolor{fig1teal}{said} \textcolor{fig1teal}{"}\textcolor{fig1ochre}{No}\textcolor{fig1teal}{,} \textcolor{fig1ochre}{daddy}\textcolor{fig1teal}{.} \textcolor{fig1teal}{She} \textcolor{fig1teal}{was} \textcolor{fig1teal}{feeling} \textcolor{fig1teal}{very} \textcolor{fig1teal}{sad}\textcolor{fig1teal}{.} \textcolor{fig1teal}{As} \textcolor{fig1teal}{she} \textcolor{fig1teal}{was} \textcolor{fig1teal}{playing} \textcolor{fig1teal}{with} \textcolor{fig1teal}{water}\textcolor{fig1teal}{,} \textcolor{fig1teal}{the} \textcolor{fig1teal}{water} \textcolor{fig1teal}{was} \textcolor{fig1teal}{so} \textcolor{fig1teal}{big}\textcolor{fig1teal}{,} \textcolor{fig1teal}{it} \textcolor{fig1teal}{looked} \textcolor{fig1teal}{like} \textcolor{fig1teal}{a} \textcolor{fig1ochre}{flood}\textcolor{fig1teal}{.} \textcolor{fig1teal}{But} \textcolor{fig1teal}{her} \textcolor{fig1ochre}{daddy} \textcolor{fig1ochre}{wasn}\textcolor{fig1teal}{'t} \textcolor{fig1teal}{seem} \textcolor{fig1teal}{to} \textcolor{fig1ochre}{accept} \textcolor{fig1teal}{it}\textcolor{fig1teal}{.}   \textcolor{fig1teal}{Mom}\textcolor{fig1teal}{my} \textcolor{fig1teal}{started} \textcolor{fig1teal}{to} \textcolor{fig1teal}{look} \textcolor{fig1teal}{in} \textcolor{fig1teal}{the} \textcolor{fig1teal}{park} \textcolor{fig1teal}{and} \textcolor{fig1ochre}{then}\textcolor{fig1teal}{,} \textcolor{fig1teal}{she} \textcolor{fig1teal}{saw} \textcolor{fig1teal}{a} \textcolor{fig1teal}{little} \textcolor{fig1ochre}{boat} \textcolor{fig1ochre}{fish}\textcolor{fig1ochre}{ing} \textcolor{fig1teal}{down}\textcolor{fig1teal}{!} \textcolor{fig1teal}{She} \textcolor{fig1teal}{looked} \textcolor{fig1teal}{up} \textcolor{fig1ochre}{straight} \textcolor{fig1teal}{at} \textcolor{fig1teal}{it} \textcolor{fig1teal}{and} \textcolor{fig1teal}{said} \textcolor{fig1teal}{"}\textcolor{fig1ochre}{Hi}\textcolor{fig1ochre}{uck}\textcolor{fig1teal}{!"} \textcolor{fig1teal}{She} \textcolor{fig1teal}{was} \textcolor{fig1teal}{so} \textcolor{fig1teal}{excited} \textcolor{fig1teal}{that} \textcolor{fig1teal}{she} \textcolor{fig1ochre}{pulled} \textcolor{fig1teal}{it} \textcolor{fig1teal}{out} \textcolor{fig1teal}{of} \textcolor{fig1teal}{the} \textcolor{fig1teal}{water}\textcolor{fig1teal}{.}  \textcolor{fig1teal}{The} \textcolor{fig1teal}{girl} \textcolor{fig1teal}{smiled}\textcolor{fig1teal}{.} \textcolor{fig1teal}{It} \textcolor{fig1teal}{was} \textcolor{fig1teal}{a} \textcolor{fig1ochre}{surprise}\textcolor{fig1teal}{,} \textcolor{fig1teal}{but} \textcolor{fig1teal}{she} \textcolor{fig1teal}{saw} \textcolor{fig1teal}{it} \textcolor{fig1teal}{was} \textcolor{fig1ochre}{friendly} \textcolor{fig1teal}{and} \textcolor{fig1teal}{asked} \textcolor{fig1teal}{if} \textcolor{fig1teal}{it} \textcolor{fig1teal}{wanted}
\end{samplebox}
\begin{samplebox}{\normalfont\textbf{$\ours$ (PR), Round: 4} \hfill \normalfont\scriptsize \textcolor{darkgray}{Committed: \textbf{128/128}}}
\scriptsize\linespread{0.9}\selectfont
\textcolor{fig1teal}{Once} \textcolor{fig1teal}{upon} \textcolor{fig1teal}{a} \textcolor{fig1teal}{time}\textcolor{fig1teal}{,} \textcolor{fig1teal}{there} \textcolor{fig1teal}{was} \textcolor{fig1teal}{a} \textcolor{fig1teal}{little} \textcolor{fig1teal}{girl} \textcolor{fig1teal}{called}\textcolor{fig1teal}{my}\textcolor{fig1teal}{.} \textcolor{fig1teal}{Mom}\textcolor{fig1teal}{my} \textcolor{fig1teal}{said} \textcolor{fig1teal}{"}\textcolor{fig1teal}{No}\textcolor{fig1teal}{,} \textcolor{fig1teal}{Sammy}\textcolor{fig1teal}{.} \textcolor{fig1teal}{She} \textcolor{fig1teal}{was} \textcolor{fig1teal}{feeling} \textcolor{fig1teal}{very} \textcolor{fig1teal}{sad}\textcolor{fig1teal}{.} \textcolor{fig1teal}{As} \textcolor{fig1teal}{she} \textcolor{fig1teal}{was} \textcolor{fig1teal}{playing} \textcolor{fig1teal}{with} \textcolor{fig1teal}{water}\textcolor{fig1teal}{,} \textcolor{fig1teal}{the} \textcolor{fig1teal}{water} \textcolor{fig1teal}{was} \textcolor{fig1teal}{so} \textcolor{fig1teal}{big}\textcolor{fig1teal}{,} \textcolor{fig1teal}{it} \textcolor{fig1teal}{looked} \textcolor{fig1teal}{like} \textcolor{fig1teal}{a} \textcolor{fig1teal}{flood}\textcolor{fig1teal}{.} \textcolor{fig1teal}{But} \textcolor{fig1teal}{her} \textcolor{fig1teal}{feet} \textcolor{fig1teal}{couldn}\textcolor{fig1teal}{'t} \textcolor{fig1teal}{seem} \textcolor{fig1teal}{to} \textcolor{fig1teal}{make} \textcolor{fig1teal}{it}\textcolor{fig1teal}{.}   \textcolor{fig1teal}{Mom}\textcolor{fig1teal}{my} \textcolor{fig1teal}{started} \textcolor{fig1teal}{to} \textcolor{fig1teal}{look} \textcolor{fig1teal}{in} \textcolor{fig1teal}{the} \textcolor{fig1teal}{park} \textcolor{fig1teal}{and} \textcolor{fig1teal}{suddenly}\textcolor{fig1teal}{,} \textcolor{fig1teal}{she} \textcolor{fig1teal}{saw} \textcolor{fig1teal}{a} \textcolor{fig1teal}{little} \textcolor{fig1teal}{girl} \textcolor{fig1teal}{was} \textcolor{fig1teal}{sliding} \textcolor{fig1teal}{down}\textcolor{fig1teal}{!} \textcolor{fig1teal}{She} \textcolor{fig1teal}{looked} \textcolor{fig1teal}{up} \textcolor{fig1teal}{over} \textcolor{fig1teal}{at} \textcolor{fig1teal}{it} \textcolor{fig1teal}{and} \textcolor{fig1teal}{said} \textcolor{fig1teal}{"}\textcolor{fig1teal}{Hi}\textcolor{fig1teal}{elcome}\textcolor{fig1teal}{!"} \textcolor{fig1teal}{She} \textcolor{fig1teal}{was} \textcolor{fig1teal}{so} \textcolor{fig1teal}{excited} \textcolor{fig1teal}{that} \textcolor{fig1teal}{she} \textcolor{fig1teal}{took} \textcolor{fig1teal}{it} \textcolor{fig1teal}{out} \textcolor{fig1teal}{of} \textcolor{fig1teal}{the} \textcolor{fig1teal}{water}\textcolor{fig1teal}{.}  \textcolor{fig1teal}{The} \textcolor{fig1teal}{girl} \textcolor{fig1teal}{smiled}\textcolor{fig1teal}{.} \textcolor{fig1teal}{It} \textcolor{fig1teal}{was} \textcolor{fig1teal}{a} \textcolor{fig1teal}{cold}\textcolor{fig1teal}{,} \textcolor{fig1teal}{but} \textcolor{fig1teal}{she} \textcolor{fig1teal}{saw} \textcolor{fig1teal}{it} \textcolor{fig1teal}{was} \textcolor{fig1teal}{smiling} \textcolor{fig1teal}{and} \textcolor{fig1teal}{asked} \textcolor{fig1teal}{if} \textcolor{fig1teal}{it} \textcolor{fig1teal}{wanted}
\end{samplebox}
\caption{Posterior Refinement trajectory on TinyStories, Sample-2.}
\label{fig:refine_sample_tinystories_2}
\end{figure}